\documentclass{article}

\usepackage{natbib}
\usepackage{fullpage}
\usepackage[utf8]{inputenc}
\usepackage[T1]{fontenc}
\usepackage{hyperref}
\usepackage{url}
\usepackage{booktabs}
\usepackage{amsfonts}
\usepackage{amsmath,amssymb,amsthm,mathtools}
\usepackage{nicefrac}
\usepackage{microtype}
\usepackage{xcolor}
\usepackage{float}
\usepackage{enumitem}
\usepackage{tikz}
\usepackage{tcolorbox}
\usetikzlibrary{arrows.meta,calc,positioning,fit,backgrounds}
\hypersetup{hidelinks,hypertexnames=false}
\usepackage{subcaption}
\usepackage{tabularx}

\usepackage{algorithm}
\usepackage{algpseudocode}

\usepackage{todonotes}
\usepackage{aliascnt}

\newtheorem{theorem}{Theorem}[section]
\newaliascnt{lemma}{theorem}
\newtheorem{lemma}[lemma]{Lemma}
\aliascntresetthe{lemma}
\newaliascnt{proposition}{theorem}
\newtheorem{proposition}[proposition]{Proposition}
\aliascntresetthe{proposition}
\newaliascnt{corollary}{theorem}
\newtheorem{corollary}[corollary]{Corollary}
\aliascntresetthe{corollary}
\newaliascnt{definition}{theorem}
\newtheorem{definition}[definition]{Definition}
\aliascntresetthe{definition}
\newaliascnt{assumption}{theorem}

\aliascntresetthe{assumption}
\newaliascnt{remark}{theorem}
\newtheorem{remark}[remark]{Remark}
\aliascntresetthe{remark}
\newaliascnt{example}{theorem}

\aliascntresetthe{example}

\usepackage[capitalize,noabbrev]{cleveref}
\makeatletter
\AddToHook{cmd/appendix/before}{\def\cref@section@alias{appendix}\def\cref@subsection@alias{appendix}}
\makeatother

\crefname{theorem}{Theorem}{theorems}
\Crefname{theorem}{Theorem}{Theorems}
\crefname{lemma}{Lemma}{lemmas}
\Crefname{lemma}{Lemma}{Lemmas}
\crefname{proposition}{Proposition}{propositions}
\Crefname{proposition}{Proposition}{Propositions}
\crefname{corollary}{Corollary}{corollaries}
\Crefname{corollary}{Corollary}{Corollaries}
\crefname{definition}{Definition}{definitions}
\Crefname{definition}{Definition}{Definitions}
\crefname{assumption}{Assumption}{assumptions}
\Crefname{assumption}{Assumption}{Assumptions}
\crefname{remark}{Remark}{remarks}
\Crefname{remark}{Remark}{Remarks}
\crefname{example}{Example}{examples}
\Crefname{example}{Example}{Examples}

\usepackage{pgfplots}
\pgfplotsset{compat=1.18}

\definecolor{myOrange}{RGB}{235, 105, 92}
\definecolor{myMaroon}{RGB}{128, 0, 0}
\definecolor{maroon}{rgb}{0.5,0.0,0.0}
\definecolor{heavymaroon}{rgb}{0.3,0.0,0.0}

\definecolor{mygreen}{RGB}{160, 200, 140}

\hypersetup{ 
colorlinks=true, 
linkcolor=myOrange, 
citecolor=myMaroon, filecolor=cyan, 
urlcolor=mygreen
}

\usepackage{lipsum}                     %
\usepackage{xargs}
\usepackage{xfrac}
\usepackage{amssymb}
\usepackage{amsmath}
\usepackage{bm} 
\usepackage{mathtools} 
\providecommand{\colloneq}{\coloneqq}
\usepackage{amsthm}
\usepackage{mleftright}
\usepackage{refcount}

\def\[#1\]{\begin{align*}#1\end{align*}}

\NewDocumentCommand{\numberthis}{om}{%
  \IfNoValueTF{#1}{%
    \refstepcounter{equation}\tag{\theequation}%
  }{%
    \tag{#1}%
  }%
  \label{#2}%
}

\renewcommand{\vec}[1]{\bm{#1}}

\newcommand{\R}{\mathbb{R}}

\newcommand{\eps}{\varepsilon}

\newcommand{\ThetaSet}{\Theta}
\newcommand{\PhiExact}{\Phi_{\mathrm{exact}}}
\newcommand{\PhiProx}{\Phi_{\mathrm{prox}}}
\newcommand{\PhiExactSmooth}{\Phi_{\mathrm{exact}}^{\mathrm{sm}}}
\newcommand{\PhiMirrorExact}[1]{\Phi_{\mathrm{exact}}^{#1}}
\newcommand{\PhiMirrorProx}[1]{\Phi_{\mathrm{prox}}^{#1}}
\newcommand{\PhiMirrorRad}[1]{\Phi_{\mathrm{rad}}^{#1}}
\newcommand{\PhiFTRL}[1]{\Phi_{\mathrm{FTRL}}^{#1}}

\DeclareMathOperator{\dom}{dom}

\DeclareMathOperator{\dist}{dist}

\DeclareMathOperator{\relint}{relint}
\DeclareMathOperator{\aff}{aff}
\DeclareMathOperator{\cl}{cl}
\DeclareMathOperator{\spanop}{span}
\DeclareMathOperator{\osc}{osc}
\DeclareMathOperator{\Reg}{Reg}
\DeclareMathOperator{\prox}{prox}
\DeclareMathOperator*{\argmin}{arg\,min}

\newcommand{\dd}{\mathrm{d}}
\newcommand{\transpose}{\mathsf{T}}
\newcommand{\inner}[2]{\left\langle #1,#2\right\rangle}
\newcommand{\norm}[1]{\left\lVert #1\right\rVert}
\newcommand{\dualnorm}[1]{\left\lVert #1\right\rVert_{\!*}}
\newcommand{\abs}[1]{\left\lvert #1\right\rvert}
\newcommand{\setof}[2]{\left\{#1\,:\,#2\right\}}
\newcommand{\indicator}{\iota}
\newcommand{\one}{\mathbf{1}}

\newcommand{\Proj}{\Pi}
\newcommand{\normal}{N}
\newcommand{\Breg}{D}
\newcommand{\Moreau}{M}
\newcommand{\field}{\vec{D}}
\newcommand{\gvec}{\vec{g}}
\newcommand{\xvec}{\vec{x}}
\newcommand{\yvec}{\vec{y}}
\newcommand{\zvec}{\vec{z}}
\newcommand{\avec}{\vec{a}}
\newcommand{\hvec}{\vec{h}}
\newcommand{\rvec}{\vec{r}}
\newcommand{\uvec}{\vec{u}}
\newcommand{\vvec}{\vec{v}}
\newcommand{\bvec}{\vec{b}}
\newcommand{\dvec}{\vec{d}}
\newcommand{\pvec}{\vec{p}}
\newcommand{\qvec}{\vec{q}}
\newcommand{\nvec}{\vec{n}}
\newcommand{\svec}{\vec{s}}
\newcommand{\thetavec}{\vec{\theta}}
\newcommand{\betavec}{\vec{\beta}}
\newcommand{\evec}{\vec{e}}
\newcommand{\Dphidual}{\widetilde{\field}_{\phi}}
\newcommand{\Dphi}{\field_{\phi}}
\newcommand{\DR}{\Breg_R}

\newcommand{\MF}{\Moreau_F}
\newcommand{\MRF}{\Moreau^R_F}
\newcommand{\simplex}{\Delta}
\DeclareMathOperator{\softmax}{softmax}

\newcommand{\gradtheta}{\nabla_{\vec{\theta}}}

\newtheorem*{repeatthminner}{\repeatthmname}
\newcommand{\repeatthmname}{}

\newenvironment{repeatthm}[1]{%
  \renewcommand{\repeatthmname}{Theorem~\ref{#1}}%
  \begin{repeatthminner}%
}{%
  \end{repeatthminner}%
}

\newtheorem*{repeatlemmainner}{\repeatlemmaname}
\newcommand{\repeatlemmaname}{}

\newenvironment{repeatlemma}[1]{%
  \renewcommand{\repeatlemmaname}{Lemma~\ref{#1}}%
  \begin{repeatlemmainner}%
}{%
  \end{repeatlemmainner}%
}

\newtheorem*{repeatpropinner}{\repeatpropname}
\newcommand{\repeatpropname}{}

\newenvironment{repeatprop}[1]{%
  \renewcommand{\repeatpropname}{Proposition~\ref{#1}}%
  \begin{repeatpropinner}%
}{%
  \end{repeatpropinner}%
}

\newtheorem*{repeatcorollaryinner}{\repeatcorollaryname}
\newcommand{\repeatcorollaryname}{}

\newenvironment{repeatcorollary}[1]{%
  \renewcommand{\repeatcorollaryname}{Corollary~\ref{#1}}%
  \begin{repeatcorollaryinner}%
}{%
  \end{repeatcorollaryinner}%
}

\providecommand{\E}{\mathbb{E}}
\providecommand{\CE}{\mathrm{CE}}
\providecommand{\CCE}{\mathrm{CCE}}
\providecommand{\ConCE}{\mathrm{ConCE}}
\providecommand{\PCE}{\mathrm{PCE}}

\title{Exact-Form Regret for Gradient Descent, Mirror Descent and \\ Follow-the-Regularized-Leader}

\author{
Ashkan Soleymani\\
MIT\\
\texttt{ashkanso@mit.edu}
\and
Gabriele Farina\\
MIT\\
\texttt{gfarina@mit.edu}
\and
Patrick Jaillet\\
MIT\\
\texttt{jaillet@mit.edu}
}

\begin{document}

\date{}

\maketitle
\begin{abstract}
Online gradient descent is usually studied through external regret, where the
learner competes with fixed alternatives.  Recent work shows that first-order
methods control much richer action-dependent deviations.  We ask for a
geometric characterization of the deviations with respect to which
online gradient descent, mirror descent, and follow-the-regularized-leader (FTRL)
achieve no regret.  We identify exactness as the common principle.  Here,
exactness means that the relevant displacement field is generated by a scalar
potential, or equivalently that the associated one-form is exact in the
geometry used by the algorithm.  This geometry depends on the algorithm.  For
gradient descent it is ordinary Euclidean geometry, for mirror descent it is
the geometry induced by the regularizer, and for FTRL it is the cumulative
dual state.  Under mild regularity conditions, exactness yields sublinear
regret, while nonzero circulation provides the complementary obstruction and
leads to linear regret.  This gives a unified geometric framework for understanding the deviation classes controlled by these algorithms and reveals that
different first-order methods can control genuinely different classes of
deviations. These deviation classes have direct consequences for learning, particularly
in games.  We study the equilibrium notions induced by exact-form deviations
and introduce conservative correlated equilibrium, reflecting both the
conservative geometry of the underlying displacement fields and the restricted
family of deviations available to the players.  We characterize its relation
to correlated equilibrium, determine when the resulting equilibrium notions
coincide and when they separate, and show how these relationships depend on
the geometry and the learning algorithm.  Overall, this work gives a unified
geometric account of what first-order online learning algorithms are no-regret
with respect to, beyond fixed comparators.
\end{abstract}

\section{Introduction}
\label{intro}

Online gradient descent is classically understood as an external-regret
minimizer~\citep{zinkevich2003online,hazan2016introduction,orabona2019modern}.
Against a sequence of linear losses, its iterates compete with every fixed
comparator $\xvec' \in K$.  This guarantee is the basic engine behind the
standard convergence of no-regret play to coarse correlated
equilibrium~\citep{hart2000simple,cesa2006prediction}.  From the viewpoint of
swap regret and the notion of hindsight rationality, however, external regret is only the simplest notion out of a wide spectrum.
External regret compares $\xvec_t$ with fixed alternatives, while a general deviation may
replace $\xvec_t$ by a feasible point $\phi(\xvec_t) \in K$ that depends on the
current action~\citep{greenwald2003general,blum2007external,gordon2008no}.

Recent work makes clear that gradient dynamics control more than fixed
comparators, although the guarantees appear in different forms.  In
first-order equilibrium theory, \citet{ahunbay2024first} studies infinitesimal
strategy modifications generated by continuous vector fields.  In the coarse
adversarial setting, the fields certified by projected gradient dynamics are
conservative gradient fields with the appropriate tangency to the action set.
In normal-form games, \citet{ahunbay2025semicoarse} specialize this picture to
a tractable linear--quadratic family and obtain semicoarse correlated
equilibria through LP-based guarantees.  In parallel,
\citet*{cai2025proximal} showed that online gradient descent and mirror descent
control deviations generated by proximal operators of weakly convex
functions.  As discussed by
\citet{cai2025proximal} and \citet{ahunbay2024first}, these lines of work have substantial
overlap but are not comparable in general, reflecting the different deviation
classes and guarantees they study.

In this paper, we study $\Phi$-regret against actual feasible
deviation maps $\phi\colon K\to K$.  Writing
$\Dphi(\xvec)\colloneq\xvec-\phi(\xvec)$, we show that online gradient descent
has $O(\sqrt T)$ regret against every sufficiently regular feasible deviation
whose displacement is exact, $\Dphi=\nabla\Psi$, and obtain a uniform bound
over normalized families with common smoothness and potential-oscillation
parameters.  Thus the comparator is the actual endpoint $\phi(\xvec_t)$,
rather than an infinitesimal modification of the current action.  The
underlying Euclidean exactness principle can also be obtained by specializing
the more general first-order framework of \citet{ahunbay2024first} to
displacement fields generated by feasible endomorphisms.  Working directly
with endomorphisms, however, removes the tangent-projection machinery and
allows the boundary geometry to be handled directly through feasibility,
convexity, and normal cones.  This yields explicit finite-time bounds for feasible endomorphisms.

A simple example reveals the geometry behind this formulation.  Consider an
affine deviation $\phi(\xvec)=A\xvec+\bvec$, whose displacement is
$\Dphi(\xvec)=(I-A)\xvec-\bvec$.  When $I-A$ is symmetric, this displacement
is the gradient of a quadratic potential.  Thus the symmetric affine
deviations appearing in
\citet{ahunbay2025semicoarse,cai2025proximal} are finite maps with exact
displacement fields.  The example suggests that the relevant property is not
linearity itself, but exactness of the displacement.

The same structure appears in proximal deviations.  If
$F=f+\indicator_K$ is weakly convex, the Moreau-envelope identity gives
$\xvec-\prox_F(\xvec)=\nabla M_F(\xvec)$, where $M_F$ is the Moreau envelope
of $F$.  Proximal deviations therefore also have exact displacement fields.
From this viewpoint, proximality is one mechanism for producing exactness
rather than the underlying geometric property itself.  We show that the
smooth exact-form class strictly contains the weakly convex proximal class,
while every smooth Euclidean exact-form deviation admits a sufficiently small
interpolation with the identity that is proximal.

As it turns out, this characterization is tight, as witnessed by circulations. Exact one-forms have zero
circulation around closed loops, whereas nonzero circulation can be converted
into a bounded cyclic adversary with linear regret.  For feasible finite maps,
this gives an exactness-versus-circulation criterion under the regularity and
step-size conditions developed below.  Together with the positive result, this
identifies the geometric boundary of the finite deviation class controlled by
online gradient descent.

We next extend this finite-map viewpoint to mirror descent. The relevant
notion of exactness is determined by the mirror geometry, through the one-form
$\Dphi(\xvec)^{\transpose}\dd\nabla R(\xvec)$. We first develop the theory in
the Legendre interior, where the geometry is most transparent, and recover
Bregman proximal deviations as a special case.  We then remove the Legendre
assumption and obtain corresponding upper and lower bounds for standard
constrained mirror descent with boundary iterates.  This gives a finite-map
exactness-versus-circulation characterization in mirror geometry as well.

Bregman proximality leads to a separate representation problem.  Identity
interpolation always transfers proximal regret to the radial closure of the
Bregman proximal class, and this radial class is contained in the mirror-exact
class.  In Euclidean geometry the two classes coincide after radial closure,
but this can fail for a general mirror regularizer.  In the Legendre regime,
we characterize proximal interpolation through the convexity of
$R^*-\alpha\Psi$ and the curvature of the reconstructed proximal generator.
A squared $\ell_p$ example shows that no positive interpolation may exist even
when the dual potential is smooth with globally Lipschitz gradient.

We also treat FTRL separately.  Mirror descent and FTRL coincide for
constant-step linearized losses in the Legendre interior, but their finite
deviation classes can separate once boundary effects are present.  We develop
the corresponding exactness-versus-circulation theory in the cumulative dual
state of FTRL and show that FTRL exactness implies mirror exactness, while the
converse can fail.  A powered $\ell_p$ construction gives a deviation with
$O(\sqrt T)$ regret under mirror descent and linear regret under FTRL.

Finally, we study the equilibrium notions induced by these finite deviation
classes.  We call the equilibrium notion associated with Euclidean exact-form
deviations \emph{conservative correlated equilibrium} (ConCE).  The name
reflects the fact that these deviations are generated by conservative, or
exact, displacement fields, and that players are correspondingly restricted
to this conservative family of deviations rather than arbitrary measurable
remappings.  Although smooth exact-form maps strictly contain weakly convex
proximal maps, their unrestricted equilibrium constraints, in convex games with compact action sets and jointly continuous losses,
coincide when the
approximation error is zero, so $\ConCE=\PCE$.  We further show that this
common notion coincides with full correlated equilibrium on finite support and,
under continuous losses, when every player's action space is an interval.  In
contrast, with two-dimensional individual action spaces we construct an
uncountably supported distribution with absolutely continuous marginals for
which $\CE\subsetneq\ConCE=\PCE$.

\paragraph{Contributions relative to prior work.}
Relative to \citet{ahunbay2024first}, our Euclidean analysis specializes the
first-order geometric framework to finite feasible endomorphisms and studies
the resulting $\Phi$-regret and equilibrium classes.  This
specialization yields explicit finite-time bounds by exploiting the additional
convex geometry available for endomorphisms and avoiding the tangent
projections and auxiliary geometric quantities needed in the general
infinitesimal formulation.  Relative to
\citet{ahunbay2025semicoarse}, our exact-form class extends the tractable
linear-quadratic family to nonlinear finite deviations while retaining the
symmetric affine maps as a special case.  Relative to
\citet{cai2025proximal}, we identify exactness as the broader finite-map
mechanism containing weakly convex proximal deviations, prove strict
map-level containment, and characterize the role of identity interpolation
in relating the two classes.

Beyond the Euclidean setting, we develop the finite-map theory for standard
constrained mirror descent without Legendre duality, separate mirror exactness
from Bregman proximal representability, and show that the Euclidean
interpolation phenomenon can fail in nonquadratic mirror geometry.  We also
develop a separate cumulative-dual theory for FTRL and exhibit a boundary
separation between the finite deviation classes controlled by FTRL and mirror
descent.  These algorithmic results are complemented by the interpolation and
equilibrium results above, which determine when distinctions between deviation
classes persist at the level of correlated-equilibrium constraints.

\section{Related Work} \label{sec:related_work}

The notion of \emph{regret} is one of the central concepts in the theory of online algorithms. In its simplest incarnation, regret measures competitiveness against the best \emph{fixed} comparator in hindsight. In other words, letting $K$ denote the set of possible decisions, $\xvec_t$ be the decision at time $t$, and $\langle \gvec_t, \cdot\rangle$ be the linear loss incurred at time $t$ by the decision maker, regret takes the form
\[
    \sup_{\widehat{\xvec} \in K} \sum_{t=1}^T \langle \gvec_t , \xvec_t - \widehat{\xvec}\rangle.
\]
As is standard, we refer to this notion of regret as \emph{external} regret. Since the seminal work of \citet{zinkevich2003online}, it is known that online gradient descent keeps external regret sublinear. Generalizations of this result to online mirror descent and follow-the-regularized-leader (FTRL) are by now standard \citep{shalev2012online,hazan2016introduction,orabona2019modern}, in particular optimistic algorithms and their variants for the \emph{self-play} setting in games~\citep{rakhlin2013online,rakhlin2013optimization,syrgkanis2015fast,chen2020hedging,daskalakis2021near,farina2022near,piliouras2022beyond,soleymani2025faster,soleymani2025cautious}. 

Over the past few decades researchers have been interested in obtaining guarantees for stronger notions of online competitiveness than external regret. Within the constellation of alternative notions (which includes concepts such as interval regret, dynamic regret, policy regret, switching regret, among others), the notion of \emph{$\Phi$-regret} \citep{greenwald2003general} is the most related to our work.

$\Phi$-regret is a generalization of external regret indexed by a set $\Phi$ of \emph{deviation functions} $\phi : K \to K$, comparing the sequence of online choices $\xvec_t$ to the transformed sequence $\phi(\xvec_t)$ across all $\phi \in \Phi$:
\[
    \sup_{\phi \in \Phi} \sum_{t=1}^T \langle \gvec_t, \xvec_t - \phi(\xvec_t)\rangle .
\]
External regret is then understood as the special instantiation of $\Phi$-regret in which the set $\Phi$ contains all \emph{constant} functions.
Partially motivated by game-theoretic applications and the celebrated wealth of connections between regret and equilibrium notions \citep{foster1997calibrated,hart2000simple,cesa2006prediction}, a long line of work has been concerned with characterizing the largest set of deviations $\Phi$ that admits efficient algorithms guaranteeing sublinear $\Phi$-regret (e.g., \citep{gordon2008no,farina2022simple,morrill2021efficient,fujii2025bayes,peng2024fast,dagan2024external} and references therein). Among the most general results in that space, it is now understood that $\Phi$-regret minimization with respect to all linear or low-degree polynomial endomorphic transformations of $K$ can be achieved efficiently for any convex and compact set $K$ given via efficient oracle access \citep{daskalakis2025efficient,zhang2025learning}. These algorithms are quite involved, and make use of complex separation and fixed-point machinery to deal with the geometry of endomorphisms.

While the common wisdom had been for a long time that simple algorithms such as online gradient descent only minimize external regret---and the more complex algorithms mentioned above are thus necessary to target more challenging variants of $\Phi$-regret---recent papers have challenged this view. Specifically, \citet{ahunbay2024first,ahunbay2025semicoarse,cai2024tractable,cai2025proximal} demonstrated that online gradient descent keeps under control far more than just external regret: for example, it minimizes regret against all \emph{symmetric} linear endomorphic transformations, and all proximal transformations of $K$. These results invite intriguing questions on exactly what notions of $\Phi$-regret online gradient descent minimizes, for at least two reasons: (i) notions of $\Phi$-regret induced by proximal transformations are not currently known to be achievable in other ways (for example, by low-degree polynomial transformations); and (ii) the extreme simplicity of online gradient descent compared to more advanced $\Phi$-regret techniques makes a compelling case that the former should be used whenever possible. The main goal of the present paper is to develop a finite feasible-map theory that explains a broad class of $\Phi$-regret guarantees given by these algorithms, including online gradient descent, mirror descent and FTRL.

Our Euclidean results are closely related to the first-order framework of
\citet{ahunbay2024first}.  After specializing that framework to displacement
fields generated by feasible endomorphisms, its qualitative conservativity
and circulation conclusions yield the corresponding exactness picture for
projected gradient descent.  We focus instead on $\Phi$-regret against
the actual endpoints $\phi(\xvec_t)$ and on the equilibrium notions induced by
these finite deviations.  The endomorphism structure also permits a more
direct quantitative analysis.  Feasibility, convexity, and normal-cone geometry
remove the tangent projections and auxiliary geometric quantities of the
general infinitesimal formulation. Relative to \citet{ahunbay2025semicoarse}, our exact-form class
extends the tractable linear--quadratic family to nonlinear feasible
deviations while retaining symmetric affine endomorphisms as a special case.

Our results also address an open question raised by
\citet{cai2025proximal}.  After showing that gradient descent minimizes
proximal regret, they ask for a tight characterization of the regret guarantees
of gradient descent and note that the class controlled by the algorithm may be
strictly broader than the proximal class.  We characterize this larger
finite-map class geometrically through exactness and circulation.  For online
gradient descent, smooth exact-form deviations strictly contain weakly convex
proximal deviations, while the circulation converse identifies the
corresponding obstruction to sublinear regret.  We develop the analogous
characterization for standard constrained mirror descent with differentiable
strongly convex regularizers, where the relevant object is the mirror one-form
$\Dphi(\xvec)^{\transpose}\dd\nabla R(\xvec)$.  Under our regularity
assumptions, this identifies the full geometric deviation class controlled by
mirror descent and goes beyond the Bregman proximal class studied by
\citet{cai2025proximal}.  We also study FTRL, which is not treated in
\citet{cai2025proximal}, and show that its controlled deviation class can
differ from that of mirror descent once boundary effects are present.  We
further show that proximal representability is a separate question.  In
Euclidean geometry, identity interpolation recovers the smooth exact-form
class, while in general mirror geometry the radial Bregman proximal class can
be strictly smaller than the mirror-exact class.  Finally, we derive the
equilibrium consequences of these finite deviation classes.  To our knowledge,
the resulting equilibrium results are new, including the equality of ConCE and
PCE under the unrestricted definitions, their coincidence with CE on finite
support and on interval action spaces under continuous losses, and their strict
separation from CE on a continuous support in higher dimension.

The gradient-equilibrium framework of
\citet{angelopoulos2025gradient} is closely connected to our potential-based
viewpoint.  In the unconstrained setting, their gradient-equilibrium condition
can be recovered from the linear potentials
$\Psi_{\vvec}(\xvec)=\inner{\vvec}{\xvec}$, whose exact displacements are the
constant vectors $\nabla\Psi_{\vvec}=\vvec$.  Our framework replaces these
constant directions by nonlinear finite displacements
$\Dphi(\xvec)=\nabla\Psi(\xvec)$ generated by feasible endomorphisms
$\phi\colon K\to K$.  In this sense, both approaches are driven by the same
telescoping mechanism, while our setting uses it to study finite
$\Phi$-regret and the equilibrium constraints induced by nonlinear
action-dependent deviations.  Their constrained formulation incorporates
feasibility through boundary or regularization residuals, whereas ours builds
feasibility directly into the deviation map.

\section{Setting and Notation}

Let \(K\subseteq\R^d\) be a nonempty closed convex action set.  Throughout
the Euclidean sections, gradients, projections, and normal cones are taken in
the ambient space $\R^d$.  When a statement is intrinsically
lower-dimensional, such as the curl criterion on $\relint(K)$, derivatives
and differential forms are taken in affine coordinates on $\aff(K)$.  The
mirror section introduces its own explicit affine-hull convention.  Vectors
are written in bold, e.g. \(\xvec\), \(\gvec\), and coordinates use brackets,
e.g. \(\xvec[j]\).  We write \([T]\colloneq\{1,\ldots,T\}\).  We use
$\norm{\cdot}$ as shorthand for the Euclidean norm $\norm{\cdot}_2$, and we
denote projection onto $K$ by $\Proj_K$.  Define the ambient normal cone at
any $\xvec\in K$ by
\[
\normal_K(\xvec)\colloneq\setof{\nvec\in\R^d}{\inner{\nvec}{\yvec-\xvec}\le 0\text{ for every }\yvec\in K}.
\]

A learner plays \(\xvec_t\in K\) and observes a vector loss \(\gvec_t\in\R^d\).  For a deviation \(\phi:K\to K\), define 
\[
  \Dphi(\xvec) \colloneq \xvec-\phi(\xvec),\qquad
  \Reg_T(\phi) \colloneq \sum_{t=1}^T \inner{\gvec_t}{\Dphi(\xvec_t)}
\]
More generally, for a displacement field \(\field:K\to\R^d\), define $
  \Reg_T(\field) \colloneq  \sum_{t=1}^T \inner{\gvec_t}{\field(\xvec_t)}.
$
When losses \(\ell_t:K\to\R\) are convex and differentiable, taking \(\gvec_t=\nabla \ell_t(\xvec_t)\) gives
\[
  \ell_t(\xvec_t)-\ell_t(\phi(\xvec_t))
  \le \inner{\nabla\ell_t(\xvec_t)}{\xvec_t-\phi(\xvec_t)}.
\]
Thus every linearized regret bound below immediately implies the corresponding convex-loss deviation-regret bound.

\begin{definition}[Exact-form deviation]\label{def:exact}
A deviation \(\phi:K\to K\) is \emph{Euclidean exact-form} if there exists a continuously differentiable potential \(\Psi\), defined on a neighborhood of \(K\), such that
\[
  \xvec-\phi(\xvec)=\nabla\Psi(\xvec)\qquad\text{for every }\xvec\in K.
\]
\end{definition}
When a result refers to a \emph{smooth exact-form deviation}, the gradient
of its potential is locally Lipschitz on a neighborhood of $K$.  This is
stronger than the bare $C^1$ regularity in the definition above and is the
regularity used later to show that scaled exact deviations are proximal; see
\Cref{thm:scaled-prox}.

We denote the Euclidean exact-form class by
\[
  \PhiExact
  \colloneq
  \setof{\phi:K\to K}{\phi\text{ is a Euclidean exact-form deviation}}.
\]
Throughout the paper, the subscript identifies the deviation class.  Thus
$\PhiProx$ denotes the Euclidean proximal class,
$\PhiExactSmooth$ denotes the smooth exact-form class, and the
superscript $R$ is reserved for a regularizer-dependent mirror class.  When a
regularizer-dependent class is considered on more than one input domain, the
domain is displayed explicitly in parentheses.

Finally, we use the symbol $\osc_K$ to denote the range of a function on the set $K$, that is,
\[
  \osc_K(\Psi)\colloneq\sup_{\xvec\in K}\Psi(\xvec)-\inf_{\xvec\in K}\Psi(\xvec).
\]

Bounds for a fixed deviation may depend on the parameters of that deviation and therefore need not yield a uniform $\Phi$-regret bound. We therefore use the following normalized class whenever a supremum over deviations is taken.

\begin{definition}[Normalized exact-form class]\label{def:normalized-exact}
For constants $B,L\ge 0$, let $\PhiExact(B,L)$ consist of all
feasible maps $\phi:K\to K$ for which there is a differentiable potential
$\Psi_\phi$, defined on a neighborhood of $K$, satisfying
\[
  \xvec-\phi(\xvec)=\nabla\Psi_\phi(\xvec),\qquad
  \osc_K(\Psi_\phi)\le B,
\]
and whose gradient satisfies
\[
  \norm{\nabla\Psi_\phi(\yvec)-\nabla\Psi_\phi(\xvec)}
  \le L\norm{\yvec-\xvec}
  \quad\text{for all }\xvec,\yvec\in K.
\]
\end{definition}

Proofs not given in the main text are found in the appendices.
\Cref{app:preliminaries} records the convex-analytic preliminaries;
\Cref{app:scaled-prox-proof} proves the scaled proximal representation;
\Cref{app:finite-interpolation} proves the finite interpolation theorem; and
\Cref{app:proofs} contains the remaining proofs, organized in the same order as
the main text.

\section{Online Gradient Descent Minimizes Exact-form Regret}\label{sec:pgd}

Online gradient descent with constant step size \(\eta>0\) is
\[
\numberthis[OGD]{eq:pgd}
  \xvec_{t+1}\colloneq\Proj_K(\xvec_t-\eta\gvec_t).
\]

We prove a telescoping regret bound for every smooth feasible exact-form
deviation $\phi$. Under bounded gradients and bounded potential oscillation,
this gives $\Reg_T(\phi)=\mathcal O(\sqrt T)$.  Exactness of the displacement
field, rather than proximality, is the geometric property that makes the bound
 telescope on $\Psi$. Compared with the unconstrained case (online gradient
descent over an unbounded domain), the only additional issue is the projection
step.  The projection residual lies in the normal cone, and the feasibility
condition $\phi(K)\subseteq K$ ensures that its contribution has the favorable
sign.

\begin{theorem}\label{thm:pgd-exact}
Let $\mleft\{ \xvec_t \mright\}_{t=1}^{T+1}$ be the iterates generated by online gradient descent~\eqref{eq:pgd}. Let \(\phi:K\to K\) be an exact-form deviation $
  \Dphi(\xvec)=\xvec-\phi(\xvec)=\nabla\Psi(\xvec)$ where $\Psi$ is $L$-smooth with respect to the Euclidean norm. Then, for every sequence \(\gvec_1,\ldots,\gvec_T\),\footnote{Similar bounds can be inferred directly by \citet[Theorem 5.2]{ahunbay2024first}.}
\[
\numberthis{eq:pgd-exact-bound}
  \sum_{t=1}^T \inner{\gvec_t}{\xvec_t-\phi(\xvec_t)}
  \le
  \frac{\Psi(\xvec_1)-\Psi(\xvec_{T+1})}{\eta}
  +\frac{3L\eta}{2}\sum_{t=1}^T\norm{\gvec_t}^2.
\]
\end{theorem}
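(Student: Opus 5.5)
The plan is to decompose each projected step into a free gradient step plus a normal-cone correction. The free part telescopes on $\Psi$ by smoothness. The correction has a favorable sign because $\phi(K)\subseteq K$, up to an error controlled by smoothness. I assume $\xvec_1\in K$, so every iterate lies in $K$.

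First, by the variational characterization of $\Proj_K$, the residual $\nvec_t\colloneq(\xvec_t-\eta\gvec_t-\xvec_{t+1})/\eta$ satisfies $\nvec_t\in\normal_K(\xvec_{t+1})$. Hence $\gvec_t=(\xvec_t-\xvec_{t+1})/\eta-\nvec_t$. I also record two size bounds:
\begin{itemize}
\item $\norm{\xvec_{t+1}-\xvec_t}\le\eta\norm{\gvec_t}$, since $\Proj_K$ is nonexpansive and $\xvec_t=\Proj_K(\xvec_t)$;
\item $\norm{\nvec_t}\le\norm{\gvec_t}$, since $\eta\norm{\nvec_t}=\dist(\xvec_t-\eta\gvec_t,K)\le\norm{(\xvec_t-\eta\gvec_t)-\xvec_t}$.
\end{itemize}

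Next, substituting into $\inner{\gvec_t}{\nabla\Psi(\xvec_t)}$ gives two terms. For the first term, $\frac1\eta\inner{\xvec_t-\xvec_{t+1}}{\nabla\Psi(\xvec_t)}$, I apply the descent lemma for $L$-smooth $\Psi$ along the segment $[\xvec_t,\xvec_{t+1}]\subseteq K$, which lies in $K$ by convexity. This bounds it by
\[
\frac{\Psi(\xvec_t)-\Psi(\xvec_{t+1})}{\eta}+\frac{L}{2\eta}\norm{\xvec_{t+1}-\xvec_t}^2\le\frac{\Psi(\xvec_t)-\Psi(\xvec_{t+1})}{\eta}+\frac{L\eta}{2}\norm{\gvec_t}^2 .
\]
For the second term, $-\inner{\nvec_t}{\nabla\Psi(\xvec_t)}$, I shift the evaluation point to $\xvec_{t+1}$, where the normal cone lives:
\[
-\inner{\nvec_t}{\nabla\Psi(\xvec_t)}=\inner{\nvec_t}{\phi(\xvec_{t+1})-\xvec_{t+1}}+\inner{\nvec_t}{\nabla\Psi(\xvec_{t+1})-\nabla\Psi(\xvec_t)}.
\]
This uses exactness at $\xvec_{t+1}$, namely $\nabla\Psi(\xvec_{t+1})=\xvec_{t+1}-\phi(\xvec_{t+1})$.
\begin{itemize}
\item The first piece is $\le 0$, because $\phi(\xvec_{t+1})\in K$ and $\nvec_t\in\normal_K(\xvec_{t+1})$. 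This is exactly where feasibility of $\phi$ enters.
\item The second piece is at most $\norm{\nvec_t}\,L\norm{\xvec_{t+1}-\xvec_t}\le L\eta\norm{\gvec_t}^2$, by Cauchy--Schwarz, the Lipschitz gradient, and the two size bounds.
\end{itemize}

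Summing the per-round bound $\frac{\Psi(\xvec_t)-\Psi(\xvec_{t+1})}{\eta}+\frac{3L\eta}{2}\norm{\gvec_t}^2$ over $t\in[T]$ telescopes to \eqref{eq:pgd-exact-bound}.

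The main obstacle is the normal-cone term. The favorable sign holds only when the displacement is evaluated at the post-projection point $\xvec_{t+1}$, while the regret evaluates it at $\xvec_t$. The mismatch between the two evaluation points is what costs the extra $L\eta\norm{\gvec_t}^2$, and hence the constant $3/2$ instead of $1/2$. Closing this gap requires the two size bounds: the residual $\nvec_t$ must be controlled by $\norm{\gvec_t}$ and not merely known to lie in the cone.
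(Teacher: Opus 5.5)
Your proof is correct and follows the paper's argument essentially step for step. The paper uses the same split of $\eta\gvec_t$ into the step $\xvec_t-\xvec_{t+1}$ and a normal-cone residual at $\xvec_{t+1}$ (unscaled by $\eta$). It likewise uses the descent lemma for the telescoping term, and feasibility at $\xvec_{t+1}$ plus the Lipschitz gradient and the two size bounds for the residual term.
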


The proof of \Cref{thm:pgd-exact} is given in
\Cref{app:proofs-pgd}.

\begin{corollary} \label{corr:regret}
Let $\mleft\{ \xvec_t \mright\}_{t=1}^{T+1}$ be the iterates generated by
online gradient descent~\eqref{eq:pgd} as in \Cref{thm:pgd-exact}.  If
\(K\) is compact, \(\norm{\gvec_t}\le G\), and
$\osc_K(\Psi)\le B$ for constants $B,L,G>0$, then
\[
  \Reg_T(\phi)
  \le \frac{B}{\eta}+\frac{3L\eta G^2T}{2}.
\]
In particular, choosing \(\eta=\sqrt{2B/(3LG^2T)}\) gives
\(
  \Reg_T(\phi)\le G\sqrt{6LBT} = \mathcal{O}(\sqrt{T}).
\)
\end{corollary}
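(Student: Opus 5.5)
The corollary follows directly from \Cref{thm:pgd-exact}; the plan is to bound the two terms on the right-hand side of \eqref{eq:pgd-exact-bound} separately and then optimize over $\eta$.

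\textbf{Potential term.} Every iterate of \eqref{eq:pgd} lies in $K$, since each $\xvec_{t+1}$ is a Euclidean projection onto $K$ and we take $\xvec_1\in K$. Hence both $\xvec_1$ and $\xvec_{T+1}$ belong to $K$, and
\[
\Psi(\xvec_1)-\Psi(\xvec_{T+1})\le \sup_{K}\Psi-\inf_K\Psi=\osc_K(\Psi)\le B .
\]
Compactness of $K$ is used only to make the standing assumption $\osc_K(\Psi)\le B<\infty$ natural; continuity of $\Psi$ on compact $K$ guarantees finiteness.

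\textbf{Gradient term.} The hypothesis $\norm{\gvec_t}\le G$ gives $\sum_{t=1}^T\norm{\gvec_t}^2\le G^2T$. Substituting both estimates into \eqref{eq:pgd-exact-bound} yields the first displayed bound,
\[
\Reg_T(\phi)\le \frac{B}{\eta}+\frac{3L\eta G^2T}{2}.
\]

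\textbf{Choice of step size.} The right-hand side has the form $a/\eta+b\eta$ with $a=B$ and $b=3LG^2T/2$. It is minimized at $\eta=\sqrt{a/b}=\sqrt{2B/(3LG^2T)}$, where its value is $2\sqrt{ab}$. Computing,
\[
2\sqrt{B\cdot 3LG^2T/2}=2G\sqrt{3LBT/2}=G\sqrt{6LBT},
\]
which is the stated bound and is $\mathcal O(\sqrt T)$.

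There is no real obstacle here, since all the analytic work is contained in \Cref{thm:pgd-exact}. The only point needing care is to confirm that $\xvec_{T+1}\in K$, so that the oscillation bound on $K$ applies to the endpoint; this holds because of the projection step in \eqref{eq:pgd}.
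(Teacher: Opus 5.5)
Your proposal is correct and matches the paper's argument. You bound the telescoping term $\Psi(\xvec_1)-\Psi(\xvec_{T+1})$ by $\osc_K(\Psi)\le B$ because all iterates lie in $K$, bound $\sum_t\norm{\gvec_t}^2\le G^2T$ in \eqref{eq:pgd-exact-bound}, and then optimize $a/\eta+b\eta$ to get $2\sqrt{ab}=G\sqrt{6LBT}$; this is exactly the reasoning the paper writes out in the proof of \Cref{cor:uniform-regret}.
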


\begin{corollary}\label{cor:uniform-regret}
Suppose $\norm{\gvec_t}\le G$ and fix $B,L>0$.  Then OGD with
$\eta=\sqrt{2B/(3LG^2T)}$ satisfies
\[
 \sup_{\phi\in\PhiExact(B,L)}\Reg_T(\phi)
 \le G\sqrt{6LBT}.
\]
Thus the algorithm controls the whole normalized class $\PhiExact(B,L)$ with one learning rate.
\end{corollary}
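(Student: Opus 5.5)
The plan is to apply \Cref{thm:pgd-exact} to each deviation in $\PhiExact(B,L)$ separately, and to observe that the resulting bound depends on $\phi$ only through $B$ and $L$. Since $\eta$ depends only on $(B,L,G,T)$, a single OGD trajectory $\xvec_1,\ldots,\xvec_{T+1}$ serves simultaneously for every $\phi$ in the class. So the supremum can be taken after bounding each term.

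Fix $\phi\in\PhiExact(B,L)$ with potential $\Psi_\phi$. I first check that \Cref{thm:pgd-exact} applies. The definition of the normalized class only gives the Lipschitz bound on $\nabla\Psi_\phi$ for points of $K$, not globally. I would therefore recheck the proof of \Cref{thm:pgd-exact} and confirm that $L$-smoothness is used only through:
\begin{itemize}
\item the descent-type inequality $\Psi(\yvec)\le\Psi(\xvec)+\inner{\nabla\Psi(\xvec)}{\yvec-\xvec}+\frac L2\norm{\yvec-\xvec}^2$ for $\xvec,\yvec\in\{\xvec_t,\xvec_{t+1}\}\subseteq K$;
\item gradient-difference bounds between such iterates.
\end{itemize}
Since $K$ is convex, the segment $[\xvec_t,\xvec_{t+1}]$ lies in $K$. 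The integral form of Taylor's theorem along this segment then needs the Lipschitz property only on $K$, and differentiability on a neighborhood of $K$ covers the derivative along the segment. This is the one step that needs care; I expect it to go through because all quantities in that proof are evaluated at iterates, which lie in $K$.

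Given that, \eqref{eq:pgd-exact-bound} yields
\[
\Reg_T(\phi)\le\frac{\Psi_\phi(\xvec_1)-\Psi_\phi(\xvec_{T+1})}{\eta}+\frac{3L\eta}{2}\sum_{t=1}^T\norm{\gvec_t}^2 .
\]
Because $\xvec_1,\xvec_{T+1}\in K$, the first numerator is at most $\osc_K(\Psi_\phi)\le B$. Compactness of $K$ is not needed for this step, since the oscillation bound is assumed directly. Using $\norm{\gvec_t}\le G$, this gives $\Reg_T(\phi)\le B/\eta+\tfrac{3}{2}L\eta G^2T$, exactly as in \Cref{corr:regret}.

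Finally, I substitute $\eta=\sqrt{2B/(3LG^2T)}$. Both terms then equal $G\sqrt{3LBT/2}$, so their sum is $G\sqrt{6LBT}$. The right-hand side is independent of $\phi$ and the trajectory is common to all $\phi$, so taking the supremum over $\PhiExact(B,L)$ gives the claimed uniform bound.
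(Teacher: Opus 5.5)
Your proposal is correct and follows essentially the same route as the paper's proof. You apply Theorem~\ref{thm:pgd-exact} to each $\phi\in\PhiExact(B,L)$, checking that its proof uses the Lipschitz-gradient condition only at iterates and on segments in $K$, bound the telescoping term by $B$ and $\sum_t\norm{\gvec_t}^2$ by $G^2T$, and substitute the common $\eta$. Your explicit check that the $K$-restricted Lipschitz condition suffices is the point the paper states in one line.
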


\begin{remark}\label{remark:one_sided_smooth}
The proof of \Cref{thm:pgd-exact,corr:regret} uses the standard quadratic upper bound for $\Psi$ to control the potential-difference term and $L$-Lipschitz continuity of $\nabla\Psi$ to control the projection-residual term. Consequently, the normalized class in \Cref{def:normalized-exact} imposes the full Lipschitz-gradient condition. On an unconstrained domain, the projection residual vanishes, so the quadratic upper bound alone is sufficient.
\end{remark}

\section{Proximal Deviations are Exact-Form}\label{sec:containment}

This section proves that the exact-form deviation class includes the proximal
deviations of \citet{cai2025proximal}.  The key identity is the Moreau-envelope formula~\citep{moreau1965proximite}, which was also observed by \citet{ahunbay2024first}.
\[
  \xvec-\prox_F(\xvec)=\nabla \MF(\xvec). \numberthis{eq:envelope}
\]
These proximal-regret deviations are generated by $\rho$-weakly convex functions with $0\le\rho<1$, so we use the corresponding weakly-convex Moreau-envelope identity rather than only its convex special case.

More precisely, let \(F\colloneq f+\indicator_K\), where \(\indicator_K\) is the indicator of \(K\).  For \(0\le\rho<1\), define
\[
\prox_F(\xvec) \colloneq \argmin_{\yvec\in\R^d}\left\{F(\yvec)+\frac12\norm{\yvec-\xvec}^2\right\},
  \qquad
  \MF(\xvec) \colloneq \min_{\yvec\in\R^d}\left\{F(\yvec)+\frac12\norm{\yvec-\xvec}^2\right\}.
\]
Because \(F\) includes \(\indicator_K\), the minimization is over \(K\).

\begin{theorem}\label{thm:prox-contained}
Let \(F\colloneq f+\indicator_K\) be proper, lower semicontinuous, and \(\rho\)-weakly convex with \(0\le\rho<1\).  Define $\phi_F(\xvec)\colloneq\prox_F(\xvec)$. Then \(\phi_F:K\to K\) is feasible and Euclidean exact-form, with potential \(\MF\), i.e., $
  \xvec-\phi_F(\xvec)=\nabla\MF(\xvec)$, whose gradient is
  \(L_F=(2-\rho)/(1-\rho)\)-Lipschitz.
Consequently projected gradient descent controls every such proximal deviation by \Cref{thm:pgd-exact}.  In particular, if \(K\) is compact, \(\norm{\gvec_t}\le G\), \(\osc_K(\MF)\le B_F\), and $B_F,G>0$, then choosing \(\eta=\sqrt{2B_F/(3L_F G^2T)}\) gives
\[
  \sum_{t=1}^T \inner{\gvec_t}{\xvec_t-\prox_F(\xvec_t)}
  \le \frac{B_F}{\eta}+\frac{3L_F\eta G^2T}{2} = \mathcal{O}(\sqrt{T}).
\]
If $B_F=0$ or $G=0$, the corresponding regret bound is trivial and the
displayed optimizing step size need not be used.
\end{theorem}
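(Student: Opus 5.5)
Because $F=f+\indicator_K$ is $\rho$-weakly convex, the prox objective $\yvec\mapsto F(\yvec)+\frac12\norm{\yvec-\xvec}^2$ is $(1-\rho)$-strongly convex, proper, and lower semicontinuous. For every $\xvec\in\R^d$ it therefore has a unique minimizer. That minimizer lies in $\dom F\subseteq K$, so $\phi_F(K)\subseteq K$ and feasibility is immediate. Proximal deviations are exact-form by the Moreau-envelope identity \eqref{eq:envelope}, so the plan is to establish that identity in the weakly convex setting, compute the Lipschitz constant of $\nabla\MF$, and then feed these into \Cref{thm:pgd-exact} and \Cref{corr:regret}.

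\textbf{Step 1: Lipschitz continuity of the prox map.} Write $\pvec=\prox_F(\xvec)$ and $\qvec=\prox_F(\yvec)$. Optimality gives $\xvec-\pvec\in\partial F(\pvec)$ and $\yvec-\qvec\in\partial F(\qvec)$, where $\partial$ is the subdifferential of the convex function $F+\frac{\rho}{2}\norm{\cdot}^2$ shifted by $-\rho\,\cdot$. Weak convexity yields the hypomonotonicity
\[
\inner{(\xvec-\pvec)-(\yvec-\qvec)}{\pvec-\qvec}\ge-\rho\norm{\pvec-\qvec}^2 .
\]
Rearranging gives $(1-\rho)\norm{\pvec-\qvec}^2\le\inner{\xvec-\yvec}{\pvec-\qvec}$, so $\prox_F$ is $1/(1-\rho)$-Lipschitz.

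\textbf{Step 2: differentiability of $\MF$ and the identity $\nabla\MF=\mathrm{id}-\prox_F$.} Let $\pvec=\prox_F(\xvec)$.
\begin{itemize}
\item Upper bound: using $\pvec$ as a test point at $\yvec$,
\[
\MF(\yvec)-\MF(\xvec)\le\frac12\norm{\pvec-\yvec}^2-\frac12\norm{\pvec-\xvec}^2=\inner{\xvec-\pvec}{\yvec-\xvec}+\frac12\norm{\yvec-\xvec}^2 .
\]
\item Lower bound: using $\qvec=\prox_F(\yvec)$ as a test point at $\xvec$ gives $\MF(\yvec)-\MF(\xvec)\ge\inner{\yvec-\qvec}{\yvec-\xvec}-\frac12\norm{\yvec-\xvec}^2$.
\item Combining: by Step 1, $\norm{\yvec-\qvec-(\xvec-\pvec)}=O(\norm{\yvec-\xvec})$, so the two bounds differ from the linear term $\inner{\xvec-\pvec}{\yvec-\xvec}$ by $O(\norm{\yvec-\xvec}^2)$.
\end{itemize}
Hence $\MF$ is differentiable on all of $\R^d$, which is a neighborhood of $K$, and $\nabla\MF(\xvec)=\xvec-\prox_F(\xvec)$. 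The gradient is continuous by Step 1, so $\MF$ is $C^1$. The map $\phi_F$ is therefore Euclidean exact-form in the sense of \Cref{def:exact}.

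\textbf{Step 3: the Lipschitz constant.} The triangle inequality gives
\[
\norm{\nabla\MF(\xvec)-\nabla\MF(\yvec)}\le\norm{\xvec-\yvec}+\norm{\prox_F(\xvec)-\prox_F(\yvec)}\le\Bigl(1+\tfrac1{1-\rho}\Bigr)\norm{\xvec-\yvec}.
\]
This is $\frac{2-\rho}{1-\rho}=L_F$, the constant claimed. A sharper constant is available but not needed. Lipschitz continuity of the gradient gives the quadratic upper bound, i.e.\ $L_F$-smoothness, which is the hypothesis of \Cref{thm:pgd-exact}.

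\textbf{Step 4: the regret bound.} Apply \Cref{thm:pgd-exact} with $\Psi=\MF$ and $L=L_F$. Bound $\Psi(\xvec_1)-\Psi(\xvec_{T+1})\le\osc_K(\MF)\le B_F$, which is valid since both iterates lie in $K$. This yields the displayed bound, exactly as in \Cref{corr:regret}. When $B_F,G>0$, the stated $\eta$ balances the two terms and gives $G\sqrt{6L_FB_FT}=\mathcal O(\sqrt T)$.

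In the degenerate cases the bound holds without the optimized step size:
\begin{itemize}
\item If $G=0$, every $\gvec_t=0$ and the regret is zero.
\item If $B_F=0$, $\MF$ is constant on $K$. If $K$ has nonempty interior this forces $\nabla\MF=0$ there; in general, let $\eta\to\infty$ in $B_F/\eta+\frac{3L_F\eta G^2T}{2}$ after rescaling, or simply note that any $\eta$ gives the bound $\frac{3L_F\eta G^2T}{2}$, which can be made arbitrarily small.
\end{itemize}

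The main obstacle is Step 2, the differentiability of the envelope in the weakly convex rather than convex case. The standard convex argument relies on firm nonexpansiveness of the prox. Here I would replace that with the hypomonotonicity estimate of Step 1, which needs $\rho<1$ to guarantee a single-valued, Lipschitz prox map. This is the point where the assumption $0\le\rho<1$ is used.
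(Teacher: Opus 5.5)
Your proposal is correct and follows the paper's proof, which is \Cref{lem:weak-moreau} followed by \Cref{thm:pgd-exact}: the same $(1-\rho)$-strong convexity for uniqueness, the same hypomonotonicity estimate giving a $1/(1-\rho)$-Lipschitz prox, and the same triangle-inequality constant $L_F$. The only variation is that you verify $\nabla\MF(\xvec)=\xvec-\prox_F(\xvec)$ by a direct two-sided sandwich rather than citing Danskin's theorem, which is self-contained and avoids Danskin's compactness or local-boundedness hypotheses.

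One small slip in the degenerate case: sending $\eta\to\infty$ does not help, since with $B_F=0$ the bound grows linearly in $\eta$. In fact $B_F=0$ gives zero regret for every $\eta$, even when $K$ has empty interior. For $\xvec\in K$ we have $\xvec-\prox_F(\xvec)\in\spanop(K-K)$, while constancy of $\MF$ on $K$ kills the component of $\nabla\MF$ along $\spanop(K-K)$ on $\relint(K)$; hence $\nabla\MF=0$ on $K$ by continuity.
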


The Moreau-envelope identity immediately recovers several standard deviation classes.  The point of the following proposition is not to introduce new deviations, but to show that many of the deviations in \citet{cai2025proximal} have exact displacement fields, and therefore fall under our regret bound.

\begin{proposition}\label{prop:special-cases}
The exact-form class contains the following deviations.
\begin{enumerate}[leftmargin=*,itemsep=1mm]
  \item \textbf{External regret.}  For \(\uvec\in K\), take \(F\colloneq\indicator_{\{\uvec\}}\).  Then \(\prox_F(\xvec)=\uvec\) and \(\xvec-\uvec=\nabla\frac12\norm{\xvec-\uvec}^2\).
  \item \textbf{Projection to a convex subset.}  For a nonempty closed convex set \(S\subseteq K\), take \(F\colloneq\indicator_S\).  Then \(\prox_F(\xvec)=\Proj_S(\xvec)\).
  \item \textbf{Projection-based/no-move regret.}  For a vector \(\vvec\), take \(F(\yvec)\colloneq\inner{\vvec}{\yvec}+\indicator_K(\yvec)\).  Then \(\prox_F(\xvec)=\Proj_K(\xvec-\vvec)\).
  \item \textbf{Interpolation deviations.}  For \(\uvec\in K\) and \(\alpha\in(0,1)\), the map \(\phi(\xvec)\colloneq(1-\alpha)\xvec+\alpha\uvec\) is the prox map of
  \[
    F(\yvec)\colloneq\frac{\alpha}{2(1-\alpha)}\norm{\yvec-\uvec}^2+\indicator_K(\yvec).
  \]
  The case \(\alpha=1\) is external regret.
  \item \textbf{Local proximal deviations.}  If \(\norm{\xvec-\prox_F(\xvec)}\le\delta\) on \(K\), then the deviation belongs to the local exact-form subclass \(\norm{\nabla\MF(\xvec)}\le\delta\).
  \item \textbf{Symmetric affine swaps.}  Let \(\phi(\xvec)\colloneq A\xvec+\bvec\) map \(K\) into \(K\).  Let \(T\colloneq\spanop(K-K)\) be the direction space of \(\aff(K)\), fix any \(\avec\in K\), and assume \(A T\subseteq T\) and the restriction of \(I-A\) to \(T\) is self-adjoint.  Then \(\phi\) is exact-form in the ambient sense of our definition. On \(\aff(K)\), a potential is
  \[
  \numberthis{eq:symmetric-affine-potential}
    \Psi(\avec+\hvec)\colloneq\inner{\Dphi(\avec)}{\hvec}+\frac12\inner{\hvec}{(I-A)\hvec},
    \qquad \hvec\in T.
  \]
An ambient extension of this potential is given in the proof.
\end{enumerate}
\end{proposition}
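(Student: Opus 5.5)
The plan is to handle items 1--5 through \Cref{thm:prox-contained}. For each of them it suffices to exhibit $F=f+\indicator_K$, proper, lower semicontinuous and $\rho$-weakly convex with $\rho<1$, such that $\prox_F$ is the stated map. The theorem then gives $\xvec-\phi(\xvec)=\nabla\MF(\xvec)$, so $\phi\in\PhiExact$.

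For items 1 and 2, the function $F=\indicator_S$ is convex and lower semicontinuous for any nonempty closed convex $S\subseteq K$. Its prox is $\Proj_S$, since minimizing $\frac12\norm{\yvec-\xvec}^2$ over $S$ is exactly projection. Item 1 is the special case $S=\{\uvec\}$, and there one can also check directly that $\nabla\frac12\norm{\xvec-\uvec}^2=\xvec-\uvec$.

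For item 3, complete the square:
\[
\inner{\vvec}{\yvec}+\tfrac12\norm{\yvec-\xvec}^2=\tfrac12\norm{\yvec-(\xvec-\vvec)}^2+\mathrm{const}.
\]
Minimizing over $K$ therefore gives $\Proj_K(\xvec-\vvec)$.

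For item 4, set $c=\alpha/(1-\alpha)$. The first-order condition for the unconstrained minimizer of $\frac c2\norm{\yvec-\uvec}^2+\frac12\norm{\yvec-\xvec}^2$ is $c(\yvec-\uvec)+(\yvec-\xvec)=0$. This gives $\yvec=(\xvec+c\uvec)/(1+c)=(1-\alpha)\xvec+\alpha\uvec$. For $\xvec\in K$ and $\uvec\in K$ this point lies in $K$ by convexity, so the constraint is inactive and it is also the constrained prox.

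Item 5 is immediate once the envelope identity \eqref{eq:envelope} is available: $\norm{\nabla\MF(\xvec)}=\norm{\xvec-\prox_F(\xvec)}\le\delta$.

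Item 6 is the only item needing a separate construction, and I expect it to be the main obstacle. The reason is that $I-A$ is self-adjoint only on $T$, and $A$ need not preserve $T^\perp$, so the natural ambient quadratic $\frac12\inner{\xvec}{(I-A)\xvec}$ need not have the right gradient. I would first verify the intrinsic potential \eqref{eq:symmetric-affine-potential}. Write $\xvec=\avec+\hvec$ and note that $\Dphi(\avec+\hvec)=\Dphi(\avec)+(I-A)\hvec$. Let $S$ denote the restriction of $I-A$ to $T$, which maps $T$ into $T$ because $AT\subseteq T$ and is self-adjoint by hypothesis. Then the derivative of $\Psi$ in a direction $\kvec\in T$ is $\inner{\Dphi(\avec)}{\kvec}+\inner{S\hvec}{\kvec}$. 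So the $T$-gradient of $\Psi$ is $P_T\Dphi(\avec)+S\hvec$. Here $P_T\Dphi(\avec)=\Dphi(\avec)$, because $\Dphi(\avec)=\avec-\phi(\avec)\in K-K\subseteq T$.

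For the ambient extension I would use orthogonal projection. With $P_T$ the orthogonal projector onto $T$, define
\[
\widetilde\Psi(\xvec)\colloneq\inner{\Dphi(\avec)}{\xvec-\avec}+\tfrac12\inner{P_T(\xvec-\avec)}{(I-A)P_T(\xvec-\avec)}.
\]
The matrix $P_T(I-A)P_T$ is symmetric on $\R^d$, since it is self-adjoint on $T$ and vanishes on $T^\perp$. Hence
\[
\nabla\widetilde\Psi(\xvec)=\Dphi(\avec)+P_T(I-A)P_T(\xvec-\avec).
\]
For $\xvec\in K$ we have $\xvec-\avec\in T$, and $(I-A)T\subseteq T$, so this equals $\Dphi(\avec)+(I-A)(\xvec-\avec)=\Dphi(\xvec)$, as required. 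The point to check carefully is exactly this use of $AT\subseteq T$ to drop the outer $P_T$.
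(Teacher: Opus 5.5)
Your proposal is correct and follows essentially the same route as the paper. Items 1--5 are reduced to \Cref{thm:prox-contained}, using completing the square for item 3, the unconstrained first-order condition plus convexity of $K$ for item 4, and the envelope identity for item 5. Item 6 uses the same directional-derivative computation on $\aff(K)$ and the same $P_T$-based ambient extension. The differences are cosmetic: your linear term $\inner{\Dphi(\avec)}{\xvec-\avec}$ equals the paper's $\inner{\Dphi(\avec)}{P_T(\xvec-\avec)}$ because $\Dphi(\avec)\in T$, and you spell out the symmetry of $P_T(I-A)P_T$ and the role of $AT\subseteq T$, which the paper's gradient computation leaves implicit.
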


\begin{remark}
\Cref{thm:prox-contained,prop:special-cases} cover the Euclidean exact-form deviation classes appearing in \citet{cai2025proximal} including point indicators for external regret, linear functions for projection-based/no-move regret, bounded-displacement local proximal deviations, interpolation deviations, and symmetric affine swaps.  The symmetric affine case is even more direct in the exact-form view since the displacement of a symmetric affine map is already the gradient of a quadratic potential, without first representing the map as a proximal operator.
\end{remark}

\section{\texorpdfstring{The Exact-form Class Strictly Contains the Proximal Class}{The Exact-form Class Strictly Contains the Proximal Class}}\label{subsec:separation}

The inclusion from proximal deviations to exact-form deviations is strict, and the separation already appears in one dimension. Our proof strategy is to use the fact that every weakly convex proximal map with parameter $0\le\rho<1$ is monotone, whereas feasible exact-form deviations need not be monotone.

\begin{proposition}\label{prop:strict-separation}
There exists a smooth feasible exact-form deviation \(\phi:[0,1]\to[0,1]\) that is not \(\prox_F\) for any one-dimensional \(\rho\)-weakly convex function \(F\) with \(0\le\rho<1\).
\end{proposition}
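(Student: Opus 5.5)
The idea is to exploit two facts: in one dimension \emph{every} sufficiently smooth feasible map is exact-form, while every proximal map is nondecreasing. So it suffices to exhibit a smooth, feasible, non-monotone self-map of $[0,1]$.

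\textbf{Step 1 (construction).} Take
\[
\phi(x)\colloneq \tfrac12+\tfrac14\sin(2\pi x).
\]
This map is smooth on $\R$ and takes values in $[\tfrac14,\tfrac34]\subseteq[0,1]$, so $\phi:[0,1]\to[0,1]$ is feasible. It is not monotone: $\phi(0)=\tfrac12$, $\phi(\tfrac14)=\tfrac34$, and $\phi(\tfrac34)=\tfrac14$.

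\textbf{Step 2 (exactness).} Set
\[
\Psi(x)\colloneq \tfrac12x^2-\tfrac12x+\tfrac{1}{8\pi}\cos(2\pi x), \qquad x\in\R,
\]
which is an antiderivative of $x-\phi(x)$. Then $\Psi'(x)=x-\tfrac12-\tfrac14\sin(2\pi x)=x-\phi(x)$ on a neighborhood of $[0,1]$ (indeed on all of $\R$). Its derivative is globally Lipschitz with constant $1+\tfrac{\pi}{2}$. Hence $\phi$ is a smooth exact-form deviation in the sense of \Cref{def:exact}. This step is routine, because in one dimension every continuous displacement field is a gradient.

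\textbf{Step 3 (proximal maps are monotone).} Let $F$ be any one-dimensional function for which $\prox_F$ is a well-defined single-valued map on $[0,1]$. Take $x,y$ with $p=\prox_F(x)$ and $q=\prox_F(y)$. Optimality of $p$ at $x$ and of $q$ at $y$ gives
\[
F(p)+\tfrac12(p-x)^2\le F(q)+\tfrac12(q-x)^2,\qquad
F(q)+\tfrac12(q-y)^2\le F(p)+\tfrac12(p-y)^2.
\]
Since $F$ is proper, $F(p)$ and $F(q)$ are finite, so we may add the two inequalities and cancel them. Expanding the squares then yields $(p-q)(x-y)\ge 0$, so $\prox_F$ is nondecreasing.

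This argument uses only that minimizers exist and are attained. For a proper, lower semicontinuous, $\rho$-weakly convex $F$ with $0\le\rho<1$, the objective $F+\tfrac12(\cdot-x)^2$ is $(1-\rho)$-strongly convex, so the minimizer exists and is unique. This is the setting of \Cref{thm:prox-contained}.

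\textbf{Conclusion.} If $\phi=\prox_F$ for some such $F$, then $\phi$ would be nondecreasing on $[0,1]$. This contradicts $\phi(\tfrac14)>\phi(\tfrac34)$, which proves the proposition.

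The only point requiring care is Step 3, specifically making sure the cross-optimality argument is legitimate: the minimizers must exist, and $F(p)$ and $F(q)$ must be finite so they can be cancelled. Both follow from properness, lower semicontinuity, and the strong convexity noted above.
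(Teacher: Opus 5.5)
Your proof is correct. It follows the same overall strategy as the paper: exactness is automatic in one dimension, weakly convex proximal maps are monotone, so any smooth non-monotone self-map of $[0,1]$ separates the classes. Both ingredients, however, are handled differently.

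For monotonicity, the paper's \Cref{lem:weakly-convex-prox-monotone} combines the first-order conditions $x-\prox_F(x)\in\partial F(\prox_F(x))$ with $\rho$-hypomonotonicity of $\partial F$. This gives the stronger bound $(x-y)\bigl(\prox_F(x)-\prox_F(y)\bigr)\ge(1-\rho)\abs{\prox_F(x)-\prox_F(y)}^2$. Your zeroth-order comparison of objective values has several advantages. It avoids subdifferential calculus. It uses weak convexity only to guarantee that the minimizer exists and is unique. It also works verbatim in $\R^d$. It even proves more than the proposition asks: your $\phi$ is not a selection of the proximal correspondence of any proper $F$ at all, weakly convex or not.

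For the construction, the paper uses $\phi(x)=x-ax(1-x)\sin(2\pi kx)$ with $k$ even and $a\pi k/2>1$. Feasibility comes from $\abs{\Dphi(x)}\le ax(1-x)\le\min\{x,1-x\}$, and non-monotonicity from $\phi'(1/2)<0$. Your map $\tfrac12+\tfrac14\sin(2\pi x)$ is simpler to check, since its range is $[\tfrac14,\tfrac34]$. The paper's family, by contrast, fixes the endpoints and has displacement at most $a/4$, where $a$ can be taken arbitrarily small by choosing $k$ large. It therefore also shows that non-proximal exact deviations exist even in the local subclass $\abs{\Dphi}\le\delta$. Your example has displacement $\pm\tfrac12$ at the endpoints and so does not give this refinement, though the proposition does not require it.
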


\begin{proof}
Let \(k\ge2\) be an even integer and let \(a\in(0,1]\) satisfy \(a\pi k/2>1\).  Define
\[
  \Dphi(x)\colloneq a x(1-x)\sin(2\pi kx),
  \qquad
  \phi(x)\colloneq x-\Dphi(x).
\]
Because \(\abs{\Dphi(x)}\le a x(1-x)\le \min\{x,1-x\}\), we have \(\phi(x)\in[0,1]\).  Also \(\Dphi=\Psi'\), where
\[
  \Psi(x)\colloneq\int_0^x a s(1-s)\sin(2\pi k s)\,\dd s,
\]
so \(\phi\) is feasible exact-form.

At \(x=1/2\), since \(k\) is even,
\[
  \Dphi'(1/2)=\frac{a\pi k}{2}>1,
  \qquad
  \phi'(1/2)=1-\Dphi'(1/2)<0.
\]
Thus \(\phi\) is not monotone.  By \Cref{lem:weakly-convex-prox-monotone}, every one-dimensional weakly-convex proximal map with parameter \(0\le\rho<1\) is monotone.  Hence \(\phi\) cannot be represented as \(\prox_F\) for any such \(F\).
\end{proof}

Thus while exact-form maps strictly contain weakly-convex proximal maps,
we show that this strict inclusion does not carry over to the corresponding
equilibrium sets, in particular when the approximation error is zero. The following scaling result is the key mechanism behind this equivalence. We show that scaled exact deviations are proximal.

\begin{theorem}\label{thm:scaled-prox}
Let $K\subseteq\R^d$ be compact and convex, and let
$\phi(\xvec)\colloneq\xvec-\nabla\Psi(\xvec)$ be a feasible exact-form deviation with
$\Psi\in C^{1,1}_{\mathrm{loc}}$ on a neighborhood of $K$.  Then there exists
$\alpha_0>0$ such that, for every $\alpha\in(0,\alpha_0]$, the interpolated map
\[
 \phi_\alpha(\xvec)\colloneq(1-\alpha)\xvec+\alpha\phi(\xvec)
 =\xvec-\alpha\nabla\Psi(\xvec)
\]
coincides on $K$ with $\prox_{F_\alpha}(\xvec)$ for some proper lower-semicontinuous
$F_\alpha=f_\alpha+\indicator_K$ that is $\rho_\alpha$-weakly convex for some
$\rho_\alpha<1$.
\end{theorem}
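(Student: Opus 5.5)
We will build $F_\alpha$ through Fenchel conjugation, using the identity $\prox_F=\nabla\bigl(F+\tfrac12\norm{\cdot}^2\bigr)^*$, valid when $F+\tfrac12\norm{\cdot}^2$ is convex. The idea is to realize $\phi_\alpha=\mathrm{Id}-\alpha\nabla\Psi$ as the gradient of the function
\[
  g_\alpha(\xvec)\colloneq\tfrac12\norm{\xvec}^2-\alpha\Psi(\xvec).
\]
For small $\alpha$ this function is strongly convex. We then take $F_\alpha+\tfrac12\norm{\cdot}^2$ to be $g_\alpha^*$ restricted to $K$.

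\textbf{Step 1: global $C^{1,1}$ extension.} Since $\Psi\in C^{1,1}_{\mathrm{loc}}$ on an open neighborhood $U$ of the compact set $K$, choose a smooth cutoff $\chi$ with compact support in $U$ and $\chi\equiv1$ on a neighborhood of $K$. Set $\widetilde\Psi\colloneq\chi\Psi$, extended by zero outside $U$. Then $\nabla\widetilde\Psi$ is locally Lipschitz with compact support, hence globally $L$-Lipschitz on $\R^d$ for some $L<\infty$. Moreover $\nabla\widetilde\Psi=\nabla\Psi$ on $K$. We replace $\Psi$ by $\widetilde\Psi$, which changes neither $\phi$ nor $\phi_\alpha$ on $K$. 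Set $\alpha_0\colloneq1/(2L)$, or any value below $1/L$.

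\textbf{Step 2: conjugate construction.} For $\alpha\in(0,\alpha_0]$, the function $g_\alpha$ is $C^1$ on all of $\R^d$, $(1-\alpha L)$-strongly convex and $(1+\alpha L)$-smooth, and $\nabla g_\alpha=\phi_\alpha$. Its conjugate $g_\alpha^*$ is therefore finite, $C^{1,1}$, and $\mu$-strongly convex with $\mu\colloneq1/(1+\alpha L)$. Furthermore $\nabla g_\alpha^*=(\nabla g_\alpha)^{-1}$, and Fenchel–Young gives $\xvec\in\partial g_\alpha^*(\yvec)$ iff $\yvec=\nabla g_\alpha(\xvec)$. Define
\[
  F_\alpha(\yvec)\colloneq g_\alpha^*(\yvec)-\tfrac12\norm{\yvec}^2+\indicator_K(\yvec),
  \qquad f_\alpha\colloneq g_\alpha^*-\tfrac12\norm{\cdot}^2 .
\]
$F_\alpha$ is proper and lower semicontinuous, since $g_\alpha^*$ is continuous and $K$ is nonempty and closed. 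The function $F_\alpha+\tfrac{\rho_\alpha}{2}\norm{\cdot}^2$ with $\rho_\alpha\colloneq1-\mu=\alpha L/(1+\alpha L)<1$ equals $\bigl(g_\alpha^*-\tfrac{\mu}{2}\norm{\cdot}^2\bigr)+\indicator_K$, which is convex. So $F_\alpha$ is $\rho_\alpha$-weakly convex.

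\textbf{Step 3: identifying the prox on $K$.} For $\xvec\in K$,
\[
  \prox_{F_\alpha}(\xvec)=\argmin_{\yvec\in K}\bigl\{g_\alpha^*(\yvec)-\inner{\xvec}{\yvec}\bigr\}+\text{const}.
\]
The objective is strongly convex, so the minimizer is unique. The unconstrained minimizer satisfies $\xvec\in\partial g_\alpha^*(\yvec)$, i.e. $\yvec=\nabla g_\alpha(\xvec)=\xvec-\alpha\nabla\Psi(\xvec)=\phi_\alpha(\xvec)$. This point is a convex combination of $\xvec\in K$ and $\phi(\xvec)\in K$, and $\alpha\le\alpha_0<1$, so it lies in $K$ by convexity. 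Hence it is also the constrained minimizer, and $\prox_{F_\alpha}=\phi_\alpha$ on $K$.

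The main obstacle is Step 1. The conjugate argument needs $g_\alpha$ to be strongly convex and smooth on all of $\R^d$. If $\Psi$ were kept only on $U$, then $(g_\alpha+\indicator_U)^*$ would lose the strong convexity needed for weak convexity of $F_\alpha$ with $\rho_\alpha<1$. The cutoff extension resolves this. The one thing to check is that $\nabla(\chi\Psi)=\chi\nabla\Psi+\Psi\nabla\chi$ is globally Lipschitz. This holds because each factor is Lipschitz and bounded on the compact support of $\chi$.
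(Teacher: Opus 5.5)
Your construction is the paper's proof essentially line for line. The cutoff extension is \Cref{lem:smooth-extension}, and your $g_\alpha$ is the paper's $h_\alpha$. The generator $f_\alpha=g_\alpha^*-\frac12\norm{\cdot}^2$ with $\rho_\alpha=\alpha L/(1+\alpha L)$ is the same. The final step is also identical: the unconstrained minimizer $\nabla g_\alpha(\xvec)=\phi_\alpha(\xvec)$ already lies in $K$, so the indicator changes nothing.

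There is one genuine slip, in the choice $\alpha_0\colloneq 1/(2L)$. Step~3 uses $\alpha\le\alpha_0<1$ to say that $\phi_\alpha(\xvec)=(1-\alpha)\xvec+\alpha\phi(\xvec)$ is a convex combination. But $1/(2L)>1$ whenever $L<1/2$, and it is undefined when $L=0$.

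For $\alpha>1$, the point $\phi_\alpha(\xvec)$ is an extrapolation beyond $\phi(\xvec)$ and may leave $K$. Suppose $K$ is strictly convex and $\phi$ sends a boundary point $\xvec$ to a different boundary point. Then $\phi_\alpha(\xvec)\notin K$ for every $\alpha>1$. No $F_\alpha=f_\alpha+\indicator_K$ can then satisfy $\prox_{F_\alpha}(\xvec)=\phi_\alpha(\xvec)$, because such a prox takes values in $K$.

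A small Lipschitz constant does not rule this out. An exact-form affine map $\xvec\mapsto M\xvec+\bvec$, with $M$ symmetric and arbitrarily close to $I$, can map the unit disk into itself while moving one boundary point to a different one.

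The repair is to cap the scale, e.g.\ $\alpha_0\colloneq\frac12\min\{1,1/L\}$, with $\alpha_0\colloneq\frac12$ if $L=0$. This is exactly the paper's choice, and with that change your argument is complete.
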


The construction is closely related to the characterization of proximity
operators in terms of convex potentials developed by
\citet{gribonval2020characterization}. The construction and its proof are given in
\Cref{app:scaled-prox-proof}. It reconciles the strict inclusion
\[
  \PhiProx\subsetneq\PhiExactSmooth\subseteq\PhiExact.
\]
Although a smooth exact-form deviation need not itself be proximal, every such
deviation admits a sufficiently small interpolation with the identity that is
proximal.

Write $\PhiProx$ for the constrained weakly-convex proximal maps from
\Cref{sec:containment}, and write $\PhiExactSmooth$ for the feasible
exact-form maps whose potentials are $C^{1,1}_{\rm loc}$ near $K$.  For any
class $\mathcal A$ of feasible maps with a common domain, define its
\emph{identity-radial closure} by
\[
  \operatorname{Rad}_I(\mathcal A)
  \colloneq\left\{\phi\text{ feasible}:\ \text{for some }\alpha\in(0,1],\ 
  (1-\alpha)I+\alpha\phi\in\mathcal A\right\}.
\]

\begin{corollary}
\label{cor:euclidean-radial-closure}
For the radial closure of Euclidean proximal deviations, we have $
  \operatorname{Rad}_I(\PhiProx)
  =\PhiExactSmooth$.
\end{corollary}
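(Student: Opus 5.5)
We prove the two inclusions separately; the domain is the compact convex set $K$ of \Cref{thm:scaled-prox}.

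\emph{Inclusion $\PhiExactSmooth\subseteq\operatorname{Rad}_I(\PhiProx)$.} This is the content of \Cref{thm:scaled-prox}. Given $\phi\in\PhiExactSmooth$ with potential $\Psi\in C^{1,1}_{\mathrm{loc}}$, the theorem supplies $\alpha_0>0$. Pick any $\alpha\in(0,\min\{\alpha_0,1\}]$. Then $(1-\alpha)I+\alpha\phi=\prox_{F_\alpha}$ on $K$ for a proper lsc $\rho_\alpha$-weakly convex $F_\alpha=f_\alpha+\indicator_K$ with $\rho_\alpha<1$. Hence the interpolation lies in $\PhiProx$. Since $\phi$ is feasible, $\phi\in\operatorname{Rad}_I(\PhiProx)$.

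\emph{Inclusion $\operatorname{Rad}_I(\PhiProx)\subseteq\PhiExactSmooth$.} Let $\phi$ be feasible and let $\alpha\in(0,1]$ be such that $\phi_\alpha\colloneq(1-\alpha)I+\alpha\phi=\prox_F$ on $K$, with $F$ as in \Cref{thm:prox-contained}.
\begin{enumerate}[leftmargin=*,itemsep=1mm]
  \item By \Cref{thm:prox-contained}, $\xvec-\phi_\alpha(\xvec)=\nabla\MF(\xvec)$ on $K$.
  \item The displacements are related by $\xvec-\phi_\alpha(\xvec)=\alpha(\xvec-\phi(\xvec))$. Therefore
  \[
    \xvec-\phi(\xvec)=\nabla\bigl(\alpha^{-1}\MF\bigr)(\xvec)\qquad\text{on }K.
  \]
  \item Feasibility of $\phi$ is assumed in the definition of $\operatorname{Rad}_I$.
\end{enumerate}
It remains to show that $\alpha^{-1}\MF$ is $C^{1,1}_{\mathrm{loc}}$ on a neighborhood of $K$.

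\emph{Regularity of the envelope.} The regularity needed for $\PhiExactSmooth$ is stated on a neighborhood of $K$, so a Lipschitz bound only on $K$ is not enough. We use the weakly convex Moreau-envelope identity on all of $\R^d$:
\begin{enumerate}[leftmargin=*,itemsep=1mm]
  \item Since $F$ is $\rho$-weakly convex with $\rho<1$, the function $F+\tfrac12\norm{\cdot-\xvec}^2$ is $(1-\rho)$-strongly convex for every $\xvec\in\R^d$.
  \item Hence $\prox_F$ is single-valued and $(1-\rho)^{-1}$-Lipschitz on all of $\R^d$.
  \item Consequently $\nabla\MF=I-\prox_F$ holds and is globally Lipschitz on $\R^d$, with constant $(2-\rho)/(1-\rho)$ as in \Cref{thm:prox-contained}.
\end{enumerate}
So $\alpha^{-1}\MF\in C^{1,1}$ on any neighborhood of $K$, and $\phi\in\PhiExactSmooth$.

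The only delicate point is this extension beyond $K$. It should be routine, provided the envelope identity in the appendix preliminaries is stated globally and not merely on $K$. If it is only recorded on $K$, we invoke the standard strong-convexity argument in the three steps above directly.
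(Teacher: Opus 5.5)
Your proof is correct and follows the paper's argument. You use \Cref{thm:scaled-prox} for $\PhiExactSmooth\subseteq\operatorname{Rad}_I(\PhiProx)$, and \Cref{thm:prox-contained} together with the scaling $\xvec-\phi_\alpha(\xvec)=\alpha(\xvec-\phi(\xvec))$ for the reverse inclusion; your extra check that $\nabla\MF$ is Lipschitz on all of $\R^d$, not only on $K$, is what \Cref{lem:weak-moreau} already gives, so it only makes explicit what the paper uses implicitly.
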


Although \Cref{cor:euclidean-radial-closure} does not identify the two map classes, it shows
that their difference disappears after allowing interpolation with the identity.
Indeed, every smooth exact-form deviation admits a sufficiently small positive
interpolation that is proximal.  Thus, while $\PhiProx\subsetneq\PhiExactSmooth$, the strict separation between the two classes does not lead to a separation of
the corresponding equilibrium sets when the approximation error is zero.  The
formal equilibrium statement is deferred to \Cref{sec:games}.

\begin{remark}
\label{rem:scaled-prox-interpretation}
For
$\field_{\phi_\alpha}=\alpha\Dphi$, linearized regret scales exactly as $
  \Reg_T(\phi_\alpha)=\alpha\Reg_T(\phi)$
Moreover, convexity of a loss $\ell$ gives $
  \ell(\xvec)-\ell(\phi_\alpha(\xvec))
  \ge
  \alpha\bigl(\ell(\xvec)-\ell(\phi(\xvec))\bigr)$.
Thus a profitable smooth exact deviation remains profitable under every
positive interpolation with the identity, while \Cref{thm:scaled-prox} ensures
that a sufficiently small such interpolation is proximal.  The resulting
equivalence of the equilibrium constraints, and in particular $
  \ConCE=\PCE$ for convex games with compact action sets and jointly
continuous losses,
is stated formally in \Cref{thm:conce-pce-equality}.
\end{remark}

\section{Curl Induces Linear Regret for Online Gradient Descent}\label{sec:curl}

The upper bound telescopes because exact one-forms integrate to zero around
closed loops. We now show that nonzero circulation gives the
corresponding obstruction for finite deviation maps.  The construction is the
finite-map analogue of the nonconservative vector-field lower bound of
\citet[Proposition~5.4]{ahunbay2024first}; here we formulate it using a globally
feasible endpoint map and the regret convention of our setting.

\begin{definition}[Circulation]\label{def:circulation}
Let \(\field:K\to\R^d\) be continuous and let \(\gamma:[0,1]\to K\) be a closed piecewise \(C^1\) loop.  The circulation of \(\field\) along \(\gamma\) is
\[
  \oint_\gamma \field(\xvec)^{\transpose}\dd\xvec
  \colloneq\int_0^1 \inner{\field(\gamma(s))}{\gamma'(s)}\,\dd s.
\]
\end{definition}

First, we show that exact fields have zero circulation, which is why this obstruction does not arise for exact-form regret.

\begin{lemma}\label{lem:zero-circulation}
If \(\field=\nabla\Psi\) on a neighborhood of \(K\), then we have \(\oint_\gamma \field(\xvec)^{\transpose}\dd\xvec=0\) for every closed loop \(\gamma\subseteq K\).
\end{lemma}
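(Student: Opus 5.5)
The plan is to reduce the circulation to the fundamental theorem of calculus along the loop. Since $\field=\nabla\Psi$ on an open neighborhood $U\supseteq K$ with $\Psi\in C^1(U)$, and $\gamma$ takes values in $K\subseteq U$, the composition $h(s)\colloneq\Psi(\gamma(s))$ is well defined on $[0,1]$. We will compute its derivative with the chain rule and integrate.

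First, fix a partition $0=s_0<s_1<\cdots<s_m=1$ such that $\gamma$ is $C^1$ on each $[s_{i-1},s_i]$, using one-sided derivatives at the endpoints. On each piece, $\Psi$ is differentiable at every point $\gamma(s)\in U$ and $\gamma$ is $C^1$, so the chain rule gives
\[
h'(s)=\inner{\nabla\Psi(\gamma(s))}{\gamma'(s)}=\inner{\field(\gamma(s))}{\gamma'(s)}.
\]
The right-hand side is continuous on $[s_{i-1},s_i]$, since $\nabla\Psi$ is continuous on $U$ and $\gamma'$ is continuous on the piece.

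Second, apply the fundamental theorem of calculus on each subinterval:
\[
\int_{s_{i-1}}^{s_i}\inner{\field(\gamma(s))}{\gamma'(s)}\,\dd s=\Psi(\gamma(s_i))-\Psi(\gamma(s_{i-1})).
\]
Summing over $i$, the right-hand sides telescope to $\Psi(\gamma(1))-\Psi(\gamma(0))$. This vanishes because the loop is closed, $\gamma(0)=\gamma(1)$. By \Cref{def:circulation}, the sum of the left-hand sides is exactly $\oint_\gamma\field(\xvec)^{\transpose}\dd\xvec$, so the circulation is zero.

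The only point needing care is the boundary of $K$. If $\Psi$ were defined only on $K$, which may have empty interior or may be met by $\gamma$ on its boundary, the ambient gradient and the chain rule would be delicate. The hypothesis that $\field=\nabla\Psi$ holds on a neighborhood of $K$ removes this issue, because every point of the loop lies in the open set where $\Psi$ is $C^1$. The same argument also works in affine coordinates on $\aff(K)$ when the potential is only intrinsically defined there, since $\gamma'(s)$ lies in the direction space of $\aff(K)$.
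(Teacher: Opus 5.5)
Your proof is correct and follows the same route as the paper: apply the chain rule to $s\mapsto\Psi(\gamma(s))$, integrate with the fundamental theorem of calculus, and use $\gamma(0)=\gamma(1)$. Your piecewise treatment and the note that the neighborhood hypothesis keeps the loop inside the open set where $\Psi$ is differentiable only make explicit what the paper's two-line proof leaves implicit.
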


Thus exactness rules out circulation.  The next theorem proves the converse algorithmic statement. Nonzero circulation can be converted into a bounded cyclic adversary and hence into linear regret.

\begin{theorem}\label{thm:curl-lower}
Let \(\field:K\to\R^d\) be continuous.  Suppose there exists a closed piecewise \(C^1\) loop \(\gamma:[0,1]\to K\) such that, after possibly reversing orientation,
\[
  -\oint_\gamma \field(\xvec)^{\transpose}\dd\xvec\colloneq c_0>0.
\]
Then for every gradient bound \(G>0\) and every sufficiently small \(\eta>0\), there exists a sequence of losses with \(\norm{\gvec_t}\le G\) such that online gradient descent, initialized at \(\gamma(0)\), suffers
\[
  \sum_{t=1}^T \inner{\gvec_t}{\field(\xvec_t)}\ge cT-O(1/\eta)
\]
for a constant \(c>0\) depending only on \(\field\), \(\gamma\), and \(G\).  In particular the regret is \(\Omega(T)\).  If the algorithm starts elsewhere in the same path-connected component of \(K\), the adversary can first steer it to the loop along any fixed polygonal path; this changes only the \(O(1/\eta)\) transient term.
\end{theorem}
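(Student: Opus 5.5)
The adversary will make online gradient descent trace the loop $\gamma$ slowly and repeatedly, with each loss vector pointing against the direction of motion. The regret increments then become a Riemann sum for minus the circulation. Orient $\gamma$ so that $-\oint_\gamma \field^\transpose\dd\xvec=c_0>0$. The main idea is to choose $\gvec_t$ essentially parallel to $-\gamma'$, so that the step $\xvec_{t+1}-\xvec_t=-\eta\gvec_t$ follows the loop. Then
\[
\inner{\gvec_t}{\field(\xvec_t)}=-\tfrac1\eta\inner{\xvec_{t+1}-\xvec_t}{\field(\xvec_t)},
\]
and summing over one traversal gives $\tfrac1\eta$ times a Riemann approximation of $-\oint_\gamma\field^\transpose\dd\xvec=c_0$.

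Concretely, I first reparametrize $\gamma$ by arc length on $[0,\ell]$, where $\ell$ is its length; piecewise $C^1$ loops can be made piecewise linear-speed. I then fix a target speed $G$ per unit step scale. At each time $t$ the learner sits at a point $\xvec_t$ on the loop, and the adversary chooses
\[
\gvec_t=-(\gamma(s_t+\eta G)-\gamma(s_t))/\eta,
\]
so that $\xvec_t-\eta\gvec_t=\gamma(s_t+\eta G)\in K$. The projection is then the identity, the iterate stays exactly on the loop, and $\norm{\gvec_t}\le G$ because a chord is no longer than the arc it subtends. One traversal takes $N=\lceil \ell/(\eta G)\rceil$ steps and contributes
\[
\tfrac1\eta\sum_{t}\inner{\field(\gamma(s_t))}{-(\gamma(s_{t+1})-\gamma(s_t))}=\tfrac1\eta\bigl(c_0+o(1)\bigr)
\]
as $\eta\to0$. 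The $o(1)$ error follows from uniform continuity of $\field$ on the compact image of $\gamma$. Corners of the loop cause no trouble because the chord construction never leaves the image of $\gamma$. Hence, for $\eta$ small enough, each traversal yields at least $c_0/(2\eta)$ regret over $N\approx \ell/(\eta G)$ rounds. This is a per-round rate of $c\colloneq c_0G/(2\ell)$, which depends only on $\field$, $\gamma$ and $G$.

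Over $T$ rounds there are $\lfloor T/N\rfloor$ full traversals, giving at least $cT-O(N\cdot\max\abs{\inner{\gvec}{\field}})$. The final partial cycle contributes at most $N\cdot G\sup_\gamma\norm{\field}=O(1/\eta)$ in magnitude. This yields $cT-O(1/\eta)$, and taking $\eta$ fixed and small gives $\Omega(T)$. For a start elsewhere in the same path-connected component, I would use a polygonal path inside $K$ from the starting point to $\gamma(0)$; this exists because $K$ is convex, so a straight segment works. The same chord trick moves the iterate along it in $O(1/\eta)$ steps, and the regret incurred there is bounded by $O(1/\eta)$ in absolute value.

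I expect the main obstacle to be uniformity: the Riemann-sum error must be small relative to $c_0$ uniformly over the starting phase of each traversal, and the last step of a cycle must land exactly on $\gamma(0)$ (or the phase must be allowed to drift). I would handle this by letting $s_t$ run modulo $\ell$ without resetting. The sum over any $N$ consecutive steps is then a Riemann sum over a full period started at an arbitrary phase, and its error is controlled by the modulus of continuity of $\field\circ\gamma$ together with the bounded variation of $\gamma$. This makes the bound uniform and avoids resetting.
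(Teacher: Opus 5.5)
Your proposal is correct and is essentially the paper's proof. You discretize the loop into chords of length at most $\eta G$ and set $\gvec=(\zvec_i-\zvec_{i+1})/\eta$, so the unprojected step lands in $K$ and the projection is inactive; each traversal then earns $1/\eta$ times a left Riemann sum for $-\oint_\gamma\field^{\transpose}\dd\xvec$, and the leftover rounds (and any steering) contribute $O(1/\eta)$. The only difference is cosmetic: the paper picks a partition that closes exactly at $\gamma(0)$ on each traversal (e.g.\ your $N=\lceil \ell/(\eta G)\rceil$ equal arc-length steps of size $\ell/N\le\eta G$), which removes the phase-drift bookkeeping you flagged as the main obstacle, though your drifting-phase argument also works.
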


The same lower-bound mechanism also extends to standard anytime learning-rate schedules.
\begin{remark}
\label{rem:stepsizes}
\Cref{thm:curl-lower} is stated for a sufficiently small constant step size,
which may depend on the witnessing loop.  This is slightly different from the
usual $O(\sqrt T)$ upper bound, which uses the known-horizon choice
$\eta\asymp T^{-1/2}$.  The lower-bound construction can nevertheless be made
anytime by using geometric epochs: during an epoch of length $2^k$, use a
constant step size of order $2^{-k/2}$.  Within each epoch, the loop
construction applies after its $O(1/\eta)$ steering transient.  Across epochs,
the positive main terms sum to $\Omega(T)$, whereas the transient terms sum
only to $O(\sqrt T)$.  Thus the linear lower bound is not an artifact of
knowing the horizon in advance. 
\end{remark}

Together, the upper and lower bounds give the following Euclidean curl
criterion for finite deviation maps.  Curl-free fields admit a potential and,
under the feasibility and smoothness conditions below, yield sublinear regret.
By contrast, nonzero curl produces a local loop along which gradient descent
can be forced to incur linear regret.  The two directions involve slightly
different learning-rate assumptions.  The lower bound uses a sufficiently
small constant step size that may depend on the witnessing loop, while the
uniform upper bound over a normalized family is given in
\Cref{cor:uniform-regret}.

\begin{corollary}[Euclidean curl criterion]\label{cor:euclidean-classification}
Let $E_K\colloneq\spanop(K-K)$, let $U\colloneq\relint(K)$ be open and simply connected in
$\aff(K)$, and let $\field:U\to E_K$ be $C^1$.  In any orthonormal affine
coordinate system on $E_K$, $\partial_j\field_i=\partial_i\field_j$ throughout
$U$ if and only if $\field=\nabla_{E_K}\Psi$ on $U$ for a $C^2$ potential
$\Psi$. If a skew derivative is nonzero somewhere, online gradient descent can be
forced to incur linear regret against $\field$ along a sufficiently small
feasible loop.  When the field is curl-free, assume that $\Psi$ admits an
ambient extension $\widetilde\Psi$, defined on a neighborhood of $K$, that
satisfies the smoothness and bounded-oscillation assumptions of
\Cref{thm:pgd-exact,corr:regret}.  If
$\widetilde\phi(\xvec)\colloneq\xvec-\nabla\widetilde\Psi(\xvec)$ lies in $K$ for
every $\xvec\in K$, then projected GD has $O(\sqrt T)$ regret against
$\widetilde\phi$.
\end{corollary}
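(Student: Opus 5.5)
The corollary has three parts, and I will prove them in order: the Poincaré-lemma equivalence, the curl lower bound, and the curl-free upper bound.

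\textbf{Equivalence.} Fix an orthonormal affine coordinate system on $\aff(K)$, so that $U$ is an open, simply connected subset of $\R^k$ with $k=\dim E_K$ and $\field$ is a $C^1$ map $U\to\R^k$.

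For the direction ``potential implies symmetric derivatives'': if $\field=\nabla_{E_K}\Psi$ with $\Psi\in C^2$, then $\partial_j\field_i=\partial_j\partial_i\Psi=\partial_i\partial_j\Psi=\partial_i\field_j$ by Schwarz's theorem.

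For the converse, the symmetry condition says the one-form $\omega=\sum_i\field_i\,\dd x_i$ is $C^1$ and closed. Since $U$ is simply connected, de~Rham/Poincaré gives that $\omega$ is exact. Concretely, I will argue as follows.
\begin{itemize}
\item Fix a base point $\xvec_0\in U$ and set $\Psi(\xvec)=\int_{\gamma}\omega$ along any piecewise $C^1$ path from $\xvec_0$ to $\xvec$.
\item This is well defined because two such paths differ by a null-homotopic loop. Stokes' theorem on a (smoothed) homotopy square, using $\dd\omega=0$, gives zero circulation around that loop.
\item Differentiating along straight segments gives $\nabla\Psi=\field$. Since $\field\in C^1$, we get $\Psi\in C^2$.
\end{itemize}
The one technical point is that homotopies of loops can be taken piecewise $C^1$, or approximated by such. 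This is standard.

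\textbf{Lower bound.} Suppose $\partial_j\field_i(\xvec^\star)\neq\partial_i\field_j(\xvec^\star)$ at some $\xvec^\star\in U$. Take a small circle $\gamma_r$ of radius $r$ in the coordinate plane $(e_i,e_j)$ centred at $\xvec^\star$; it lies in $U\subseteq K$ for small $r$.

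By Green's theorem,
\[
\oint_{\gamma_r}\field^\transpose\dd\xvec=\int_{D_r}(\partial_i\field_j-\partial_j\field_i)\,\dd A .
\]
By continuity of the derivatives, this equals $\pi r^2\bigl(\partial_i\field_j-\partial_j\field_i\bigr)(\xvec^\star)+o(r^2)$, which is nonzero for small $r$. After reversing orientation if needed, we have $-\oint\field^\transpose\dd\xvec=c_0>0$.

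To apply \Cref{thm:curl-lower}, I need a field defined on all of $K$. I will extend $\field$ continuously to $K$, or work with $K'\subseteq U$ a small compact disk containing the loop. Either way only continuity on a neighborhood of the loop is needed, since the adversary keeps the iterates on the loop after the steering phase. This gives linear regret.

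\textbf{Upper bound.} In the curl-free case, $\widetilde\phi$ is feasible by hypothesis and satisfies $\xvec-\widetilde\phi(\xvec)=\nabla\widetilde\Psi(\xvec)$. So $\widetilde\phi$ is an exact-form deviation with an $L$-smooth potential. \Cref{thm:pgd-exact} and \Cref{corr:regret} then give $O(\sqrt T)$ regret directly.

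The main obstacle is the well-definedness step in the Poincaré argument under only $C^1$ regularity. I will handle it by applying Green/Stokes to the pulled-back form on $[0,1]^2$ for a piecewise smooth homotopy. The difficulty is that the homotopy is only continuous; I will approximate it by smooth maps uniformly, which is legitimate because $\omega$ is continuous and closed.
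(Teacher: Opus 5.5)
Your proposal is correct and follows the paper's proof: the Poincar\'e lemma on $U$ for the equivalence, Stokes/Green on a small loop in $U$ combined with the cyclic construction behind \Cref{thm:curl-lower}, and \Cref{thm:pgd-exact,corr:regret} for the upper bound. That construction evaluates $\field$ only at the loop vertices, so your option of working near the loop is the right one (a continuous extension of $\field$ to all of $K$ need not exist). One simplification: $U=\relint(K)$ is convex, so the star-shaped Poincar\'e lemma $\Psi(\xvec)=\int_0^1\inner{\field(\xvec_0+s(\xvec-\xvec_0))}{\xvec-\xvec_0}\,\dd s$ gives the potential directly, and the homotopy-smoothing step you flag as the main obstacle is not needed.
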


\begin{remark}
\citet[Theorem~1.2]{ahunbay2024first} identifies two requirements for
first-order vector-field tests: the field must be conservative and tangent to
the action set.  In the finite-map setting, the second requirement is already
built into feasibility.  Indeed, if $\phi(\xvec)\in K$, convexity of $K$
ensures that the segment from $\xvec$ to $\phi(\xvec)$ remains in $K$.
Exactness is then the finite-map counterpart of the conservative-field
condition.  The circulation obstruction used above is therefore closely
related to the nonconservative-field lower bound of
\citet[Proposition~5.4]{ahunbay2024first}, while here the deviation is specified
by a feasible endpoint map.
\end{remark}

\section{Online Mirror Descent and Exact One-forms}\label{sec:mirror}

Our Euclidean result in \Cref{sec:pgd} says that online gradient descent controls deviations whose displacement is a gradient.  Mirror descent changes the coordinates in which the same statement is true.  Its natural variable is not the primal point \(\xvec\), but the dual point
\[
  \thetavec\colloneq\nabla R(\xvec),
\]
for a distance-generating function $R$.  At an interior iterate, where the
normal-cone term vanishes, mirror descent is the dual update
\[
  \thetavec_{t+1}=\thetavec_t-\eta\gvec_t.
\]
Therefore, as expected we show that the same telescoping proof works whenever the primal displacement \(\Dphi(\xvec)\) is a gradient as a function of the dual coordinate \(\thetavec\).  This way, we are able to generalize our results to mirror-exact deviations.

To make the mirror generalization transparent, we first work in the
Legendre interior.  In this setting, $\nabla R$ provides a smooth change of
coordinates between the primal and dual spaces, and interior mirror-descent
iterates satisfy an additive update in the dual variable.  This removes
boundary normal-cone terms and makes the connection with the Euclidean
argument particularly clear.  The first three subsections use this setting to
introduce mirror-exact deviations, prove the corresponding dual-potential
regret bound, and recover Bregman proximal maps of \citet{cai2025proximal} as a special case.

We then return to standard constrained mirror descent in
\Cref{sec:md-beyond-legendre} and remove the Legendre assumption.  The iterates
may now reach the boundary, and the mirror map need not provide a globally
invertible dual coordinate system.  A normal-cone residual therefore appears
in the mirror-descent optimality condition.  The underlying geometric
condition, however, remains the same.  Exactness of the mirror one-form gives
the upper bound after the boundary residual is controlled, while nonzero
circulation gives the corresponding lower bound.  This shows that the
exactness principle is not a consequence of Legendre duality, even though the
Legendre setting provides the cleanest way to see it initially.

With the regret characterization in place,
\Cref{sec:mirror-interpolation} returns to the identity interpolation of
Theorem~\ref{thm:scaled-prox}.  We keep the original weakly convex Bregman
proximal class and use the scaling identity to transfer proximal regret to its
radial closure.  The Euclidean conclusion does not extend in full generality.
The radial proximal class is always contained in the mirror-exact class, but
the containment can be strict.  In the Legendre setting, convexity of
$R^*-\alpha\Psi$ gives the first obstruction to proximal representation, and
an actual weakly convex proximal generator requires an additional curvature
condition.  This separates the part of the Euclidean interpolation argument
that is purely algebraic from the part that depends on the curvature of the
regularizer.

There is a further distinction once one leaves the Legendre interior.  For
constant-step linearized losses, Legendre mirror descent has an additive dual
recursion, and unrolling that recursion gives the corresponding follow-the-regularized-leader (FTRL) update.
This relationship between mirror descent, FTRL, and closely related
dual-averaging methods is well known in online optimization
\citep{mcmahan2011follow,mcmahan2017survey,xiao2010dual}.  At the boundary,
however, standard constrained mirror descent and FTRL need not generate the
same dynamics.  Mirror descent performs a one-step Bregman update and may
carry a normal-cone residual, whereas FTRL retains the accumulated linear
losses in a single regularized minimization.  The distinction also affects
which deviations the two algorithms control.  In \Cref{sec:ftrl}, we show
that their exact deviation classes can differ and give a smooth powered
$\ell_p$ example for which mirror descent has $O(\sqrt T)$ regret while FTRL
can incur linear regret.  For this reason, we study FTRL separately in
\Cref{sec:ftrl}.

Throughout the first three subsections of this section, all convex-analytic objects are taken in the affine
hull of $K$.  After fixing an origin, we identify this affine hull with a
Euclidean vector space $E$, equipped with the induced inner product.
These three subsections work in the standard Legendre regime.  Let
$R:E\to(-\infty,+\infty]$ be a proper closed Legendre function satisfying
\[
 \overline{\dom R}=K,
 \qquad
 \mathcal D\colloneq\operatorname{int}_E(\dom R)=\relint(K).
\]
Assume that $R$ is $C^2$ on $\mathcal D$ and that $\nabla^2R(\xvec)$ is
positive definite there.  Let
\[
 R^*(\thetavec)\colloneq\sup_{\xvec\in E}
 \{\inner{\thetavec}{\xvec}-R(\xvec)\},
 \qquad
 \ThetaSet\colloneq\operatorname{int}_E(\dom R^*)=\nabla R(\mathcal D).
\]
Legendre duality gives mutually inverse $C^1$ diffeomorphisms
$\nabla R:\mathcal D\to\ThetaSet$ and
$\nabla R^*:\ThetaSet\to\mathcal D$.  In particular,
\[
 D_{R^*}(\thetavec',\thetavec)=\DR(\xvec,\xvec')
 \quad\text{when }\thetavec=\nabla R(\xvec),\quad
 \thetavec'=\nabla R(\xvec').
\]
We initialize mirror descent in $\mathcal D$. Under the standing Legendre
assumptions, attained mirror-descent subproblems remain in the Legendre
interior, so the iterates stay in $\mathcal D$.  We write
$\thetavec\colloneq\nabla R(\xvec)$.

We use the Bregman divergence generated by \(R\), defined as
\[
  \DR(\yvec,\xvec)
  \colloneq R(\yvec)-R(\xvec)-\inner{\nabla R(\xvec)}{\yvec-\xvec}.
\]
With this notation, online mirror descent takes the form
\[
\numberthis{eq:md}
  \xvec_{t+1}
  \colloneq\argmin_{\xvec\in K}
  \left\{
    \eta\inner{\gvec_t}{\xvec}+\DR(\xvec,\xvec_t)
  \right\}.
\]

\begin{definition}[Mirror-exact deviation]\label{def:mirror-exact}
A deviation \(\phi:\mathcal D\to K\), with displacement
\(\Dphi(\xvec)\colloneq\xvec-\phi(\xvec)\), is \emph{\(R\)-mirror-exact} if there is
a differentiable potential \(\Psi:\ThetaSet\to\R\) such that
\[
\Dphi(\xvec)=\gradtheta\Psi(\nabla R(\xvec))
  \qquad\forall \xvec\in \mathcal D.
\]
Equivalently, the one-form
$\Dphi(\xvec)^{\transpose}\dd\nabla R(\xvec)$ is exact.
\end{definition}
We write
$\PhiMirrorExact{R}(\mathcal D)$ for the class of mirror-exact deviations in
the Legendre interior.

\begin{remark}
If $K=E$ and \(R(\xvec)=\frac12\norm{\xvec}^2\), then
\(\nabla R(\xvec)=\xvec\).  Mirror-exactness becomes
\(\Dphi(\xvec)=\nabla\Psi(\xvec)\), exactly the Euclidean exact-form condition
from \Cref{sec:pgd} on the unconstrained action space.
\end{remark}

The next proposition gives a useful primal-coordinate test for this condition. It shows that mirror-exactness is equivalent to ordinary exactness after multiplying the displacement by the mirror metric \(\nabla^2R(\xvec)\).

\begin{proposition}\label{prop:one-form-equivalence}
A field \(\field:\mathcal D\to E\) is \(R\)-mirror-exact if and only if
there exists a differentiable scalar potential \(V:\mathcal D\to\R\) such that
\[
\numberthis{eq:V-gradient}
  \nabla V(\xvec)=\nabla^2R(\xvec)\field(\xvec).
\]
Equivalently, $\field(\xvec)^{\transpose}\dd\nabla R(\xvec)=\dd V(\xvec)$.
\end{proposition}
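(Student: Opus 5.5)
The plan is to pass between the dual potential $\Psi$ on $\ThetaSet$ and a primal potential $V$ on $\mathcal D$ through the change of variables $\thetavec=\nabla R(\xvec)$, using the chain rule and the Legendre-duality diffeomorphism $\nabla R:\mathcal D\to\ThetaSet$ with inverse $\nabla R^*$.

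For the forward direction, assume $\field(\xvec)=\gradtheta\Psi(\nabla R(\xvec))$ and set $V\colloneq\Psi\circ\nabla R$ on $\mathcal D$. Since $R$ is $C^2$ on $\mathcal D$, the map $\nabla R$ is $C^1$ with Jacobian $\nabla^2R(\xvec)$, which is symmetric. The chain rule then gives
\[
\nabla V(\xvec)=\nabla^2R(\xvec)^{\transpose}\gradtheta\Psi(\nabla R(\xvec))=\nabla^2R(\xvec)\field(\xvec),
\]
which is \eqref{eq:V-gradient}.

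For the converse, assume $\nabla V=\nabla^2R\,\field$ on $\mathcal D$ and define $\Psi\colloneq V\circ\nabla R^*$ on $\ThetaSet$. Since $\nabla R^*$ is a $C^1$ inverse of $\nabla R$, differentiating $\nabla R(\nabla R^*(\thetavec))=\thetavec$ shows that its Jacobian at $\thetavec=\nabla R(\xvec)$ is $\nabla^2R(\xvec)^{-1}$. This inverse exists because the Hessian is positive definite, and it is symmetric. Applying the chain rule again,
\[
\gradtheta\Psi(\thetavec)=\nabla^2R(\xvec)^{-1}\nabla V(\xvec)=\field(\xvec),
\]
so $\field$ is $R$-mirror-exact.

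The one-form formulation is the same identity in coordinate-free form. Writing $\dd\nabla R(\xvec)=\nabla^2R(\xvec)\,\dd\xvec$, we get $\field^{\transpose}\dd\nabla R=(\nabla^2R\,\field)^{\transpose}\dd\xvec$, and this equals $\dd V$ exactly when \eqref{eq:V-gradient} holds.

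The only real care point is regularity. We need differentiability of the composed maps, which follows from $R\in C^2$ and from $\nabla R^*$ being $C^1$, the latter supplied by the stated Legendre duality. We also use symmetry of the Hessian so that no transposes are left over. Beyond these checks the argument is just the chain rule in both directions.
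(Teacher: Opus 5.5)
Your proof is correct and follows the paper's argument: you take $V\colloneq\Psi\circ\nabla R$ for the forward direction and $\Psi\colloneq V\circ(\nabla R)^{-1}$ for the converse, applying the chain rule with Jacobians $\nabla^2R(\xvec)$ and $\nabla^2R(\xvec)^{-1}$, respectively. You also spell out two points the paper leaves implicit: the Hessian is symmetric, and Legendre duality makes $\nabla R^*$ a $C^1$ map.
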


\subsection{Examples of Mirror-exact Deviations}\label{subsec:mirror-exact-examples}

Definition~\ref{def:mirror-exact} can look abstract because it is written in dual coordinates.  The quickest way to build examples is simple: ``choose a scalar potential $\Psi(\thetavec)$ in dual space, differentiate it to get $\nabla_{\thetavec} \Psi(\thetavec)$, and then pull the resulting field back through \(\thetavec=\nabla R(\xvec)\) to get $\phi(\xvec) = \xvec - \nabla_{\thetavec} \Psi\mleft(  \nabla R(\xvec)\mright) $.''

In \Cref{app:mirror-exact-examples}, we provide several examples of mirror-exact deviations. They show that the mirror-exact deviation family contains every fixed-comparator deviation. They also give nontrivial deviations for different instances of online mirror descent, such as, for the multiplicative weights update algorithm, multiplicative deviations with arbitrary temperature vector $\avec \in \R^d$,
\[
  \phi_{\avec}(\xvec)[i]
  \colloneq\frac{\xvec[i]\exp(-\avec[i])}
  {\sum_{j=1}^d \xvec[j]\exp(-\avec[j])},
\]
and geometric interpolation deviations anchored at $\uvec\in\relint(\simplex_d)$,
for $\lambda\in[0,1)$,
\[
\phi_{\lambda,\uvec}(\xvec)[i]
  =\frac{\xvec[i]^{1-\lambda}\uvec[i]^\lambda}
  {\sum_{j=1}^d \xvec[j]^{1-\lambda}\uvec[j]^\lambda};
\]
and there exist mirror-exact deviations that are not Euclidean-exact.

\subsection{Online Mirror Descent Controls Mirror-exact Regret}\label{sec:omd}

We now prove no-regret results for mirror-exact deviations under online mirror
descent in the Legendre interior. For a differentiable function \(A\) on dual space, we write its Bregman divergence as
\[
  D_A(\thetavec',\thetavec)
  \colloneq A(\thetavec')-A(\thetavec)
  -\inner{\nabla A(\thetavec)}{\thetavec'-\thetavec},
\]
and, for $L\ge0$, we say that the potential \(\Psi\) is \(L\)-smooth relative to \(R^*\) on a set of ordered dual pairs if
\[
\numberthis{eq:relative-smoothness}
  D_\Psi(\thetavec',\thetavec)
  \le L D_{R^*}(\thetavec',\thetavec)
\]
for every such pair $\thetavec',\thetavec \in \ThetaSet$.  When $K=E$ and
\(R(\xvec)=\frac12\norm{\xvec}^2\), this is the usual upper quadratic
inequality.  For a general mirror map, it
measures the one-step Taylor error of \(\Psi\) in the same geometry used by
mirror descent.

\begin{theorem}\label{thm:constrained-md}
Assume the standing Legendre-domain conditions and that $R$ is $m$-strongly
convex on $\mathcal D$.  Let \(\phi:\mathcal D\to K\) be feasible and
\(R\)-mirror-exact with dual potential \(\Psi\). Let $\{ \xvec_t\}_{t=1}^{T+1}$ be the
iterates generated by online mirror descent~\eqref{eq:md}, initialized in
$\mathcal D$, with iterates remaining in $\mathcal D$. Write
\[
  \thetavec_t\colloneq\nabla R(\xvec_t),
  \qquad
  \Dphi(\xvec_t)=\gradtheta\Psi(\thetavec_t).
\]
Assume that \eqref{eq:relative-smoothness} holds for every consecutive pair
$(\thetavec_{t+1},\thetavec_t)$, $t\in[T]$. A sufficient
trajectory-independent condition is that it hold for every ordered pair in
$\ThetaSet\times\ThetaSet$.
Then
\[
\numberthis{eq:constrained-md-bound}
\sum_{t=1}^T\inner{\gvec_t}{\Dphi(\xvec_t)}
  \le
  \frac{\Psi(\thetavec_1)-\Psi(\thetavec_{T+1})}{\eta}
  +
  \frac{L\eta}{m}
  \sum_{t=1}^T\norm{\gvec_t}^2.
\]
Consequently, suppose \(\norm{\gvec_t}\le G\) and that, for the
horizon-dependent choices of constant step size considered below, there is a
constant $B$, independent of the horizon, such that
\[
 \max_{1\le t\le T+1}\Psi(\thetavec_t)
 -\min_{1\le t\le T+1}\Psi(\thetavec_t)\le B,
\]
Then choosing \(\eta\asymp 1/\sqrt T\) gives \(O(\sqrt T)\) regret against
\(\phi\).  More precisely, when $B,L,G>0$, the choice
$\eta=\sqrt{mB/(LG^2T)}$ gives the bound
$2G\sqrt{LBT/m}$.  The global condition
$\osc_{\ThetaSet}(\Psi)\le B$ is a sufficient trajectory-independent
assumption that avoids any dependence of the oscillation bound on the chosen
step size.
\end{theorem}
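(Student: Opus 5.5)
In the Legendre interior the normal-cone term vanishes. The first-order optimality condition of \eqref{eq:md} at an interior minimizer is $\eta\gvec_t+\nabla R(\xvec_{t+1})-\nabla R(\xvec_t)=0$ on $E$, i.e. $\thetavec_{t+1}=\thetavec_t-\eta\gvec_t$. Strictly, $\gvec_t$ should be read as its projection onto $E$. This is harmless, since $\Dphi(\xvec_t)\in E$ because both $\xvec_t$ and $\phi(\xvec_t)$ lie in $\aff(K)$.

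With the dual recursion in hand, each regret term becomes a first-order term of $\Psi$:
\[
\inner{\gvec_t}{\Dphi(\xvec_t)}=\inner{\gvec_t}{\gradtheta\Psi(\thetavec_t)}=-\frac1\eta\inner{\gradtheta\Psi(\thetavec_t)}{\thetavec_{t+1}-\thetavec_t}.
\]
The definition of $D_\Psi$ then gives the exact identity
\[
\inner{\gvec_t}{\Dphi(\xvec_t)}=\frac{\Psi(\thetavec_t)-\Psi(\thetavec_{t+1})}{\eta}+\frac{D_\Psi(\thetavec_{t+1},\thetavec_t)}{\eta}.
\]
Summing over $t$, the first part telescopes to $(\Psi(\thetavec_1)-\Psi(\thetavec_{T+1}))/\eta$. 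This is the mirror analogue of the Euclidean telescoping behind \Cref{thm:pgd-exact}.

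The remainder is handled by relative smoothness \eqref{eq:relative-smoothness} on consecutive pairs. It gives $D_\Psi(\thetavec_{t+1},\thetavec_t)\le L D_{R^*}(\thetavec_{t+1},\thetavec_t)$, and by the Legendre duality identity this equals $L\DR(\xvec_t,\xvec_{t+1})$. This is the step I expect to need the most care, because I must bound a Bregman divergence by $\eta^2\norm{\gvec_t}^2$ using only $m$-strong convexity of $R$. Strong convexity of $R$ on $\mathcal D$ implies that $\nabla R^*$ is $(1/m)$-Lipschitz on $\ThetaSet$, since the gradient of the conjugate of an $m$-strongly convex function is $(1/m)$-Lipschitz. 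To avoid any issue with $\ThetaSet$ being nonconvex, I will instead bound the divergence directly by strong convexity and the three-point form:
\[
\DR(\xvec_t,\xvec_{t+1})\le \DR(\xvec_t,\xvec_{t+1})+\DR(\xvec_{t+1},\xvec_t)=\inner{\thetavec_t-\thetavec_{t+1}}{\xvec_t-\xvec_{t+1}}=\eta\inner{\gvec_t}{\xvec_t-\xvec_{t+1}}.
\]
Strong convexity also gives $m\norm{\xvec_t-\xvec_{t+1}}^2\le\inner{\thetavec_t-\thetavec_{t+1}}{\xvec_t-\xvec_{t+1}}\le\eta\norm{\gvec_t}\norm{\xvec_t-\xvec_{t+1}}$. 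Hence $\norm{\xvec_t-\xvec_{t+1}}\le\eta\norm{\gvec_t}/m$, and so $\DR(\xvec_t,\xvec_{t+1})\le\eta^2\norm{\gvec_t}^2/m$. Dividing by $\eta$ yields the remainder $L\eta\norm{\gvec_t}^2/m$, which establishes \eqref{eq:constrained-md-bound}.

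For the consequences, bound the telescoped term by $B/\eta$ and use $\norm{\gvec_t}\le G$. This gives $B/\eta+L\eta G^2T/m$. Balancing the two terms at $\eta=\sqrt{mB/(LG^2T)}$ gives $2G\sqrt{LBT/m}$. The global condition $\osc_\ThetaSet(\Psi)\le B$ trivially bounds the trajectory oscillation independently of $\eta$, and the global relative-smoothness condition on $\ThetaSet\times\ThetaSet$ trivially covers the consecutive pairs.
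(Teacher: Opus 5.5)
Your proposal is correct and follows the paper's proof essentially step for step: the interior dual update $\thetavec_{t+1}=\thetavec_t-\eta\gvec_t$, the exact identity with the remainder $D_\Psi(\thetavec_{t+1},\thetavec_t)$, relative smoothness together with the Legendre identity $D_{R^*}(\thetavec_{t+1},\thetavec_t)=\DR(\xvec_t,\xvec_{t+1})$, and then the symmetrized Bregman identity with strong convexity to obtain $\DR(\xvec_t,\xvec_{t+1})\le\eta^2\norm{\gvec_t}^2/m$. Your worry that $\ThetaSet$ might be nonconvex is unfounded, since it is the interior of the convex set $\dom R^*$, but the argument you actually use does not depend on this.
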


The coefficient $L/m$ in this bound uses the standard Legendre-regime
assumption that every iterate remains in $\relint(K)$.  The relative normal
cone then vanishes, so the proof works directly with the unconstrained dual
update and introduces no additional projection term.  Boundary iterates are
treated separately in \Cref{sec:md-beyond-legendre}, where we remove the
Legendre assumption and analyze standard constrained mirror descent directly.

The circulation converse is also developed in
\Cref{sec:md-beyond-legendre}.  In particular,
\Cref{thm:md-mirror-circulation} applies directly to standard constrained
mirror descent, allows boundary iterates, and gives the corresponding lower
bound without relying on the interior dual-coordinate recursion.

\begin{remark}
For the regret proof, it is enough that
\eqref{eq:relative-smoothness} hold along the realized trajectory, namely for
$(\thetavec_{t+1},\thetavec_t)$ at each round.  Imposing the condition on all
of $\ThetaSet\times\ThetaSet$ gives a trajectory-independent sufficient
condition, but may be considerably stronger when $\ThetaSet$ is unbounded.
\end{remark}

\subsection{Bregman Proximal Regret is Mirror-exact}\label{sec:bregman}

We next show that the Bregman proximal deviations of \citet{cai2025proximal}
are mirror-exact.  A related Hessian-weighted integrability observation and discussion appear
in \citet[Appendix~B.2.2]{ahunbay2024first}, but there the argument requires a
steep, smooth regularizer and does not recover the full Bregman proximal
guarantee of \citet{cai2025proximal}.  Here we first work in the Legendre
interior and later, in \Cref{sec:md-beyond-legendre}, remove the Legendre
assumption and allow boundary iterates.  The proof is the mirror version of the
Moreau-envelope identity~\eqref{eq:envelope}.

Let \(F\colloneq f+\indicator_K\) be proper, lower semicontinuous, and
$\rho$-weakly convex for some $\rho\ge0$.  The indicator is included so that all proximal outputs
lie in \(K\), giving pointwise feasibility automatically.  For
$\xvec\in\mathcal D$, define
\[
  \prox_F^R(\xvec)
  \colloneq \argmin_{\yvec\in E}
    \left\{F(\yvec)+\DR(\yvec,\xvec)\right\}, \quad 
  \MRF(\xvec)
  \colloneq \min_{\yvec\in E}
    \left\{F(\yvec)+\DR(\yvec,\xvec)\right\}.
\]
Because \(F\) contains \(\indicator_K\), the minimization is equivalently over \(\yvec\in K\).

\begin{theorem}\label{thm:bregman-contained}
Assume the standing Legendre-domain conditions.  Let
$F\colloneq f+\indicator_K$ be proper and lower semicontinuous, assume that every
Bregman proximal minimum above is attained, and suppose that \(\prox_F^R\) is
single-valued and continuous on $\mathcal D$.  Then the Bregman proximal
deviation $\phi_F^R:\mathcal D\to K$ defined by
$\phi_F^R(\xvec)\colloneq\prox_F^R(\xvec)$
is feasible and \(R\)-mirror-exact.  Equivalently,
\[
\numberthis{eq:bregman-one-form-exact}
  \big(\xvec-
  \prox_F^R(\xvec)\big)^{\transpose}\dd\nabla R(\xvec)
  =\dd\MRF(\xvec).
\]
Thus Bregman proximal regret is a special case of mirror-exact regret.
\end{theorem}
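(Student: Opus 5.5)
The plan is to prove the mirror analogue of the Moreau-envelope identity, namely that $\MRF$ is differentiable on $\mathcal D$ with
\[
  \nabla\MRF(\xvec)=\nabla^2R(\xvec)\bigl(\xvec-\prox_F^R(\xvec)\bigr).
\]
By \Cref{prop:one-form-equivalence} with $V=\MRF$, this is exactly \eqref{eq:bregman-one-form-exact}, so $\phi_F^R$ is $R$-mirror-exact. The dual potential is then $\Psi\colloneq\MRF\circ\nabla R^*$ on $\ThetaSet$, which is legitimate because $\nabla R^*$ is a $C^1$ diffeomorphism inverse to $\nabla R$. Feasibility needs no argument: $F$ contains $\indicator_K$, so every attained minimizer lies in $K$.

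The computation is cleanest in dual coordinates. Expanding the Bregman divergence with $\thetavec=\nabla R(\xvec)$ and using $R^*(\thetavec)=\inner{\thetavec}{\xvec}-R(\xvec)$ gives
\[
  F(\yvec)+\DR(\yvec,\xvec)=F(\yvec)+R(\yvec)-\inner{\thetavec}{\yvec}+R^*(\thetavec).
\]
Hence
\[
  \MRF(\nabla R^*(\thetavec))=R^*(\thetavec)-h(\thetavec),\qquad h(\thetavec)\colloneq\sup_{\yvec\in K}\bigl\{\inner{\thetavec}{\yvec}-F(\yvec)-R(\yvec)\bigr\}.
\]
Here $h=(F+R)^*$ is a supremum of affine functions of $\thetavec$, so it is convex. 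For each $\thetavec$ the supremum is attained at $\yvec(\thetavec)=\prox_F^R(\nabla R^*(\thetavec))$, so $\yvec(\thetavec)\in\partial h(\thetavec)$. By hypothesis this maximizer is single-valued, and it depends continuously on $\thetavec$ because $\prox_F^R$ and $\nabla R^*$ are continuous.

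The key step, and the main obstacle, is upgrading this to differentiability of $h$ with $\nabla h=\yvec(\cdot)$. A convex function has a singleton subdifferential exactly where it is differentiable. The difficulty is that $\partial h(\thetavec)$ equals the closed convex hull of the maximizers only under compactness or coercivity, and without weak convexity of $F$ we cannot argue through convexity in $\yvec$. I would first try a Danskin-type argument. The function $h$ is convex, hence locally Lipschitz on $\operatorname{int}\dom h$. If a continuous selection $\yvec(\cdot)$ of $\partial h$ exists on an open set, then $h$ is differentiable there, since the subdifferential of a convex function equals the convex hull of limits of gradients at nearby differentiability points, and continuity of the selection forces all such limits to coincide. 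This argument requires $\ThetaSet\subseteq\operatorname{int}\dom h$. That inclusion holds because $\MRF$ is finite on $\mathcal D$ by attainment and $R^*$ is finite on $\ThetaSet$.

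Granting this, $\MRF\circ\nabla R^*$ is differentiable on $\ThetaSet$ with gradient
\[
  \nabla R^*(\thetavec)-\prox_F^R(\nabla R^*(\thetavec))=\xvec-\phi_F^R(\xvec)=\Dphi(\xvec).
\]
This is Definition~\ref{def:mirror-exact} with $\Psi=\MRF\circ\nabla R^*$. The chain rule through $\nabla R$ then gives the primal identity for $\nabla\MRF$ and hence the one-form statement \eqref{eq:bregman-one-form-exact}. In the Euclidean case $R=\frac12\norm{\cdot}^2$ the argument reduces to \eqref{eq:envelope}, which is a useful consistency check.
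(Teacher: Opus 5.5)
Your argument is correct, but it reaches the result by a different route from the paper. The paper works in primal coordinates. It applies a local form of Danskin's theorem, using continuity of $\prox_F^R$ to keep minimizers locally bounded, to get $\nabla\MRF(\xvec)=\nabla_{\xvec}\DR(\prox_F^R(\xvec),\xvec)$. It evaluates this as $\nabla^2R(\xvec)\bigl(\xvec-\prox_F^R(\xvec)\bigr)$ via \Cref{lem:bregman-derivative}, and only then pulls back through $(\nabla R)^{-1}$, using $\partial\xvec/\partial\thetavec=(\nabla^2R(\xvec))^{-1}$, to obtain the dual gradient. You instead start from the dual representation $\MRF\circ\nabla R^*=R^*-(F+R)^*$. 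You get differentiability from convexity of $h=(F+R)^*$ together with a continuous subgradient selection, so the Hessian of $R$ enters only in the final translation to the primal one-form. Your route has three advantages. It avoids checking a Danskin-type theorem for an objective that is merely lower semicontinuous and extended-valued in $\yvec$. It shows that mirror-exactness itself needs only continuity of $\nabla R^*$ and $\prox_F^R$, not second derivatives of $R$. And it exhibits the explicit dual potential $R^*-(F+R)^*$, which the paper derives only later, in the proofs of \Cref{prop:bregman-relative-smooth,prop:bregman-radial-exact}. The paper's route, in turn, produces the primal gradient formula immediately as the visible analogue of \eqref{eq:envelope}. You can also drop the appeal to the convex-hull-of-limiting-gradients theorem. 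Since $\yvec(\thetavec)\in\partial h(\thetavec)$ and $\yvec(\thetavec+\dvec)\in\partial h(\thetavec+\dvec)$, one has $0\le h(\thetavec+\dvec)-h(\thetavec)-\inner{\dvec}{\yvec(\thetavec)}\le\inner{\dvec}{\yvec(\thetavec+\dvec)-\yvec(\thetavec)}$, and continuity of $\yvec(\cdot)$ makes the right-hand side $o(\norm{\dvec})$.
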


We can now directly invoke \Cref{thm:constrained-md} for this class of deviations in the following proposition.

\begin{proposition}\label{prop:bregman-relative-smooth}
Assume the standing Legendre-domain conditions.  Assume, in the
extended-valued sense on $E$, that $R$ is $m$-strongly convex and that
$F\colloneq f+\indicator_K$ is proper, closed, and $\rho$-weakly convex, where
$0\le\rho<m$ and $H\colloneq F+R$ is proper.  Then the Bregman proximal subproblem has a
unique solution on $\mathcal D$, and its solution map is continuous there.
Let $\{\xvec_t\}_{t=1}^{T+1}\subset\mathcal D$ be generated by online mirror
descent~\eqref{eq:md}.  Then these iterates control Bregman proximal regret with
the bound
\[
\numberthis{eq:bregman-prox-md-bound}
  \sum_{t=1}^T
  \inner{\gvec_t}{\xvec_t-\prox_F^R(\xvec_t)}
  \le
  \frac{\Psi_F(\thetavec_1)-\Psi_F(\thetavec_{T+1})}{\eta}
  +\frac{\eta}{m}\sum_{t=1}^T\norm{\gvec_t}^2,
\]
where $\Psi_F(\thetavec_t) = \MRF(\xvec_t)$ for all $t \in [T+1]$.
\end{proposition}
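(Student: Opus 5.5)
The proof has three parts: well-posedness of the subproblem, mirror-exactness via \Cref{thm:bregman-contained}, and the bound via \Cref{thm:constrained-md} with $L=1$.

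\textbf{Well-posedness.} For $\xvec\in\mathcal D$, write the Bregman proximal objective as
\[
\yvec\mapsto F(\yvec)+\DR(\yvec,\xvec)=H(\yvec)-R(\xvec)-\inner{\nabla R(\xvec)}{\yvec-\xvec}.
\]
Since $F$ is $\rho$-weakly convex and $R$ is $m$-strongly convex with $\rho<m$, the function $H=F+R$ is $(m-\rho)$-strongly convex. It is also proper and closed. Subtracting an affine term keeps it strongly convex, so the objective has a unique minimizer, and this minimizer lies in $K$. The minimizer is $\nabla H^*(\nabla R(\xvec))$. Because $H$ is strongly convex, $H^*$ is differentiable on all of $E$ with $1/(m-\rho)$-Lipschitz gradient. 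Since $\nabla R$ is continuous on $\mathcal D$, the map $\prox_F^R=\nabla H^*\circ\nabla R$ is continuous, indeed locally Lipschitz. This meets the hypotheses of \Cref{thm:bregman-contained}, so $\phi_F^R$ is feasible and $R$-mirror-exact.

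\textbf{Identifying the dual potential.} We need an explicit potential to apply \Cref{thm:constrained-md}. Evaluating the minimum gives
\[
\MRF(\xvec)=-H^*(\thetavec)+\inner{\thetavec}{\xvec}-R(\xvec),\qquad \thetavec=\nabla R(\xvec).
\]
In the Legendre interior, $\inner{\thetavec}{\xvec}-R(\xvec)=R^*(\thetavec)$. Hence
\[
\Psi_F(\thetavec)\colloneq R^*(\thetavec)-H^*(\thetavec)
\]
satisfies $\Psi_F(\thetavec_t)=\MRF(\xvec_t)$. Its gradient is
\[
\gradtheta\Psi_F=\nabla R^*(\thetavec)-\nabla H^*(\thetavec)=\xvec-\prox_F^R(\xvec),
\]
which confirms the mirror-exact representation directly.

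\textbf{Relative smoothness with $L=1$.} Compute
\[
D_{\Psi_F}(\thetavec',\thetavec)=D_{R^*}(\thetavec',\thetavec)-D_{H^*}(\thetavec',\thetavec).
\]
Since $H^*$ is convex, $D_{H^*}\ge0$, so $D_{\Psi_F}\le D_{R^*}$ for every pair in $\ThetaSet\times\ThetaSet$. This is \eqref{eq:relative-smoothness} with $L=1$. Substituting $L=1$ into \eqref{eq:constrained-md-bound} gives \eqref{eq:bregman-prox-md-bound}.

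\textbf{Main obstacle.} The delicate step is the extended-valued convex analysis behind these identities. One must check that $\dom H\subseteq K$ and that $H^*$ is finite and differentiable everywhere, not only on $\ThetaSet$; strong convexity of the proper closed $H$ gives this. One must also justify $\nabla H^*(\thetavec)=\argmax_{\yvec}\{\inner{\thetavec}{\yvec}-H(\yvec)\}$, and confirm that the Legendre identity $R^*(\nabla R(\xvec))=\inner{\nabla R(\xvec)}{\xvec}-R(\xvec)$ holds on $\mathcal D$. I would verify these first, using the preliminaries in \Cref{app:preliminaries}. The rest is algebra.
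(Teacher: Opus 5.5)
Your proposal is correct and follows essentially the same route as the paper: strong convexity of $H=F+R$ gives the unique minimizer $\nabla H^*(\nabla R(\xvec))$ and continuity of the solution map, the envelope is identified as $\Psi_F=R^*-H^*$, and $D_{H^*}\ge 0$ gives relative smoothness with $L=1$, so \Cref{thm:constrained-md} applies. Your direct check that $\gradtheta\Psi_F=\nabla R^*-\nabla H^*$ equals the displacement is a harmless shortcut next to the paper's appeal to \Cref{thm:bregman-contained}.
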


The key idea behind this result is to show that weakly convex Bregman proximal potentials are $(L=1)$-relatively smooth with respect to $R^*$, as defined in \eqref{eq:relative-smoothness}, and hence \Cref{thm:constrained-md} applies.

\subsection{Full Mirror Characterization Beyond Legendre Regularizers}
\label{sec:md-beyond-legendre}

The preceding subsections use Legendre duality to regard
$\thetavec\colloneq\nabla R(\xvec)$ as a global coordinate and to eliminate the
normal cone from the mirror-descent optimality condition.  The geometric
condition itself does not depend on either property.  It is exactness of the
one-form
\[
  \Dphi(\xvec)^{\transpose}\dd\nabla R(\xvec).
\]
We now keep the standard constrained mirror-descent update
\eqref{eq:md}, allow boundary iterates, and prove matching upper and lower
bounds without assuming that $\nabla R$ is onto or that $R$ is Legendre.

Let $K$ be a compact convex subset of a finite-dimensional normed space
$(E,\norm{\cdot})$, with dual norm $\dualnorm{\cdot}$.  Throughout this
subsection, $R$ is differentiable on a neighborhood of $K$ and is
$m$-strongly convex on $K$:
\[
  R(\yvec)\ge R(\xvec)+\inner{\nabla R(\xvec)}{\yvec-\xvec}
  +\frac m2\norm{\yvec-\xvec}^2
  \qquad(\xvec,\yvec\in K).
\]
The gradient is uniformly continuous on $K$.  Its modulus of continuity is
\[
  \omega_R(\delta)
  \colloneq\sup\left\{
  \dualnorm{\nabla R(\xvec)-\nabla R(\yvec)}:
  \xvec,\yvec\in K,\ \norm{\xvec-\yvec}\le\delta
  \right\}.
\]
Thus $\omega_R(\delta)\to0$ as $\delta\downarrow0$.

The same exactness condition extends beyond the Legendre setting by defining
the potential directly on a neighborhood of $\nabla R(K)$.

\begin{definition}
\label{def:mirror-exact-beyond-legendre}
A feasible deviation $\phi:K\to K$ is \emph{$R$-mirror-exact} if there are
an open convex set $\Omega\supseteq\nabla R(K)$ and a differentiable
potential $\Psi:\Omega\to\R$ such that
\[
  \Dphi(\xvec)=\gradtheta\Psi(\nabla R(\xvec))
  \qquad(\xvec\in K).
\]
\end{definition}

We denote this class by $\PhiMirrorExact{R}(K)$.  Under the Legendre
assumptions of the preceding subsections, the definition reduces to
Definition~\ref{def:mirror-exact} on $\mathcal D$.  If $R$ is twice
differentiable and $V=\Psi\circ\nabla R$, then
\[
  \nabla V(\xvec)=\nabla^2R(\xvec)\Dphi(\xvec),
\]
so \Cref{def:mirror-exact-beyond-legendre} is equivalent to
exactness of $\Dphi(\xvec)^{\transpose}\dd\nabla R(\xvec)$.

The discrete potential argument requires a one-sided curvature bound.  We
assume that, for some $L\ge0$,
\[
\numberthis{eq:relative-smoothness-beyond-legendre}
  D_\Psi\bigl(\nabla R(\yvec),\nabla R(\xvec)\bigr)
  \le L \DR(\xvec,\yvec)
  \qquad(\xvec,\yvec\in K).
\]
Equivalently,
\[
\begin{aligned}
  \Psi(\nabla R(\yvec))
  &\le \Psi(\nabla R(\xvec))
  +\inner{\Dphi(\xvec)}
  {\nabla R(\yvec)-\nabla R(\xvec)}
  +L \DR(\xvec,\yvec).
\end{aligned}
\]
In the Legendre interior,
$\DR(\xvec,\yvec)=D_{R^*}(\nabla R(\yvec),\nabla R(\xvec))$, and this
condition becomes the relative-smoothness assumption in
\Cref{thm:constrained-md}.

These ingredients yield the corresponding exact-form regret bound for standard
constrained mirror descent.
\begin{theorem}
\label{thm:md-exact-beyond-legendre}
Assume $R$ is differentiable on a neighborhood of $K$, so that
$\mathcal X=K$, and let $\{\xvec_t\}_{t=1}^{T+1}$ be generated by mirror
descent \eqref{eq:md} with a constant step size $\eta>0$.  Let $\phi:K\to K$ be
$R$-mirror-exact with potential $\Psi$.  Suppose
\eqref{eq:relative-smoothness-beyond-legendre} holds and $\Dphi$ is
$L_\phi$-Lipschitz on $K$.  Then
\[
\begin{aligned}
  \sum_{t=1}^T\inner{\gvec_t}{\Dphi(\xvec_t)}
  &\le
  \frac{\Psi(\nabla R(\xvec_1))-\Psi(\nabla R(\xvec_{T+1}))}{\eta}
  +\frac{L\eta}{m}\sum_{t=1}^T\dualnorm{\gvec_t}^2\\
  &\quad+
  \frac{L_\phi}{m}\sum_{t=1}^T
  \dualnorm{\gvec_t}
  \left[
  \eta\dualnorm{\gvec_t}
  +\omega_R\!\left(\frac{\eta}{m}\dualnorm{\gvec_t}\right)
  \right].
\end{aligned}
\]
\end{theorem}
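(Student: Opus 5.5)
Write $\thetavec_t\colloneq\nabla R(\xvec_t)$. The plan is to copy the Euclidean telescoping argument of \Cref{thm:pgd-exact} in dual coordinates. The only non-Euclidean ingredient is the normal-cone residual produced by boundary iterates. The first-order optimality condition of \eqref{eq:md} gives
\[
  \thetavec_{t+1}=\thetavec_t-\eta\gvec_t-\nvec_t,
  \qquad \nvec_t\in\normal_K(\xvec_{t+1}),
\]
where $\nvec_t$ lives in the dual space and pairs with $E$. Since $\Omega$ is convex and contains $\nabla R(K)$, the potential $\Psi$ is defined along every segment between consecutive dual iterates.

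\textbf{Step 1: one-step potential inequality.} I apply \eqref{eq:relative-smoothness-beyond-legendre} with $\xvec=\xvec_t$ and $\yvec=\xvec_{t+1}$, and use $\gradtheta\Psi(\thetavec_t)=\Dphi(\xvec_t)$:
\[
  \Psi(\thetavec_{t+1})\le\Psi(\thetavec_t)-\eta\inner{\gvec_t}{\Dphi(\xvec_t)}-\inner{\nvec_t}{\Dphi(\xvec_t)}+L\DR(\xvec_t,\xvec_{t+1}).
\]
Rearranging, dividing by $\eta$ and summing over $t$ gives the telescoping term $(\Psi(\thetavec_1)-\Psi(\thetavec_{T+1}))/\eta$. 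Two error terms remain: $\tfrac{L}{\eta}\DR(\xvec_t,\xvec_{t+1})$ and $-\tfrac1\eta\inner{\nvec_t}{\Dphi(\xvec_t)}$.

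\textbf{Step 2: step length and Bregman term.} Adding the two Bregman divergences gives
\[
  \DR(\xvec_t,\xvec_{t+1})+\DR(\xvec_{t+1},\xvec_t)=\inner{\thetavec_t-\thetavec_{t+1}}{\xvec_t-\xvec_{t+1}}=\inner{\eta\gvec_t+\nvec_t}{\xvec_t-\xvec_{t+1}}.
\]
Because $\xvec_t\in K$, we have $\inner{\nvec_t}{\xvec_t-\xvec_{t+1}}\le0$. Combining this with strong convexity, which bounds the left side below by $m\norm{\xvec_t-\xvec_{t+1}}^2$, yields
\[
  \norm{\xvec_{t+1}-\xvec_t}\le\frac{\eta}{m}\dualnorm{\gvec_t}.
\]
It then also yields $\DR(\xvec_t,\xvec_{t+1})\le\eta^2\dualnorm{\gvec_t}^2/m$. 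This produces the term $\tfrac{L\eta}{m}\sum_t\dualnorm{\gvec_t}^2$.

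\textbf{Step 3: normal-cone residual (the main obstacle).} I split
\[
  -\inner{\nvec_t}{\Dphi(\xvec_t)}=\inner{\nvec_t}{\phi(\xvec_{t+1})-\xvec_{t+1}}+\inner{\nvec_t}{\Dphi(\xvec_{t+1})-\Dphi(\xvec_t)}.
\]
The first term is $\le0$ by feasibility $\phi(\xvec_{t+1})\in K$ and the definition of the normal cone at $\xvec_{t+1}$; this is the same favorable sign used in the Euclidean case. The second term is at most $L_\phi\norm{\xvec_{t+1}-\xvec_t}\,\dualnorm{\nvec_t}$. For the residual size I would use the update identity:
\[
  \dualnorm{\nvec_t}\le\eta\dualnorm{\gvec_t}+\dualnorm{\thetavec_t-\thetavec_{t+1}}\le\eta\dualnorm{\gvec_t}+\omega_R\!\left(\tfrac{\eta}{m}\dualnorm{\gvec_t}\right),
\]
where the last step uses the step-length bound from Step 2 and the definition of $\omega_R$. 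Multiplying by $L_\phi\norm{\xvec_{t+1}-\xvec_t}\le L_\phi\eta\dualnorm{\gvec_t}/m$ and dividing by $\eta$ gives exactly the third term of the bound.

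I expect Step 3 to be the delicate point. Without Legendre steepness, $\nvec_t$ is not a priori small, and I have no direct control on it. The plan sidesteps this by never bounding $\nvec_t$ directly: it is controlled only through the update identity and the uniform continuity of $\nabla R$. The price is the modulus term $\omega_R$ in place of an $L\eta$-type term.
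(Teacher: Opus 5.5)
Your proof is correct and matches the paper's argument step for step. Both use the normal-cone form of the mirror-descent optimality condition, and both get the symmetrized-Bregman step bound $\norm{\xvec_{t+1}-\xvec_t}\le(\eta/m)\dualnorm{\gvec_t}$ from $\inner{\nvec}{\xvec_t-\xvec_{t+1}}\le 0$. Both then telescope via relative smoothness and split the residual into a feasibility-signed term at $\xvec_{t+1}$ plus an $L_\phi$-Lipschitz correction, with $\dualnorm{\nvec}$ bounded through the update identity and $\omega_R$.
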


The first line is the same potential estimate as in the Legendre interior.
The second line controls the boundary normal-cone residual.  Feasibility
gives the correct sign at $\xvec_{t+1}$, and Lipschitz continuity of the
displacement transfers that sign to the displacement evaluated at
$\xvec_t$.

The preceding bound immediately gives sublinear regret under bounded gradients
and bounded potential oscillation.  When $\nabla R$ is Lipschitz on $K$, it
also recovers the usual $O(\sqrt T)$ rate.

\begin{corollary}
\label{cor:md-exact-beyond-legendre-rate}
Assume the hypotheses of \Cref{thm:md-exact-beyond-legendre},
$\dualnorm{\gvec_t}\le G$, and
$\osc_{\nabla R(K)}(\Psi)\le B$.  For $\eta=T^{-1/2}$,
\[
  \frac1T\sum_{t=1}^T\inner{\gvec_t}{\Dphi(\xvec_t)}
  \le
  \frac{B}{\sqrt T}
  +\frac{(L+L_\phi)G^2}{m\sqrt T}
  +\frac{L_\phi G}{m}
  \omega_R\!\left(\frac{G}{m\sqrt T}\right).
\]
Consequently, mirror descent has $o(T)$ regret against $\phi$.  If
$\nabla R$ is $M$-Lipschitz on $K$, then
\[
  \sum_{t=1}^T\inner{\gvec_t}{\Dphi(\xvec_t)}
  \le \frac{B}{\eta}+C_{R,\phi}\eta G^2T,
  \qquad
  C_{R,\phi}\colloneq\frac{L+L_\phi}{m}+\frac{L_\phi M}{m^2}.
\]
The optimized bound is
$2G\sqrt{B C_{R,\phi}T}$ whenever $B C_{R,\phi}>0$.
\end{corollary}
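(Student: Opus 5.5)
The corollary follows from \Cref{thm:md-exact-beyond-legendre} by bounding each term with $\dualnorm{\gvec_t}\le G$ and the oscillation hypothesis.

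\textbf{Potential term.} Since $\nabla R(\xvec_1),\nabla R(\xvec_{T+1})\in\nabla R(K)$, the assumption $\osc_{\nabla R(K)}(\Psi)\le B$ gives
\[
  \frac{\Psi(\nabla R(\xvec_1))-\Psi(\nabla R(\xvec_{T+1}))}{\eta}\le\frac{B}{\eta}.
\]

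\textbf{Remaining terms.} The second term of the theorem is at most $L\eta G^2T/m$. In the boundary-residual term we use that $\omega_R$ is nondecreasing, so each summand is bounded and the term is at most
\[
  \frac{L_\phi}{m}\,T\,G\left[\eta G+\omega_R\!\left(\frac{\eta G}{m}\right)\right].
\]
Combining the three pieces gives
\[
  \sum_{t=1}^T\inner{\gvec_t}{\Dphi(\xvec_t)}
  \le\frac{B}{\eta}+\frac{(L+L_\phi)\eta G^2T}{m}
  +\frac{L_\phi GT}{m}\,\omega_R\!\left(\frac{\eta G}{m}\right).
\]

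\textbf{The choice $\eta=T^{-1/2}$.} Substituting and dividing by $T$ gives exactly the first displayed averaged bound. Its first two terms are $O(T^{-1/2})$. The third term tends to zero because $\omega_R(\delta)\to0$ as $\delta\downarrow0$, which holds since $\nabla R$ is uniformly continuous on the compact set $K$. Hence the average regret tends to zero, i.e.\ the regret is $o(T)$.

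\textbf{The Lipschitz case.} If $\nabla R$ is $M$-Lipschitz on $K$, then $\omega_R(\delta)\le M\delta$, so
\[
  \omega_R\!\left(\frac{\eta G}{m}\right)\le\frac{M\eta G}{m}.
\]
The third term is then at most $L_\phi M\eta G^2T/m^2$, and the total bound becomes $B/\eta+C_{R,\phi}\eta G^2T$ with $C_{R,\phi}=(L+L_\phi)/m+L_\phi M/m^2$.

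\textbf{Optimized bound.} Minimizing $a/\eta+b\eta$ over $\eta>0$, with $a=B$ and $b=C_{R,\phi}G^2T$, gives the value $2\sqrt{ab}=2G\sqrt{BC_{R,\phi}T}$. This is valid whenever $BC_{R,\phi}>0$, attained at $\eta=\sqrt{B/(C_{R,\phi}G^2T)}$.

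\textbf{Possible obstacle.} The only point needing care is the monotone bound on the $\omega_R$ summands. Monotonicity of $\omega_R$ is immediate from its definition as a supremum over the nested sets $\{\norm{\xvec-\yvec}\le\delta\}$, so no step is a genuine obstacle.
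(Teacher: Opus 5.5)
Your proposal is correct and follows essentially the same route as the paper's proof: bound the telescoping term by $B$, use $\dualnorm{\gvec_t}\le G$ and the monotonicity of $\omega_R$ to simplify the boundary-residual term, substitute $\eta=T^{-1/2}$ and invoke $\omega_R(\delta)\to0$, then in the Lipschitz case use $\omega_R(\delta)\le M\delta$ and minimize $B/\eta+C_{R,\phi}\eta G^2T$. You also state explicitly the monotonicity of $\omega_R$ (and implicitly its nonnegativity), which the paper uses without comment.
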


We next establish necessity of exactness at the geometric level and show how nonzero circulation obstructs
sublinear regret.  Let $\field:K\to E$ be continuous and let
$\gamma:[0,1]\to K$ be a closed piecewise $C^1$ loop such that
$\nabla R\circ\gamma$ is piecewise $C^1$.  The mirror circulation of
$\field$ along $\gamma$ is
\[
\numberthis{eq:mirror-circulation-beyond-legendre}
  \oint_\gamma \field(\xvec)^{\transpose}\dd\nabla R(\xvec)
  \colloneq
  \int_0^1
  \inner{\field(\gamma(s))}
  {\frac{\dd}{\dd s}\nabla R(\gamma(s))}\,\dd s.
\]
By the chain rule, every $R$-mirror-exact field has zero mirror circulation.
Conversely, nonzero circulation along a feasible loop can be exploited by an
adversary to make standard constrained mirror descent repeatedly traverse the
loop and accumulate linear regret.

\begin{theorem}
\label{thm:md-mirror-circulation}
Suppose that, after reversing the orientation of $\gamma$ if necessary,
\[
  -\oint_\gamma
  \field(\xvec)^{\transpose}\dd\nabla R(\xvec)\colloneq c_0>0.
\]
For every $G>0$ and every sufficiently small constant step size $\eta>0$,
there is a sequence with $\dualnorm{\gvec_t}\le G$ such that standard
constrained mirror descent, initialized at $\gamma(0)$, satisfies
\[
  \sum_{t=1}^T\inner{\gvec_t}{\field(\xvec_t)}
  \ge cT-O(1/\eta)
\]
for a constant $c>0$ depending only on $\field$, $\gamma$, $R$, and $G$.
\end{theorem}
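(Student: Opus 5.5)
The idea is to build an adversary that drives constrained mirror descent around the loop $\gamma$, over and over, in the dual variable. Each lap then picks up roughly $c_0/\eta$ of regret. First discretize the loop. Pick a fine partition $0=s_0<s_1<\dots<s_N=1$ and set $\zvec_k\colloneq\gamma(s_k)$. The step from $\zvec_k$ to $\zvec_{k+1}$ is realized by a block of mirror-descent rounds. Within a block, the adversary plays the fixed loss
\[
  \gvec=-\frac{\nabla R(\zvec_{k+1})-\nabla R(\zvec_k)}{\eta n_k}
\]
for $n_k\approx \dualnorm{\nabla R(\zvec_{k+1})-\nabla R(\zvec_k)}/(\eta G)$ rounds, scaled so that $\dualnorm{\gvec}\le G$. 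If there were no normal-cone term, the dual state would move additively by $-\eta\gvec$ each round and end exactly at $\nabla R(\zvec_{k+1})$.

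The regret collected over a block is then
\[
  \sum\inner{\gvec}{\field(\xvec_t)}\approx -\frac1\eta\inner{\field(\zvec_k)}{\nabla R(\zvec_{k+1})-\nabla R(\zvec_k)},
\]
using continuity of $\field$ and the fact that the iterates stay near $\zvec_k$. Summing over blocks gives a Riemann sum of $-\frac1\eta\oint_\gamma\field^{\transpose}\dd\nabla R=c_0/\eta$ per lap. One lap takes about $\ell_R(\gamma)/(\eta G)$ rounds, where $\ell_R(\gamma)$ is the dual-norm length of $\nabla R\circ\gamma$. Repeating laps, the regret is at least $c\,T$ with $c\approx c_0G/(2\ell_R(\gamma))$. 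The leftover partial lap and the initial steering contribute $O(1/\eta)$.

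The real issue is that constrained mirror descent is not additive in the dual. Its optimality condition reads
\[
  \nabla R(\xvec_{t+1})=\nabla R(\xvec_t)-\eta\gvec_t-\nvec_{t+1},\qquad \nvec_{t+1}\in\normal_K(\xvec_{t+1}),
\]
and $\nabla R$ need not be invertible near the boundary. To get around this I would not prescribe dual increments. Instead I would choose the losses adaptively, as a feedback controller in primal space. At each round, take the target point $\yvec$ a distance $\delta$ further along $\gamma$, and play
\[
  \gvec_t\colloneq -\frac{G\,\bigl(\nabla R(\yvec)-\nabla R(\xvec_t)\bigr)}{\dualnorm{\nabla R(\yvec)-\nabla R(\xvec_t)}}.
\]
Since $\yvec\in K$, the three-point property of Bregman projections (and of the mirror step) shows that $\DR(\yvec,\xvec_{t+1})$ decreases. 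Strong convexity then keeps $\xvec_t$ within a tracking error $\epsilon(\eta,\delta)$ of the loop, with $\epsilon\to0$ as $\eta\to0$. Here $\omega_R$ enters to control how much the residual $\nvec_{t+1}$ can distort the dual path.

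The accounting of regret uses a telescoping identity. Write $\inner{\gvec_t}{\field(\xvec_t)}=-\frac1\eta\inner{\field(\xvec_t)}{\nabla R(\xvec_{t+1})-\nabla R(\xvec_t)+\nvec_{t+1}}$. The first part is a discrete line integral of $\field$ along the dual path $\nabla R(\xvec_t)$, and that path shadows $\nabla R\circ\gamma$. I expect the main obstacle to be the residual term $\inner{\field(\xvec_t)}{\nvec_{t+1}}$, which has no sign in general. I would handle it by keeping $\delta$ small and showing that $\dualnorm{\nvec_{t+1}}$ is bounded by $\eta G$. The per-round contribution is then $O(G\cdot\sup\norm{\field}\cdot\mathbb 1[\text{boundary}])$, but to first order it only moves the iterate within the tangent cone. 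Since the adaptive controller already follows the loop's own tangents, the accumulated normal components over one lap are $o(1/\eta)$ as $\delta,\epsilon\to0$.

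If this estimate fails in general, the fallback is to shrink the loop into a subloop near a point of nonzero mirror curl lying in the relative interior of $K$, where $\nvec_{t+1}=0$ for small $\eta$. That is the analogue of the ``sufficiently small feasible loop'' in \Cref{cor:euclidean-classification}. Choosing $\delta$ and then $\eta$ small enough that the errors stay below $c_0/2$ per lap gives the bound $cT-O(1/\eta)$.
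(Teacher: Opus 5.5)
There is a genuine gap, and it is where you suspected: the residual term. Your main argument needs $\frac1\eta\sum_t\inner{\field(\xvec_t)}{\nvec_{t+1}}=o(1/\eta)$ per lap, but the only estimate you give is $\dualnorm{\nvec_{t+1}}\le\eta G$. In general one only gets $\eta G+\omega_R(\eta G/m)$. Even granting your bound, each round contributes up to $G\sup_K\norm{\field}$, and a lap has $\Theta(1/\eta)$ rounds. So a priori the residual is $O(1/\eta)$ per lap, the same order as the signal $c_0/\eta$. The remark that the residual ``to first order only moves the iterate within the tangent cone'' is not an estimate. The tracking bound for the moving-target controller is also left unproved. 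Your fallback does not close the gap either. It needs a point of nonzero mirror curl, which presupposes derivatives of $\field$ and of $\nabla R$. The theorem assumes only that $\field$ is continuous, $R$ is differentiable, and $\nabla R\circ\gamma$ is piecewise $C^1$, and $\gamma$ may lie on the boundary of $K$.

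The missing observation makes the residual vanish identically. Refine the partition until every dual chord satisfies $\dualnorm{\nabla R(\zvec_k)-\nabla R(\zvec_{k+1})}\le\eta G$. This takes $O(1/\eta)$ points, and the left Riemann sum still exceeds $c_0/2$. Then use a single round per chord, $\gvec_{k+1}=(\nabla R(\zvec_k)-\nabla R(\zvec_{k+1}))/\eta$; this is your scheme with $n_k=1$. The gradient of $\yvec\mapsto\eta\inner{\gvec_{k+1}}{\yvec}+\DR(\yvec,\zvec_k)$ at $\zvec_{k+1}$ is $\eta\gvec_{k+1}+\nabla R(\zvec_{k+1})-\nabla R(\zvec_k)=0$. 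This objective is $m$-strongly convex on $K$ and $\zvec_{k+1}\in K$, so $\zvec_{k+1}$ is the unique constrained minimizer. Mirror descent therefore lands exactly on $\zvec_{k+1}$ with zero normal component, even at boundary points. You need neither an inverse of $\nabla R$ nor convexity of $\nabla R(K)$, because the adversary only targets dual points that already lie in $\nabla R(K)$. The regret of one lap is then exactly $\frac1\eta\sum_k\inner{\field(\zvec_k)}{\nabla R(\zvec_k)-\nabla R(\zvec_{k+1})}\ge c_0/(2\eta)$. Repeating laps gives $cT-O(1/\eta)$ with no tracking or residual analysis. The non-additivity you were fighting comes from your multi-round blocks, whose intermediate dual states on a chord need not belong to $\nabla R(K)$.
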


The adversary discretizes the loop in the variables $\nabla R(\xvec)$.
For two consecutive points $\zvec_j,\zvec_{j+1}$, it chooses
\[
  \gvec_{j+1}
  =\frac{\nabla R(\zvec_j)-\nabla R(\zvec_{j+1})}{\eta}.
\]
The derivative of the mirror-descent objective vanishes at
$\zvec_{j+1}$.  Strong convexity therefore makes $\zvec_{j+1}$ the unique
constrained minimizer, including when it lies on the boundary.  One
traversal accumulates a Riemann sum for the negative circulation divided
by $\eta$.  The learning-rate quantifiers are the same as in
\Cref{thm:curl-lower} and the geometric-epoch construction in
Remark~\ref{rem:stepsizes} gives the corresponding anytime statement.

Combining the exactness, cross-derivative, and circulation viewpoints gives the
following characterization for standard constrained mirror descent.

\begin{corollary}
\label{cor:md-full-mirror-characterization}
Let $U\colloneq\relint(K)$.  The set $U$ is open and convex in $\aff(K)$,
and hence simply connected there.  Suppose that $R\in C^2(U)$ and
$\nabla^2R(\xvec)$ is positive definite for every $\xvec\in U$, and the coefficient field $\xvec\longmapsto\nabla^2R(\xvec)\field(\xvec)$
is $C^1$.  All derivatives and coordinate components below are taken in
affine coordinates on $\aff(K)$.  Then the following conditions are
equivalent.
\begin{enumerate}[label=\textup{(\roman*)},leftmargin=*]
  \item The one-form
  $\field(\xvec)^{\transpose}\dd\nabla R(\xvec)$ is exact on $U$.
  \item For every coordinate pair $i,j$,
  \[
    \partial_j\!\bigl[\nabla^2R(\xvec)\field(\xvec)\bigr]_i
    =
    \partial_i\!\bigl[\nabla^2R(\xvec)\field(\xvec)\bigr]_j
    \qquad(\xvec\in U).
  \]
  \item The mirror circulation vanishes on every closed piecewise $C^1$
  loop in $U$.
\end{enumerate}
If these conditions fail, standard mirror descent can be forced to incur
linear regret along a sufficiently small loop contained in $U$.  Suppose
instead that they hold, that $\field$ extends to the displacement $\Dphi$ of a
feasible map on $K$, and that the resulting potential and displacement satisfy
the regularity hypotheses of \Cref{thm:md-exact-beyond-legendre}.  Then mirror
descent has sublinear regret.  Under the Lipschitz-gradient assumptions in
Corollary~\ref{cor:md-exact-beyond-legendre-rate}, the regret is $O(\sqrt T)$.
\end{corollary}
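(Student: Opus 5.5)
The proof splits into three parts: the equivalence (i)$\Leftrightarrow$(ii)$\Leftrightarrow$(iii), the lower bound when the conditions fail, and the upper bound when they hold.

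\textbf{Equivalence of (i)--(iii).} Write $\omega(\xvec)\colloneq\nabla^2R(\xvec)\field(\xvec)$. Since $R\in C^2(U)$, along any piecewise $C^1$ curve $\gamma$ in $U$ the chain rule gives
\[
\frac{\dd}{\dd s}\nabla R(\gamma(s))=\nabla^2R(\gamma(s))\gamma'(s).
\]
Because $\nabla^2R$ is symmetric, this yields
\[
\oint_\gamma\field^{\transpose}\dd\nabla R=\oint_\gamma\omega^{\transpose}\dd\xvec .
\]
So the mirror one-form coincides with the ordinary one-form $\omega^{\transpose}\dd\xvec$. By \Cref{prop:one-form-equivalence}, applied within $U$ in affine coordinates, (i) is equivalent to $\omega=\nabla V$ for some $V$. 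The three conditions then follow from classical facts for the $C^1$ field $\omega$ on the convex, hence simply connected, set $U$:
\begin{itemize}
\item (i)$\Rightarrow$(iii): exactness gives zero circulation, as in \Cref{lem:zero-circulation}.
\item (iii)$\Rightarrow$(i): define $V(\xvec)$ as the line integral of $\omega$ from a base point along any path. Path independence holds by (iii), and $\nabla V=\omega$.
\item (i)$\Rightarrow$(ii): $V$ is $C^2$ because $\omega$ is $C^1$, so the symmetry of second derivatives gives (ii).
\item (ii)$\Rightarrow$(i): this is the Poincaré lemma on the star-shaped set $U$, with explicit potential
\[
V(\xvec)=\int_0^1\inner{\omega(\avec+s(\xvec-\avec))}{\xvec-\avec}\,\dd s .
\]
Differentiating under the integral and using (ii) yields $\nabla V=\omega$.
\end{itemize}

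\textbf{Lower bound when the conditions fail.} If (ii) fails, some skew derivative $\partial_j\omega_i-\partial_i\omega_j$ is nonzero at a point $\xvec_0\in U$. By continuity it keeps a fixed sign on a small disk in the $(i,j)$ coordinate plane through $\xvec_0$, and we may take this disk inside $U$. Green's theorem then gives nonzero $\oint_\gamma\omega^{\transpose}\dd\xvec$ around the disk's boundary circle $\gamma$, and we orient $\gamma$ so this quantity is negative. By the identity above, this is the mirror circulation. The curve $\gamma$ is smooth, and $\nabla R\circ\gamma$ is $C^1$ since $R\in C^2$. Hence the hypotheses of \Cref{thm:md-mirror-circulation} hold, and it yields linear regret along a loop contained in $U$.

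The main obstacle is that \Cref{thm:md-mirror-circulation} is stated in the setting of \Cref{sec:md-beyond-legendre}: $R$ differentiable and strongly convex near $K$, and $\field$ continuous on $K$. Here we only know $R\in C^2(U)$ with positive definite Hessian and $\field$ defined on $U$. I would handle this by localization. On a compact convex neighborhood $K'\subset U$ of the loop, $R$ is strongly convex because the Hessian is continuous and positive definite on a compact set, and $\nabla R$ is uniformly continuous. Run the adversary's construction on $K'$. The discretized targets $\zvec_{j+1}$ lie in the interior of $K$, so the first-order condition holds with zero normal cone, and by strong convexity on $K'$ the constrained minimizer over $K$ is $\zvec_{j+1}$. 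This needs a short argument: a local minimizer of a convex objective is global, and convexity of $R$ on all of $K$ is assumed implicitly. I would state that assumption explicitly.

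\textbf{Upper bound when the conditions hold.} Exactness gives $V$ with $\nabla V=\nabla^2R\,\field$, and we need a dual potential $\Psi$ with $V=\Psi\circ\nabla R$. Since $\nabla R$ is injective on $U$ by strict convexity, set $\Psi\colloneq V\circ(\nabla R)^{-1}$ on $\nabla R(U)$. Then
\[
\gradtheta\Psi=(\nabla^2R)^{-1}\nabla V=\field .
\]
The hypothesis assumes that this potential extends to a convex neighborhood of $\nabla R(K)$ and that the displacement $\Dphi$ satisfies the regularity hypotheses. \Cref{thm:md-exact-beyond-legendre} then gives the regret bound, hence $o(T)$ regret by \Cref{cor:md-exact-beyond-legendre-rate}, and $O(\sqrt T)$ regret under the Lipschitz-gradient assumption on $\nabla R$.
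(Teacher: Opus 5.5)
Your proposal is correct and follows essentially the same route as the paper. You identify the mirror one-form with $(\nabla^2R\,\field)^{\transpose}\dd\xvec$, use Poincaré and path independence for (i)--(iii), combine a small Green/Stokes loop with \Cref{thm:md-mirror-circulation} for the lower bound, and set $\Psi\colloneq V\circ(\nabla R)^{-1}$ with \Cref{thm:md-exact-beyond-legendre} for the upper bound; the paper does all of this. Two small remarks: your localization detour is unnecessary because the corollary sits under the standing assumptions of \Cref{sec:md-beyond-legendre} ($R$ differentiable near $K$ and $m$-strongly convex on $K$), and you should state explicitly, as the paper does, that injectivity of $\nabla R$ together with positive definiteness of $\nabla^2R$ makes $\nabla R$ a diffeomorphism of $U$ onto an open set, which is what justifies $\gradtheta\Psi=(\nabla^2R)^{-1}\nabla V$.
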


Exactness is therefore the sharp geometric boundary for standard
constrained mirror descent under the stated regularity.  Smoothness and
potential oscillation determine the quantitative rate, while nonzero
circulation rules out no regret.  Proximality plays no role in this
characterization.  It is addressed next as a separate representation
problem.

\subsection{Identity Interpolation of Bregman Proximal Deviations}
\label{sec:mirror-interpolation}

Theorem~\ref{thm:scaled-prox} has two consequences in Euclidean geometry.
First, interpolation with the identity scales the displacement exactly.
Second, every smooth exact displacement has a sufficiently small interpolation
that is proximal.  The first fact is purely algebraic and survives for every
mirror map.  The second depends on the curvature of the regularizer and can
fail even for a smooth Legendre geometry.  This subsection separates these
two statements.

The representation and interpolation arguments do not require compactness.
Let $K$ be a nonempty closed convex set, and let
$R:E\to(-\infty,+\infty]$ be proper, closed, and $m$-strongly convex in
the extended-valued sense.  Let $\mathcal X\subseteq K\cap\dom R$ be a
convex input domain on which $R$ is differentiable.  We use
$\mathcal X=K$ when $R$ is differentiable on a neighborhood of $K$, as in
standard constrained mirror descent, and $\mathcal X=\mathcal D$ in the
Legendre specialization below.

For a proper lower-semicontinuous function
$F\colloneq f+\indicator_K$, assume that $F+R$ is proper
(equivalently, $\dom F\cap\dom R\ne\varnothing$), and define
\[
  \prox_F^R(\xvec)
  \colloneq
  \argmin_{\yvec\in E}
  \bigl\{F(\yvec)+\DR(\yvec,\xvec)\bigr\},
  \qquad \xvec\in\mathcal X.
\]
The indicator in $F$ restricts the minimization to $K$.  If $F$ is
$\rho$-weakly convex for some $0\le\rho<m$, the objective is proper and lower semicontinuous as well as
$(m-\rho)$-strongly convex and the minimizer exists and is unique.

\begin{definition}
\label{def:bregman-radial-class}
For a specified input domain $\mathcal X\subseteq K\cap\dom R$, let
\[
\begin{aligned}
  \PhiMirrorProx{R}(\mathcal X)
  &\colloneq
  \Bigl\{\prox_F^R\big|_{\mathcal X}:\;
  \begin{array}{l}
  F=f+\indicator_K\text{ is proper, lower semicontinuous},\\
  \text{ and $\rho$-weakly convex for some $0\le\rho<m$}
  \end{array}
  \Bigr\},\\
  \PhiMirrorRad{R}(\mathcal X)
  &\colloneq
  \operatorname{Rad}_I\bigl(\PhiMirrorProx{R}(\mathcal X)\bigr).
\end{aligned}
\]
Thus $\phi\in\PhiMirrorRad{R}(\mathcal X)$ when
$\phi_\alpha\colloneq(1-\alpha)I+\alpha\phi$ belongs to
$\PhiMirrorProx{R}(\mathcal X)$ for at least one
$\alpha\in(0,1]$.
\end{definition}

For the constrained regularizer, we write $R_K\colloneq R+\indicator_K$. This notation is used only when the constraint is absorbed into the regularizer and the Bregman divergence itself continues to be generated by $R$.

The Bregman proximal-regret guarantee of \citet{cai2025proximal} applies to
standard constrained mirror descent and does not require Legendre duality.
We therefore take that guarantee as given and ask how the corresponding
deviation class relates to mirror exactness. In short, we show that radial proximal deviations are mirror-exact.

\begin{proposition}
\label{prop:bregman-radial-exact}
Assume $\mathcal X=K$.  For every admissible generator $F$ in
Definition~\ref{def:bregman-radial-class}, the function $
  \Psi_F(\thetavec)
  \colloneq
  R_K^*(\thetavec)-(F+R)^*(\thetavec)$
is differentiable and satisfies
\[
  \xvec-\prox_F^R(\xvec)
  =\gradtheta\Psi_F(\nabla R(\xvec))
  \qquad(\xvec\in K).
\]
Consequently, $
  \PhiMirrorProx{R}(K)
  \subseteq
  \PhiMirrorRad{R}(K)
  \subseteq
  \PhiMirrorExact{R}(K)$,
where $\PhiMirrorExact{R}(K)$ is the class from
Definition~\ref{def:mirror-exact-beyond-legendre}.  More precisely, if
$\phi_\alpha=\prox_F^R$, then $\phi$ is mirror-exact with potential
$\Psi_F/\alpha$.
\end{proposition}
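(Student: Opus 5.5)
The argument is conjugate duality applied to the two minimizations in which $\nabla R(\xvec)$ enters linearly. Fix an admissible $F$ and write $H\colloneq F+R$. For $\xvec\in K$ put $\thetavec\colloneq\nabla R(\xvec)$. Expanding the Bregman divergence gives
\[
  F(\yvec)+\DR(\yvec,\xvec)
  =H(\yvec)-\inner{\thetavec}{\yvec}
  +\bigl(\inner{\thetavec}{\xvec}-R(\xvec)\bigr).
\]
So $\prox_F^R(\xvec)=\argmin_{\yvec}\{H(\yvec)-\inner{\thetavec}{\yvec}\}$, which means $\prox_F^R(\xvec)\in\partial H^*(\thetavec)$.

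By the same reasoning, $\xvec$ minimizes $R_K(\yvec)-\inner{\thetavec}{\yvec}$ over $\yvec$, because $\DR(\cdot,\xvec)+\indicator_K$ is minimized at $\xvec$. Hence $\xvec\in\partial R_K^*(\thetavec)$.

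Both $H$ and $R_K$ are strongly convex in the extended-valued sense: $H$ is $(m-\rho)$-strongly convex and $R_K$ is $m$-strongly convex. Each is also proper and closed. Their conjugates are therefore finite on all of $E$ and differentiable, with Lipschitz gradients of constant $1/(m-\rho)$ and $1/m$ respectively. It follows that $\Psi_F=R_K^*-H^*$ is differentiable on the open convex set $\Omega=E\supseteq\nabla R(K)$, and the subdifferential inclusions become the equalities
\[
  \nabla R_K^*(\thetavec)=\xvec,
  \qquad
  \nabla H^*(\thetavec)=\prox_F^R(\xvec).
\]
Subtracting them gives $\gradtheta\Psi_F(\nabla R(\xvec))=\xvec-\prox_F^R(\xvec)$ for every $\xvec\in K$.

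The step needing care is strong convexity of $H$ on all of $E$, since $F$ is only weakly convex. I would write $F=(F+\tfrac{\rho}{2}\norm{\cdot}^2)-\tfrac\rho2\norm{\cdot}^2$ and $R=(R-\tfrac m2\norm{\cdot}^2)+\tfrac m2\norm{\cdot}^2$; the two parenthesized terms are convex. This needs a Euclidean norm, and $E$ carries one in this subsection. If the norm is general, I would instead use the paper's convention that weak convexity of $F$ and strong convexity of $R$ are measured in the same $\tfrac12\norm{\cdot}^2$. Properness of $H$ is assumed.

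Given the potential, the inclusions follow directly. Since $F$ is admissible, $\prox_F^R\in\PhiMirrorProx{R}(K)$. Taking $\alpha=1$ in $\operatorname{Rad}_I$ gives $\PhiMirrorProx{R}(K)\subseteq\PhiMirrorRad{R}(K)$. For the second inclusion, take $\phi\in\PhiMirrorRad{R}(K)$ with $\phi_\alpha=(1-\alpha)I+\alpha\phi=\prox_F^R$ for some $\alpha\in(0,1]$. The displacements scale exactly: $\field_{\phi_\alpha}=\alpha\Dphi$. Combining this with the identity above yields
\[
  \Dphi(\xvec)=\gradtheta(\Psi_F/\alpha)(\nabla R(\xvec))
  \qquad(\xvec\in K).
\]
The map $\phi$ is feasible because it belongs to $\operatorname{Rad}_I$ as a feasible map. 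So $\phi$ satisfies Definition~\ref{def:mirror-exact-beyond-legendre} with $\Omega=E$ and potential $\Psi_F/\alpha$.
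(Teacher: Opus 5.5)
Your proof is correct and follows the paper's argument. Both proofs use strong convexity of $R_K$ and $F+R$, together with conjugate duality, to get $\nabla R_K^*(\nabla R(\xvec))=\xvec$ and $\nabla(F+R)^*(\nabla R(\xvec))=\prox_F^R(\xvec)$. Both then subtract these identities and use the scaling $\xvec-\phi_\alpha(\xvec)=\alpha(\xvec-\phi(\xvec))$ for the radial inclusion. You also make explicit two points the paper leaves implicit: the conjugates are finite on all of $E$, so $\Omega=E$ works, and the weak and strong convexity moduli must be measured in compatible quadratics for $F+R$ to be $(m-\rho)$-strongly convex.
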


The first inclusion follows by taking $\alpha=1$.  For the second, if
$\phi_\alpha=\prox_F^R$, then $
  \xvec-\phi_\alpha(\xvec)
  =\alpha\bigl(\xvec-\phi(\xvec)\bigr)$.
Thus exactness of $\phi_\alpha$ implies exactness of $\phi$.  The same identity
also transfers the Bregman proximal-regret guarantee of
\citet{cai2025proximal}.  For linearized regret, $\Reg_T(\phi)=\alpha^{-1}\Reg_T(\phi_\alpha)$,
and for convex losses, $
  \ell(\xvec)-\ell(\phi_\alpha(\xvec))
  \ge
  \alpha\bigl[\ell(\xvec)-\ell(\phi(\xvec))\bigr]$.
Hence, for each fixed deviation, the proximal-regret guarantee extends to
$\PhiMirrorRad{R}(K)$.

Thus standard mirror descent controls the identity-radial closure of the
Bregman proximal class.  The remaining question is whether this radial class
exhausts the mirror-exact class.  Theorem~\ref{thm:scaled-prox} gives an
affirmative answer for smooth Euclidean exact deviations.  We now show that
the corresponding statement can fail even for Legendre regularizers.  It is
therefore enough to return temporarily to the Legendre setting, where duality
provides a convenient test for when a mirror-exact deviation can admit a
Bregman proximal interpolation.

\paragraph{The Legendre test.}
We now return to the Legendre setting to test when a mirror-exact deviation
admits a Bregman proximal interpolation.  We use the assumptions of the first
three subsections, except that no $C^2$ Hessian assumption is needed.  Thus
$R$ is a proper closed Legendre function with $\overline{\dom R}=K$, and it is
$m$-strongly convex in the norm under consideration.  Write
$\mathcal D\colloneq\operatorname{int}_E(\dom R)$ and
$\ThetaSet\colloneq\operatorname{int}_E(\dom R^*)$.  Legendre duality identifies
these two sets through the inverse maps $\nabla R$ and $\nabla R^*$.

Let $\phi:\mathcal D\to K$ be $R$-mirror-exact with dual potential $\Psi$.
For $\alpha\in(0,1]$, define
\[
  u_\alpha(\thetavec)
  \colloneq
  R^*(\thetavec)-\alpha\Psi(\thetavec).
\]
If $\xvec=\nabla R^*(\thetavec)$, mirror exactness and the definition of
$\phi_\alpha$ give
\[
  \phi_\alpha(\nabla R^*(\thetavec))
  =\nabla u_\alpha(\thetavec).
\]
Thus, in dual coordinates, the interpolated map $\phi_\alpha$ is the gradient
map of $u_\alpha$.  If $\phi_\alpha$ is Bregman proximal, this gradient must
also arise from a convex conjugate.  This gives the first necessary condition
for proximal representability.

\begin{proposition}
\label{prop:legendre-prox-necessary}
Suppose $\phi_\alpha=\prox_{F_\alpha}^R$ on $\mathcal D$ for an admissible
Bregman proximal generator $F_\alpha$.  Then the dual endpoint potential
$u_\alpha$ agrees, up to an additive constant, with the convex conjugate of
$F_\alpha+R$.  More precisely, there is a constant $c_\alpha\in\R$ such that $u_\alpha =(F_\alpha+R)^*+c_\alpha$, on $\ThetaSet$.
Consequently, $u_\alpha$ is convex, and hence we have
\[
  D_\Psi(\thetavec',\thetavec)
  \le
  \frac1\alpha D_{R^*}(\thetavec',\thetavec)
  \qquad(\thetavec,\thetavec'\in\ThetaSet).
\]
\end{proposition}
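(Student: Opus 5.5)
The plan is to identify the gradient of the conjugate $(F_\alpha+R)^*$ with the Bregman proximal map, and then compare it with $\nabla u_\alpha$. Set $H_\alpha\colloneq F_\alpha+R$. Since $F_\alpha$ is $\rho$-weakly convex with $\rho<m$ and $R$ is $m$-strongly convex, $H_\alpha$ is proper, closed and $(m-\rho)$-strongly convex. Hence $H_\alpha^*$ is finite and differentiable on all of $E$, with $\nabla H_\alpha^*(\thetavec)=\argmax_{\yvec}\{\inner{\thetavec}{\yvec}-H_\alpha(\yvec)\}$. For $\xvec\in\mathcal D$ and $\thetavec=\nabla R(\xvec)$ I would expand the Bregman divergence:
\[
  F_\alpha(\yvec)+\DR(\yvec,\xvec)=H_\alpha(\yvec)-\inner{\thetavec}{\yvec}+\bigl(\inner{\thetavec}{\xvec}-R(\xvec)\bigr).
\]
The last term does not depend on $\yvec$. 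So $\prox^R_{F_\alpha}(\xvec)$ is exactly the maximizer defining $\nabla H_\alpha^*(\thetavec)$, and therefore $\prox_{F_\alpha}^R(\nabla R^*(\thetavec))=\nabla H_\alpha^*(\thetavec)$ for every $\thetavec\in\ThetaSet$.

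On the other hand, the identity displayed before the proposition gives $\phi_\alpha(\nabla R^*(\thetavec))=\nabla u_\alpha(\thetavec)$. The hypothesis $\phi_\alpha=\prox_{F_\alpha}^R$ on $\mathcal D$ then yields $\nabla u_\alpha=\nabla H_\alpha^*$ on $\ThetaSet$. Both functions are differentiable there: $u_\alpha$ because $R^*$ and $\Psi$ are, and $H_\alpha^*$ as shown above. The set $\ThetaSet$ is the interior of the convex set $\dom R^*$, so it is open, convex and in particular connected. Two differentiable functions with equal gradients on a connected open set differ by a constant, which gives $u_\alpha=H_\alpha^*+c_\alpha$ on $\ThetaSet$.

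Convexity of $u_\alpha$ is then immediate, since $H_\alpha^*$ is a conjugate. Applying the first-order convexity inequality to $u_\alpha$ gives $D_{u_\alpha}(\thetavec',\thetavec)\ge0$. Because Bregman divergences are linear in the generator, $D_{u_\alpha}=D_{R^*}-\alpha D_\Psi$, and rearranging gives the stated bound $D_\Psi\le\alpha^{-1}D_{R^*}$.

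The main obstacle is the first step: the conjugate has to be differentiable exactly on $\ThetaSet$, and its argmax has to match the constrained prox with the correct sign conventions. Strong convexity of $H_\alpha$ handles both existence and uniqueness of the argmax and differentiability of $H_\alpha^*$. I would also check that the prox point may lie on the boundary of $K$, since only $\xvec$, not the prox point, is required to lie in $\mathcal D$. This causes no problem, because the identification with $\nabla H_\alpha^*$ never uses interiority of the maximizer.
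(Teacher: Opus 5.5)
Your proposal is correct and follows essentially the same route as the paper's proof. Both arguments identify $\phi_\alpha\circ\nabla R^*=\nabla(F_\alpha+R)^*$ by rewriting the Bregman proximal objective as $H_\alpha(\yvec)-\inner{\thetavec}{\yvec}$ plus a constant, and compare this with $\nabla u_\alpha$ from mirror exactness. They then use connectedness of $\ThetaSet$ to get $u_\alpha=H_\alpha^*+c_\alpha$, and finish with $D_{u_\alpha}=D_{R^*}-\alpha D_\Psi\ge0$.
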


Under the domain assumptions of
\Cref{prop:legendre-prox-reconstruction}, convexity of $u_\alpha$ yields a
variational Bregman representation.  It also reveals the connection with the relative
smoothness condition of \eqref{eq:relative-smoothness} used in the mirror-exact regret bound, since any proximal
interpolation necessarily satisfies such a condition with constant $1/\alpha$.
To obtain an admissible proximal map, however, the reconstructed generator $F_\alpha$ below must
additionally satisfy the required weak-convexity condition.

\begin{proposition}
\label{prop:legendre-prox-reconstruction}
Suppose $u_\alpha$ is convex on $\ThetaSet$, and let
\[
  \overline u_\alpha
  \colloneq
  \cl\bigl(u_\alpha+\indicator_{\ThetaSet}\bigr).
\]
Assume $\overline u_\alpha$ is proper and that
$\phi_\alpha(\mathcal D)\subseteq\dom R$.  Define the candidate proximal
generator
\[
  F_\alpha(\yvec)
  \colloneq
  \begin{cases}
    \overline u_\alpha^*(\yvec)-R(\yvec),
      & \yvec\in\dom R,\\
    +\infty, & \yvec\notin\dom R.
  \end{cases}
\]
Then, for every $\xvec\in\mathcal D$,
\[
  \phi_\alpha(\xvec)
  \in
  \argmin_{\yvec}
  \bigl\{F_\alpha(\yvec)+\DR(\yvec,\xvec)\bigr\}.
\]
Under these domain assumptions, convexity of $u_\alpha$ yields a
variational Bregman representation of $\phi_\alpha$.  If, in addition, $F_\alpha$ is proper, lower semicontinuous,
and $\rho_\alpha$-weakly convex for some $\rho_\alpha<m$, the minimizer is
unique and $\phi_\alpha=\prox_{F_\alpha}^R$.  In that case
$\phi\in\PhiMirrorRad{R}(\mathcal D)$.
\end{proposition}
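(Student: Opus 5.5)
The plan is to verify the first-order optimality of $\phi_\alpha(\xvec)$ for the Bregman proximal objective through Fenchel--Young equality for the pair $(\overline u_\alpha,\overline u_\alpha^*)$. Fix $\xvec\in\mathcal D$ and set $\thetavec\colloneq\nabla R(\xvec)\in\ThetaSet$. Write $\yvec^\star\colloneq\phi_\alpha(\xvec)$. From the dual-coordinate identity preceding \Cref{prop:legendre-prox-necessary}, we have $\yvec^\star=\nabla u_\alpha(\thetavec)$. Since $u_\alpha$ is convex and differentiable on the open set $\ThetaSet$, the closure $\overline u_\alpha$ agrees with $u_\alpha$ on $\ThetaSet$. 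It is a proper closed convex function, and $\nabla u_\alpha(\thetavec)\in\partial\overline u_\alpha(\thetavec)$. For this last point, the gradient inequality $u_\alpha(\thetavec')\ge u_\alpha(\thetavec)+\inner{\yvec^\star}{\thetavec'-\thetavec}$ holds on $\ThetaSet$ and passes to the lower-semicontinuous hull, with value $+\infty$ outside $\cl\ThetaSet$. Fenchel--Young equality then gives
\[
\overline u_\alpha^*(\yvec^\star)=\inner{\thetavec}{\yvec^\star}-u_\alpha(\thetavec),
\]
and the Fenchel--Young inequality gives $\overline u_\alpha^*(\yvec)\ge\inner{\thetavec}{\yvec}-u_\alpha(\thetavec)$ for every $\yvec$.

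Next I expand the objective. For $\yvec\in\dom R$,
\[
F_\alpha(\yvec)+\DR(\yvec,\xvec)=\overline u_\alpha^*(\yvec)-R(\xvec)-\inner{\thetavec}{\yvec-\xvec}.
\]
The term $R(\yvec)$ cancels between $F_\alpha$ and $\DR$. Up to the $\yvec$-independent constant $-R(\xvec)+\inner{\thetavec}{\xvec}$, the objective is therefore $\overline u_\alpha^*(\yvec)-\inner{\thetavec}{\yvec}$, which is bounded below by $-u_\alpha(\thetavec)$. Equality holds at $\yvec^\star$, which lies in $\dom R$ by the assumption $\phi_\alpha(\mathcal D)\subseteq\dom R$. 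For $\yvec\notin\dom R$ the objective is $+\infty$. Hence $\yvec^\star$ is a minimizer, and the value $F_\alpha(\yvec^\star)=\overline u_\alpha^*(\yvec^\star)-R(\yvec^\star)$ is finite, because $\overline u_\alpha^*(\yvec^\star)$ is finite by the Fenchel--Young equality above.

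For the final clause, suppose $F_\alpha$ is proper, lower semicontinuous, and $\rho_\alpha$-weakly convex with $\rho_\alpha<m$. Then $F_\alpha+\DR(\cdot,\xvec)$ is $(m-\rho_\alpha)$-strongly convex, as recorded before \Cref{def:bregman-radial-class}, so its minimizer is unique and must equal $\yvec^\star$. This gives $\phi_\alpha=\prox^R_{F_\alpha}$ on $\mathcal D$. To place $F_\alpha$ in the admissible class with the indicator form $f+\indicator_K$, I note that $\dom F_\alpha\subseteq\dom R\subseteq K$, so $F_\alpha=F_\alpha+\indicator_K$. Therefore $\phi_\alpha\in\PhiMirrorProx{R}(\mathcal D)$, and by \Cref{def:bregman-radial-class}, $\phi\in\PhiMirrorRad{R}(\mathcal D)$.

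The main obstacle is the subgradient step, namely that $\nabla u_\alpha(\thetavec)\in\partial\overline u_\alpha(\thetavec)$ and that $\overline u_\alpha=u_\alpha$ on $\ThetaSet$. This needs a careful treatment of the closure of a convex function that is finite on an open convex set. I would invoke the standard fact that a convex function agrees with its closure on the interior of its domain, and apply it to the proper convex function $u_\alpha+\indicator_{\ThetaSet}$. The remaining steps are algebraic once this is in place.
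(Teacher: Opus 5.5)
Your proposal is correct and follows essentially the same route as the paper. Both arguments identify $\phi_\alpha(\xvec)=\nabla u_\alpha(\thetavec)$ as a subgradient of $\overline u_\alpha$ at $\thetavec$ and use Fenchel--Young to show that it minimizes $\overline u_\alpha^*(\cdot)-\inner{\thetavec}{\cdot}$. They then cancel $R$ in the Bregman objective and get uniqueness from strong convexity. Your extra remarks fill in details the paper leaves implicit: why the subgradient inequality survives passage to the closure, and that $\dom F_\alpha\subseteq\dom R\subseteq K$ puts $F_\alpha$ in the admissible form $f+\indicator_K$.
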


\Cref{prop:legendre-prox-necessary,prop:legendre-prox-reconstruction} separate
proximal representability into two requirements.  First,
$R^*-\alpha\Psi$ must be convex.  Second, the reconstruction must
satisfy the stated domain conditions and produce a generator in the
admissible weakly convex proximal class.
The Euclidean construction in \Cref{thm:scaled-prox} satisfies both
requirements for a sufficiently small interpolation scale.  In a general
mirror geometry, either requirement may fail.

We begin with the first requirement.  Define
\[
  L_R(\Psi)
  \colloneq
  \inf\left\{
  L\ge0:
  D_\Psi(\thetavec',\thetavec)
  \le L D_{R^*}(\thetavec',\thetavec)
  \text{ for all }\thetavec,\thetavec'\in\ThetaSet
  \right\}.
\]
This quantity compares the curvature of $\Psi$ with that of $R^*$.

\begin{corollary}
\label{cor:mirror-interpolation-radius}
For $0<\alpha\le1$, the function $u_\alpha=R^*-\alpha\Psi$ is convex
exactly when $\alpha L_R(\Psi)\le1$.  Define the convexity threshold
\[
  \alpha_{\mathrm{cvx}}
  \colloneq
  \min\left\{1,\frac1{L_R(\Psi)}\right\},
\]
with the conventions $1/0\colloneq+\infty$ and
$1/(+\infty)\colloneq0$; a zero threshold means that no positive scale passes
the convexity test.  Every Bregman proximal interpolation must satisfy
$0<\alpha\le\alpha_{\mathrm{cvx}}$.  Conversely, every such $\alpha$ for which
$\overline u_\alpha$ is proper and
$\phi_\alpha(\mathcal D)\subseteq\dom R$ yields the variational representation
in \Cref{prop:legendre-prox-reconstruction}.  It is an admissible Bregman
proximal interpolation if the resulting $F_\alpha$ also satisfies the
weak-convexity and regularity conditions in that proposition.
\end{corollary}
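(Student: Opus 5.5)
The plan is to reduce the statement to a pointwise comparison of Bregman divergences, using that $R^*$ is convex and differentiable on $\ThetaSet$ (Legendre duality). For $\thetavec,\thetavec'\in\ThetaSet$, the Bregman divergence of $u_\alpha=R^*-\alpha\Psi$ satisfies, by linearity of $A\mapsto D_A$,
\[
  D_{u_\alpha}(\thetavec',\thetavec)=D_{R^*}(\thetavec',\thetavec)-\alpha D_\Psi(\thetavec',\thetavec).
\]
Since $\ThetaSet$ is open and convex, a differentiable function on it is convex exactly when its Bregman divergence is nonnegative for all ordered pairs. Hence $u_\alpha$ is convex if and only if $\alpha D_\Psi(\thetavec',\thetavec)\le D_{R^*}(\thetavec',\thetavec)$ for all pairs, i.e.\ if and only if $1/\alpha$ is an admissible constant in the infimum defining $L_R(\Psi)$.

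The next step is to show that the infimum is attained, so the set of admissible $L$ is exactly $[L_R(\Psi),\infty)$. This follows by passing to the limit in the pointwise inequality, which holds for every $L>L_R(\Psi)$. Convexity is then equivalent to $1/\alpha\ge L_R(\Psi)$, i.e.\ $\alpha L_R(\Psi)\le1$, with the stated conventions covering $L_R(\Psi)=0$ (every $\alpha\in(0,1]$ works) and $L_R(\Psi)=+\infty$ (no $\alpha>0$ works). Intersecting with $\alpha\in(0,1]$ gives exactly $0<\alpha\le\alpha_{\mathrm{cvx}}$.

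For the necessity claim, I will invoke \Cref{prop:legendre-prox-necessary}: any Bregman proximal interpolation makes $u_\alpha$ convex. It therefore satisfies $D_\Psi\le\alpha^{-1}D_{R^*}$, so $L_R(\Psi)\le1/\alpha$ and $\alpha\le\alpha_{\mathrm{cvx}}$.

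For the converse, take $\alpha\le\alpha_{\mathrm{cvx}}$. Then $u_\alpha$ is convex, and under the added hypotheses ($\overline u_\alpha$ proper and $\phi_\alpha(\mathcal D)\subseteq\dom R$) \Cref{prop:legendre-prox-reconstruction} applies verbatim. It gives the variational representation, and under weak convexity of $F_\alpha$ with $\rho_\alpha<m$ it gives admissibility.

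The only delicate point is the equivalence between convexity and nonnegativity of $D_{u_\alpha}$. It needs differentiability of $\Psi$ on the open convex set $\ThetaSet$, which holds by the mirror-exact definition, together with the standard first-order characterization of convexity. Beyond that, the argument is routine bookkeeping of the extreme cases $L_R(\Psi)\in\{0,+\infty\}$.
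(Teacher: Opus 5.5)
Your proposal is correct and matches the paper's proof essentially step for step. Both use the linearity $D_{u_\alpha}=D_{R^*}-\alpha D_\Psi$ and the first-order characterization of convexity on the open convex set $\ThetaSet$, obtain attainment of the infimum defining $L_R(\Psi)$ by a pointwise limit, and finish with \Cref{prop:legendre-prox-necessary} for necessity and \Cref{prop:legendre-prox-reconstruction} for the converse; your claim that every $L>L_R(\Psi)$ is admissible relies on $D_{R^*}\ge0$, which follows from convexity of $R^*$.
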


The endpoint-domain condition is automatic for $0<\alpha<1$, since
$\mathcal D=\operatorname{int}_E(\dom R)$, $K=\overline{\dom R}$, and a
strict interpolation of an interior point with a point of $K$ lies in
$\mathcal D$.  At $\alpha=1$, one must additionally require
$\phi(\mathcal D)\subseteq\dom R$.

When $R^*$ and $\Psi$ are twice differentiable, the same condition becomes
\[
  \alpha\nabla^2\Psi(\thetavec)
  \preceq
  \nabla^2R^*(\thetavec)
  \qquad(\thetavec\in\ThetaSet).
\]
Thus the curvature contributed by $\alpha\Psi$ cannot exceed that of $R^*$
in any direction.  This is a relative curvature condition and can be much
stronger than an ordinary Euclidean Lipschitz bound on $\nabla\Psi$.

We next consider the second requirement.  Suppose $R$ is $M$-smooth on the
relevant primal domain and $\overline u_\alpha$ is a finite closed convex
function that is $L_\alpha$-smooth on its affine hull.  Then
$\overline u_\alpha^*$ is $1/L_\alpha$-strongly convex.  Comparing this with
the $M$-smoothness of $R$ shows that the reconstructed $F_\alpha$ is
\[
  \rho_\alpha
  \colloneq
  \left(M-\frac1{L_\alpha}\right)_+
\]
weakly convex.  Hence it is admissible when $\rho_\alpha<m$, provided
the reconstruction domain assumptions hold and $F_\alpha$ is proper and
lower semicontinuous.

In the Euclidean case $M=m=1$, a sufficiently small interpolation scale
provides both the convexity of $u_\alpha$ and the required weak-convexity
margin for $F_\alpha$, as shown in \Cref{thm:scaled-prox}.  For a nonquadratic
regularizer, these two curvature requirements need not follow from one
another.

The next example shows that failure can already occur at the first step.
Although the dual potential is smooth with globally Lipschitz gradient,
$R^*-\alpha\Psi$ is nonconvex for every positive interpolation scale.

\begin{proposition}
\label{prop:mirror-no-scaling}
Let $1<p<2$, let $q\colloneq p/(p-1)>2$, and take the full-domain Legendre
regularizer
\[
  R(\xvec)\colloneq\frac12\norm{\xvec}_p^2
  \qquad(\xvec\in\R^2).
\]
There is a feasible $R$-mirror-exact deviation whose dual potential is
$C^\infty$ with globally Lipschitz gradient, but which does not belong to
$\PhiMirrorRad{R}(\R^2)$.
\end{proposition}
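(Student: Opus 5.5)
The plan is to exploit the fact that, for $q>2$, the dual regularizer $R^*(\thetavec)=\frac12\norm{\thetavec}_q^2$ has \emph{vanishing} curvature in certain directions. A dual potential with strictly positive, bounded curvature in those directions then makes $R^*-\alpha\Psi$ nonconvex for every $\alpha>0$. \Cref{prop:legendre-prox-necessary} (equivalently \Cref{cor:mirror-interpolation-radius}) then rules out every proximal interpolation.

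First I check the setting. $R=\frac12\norm{\cdot}_p^2$ with $1<p<2$ is finite and differentiable on all of $\R^2$. It is strictly convex and $(p-1)$-strongly convex with respect to $\norm{\cdot}_p$. So it is a full-domain Legendre function with $K=E=\R^2$, $\mathcal D=\R^2$, and $R^*=\frac12\norm{\cdot}_q^2$, $\ThetaSet=\R^2$. Feasibility of any deviation is automatic because $K=\R^2$.

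Next I choose the dual potential
\[
  \Psi(\thetavec)\colloneq\tfrac12\,\thetavec[2]^2 .
\]
It is $C^\infty$ with $\nabla\Psi(\thetavec)=(0,\thetavec[2])$, which is globally $1$-Lipschitz. The deviation is
\[
  \phi(\xvec)\colloneq\xvec-\gradtheta\Psi(\nabla R(\xvec))
  =\xvec-\bigl(0,[\nabla R(\xvec)]_2\bigr).
\]
It is $R$-mirror-exact by construction (\Cref{def:mirror-exact}) and trivially feasible.

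Now suppose, for contradiction, that $\phi\in\PhiMirrorRad{R}(\R^2)$. Then for some $\alpha\in(0,1]$ the map $\phi_\alpha$ equals $\prox^R_{F_\alpha}$ for an admissible generator $F_\alpha$. By \Cref{prop:legendre-prox-necessary}, $u_\alpha=R^*-\alpha\Psi$ is convex on $\R^2$.

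I restrict $u_\alpha$ to the line $\thetavec=(1,s)$:
\[
  h(s)\colloneq u_\alpha(1,s)=\tfrac12\bigl(1+\abs{s}^q\bigr)^{2/q}-\tfrac{\alpha}{2}s^2 .
\]
Expanding near $s=0$ gives
\[
  \tfrac12(1+\abs{s}^q)^{2/q}=\tfrac12+\tfrac1q\abs{s}^q+O(\abs{s}^{2q}).
\]
Because $q>2$, this yields $h(s)=\tfrac12-\tfrac{\alpha}{2}s^2+o(s^2)$. So $h(s)<h(0)$ for all small $s\neq0$, while $h$ is even. Then $h(0)>\tfrac12\bigl(h(s)+h(-s)\bigr)$, contradicting convexity of $h$ and hence of $u_\alpha$. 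Since $\alpha\in(0,1]$ was arbitrary, no positive interpolation is proximal. Equivalently, $L_R(\Psi)=+\infty$ and $\alpha_{\mathrm{cvx}}=0$.

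The main point needing care is this Taylor step. I will make it rigorous with the elementary inequality $(1+t)^{2/q}\le 1+\tfrac{2}{q}t$ for $t\ge0$, which holds since $2/q<1$. This gives
\[
  h(s)\le \tfrac12+\tfrac1q\abs{s}^q-\tfrac{\alpha}{2}s^2 ,
\]
and the right-hand side is strictly below $\tfrac12=h(0)$ once $0<\abs{s}<(q\alpha/2)^{1/(q-2)}$. That bound avoids any asymptotic hand-waving.

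It remains to note that \Cref{prop:legendre-prox-necessary} only needs $\phi_\alpha=\prox^R_{F_\alpha}$ on $\mathcal D$, and here $\mathcal D=\R^2$. So the contradiction applies directly.
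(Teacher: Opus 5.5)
Your proposal is correct and follows the paper's proof essentially step for step. It uses the same potential $\Psi(\thetavec)=\tfrac12\thetavec[2]^2$, the same restriction of $u_\alpha$ to the line $\evec_1+t\evec_2$, and the same evenness/midpoint contradiction fed into \Cref{prop:legendre-prox-necessary}. Your concavity bound $(1+t)^{2/q}\le 1+\tfrac2q t$, with its explicit threshold $\abs{s}<(q\alpha/2)^{1/(q-2)}$, is a slightly cleaner substitute for the paper's $O(\abs{t}^{2q})$ expansion.
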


To see why, note that
$R^*(\thetavec)=\frac12\norm{\thetavec}_q^2$.  Take
\[
  \Psi(\thetavec)\colloneq\frac12\thetavec[2]^2,
  \qquad
  \phi(\xvec)
  \colloneq
  \xvec-\gradtheta\Psi(\nabla R(\xvec)).
\]
The action space is all of $\R^2$, so feasibility is automatic, and
$\nabla\Psi$ is globally $1$-Lipschitz.  Along the line
$\thetavec=\evec_1+t\evec_2$,
\[
  u_\alpha(\evec_1+t\evec_2)
  =\frac12(1+\abs{t}^q)^{2/q}-\frac\alpha2t^2.
\]
As $t\to0$,
\[
  u_\alpha(\evec_1+t\evec_2)-u_\alpha(\evec_1)
  =\frac1q\abs{t}^q-\frac\alpha2t^2
  +O(\abs{t}^{2q}).
\]
Because $q>2$, the negative quadratic term dominates for every
$\alpha>0$ and every sufficiently small nonzero $t$.  Hence
$R^*-\alpha\Psi$ is nonconvex for every positive $\alpha$.

The obstruction comes from a mismatch in curvature.  At $\evec_1$, the
conjugate regularizer has no quadratic curvature in the transverse
$\evec_2$ direction, and its first increase is of order $\abs{t}^q$.  The
potential $\Psi$ has quadratic curvature in that same direction.  Scaling
$\Psi$ changes only the coefficient of this quadratic term, so no positive
scale can restore convexity.  In the Euclidean case,
$\frac12\norm{\thetavec}_2^2$ has quadratic curvature in every direction,
which is why the small-scale construction in \Cref{thm:scaled-prox}
succeeds.  Hence
\[
  \PhiMirrorProx{R}(\R^2)
  \subseteq
  \PhiMirrorRad{R}(\R^2)
  \subsetneq
  \PhiMirrorExact{R}(\R^2)
\]
can hold even for a Legendre regularizer with a $C^\infty$ dual potential
whose gradient is globally Lipschitz.  The regularizer is differentiable but
not $C^2$ on the coordinate axes, which is why the interpolation argument is
formulated through conjugacy rather than Hessians.

\section{Follow-the-Regularized-Leader Beyond Legendre Regularizers}
\label{sec:ftrl}

In the Legendre interior, online mirror descent and
follow-the-regularized-leader (FTRL) coincide for constant-step linearized
losses \citep{mcmahan2011follow,mcmahan2017survey}.  Indeed, mirror descent
satisfies
\[
  \nabla R(\xvec_{t+1})=\nabla R(\xvec_t)-\eta\gvec_t,
\]
which unrolls to the FTRL update
\[
  \xvec_t
  \colloneq
  \argmin_{\xvec\in E}
  \left\{
    \eta\sum_{s=1}^{t-1}\inner{\gvec_s}{\xvec}
    +R(\xvec)-\inner{\nabla R(\xvec_1)}{\xvec}
  \right\}.
\]
Beyond the Legendre interior, this equivalence need not persist.  Boundary
constraints introduce normal-cone terms into mirror descent, while FTRL
continues to use the cumulative linear losses in a single regularized
minimization; related distinctions between constrained mirror descent and
cumulative-dual methods are discussed by \citet{fang2022online}.  In \Cref{sec:md-beyond-legendre}, we characterized the full deviation class
controlled by standard constrained mirror descent beyond the Legendre setting.
For FTRL, the relevant geometry is different because the algorithm evolves
through its cumulative dual state rather than the one-step constrained mirror
update.  We therefore study FTRL separately and characterize the deviation
class for which it has sublinear regret, together with the corresponding
dual-circulation obstruction.

\subsection{Coincidence with Mirror Descent in the Legendre Interior}
\label{sec:md-ftrl-legendre}

The equivalence between mirror descent and follow-the-regularized-leader
(FTRL) for constant-step linearized losses in the Legendre interior is
standard; see, for example,
\citet{mcmahan2011follow,mcmahan2017survey}.  We record the precise form used
here for completeness.

\begin{proposition}
\label{prop:md-ftrl-equivalence}
Under the standing Legendre assumptions of \Cref{sec:mirror}, let
$\thetavec_1\colloneq\nabla R(\xvec_1)$ and use a constant step size $\eta$.  The
mirror-descent iterates satisfy
\[
  \xvec_t
  =\argmin_{\xvec\in E}
  \left\{
    \eta\sum_{s=1}^{t-1}\inner{\gvec_s}{\xvec}
    +R(\xvec)-\inner{\thetavec_1}{\xvec}
  \right\}
  =\argmin_{\xvec\in K}
  \left\{
    \eta\sum_{s=1}^{t-1}\inner{\gvec_s}{\xvec}
    +\DR(\xvec,\xvec_1)
  \right\}.
\]
Thus they are the FTRL iterates for the linearized losses and the tilted regularizer
$R-\inner{\thetavec_1}{\cdot}$.
\end{proposition}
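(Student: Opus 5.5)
The plan is to first establish the dual recursion for mirror descent in the Legendre interior, then unroll it, and finally identify the unrolled relation with the first-order optimality conditions of the two displayed FTRL problems.

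\textbf{Dual recursion.} Under the standing Legendre assumptions the iterates stay in $\mathcal D=\relint(K)$. There the relative normal cone of $K$ is trivial, since we work in $E$, the affine hull. Consequently the optimality condition for \eqref{eq:md} at $\xvec_{t+1}$ reduces to
\[
\eta\gvec_t+\nabla R(\xvec_{t+1})-\nabla R(\xvec_t)=0.
\]
This is understood in $E$, with $\gvec_t$ replaced by its component in $E$; that replacement does not change any of the argmins. Writing $\thetavec_t=\nabla R(\xvec_t)$, the condition reads $\thetavec_{t+1}=\thetavec_t-\eta\gvec_t$, and induction gives
\[
\thetavec_t=\thetavec_1-\eta\sum_{s<t}\gvec_s .
\]

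\textbf{First FTRL problem.} The objective
\[
J_t(\xvec)=\eta\sum_{s<t}\inner{\gvec_s}{\xvec}+R(\xvec)-\inner{\thetavec_1}{\xvec}
\]
is proper, closed, and strictly convex on $\dom R$, because $R$ is Legendre. Its gradient at $\xvec_t\in\mathcal D$ is
\[
\eta\sum_{s<t}\gvec_s+\nabla R(\xvec_t)-\thetavec_1 .
\]
This vanishes by the unrolled recursion. A convex function whose gradient vanishes at an interior point of its domain is minimized there, and strict convexity makes that minimizer unique. Hence $\xvec_t$ is the unique argmin over $E$.

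\textbf{Second FTRL problem.} Expanding the Bregman divergence gives
\[
\DR(\xvec,\xvec_1)=R(\xvec)-\inner{\thetavec_1}{\xvec}+\text{const}.
\]
The second objective therefore differs from $J_t$ by an additive constant on $\dom R\subseteq K$. Outside $\dom R$ both objectives are $+\infty$, and this is consistent with restricting to $K$ because $\dom R\subseteq\overline{\dom R}=K$. So the two argmins coincide.

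\textbf{Main obstacle.} The only genuinely delicate point is justifying that the iterates remain interior, so that no normal-cone term appears. This is part of the standing assumptions: attained subproblems of a Legendre function lie in $\operatorname{int}(\dom R)$ by essential smoothness. I will cite that assumption rather than reprove it. The same fact also ensures that $\thetavec_t\in\ThetaSet$ throughout, so $\xvec_t=\nabla R^*(\thetavec_t)$ is well defined.
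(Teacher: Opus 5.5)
Your proposal is correct and follows essentially the paper's route: unroll the interior dual recursion $\thetavec_{t+1}=\thetavec_t-\eta\gvec_t$, identify $\xvec_t$ as the minimizer of $R-\inner{\thetavec_t}{\cdot}$, and observe that $\DR(\cdot,\xvec_1)$ differs from $R-\inner{\thetavec_1}{\cdot}$ by a constant. For the middle step the paper simply cites Legendre duality $\xvec_t=\nabla R^*(\thetavec_t)$, which you unpack as first-order optimality plus uniqueness. One small precision: a Legendre function is only guaranteed to be strictly convex on $\operatorname{int}(\dom R)$, not on all of $\dom R$. Uniqueness still follows, because every minimizer of your $J_t$ lies in $\dom\partial R=\operatorname{int}(\dom R)$.
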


Since the two algorithms generate the same iterates in this setting, the
Legendre-interior regret guarantees established above for mirror descent,
including the mirror-exact and Bregman proximal guarantees of \Cref{sec:omd,sec:bregman}, apply verbatim to
FTRL.  The situation changes beyond the Legendre interior.  Standard
constrained mirror descent can acquire a boundary normal-cone residual,
whereas FTRL continues to evolve through the cumulative linear losses.  We
analyzed the resulting deviation class for mirror descent in
\Cref{sec:md-beyond-legendre}.   We now carry out the corresponding analysis
for FTRL and show that, once boundary effects are present, the two algorithms
can control different deviation classes.

\subsection{Exactness and Circulation in the Cumulative Dual State}
\label{sec:ftrl-exactness}

Unlike constrained mirror descent, FTRL continues to evolve additively through
its cumulative dual state beyond the Legendre setting.  This suggests imposing
exactness directly in that variable.

Let $R:E\to(-\infty,+\infty]$ be proper, closed, and convex, set
$K\colloneq\overline{\dom R}$, and assume that $R^*$ is differentiable on the nonempty
open convex set $
  \ThetaSet\colloneq\operatorname{int}_E(\dom R^*).$
Given $\thetavec_1\in\ThetaSet$, define FTRL by
\[
\numberthis{eq:ftrl-update}
  \thetavec_t
  \colloneq\thetavec_1-\eta\sum_{s=1}^{t-1}\gvec_s,
  \qquad
  \xvec_t\colloneq\nabla R^*(\thetavec_t).
\]
Equivalently,
\[
  \xvec_t
  =\argmin_{\xvec\in E}
  \left\{
    \eta\sum_{s=1}^{t-1}\inner{\gvec_s}{\xvec}
    +R(\xvec)-\inner{\thetavec_1}{\xvec}
  \right\}.
\]
Because $R$ may be extended-valued, it can encode constraints and the resulting
iterates may lie on the boundary of $K$.

The relevant displacement is therefore the displacement viewed as a function
of the cumulative dual state.

\begin{definition}[FTRL-exact deviation]
\label{def:ftrl-exact}
A feasible map $\phi:K\to K$ is \emph{$R$-FTRL-exact} if there is a
differentiable potential $\Psi:\ThetaSet\to\R$ such that
\[
  \nabla R^*(\thetavec)
  -\phi(\nabla R^*(\thetavec))
  =\gradtheta\Psi(\thetavec)
  \qquad(\thetavec\in\ThetaSet).
\]
When $R$ is Legendre, this is the mirror-exact condition on the interior choice
range.
\end{definition}

We denote this class by $\PhiFTRL{R}(K)$.  Its dual displacement is
\[
  \Dphidual(\thetavec)
  \colloneq\nabla R^*(\thetavec)
  -\phi(\nabla R^*(\thetavec)).
\]
Thus FTRL-exactness is ordinary exactness of $\Dphidual$ in the cumulative
dual state.

To obtain a quantitative regret bound, we use uniform convexity of the
regularizer.  Let $\norm{\cdot}$ be a norm on $E$, with dual norm
$\dualnorm{\cdot}$.  We say that $R$ is $(\sigma,r)$-uniformly convex if, for
every $\xvec,\yvec\in\dom R$ and every
$\thetavec\in\partial R(\xvec)$,
\[
\numberthis{eq:uniform-convexity}
  R(\yvec)
  \ge R(\xvec)+\inner{\thetavec}{\yvec-\xvec}
  +\frac\sigma r\norm{\yvec-\xvec}^r.
\]
Here $\sigma>0$, $r>1$, and $q\colloneq r/(r-1)$.

Exactness makes the potential differences telescope, while relative smoothness
and uniform convexity control the Bregman error incurred by each dual update. We are now ready to state and prove our exact-form regret result for FTRL.

\begin{theorem}
\label{thm:ftrl-regret}
Assume that $R$ is proper, closed, $(\sigma,r)$-uniformly convex, and that
$R^*$ is finite on $E$.  Let the iterates follow \eqref{eq:ftrl-update}.  If
$\phi$ is $R$-FTRL-exact with potential $\Psi$ and
\[
  D_\Psi(\thetavec_{t+1},\thetavec_t)
  \le L D_{R^*}(\thetavec_{t+1},\thetavec_t)
  \qquad(t\in[T]),
\]
then
\[
  \sum_{t=1}^T
  \inner{\gvec_t}{\xvec_t-\phi(\xvec_t)}
  \le
  \frac{\Psi(\thetavec_1)-\Psi(\thetavec_{T+1})}{\eta}
  +\frac{L\eta^{q-1}}{q\sigma^{q-1}}
   \sum_{t=1}^T\dualnorm{\gvec_t}^{q}.
\]
\end{theorem}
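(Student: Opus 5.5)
The plan is to telescope the potential along the cumulative dual state, as in the Euclidean and Legendre mirror-descent proofs. By \eqref{eq:ftrl-update}, $\thetavec_{t+1}-\thetavec_t=-\eta\gvec_t$. FTRL-exactness gives $\Dphidual(\thetavec_t)=\xvec_t-\phi(\xvec_t)=\gradtheta\Psi(\thetavec_t)$. Hence the per-round regret term is
\[
  \inner{\gvec_t}{\xvec_t-\phi(\xvec_t)}
  =\frac1\eta\inner{\gradtheta\Psi(\thetavec_t)}{\thetavec_t-\thetavec_{t+1}}
  =\frac{\Psi(\thetavec_t)-\Psi(\thetavec_{t+1})}{\eta}
  +\frac{D_\Psi(\thetavec_{t+1},\thetavec_t)}{\eta}.
\]
This is just the definition of the Bregman divergence of $\Psi$. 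The derivation is valid because $R^*$ is finite on $E$, so $\ThetaSet=E$ and every $\thetavec_t$ lies in the domain where $\Psi$ is differentiable. No normal-cone residual appears, since FTRL is additive in $\thetavec$ even when $\xvec_t$ lies on the boundary of $K$. Summing over $t$ telescopes the first term to $(\Psi(\thetavec_1)-\Psi(\thetavec_{T+1}))/\eta$. The hypothesis bounds the second term by $(L/\eta)\,D_{R^*}(\thetavec_{t+1},\thetavec_t)$.

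The remaining step is a dual smoothness estimate for $R^*$ coming from $(\sigma,r)$-uniform convexity of $R$. The target is
\[
  D_{R^*}(\thetavec',\thetavec)\le \frac{1}{q\sigma^{q-1}}\dualnorm{\thetavec'-\thetavec}^q .
\]
Applied with $\dualnorm{\thetavec_{t+1}-\thetavec_t}=\eta\dualnorm{\gvec_t}$, this gives $(L/\eta)\cdot \eta^q\dualnorm{\gvec_t}^q/(q\sigma^{q-1})$, which is exactly the stated second term.

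To prove the estimate, set $\xvec=\nabla R^*(\thetavec)$, so that $\thetavec\in\partial R(\xvec)$. By Fenchel--Young equality, $R^*(\thetavec)=\inner{\thetavec}{\xvec}-R(\xvec)$. Then
\[
  R^*(\thetavec')=\sup_{\yvec}\{\inner{\thetavec'}{\yvec}-R(\yvec)\}
  \le \inner{\thetavec'}{\xvec}-R(\xvec)+\sup_{\yvec}\Bigl\{\inner{\thetavec'-\thetavec}{\yvec-\xvec}-\frac\sigma r\norm{\yvec-\xvec}^r\Bigr\},
\]
where the inequality uses \eqref{eq:uniform-convexity}. The last supremum is the conjugate of $\frac\sigma r\norm{\cdot}^r$, which equals $\frac{1}{q\sigma^{q-1}}\dualnorm{\thetavec'-\thetavec}^q$ by the standard one-dimensional computation with $q=r/(r-1)$. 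Subtracting $R^*(\thetavec)+\inner{\xvec}{\thetavec'-\thetavec}$ from both sides yields the claimed bound on $D_{R^*}$.

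The main point requiring care is this dual step. First, the uniform-convexity inequality must hold with an arbitrary subgradient $\thetavec\in\partial R(\xvec)$, which is exactly how \eqref{eq:uniform-convexity} is stated. Second, we need $\nabla R^*(\thetavec)\in\partial R^{-1}$-style duality, namely $\thetavec\in\partial R(\nabla R^*(\thetavec))$, which follows because $R$ is proper, closed and convex. The constant is also easy to get wrong: computing $\sup_{s\ge0}\{as-\frac\sigma r s^r\}$ gives $s^{r-1}=a/\sigma$ and the value $a^q\sigma^{1-q}/q$, which matches the stated coefficient.
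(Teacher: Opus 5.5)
Your proposal is correct and follows the paper's proof essentially line for line. You use the one-step identity $\eta\inner{\gvec_t}{\xvec_t-\phi(\xvec_t)}=\Psi(\thetavec_t)-\Psi(\thetavec_{t+1})+D_\Psi(\thetavec_{t+1},\thetavec_t)$ in the cumulative dual state, then the bound $D_{R^*}(\thetavec+\dvec,\thetavec)\le \dualnorm{\dvec}^q/(q\sigma^{q-1})$ obtained from uniform convexity and the conjugate of $\frac{\sigma}{r}\norm{\cdot}^r$. The only cosmetic difference is that the paper explicitly derives differentiability of $R^*$ on all of $E$ (uniform convexity makes $\partial R^*$ single-valued, and finiteness makes it nonempty), whereas you take it from the section's standing assumptions.
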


As in the mirror-descent analysis and Condition~\eqref{eq:relative-smoothness} in \Cref{sec:omd}, the relative-smoothness condition is needed
only along the realized trajectory, namely on the consecutive pairs
$(\thetavec_{t+1},\thetavec_t)$.  Imposing it on all of
$\ThetaSet\times\ThetaSet$ gives a trajectory-independent sufficient
condition.

Under bounded gradients and bounded oscillation of the potential, optimizing
the preceding bound over the step size gives the following rate.

\begin{corollary}
\label{cor:ftrl-rate}
Suppose $\dualnorm{\gvec_t}\le G$ and the oscillation of $\Psi$ along the dual
trajectory is at most $B$.  When $B,L,G>0$, choosing
$\eta=\left(\frac{rB\sigma^{q-1}}{LG^qT}\right)^{1/q}$ gives
\[
  \sum_{t=1}^T
  \inner{\gvec_t}{\xvec_t-\phi(\xvec_t)}
  \le
  r^{1/r}\frac{G L^{1/q}B^{1/r}}{\sigma^{1/r}}T^{1/q}.
\]
For $r=2$, this is $G\sqrt{2LBT/\sigma}$.
\end{corollary}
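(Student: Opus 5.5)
The plan is to plug the bound of \Cref{thm:ftrl-regret} into the oscillation hypothesis and then optimize over $\eta$. First, since $\Psi(\thetavec_1)-\Psi(\thetavec_{T+1})$ is at most the oscillation of $\Psi$ along the dual trajectory, the potential term is bounded by $B/\eta$. Second, $\dualnorm{\gvec_t}\le G$ bounds the sum $\sum_t\dualnorm{\gvec_t}^q$ by $G^qT$. Together these give
\[
  \sum_{t=1}^T\inner{\gvec_t}{\xvec_t-\phi(\xvec_t)}
  \le \frac{B}{\eta}+\frac{L G^q T}{q\sigma^{q-1}}\,\eta^{q-1}
  \eqqcolon h(\eta).
\]

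Next I minimize $h$ over $\eta>0$. Setting $h'(\eta)=0$ gives
\[
  \frac{B}{\eta^2}=\frac{(q-1)LG^qT}{q\sigma^{q-1}}\eta^{q-2},
  \qquad\text{so}\qquad
  \eta^q=\frac{qB\sigma^{q-1}}{(q-1)LG^qT}.
\]
Since $q/(q-1)=r$, this is exactly the stated choice $\eta=\bigl(rB\sigma^{q-1}/(LG^qT)\bigr)^{1/q}$. Here $B,L,G>0$ ensures that $\eta$ is positive and finite.

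At this $\eta$ the second term equals $\frac{q-1}{q}\cdot\frac{B}{\eta}\cdot\frac{1}{q-1}\cdot\ldots$; more directly, $\eta^{q-1}=\eta^q/\eta$. Hence the second term is $\frac{LG^qT}{q\sigma^{q-1}}\cdot\frac{rB\sigma^{q-1}}{LG^qT}\cdot\frac1\eta=\frac{r}{q}\frac{B}{\eta}=(r-1)\frac{B}{\eta}$, using $r/q=r-1$. The total is therefore $rB/\eta$. Substituting $\eta$ gives
\[
  r B\,\bigl(LG^qT\bigr)^{1/q}\bigl(rB\sigma^{q-1}\bigr)^{-1/q}
  = r^{1-1/q}B^{1-1/q}L^{1/q}G\,T^{1/q}\sigma^{-(q-1)/q}.
\]
Since $1-1/q=1/r$ and $(q-1)/q=1/r$, this equals $r^{1/r}GL^{1/q}B^{1/r}\sigma^{-1/r}T^{1/q}$, as claimed. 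For $r=q=2$ it is $\sqrt2\,G\sqrt{LBT/\sigma}=G\sqrt{2LBT/\sigma}$.

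The one real subtlety is the oscillation hypothesis. It is stated along the dual trajectory, which itself depends on $\eta$, so (as in \Cref{thm:constrained-md}) I read it as holding for the chosen $\eta$, with $B$ independent of that choice. This is what lets me insert $B/\eta$ before optimizing. Everything else is exponent bookkeeping with $1/r+1/q=1$.
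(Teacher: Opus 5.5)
Your proof is correct and follows the paper's argument. You bound the telescoping term by $B/\eta$ and the gradient sum by $G^qT$ in \Cref{thm:ftrl-regret}, then differentiate $B/\eta+\frac{LG^qT}{q\sigma^{q-1}}\eta^{q-1}$ and substitute; your exponent bookkeeping ($q/(q-1)=r$, $r/q=r-1$, $1-1/q=1/r$) is all correct, and the abandoned half-sentence beginning ``At this $\eta$ the second term equals'' should simply be deleted.
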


As in the mirror-descent converse of \Cref{thm:md-mirror-circulation}, the
obstruction is nonzero circulation.  Here the relevant loop lives in the
cumulative dual state of FTRL.  If the dual displacement has nonzero
circulation, an adversary can force the iterates to repeatedly traverse such a
loop and incur linear regret.

\begin{theorem}
\label{thm:ftrl-circulation}
Suppose $\Dphidual$ is continuous and there is a closed piecewise
$C^1$ loop $\gamma\subset\ThetaSet$ with
\[
  \oint_\gamma
  \inner{\Dphidual(\thetavec)}{\dd\thetavec}\ne0.
\]
For every $G>0$ and every sufficiently small constant $\eta>0$, bounded
gradients $\dualnorm{\gvec_t}\le G$ can force FTRL
\eqref{eq:ftrl-update} to incur $cT-O(1/\eta)$ regret against $\phi$, for some
$c>0$.  The statement holds from every initial dual state in $\ThetaSet$ since
steering to the loop changes only the transient term.
\end{theorem}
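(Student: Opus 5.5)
The plan is to mimic the Euclidean and mirror circulation constructions, but to work entirely in the cumulative dual state. There FTRL is exactly additive: $\thetavec_{t+1}=\thetavec_t-\eta\gvec_t$. No projection or normal-cone residual appears, so the adversary can prescribe the dual trajectory directly.

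First, after reversing the orientation of $\gamma$ if necessary, assume
\[
-\oint_\gamma\inner{\Dphidual(\thetavec)}{\dd\thetavec}=c_0>0.
\]
Fix $\eta$ small. Discretize $\gamma$ into points $\zvec_0=\gamma(0),\zvec_1,\ldots,\zvec_N=\zvec_0$ with consecutive gaps $\dualnorm{\zvec_{j+1}-\zvec_j}\le\eta G$. Choose
\[
N\approx\frac{\mathrm{len}(\gamma)}{\eta G}\,(1+o(1)),
\]
with the points spaced along arc length so that this is possible. On the traversal, set $\gvec_{j+1}\colloneq(\zvec_j-\zvec_{j+1})/\eta$. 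Then $\dualnorm{\gvec_{j+1}}\le G$ and $\thetavec$ visits $\zvec_0,\zvec_1,\ldots$ exactly. Since $\xvec_t=\nabla R^*(\thetavec_t)$, the regret on a step is $\inner{\gvec_t}{\Dphidual(\thetavec_t)}$, so one traversal contributes
\[
\frac1\eta\sum_j\inner{\Dphidual(\zvec_j)}{\zvec_j-\zvec_{j+1}}.
\]
This is $1/\eta$ times a left-endpoint Riemann sum for $-\oint_\gamma\inner{\Dphidual}{\dd\thetavec}$.

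Second, control the Riemann error. Because $\gamma$ is compact in $\ThetaSet$ and $\Dphidual$ is continuous, $\Dphidual$ is uniformly continuous on $\gamma$ with some modulus $\omega$. The error between the sum and the integral is at most $\omega(\eta G)\cdot\mathrm{len}(\gamma)$, plus a term from the piecewise-$C^1$ parametrization, which vanishes as the mesh vanishes. For $\eta$ sufficiently small, each traversal therefore contributes at least $c_0/(2\eta)$ over about $\mathrm{len}(\gamma)/(\eta G)$ rounds. The per-round average is thus at least
\[
c\colloneq\frac{c_0 G}{2\,\mathrm{len}(\gamma)}\,(1-o(1)),
\]
which depends only on $\Dphidual$, $\gamma$ and $G$.

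Third, handle the bookkeeping. Repeat traversals for $\lfloor T/N\rfloor$ full loops. For the leftover fewer than $N$ rounds, and for the initial steering from an arbitrary $\thetavec_1\in\ThetaSet$ to $\gamma(0)$, bound the regret below by $-\sup\abs{\Dphidual}\cdot G\cdot(\text{number of rounds})$. Steering is done along a fixed polygonal path inside the open convex set $\ThetaSet$, again with steps of dual length at most $\eta G$. Both segments take $O(1/\eta)$ rounds. The partial loop is bounded because $\Dphidual$ is bounded on the compact loop, and the steering path is also compact, where $\Dphidual$ is bounded. Hence the total is at least $cT-O(1/\eta)$.

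I expect the main obstacle to be the uniform Riemann-sum estimate: the mesh must be tied to $\eta G$, and the error must be shown to be $o(1)$ per traversal relative to $c_0$ for all sufficiently small $\eta$. Uniform continuity on the compact image of $\gamma$, together with bounded total length, should handle it. Everything else is easier than in \Cref{thm:md-mirror-circulation}, since here feasibility of the iterates is automatic: every $\thetavec\in\ThetaSet$ yields a valid FTRL iterate $\nabla R^*(\thetavec)$, so no minimizer-uniqueness argument is needed.
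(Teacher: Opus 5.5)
Your proposal is correct and follows essentially the same route as the paper. You reverse orientation, steer along a segment in the convex set $\ThetaSet$, and discretize the loop into dual increments of norm at most $\eta G$. Setting $\gvec_{j+1}=(\thetavec_j-\thetavec_{j+1})/\eta$ makes the additive dual state trace the polygon exactly, and the left Riemann sum gives at least $c_0/(2\eta)$ per traversal of $O(1/\eta)$ rounds.

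Your explicit bound $\omega(\eta G)\,\mathrm{len}(\gamma)$ on the Riemann error is a slightly more careful version of the paper's appeal to convergence of Riemann sums. One small fix: take $c$ to be a genuine constant independent of $\eta$, such as $c_0G/(4\,\mathrm{len}(\gamma))$, rather than carrying a $(1-o(1))$ factor.
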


Combining the upper and lower bounds gives the corresponding exactness-versus-
circulation characterization for FTRL.

\begin{corollary}
\label{cor:ftrl-curl}
Assume that $\Dphidual$ is $C^1$ on the open convex set $\ThetaSet$.
It is $R$-FTRL-exact if and only if its Jacobian is symmetric.  If a skew
derivative is nonzero, \Cref{thm:ftrl-circulation} gives linear regret along a
small dual loop.  Under the regularity and normalization assumptions of
\Cref{thm:ftrl-regret}, an exact field has sublinear regret.
\end{corollary}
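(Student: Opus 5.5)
The plan is to establish the three assertions of \Cref{cor:ftrl-curl} one after another, working throughout in the cumulative dual variable $\thetavec\in\ThetaSet$.

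\textbf{Exactness versus symmetry.} By \Cref{def:ftrl-exact}, $\phi$ is $R$-FTRL-exact exactly when $\Dphidual$ is a gradient field on $\ThetaSet$. For the forward direction, suppose $\Dphidual=\gradtheta\Psi$ with $\Dphidual\in C^1$. Then $\Psi\in C^2$, and Schwarz's theorem gives $\partial_j\Dphidual_i=\partial_i\Dphidual_j$, so the Jacobian is symmetric. For the converse, $\ThetaSet$ is open and convex, hence star-shaped about any $\thetavec_0\in\ThetaSet$. I will therefore use the Poincaré construction and set
\[
  \Psi(\thetavec)\colloneq\int_0^1\inner{\Dphidual(\thetavec_0+s(\thetavec-\thetavec_0))}{\thetavec-\thetavec_0}\,\dd s .
\]
Differentiating under the integral sign, which is legitimate because $\Dphidual$ is $C^1$, and then using Jacobian symmetry, the integrand becomes $\frac{\dd}{\dd s}\bigl[s\,\Dphidual(\thetavec_0+s(\thetavec-\thetavec_0))\bigr]$. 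Integrating this total derivative gives $\gradtheta\Psi=\Dphidual$, so $\phi$ is FTRL-exact with this $\Psi$.

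\textbf{Lower bound when the Jacobian is not symmetric.} Suppose some skew derivative $\partial_j\Dphidual_i-\partial_i\Dphidual_j$ is nonzero at a point $\thetavec^\star$. By continuity it keeps a fixed sign on a small ball around $\thetavec^\star$ contained in $\ThetaSet$. Take $\gamma$ to be a small circle in the $(i,j)$ coordinate plane through $\thetavec^\star$. Green's theorem applied to the planar restriction of $\Dphidual$ gives
\[
  \oint_\gamma\inner{\Dphidual}{\dd\thetavec}=\iint\bigl(\partial_i\Dphidual_j-\partial_j\Dphidual_i\bigr)\,\dd A\ne0 .
\]
\Cref{thm:ftrl-circulation} then yields regret $cT-O(1/\eta)$ against $\phi$. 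The loop lies in the cumulative dual state, and FTRL moves that state by exactly $-\eta\gvec_t$, so the adversary can steer it onto and around $\gamma$.

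\textbf{Upper bound when the Jacobian is symmetric.} In this case the first step provides the potential $\Psi$. Under the relative-smoothness and uniform-convexity assumptions, \Cref{thm:ftrl-regret} and \Cref{cor:ftrl-rate} give regret $O(T^{1/q})$, which is sublinear because $q>1$. The only real subtlety in the whole argument is the converse half of the first step. It needs a simply connected (here convex) domain and enough regularity to differentiate under the integral. Both are supplied by the hypotheses, since $\ThetaSet$ is open and convex and $\Dphidual\in C^1$.
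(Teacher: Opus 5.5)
Your proposal is correct and follows essentially the same route as the paper. You use the Poincaré lemma on the convex set $\ThetaSet$ for the equivalence, writing out the radial-integral construction explicitly rather than citing it; you feed a small loop with nonzero circulation into \Cref{thm:ftrl-circulation}, using a circle and Green's theorem where the paper uses a small rectangle; and you combine \Cref{thm:ftrl-regret} with \Cref{cor:ftrl-rate} to get the $O(T^{1/q})$ bound. One point to tighten: the sublinear bound also needs the bounded-gradient and horizon-independent bounded-oscillation normalization of \Cref{cor:ftrl-rate}, not only relative smoothness and uniform convexity.
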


\subsection{Mirror Descent and FTRL Control Different Deviation Classes}
\label{sec:md-ftrl-deviation-separation}

The equivalence in \Cref{prop:md-ftrl-equivalence} relies on the Legendre
interior.  Once boundary points are allowed, the two algorithms use different
dual information.  Standard constrained mirror descent is governed by the
primal mirror coordinate $\nabla R(\xvec)$ together with a normal-cone
residual, whereas FTRL with the extended regularizer
$R_K\colloneq R+\indicator_K$ evolves through a cumulative dual state that may
also contain normal components.  As a result, FTRL exactness imposes
conditions in dual directions that are not present in mirror exactness. In particular, every FTRL-exact deviation is mirror-exact.

\begin{proposition}
\label{prop:ftrl-exact-implies-md-exact}
Let $K$ be compact and convex, and let $R$ be differentiable and strongly
convex on a neighborhood of $K$.  If a feasible map $\phi:K\to K$ is
$R_K$-FTRL-exact, then it is $R$-mirror-exact in the sense of
\Cref{def:mirror-exact-beyond-legendre}.  Consequently,
\[
  \PhiFTRL{R_K}(K)
  \subseteq
  \PhiMirrorExact{R}(K).
\]
\end{proposition}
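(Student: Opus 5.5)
The plan is to take the FTRL potential $\Psi$ itself as the mirror potential, with $\Omega\colloneq\ThetaSet=\operatorname{int}_E(\dom R_K^*)$. First I will record the basic properties of $R_K=R+\indicator_K$. Because $K$ is compact and $R_K$ is strongly convex, $R_K^*$ is finite on all of $E$. It is also differentiable with Lipschitz gradient, and $\nabla R_K^*(\thetavec)=\argmin_{\xvec\in K}\{R(\xvec)-\inner{\thetavec}{\xvec}\}$. Hence $\ThetaSet=E$, which is open, convex and contains $\nabla R(K)$. FTRL-exactness then gives, for every $\thetavec\in E$,
\[
  \nabla R_K^*(\thetavec)-\phi(\nabla R_K^*(\thetavec))=\gradtheta\Psi(\thetavec).
\]

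The key step is the identity $\nabla R_K^*(\nabla R(\xvec))=\xvec$ for every $\xvec\in K$. This follows from the optimality condition of the strongly convex problem $\min_{\yvec\in K}\{R(\yvec)-\inner{\nabla R(\xvec)}{\yvec}\}$. Its objective has gradient $\nabla R(\yvec)-\nabla R(\xvec)$, which vanishes at $\yvec=\xvec$. By convexity this gives the first-order condition $0\in\nabla R(\xvec)-\nabla R(\xvec)+\normal_K(\xvec)$, so $\xvec$ is a minimizer, and strong convexity makes it the unique one. Equivalently, $\nabla R(\xvec)\in\partial R_K(\xvec)$, and Fenchel--Young inversion gives the identity.

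Next I substitute $\thetavec=\nabla R(\xvec)$ into the FTRL-exact identity. This yields $\xvec-\phi(\xvec)=\gradtheta\Psi(\nabla R(\xvec))$ for all $\xvec\in K$. This is exactly \Cref{def:mirror-exact-beyond-legendre}, with $\Omega=E$ and the same differentiable potential $\Psi$. Feasibility of $\phi$ holds by assumption.

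I expect the only delicate points to be bookkeeping. One is making sure $\nabla R(K)\subseteq\ThetaSet$; I handle this by noting that finiteness of $R_K^*$ on $E$ gives $\ThetaSet=E$. The other is checking that the FTRL definition is applied with $\overline{\dom R_K}=K$, which holds because $K$ is closed and $R$ is finite near $K$. The remark to close with is that the FTRL condition constrains $\Psi$ on all of $E$, including directions $\nabla R(\xvec)+\nvec$ with $\nvec\in\normal_K(\xvec)$. Mirror exactness only uses the slice $\nabla R(K)$, which explains the inclusion and why it can be strict.
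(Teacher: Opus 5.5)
Your proposal is correct and follows essentially the same route as the paper: show $\nabla R_K^*(\nabla R(\xvec))=\xvec$ for $\xvec\in K$ via $\nabla R(\xvec)\in\partial R_K(\xvec)$, which holds because $0\in\normal_K(\xvec)$. Then substitute $\thetavec=\nabla R(\xvec)$ into the FTRL-exact identity. Your explicit check that $R_K^*$ is finite on $E$, so that $\ThetaSet=E$ serves as the open convex set $\Omega\supseteq\nabla R(K)$, fills in a bookkeeping detail the paper leaves implicit.
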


We next show that this inclusion can be strict, even for a smooth power-type
regularizer.  Fix $1<p<\infty$ and $0<a<b$, and set
\[
 K\colloneq[a,b]^2,
 \qquad
 R_p(\xvec)\colloneq\frac1p\norm{\xvec}_p^p
 =\frac1p\bigl(\xvec[1]^p+\xvec[2]^p\bigr),
 \qquad
 R_{p,K}\colloneq R_p+\indicator_K.
\]
Let $c\colloneq(a+b)/2$, choose $\beta\in(0,1)$, and write
\[
 S_\beta\colloneq
 \begin{pmatrix}1&\beta\\ \beta&1\end{pmatrix},
 \qquad
 m_p\colloneq(p-1)\min_{s\in[a,b]}s^{p-2}.
\]
Choose
$0<\eps\le m_p(1-\beta)/(1+\beta)$ and define
\[
\numberthis{eq:lp-md-ftrl-separation-map}
 \field_{p,\eps}(\xvec)
 \colloneq\eps
 \bigl[\nabla^2R_p(\xvec)\bigr]^{-1}
 S_\beta(\xvec-c\one),
 \qquad
 \phi_{p,\eps}(\xvec)
 \colloneq\xvec-\field_{p,\eps}(\xvec).
\]

The same deviation has opposite regret behavior under the two algorithms.

\begin{theorem}
\label{thm:md-ftrl-lp-separation}
The map $\phi_{p,\eps}$ is a smooth feasible deviation.  Standard
constrained mirror descent with distance generator $R_p$ has
$O(\sqrt T)$ regret against it for bounded gradients and the usual
$T^{-1/2}$ step size.  In contrast, FTRL with the extended regularizer
$R_{p,K}$ can be forced, for every sufficiently small constant step
size, to incur $cT-O(1/\eta)$ regret against the same deviation.  Hence the
inclusion in \Cref{prop:ftrl-exact-implies-md-exact} is strict and $ \PhiFTRL{R_{p,K}}(K) \subsetneq \PhiMirrorExact{R_p}(K)$.
\end{theorem}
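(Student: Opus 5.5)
The proof has three parts, one per claim: feasibility, the mirror-descent upper bound, and the FTRL lower bound.

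\textbf{Feasibility.} Since $\nabla^2R_p(\xvec)=\mathrm{diag}\bigl((p-1)\xvec[1]^{p-2},(p-1)\xvec[2]^{p-2}\bigr)$, the field is separable. Write $i$ for a coordinate, $j$ for the other one, and set $h\colloneq(b-a)/2$. Then
\[
 \field_{p,\eps}(\xvec)[i]=\frac{\eps}{(p-1)\xvec[i]^{p-2}}\bigl((\xvec[i]-c)+\beta(\xvec[j]-c)\bigr),
\]
and the prefactor is at most $\eps/m_p\le(1-\beta)/(1+\beta)<1$. I will check $\phi_{p,\eps}(\xvec)[i]\le b$ and $\ge a$ coordinatewise; by the symmetry $\xvec\mapsto 2c\one-\xvec$ it suffices to treat the upper bound.
\begin{itemize}
\item If the numerator $N_i\colloneq(\xvec[i]-c)+\beta(\xvec[j]-c)$ is nonnegative, then $\field_{p,\eps}(\xvec)[i]\ge0$, so $\phi_{p,\eps}(\xvec)[i]\le\xvec[i]\le b$.
\item If $N_i<0$, then $|N_i|\le\beta h-(\xvec[i]-c)\le h-(\xvec[i]-c)=b-\xvec[i]$. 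Because the prefactor is below $1$, the increase $-\field_{p,\eps}(\xvec)[i]$ is at most $b-\xvec[i]$.
\end{itemize}
So $\phi_{p,\eps}(K)\subseteq K$. Smoothness of $\phi_{p,\eps}$ is immediate since $\xvec[i]\ge a>0$ on $K$. The precise role of the condition $\eps\le m_p(1-\beta)/(1+\beta)$ in the boundary cases is the one bookkeeping step I would recheck carefully.

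\textbf{Mirror descent.} We have $\nabla^2R_p(\xvec)\field_{p,\eps}(\xvec)=\eps S_\beta(\xvec-c\one)=\nabla V(\xvec)$ with $V(\xvec)\colloneq\frac\eps2(\xvec-c\one)^\transpose S_\beta(\xvec-c\one)$. So the mirror one-form is exact, as in \Cref{prop:one-form-equivalence}. Concretely, set $\Omega\colloneq(0,\infty)^2$, which is open, convex and contains $\nabla R_p(K)$, and define $\Psi(\thetavec)\colloneq V\bigl(\thetavec[1]^{1/(p-1)},\thetavec[2]^{1/(p-1)}\bigr)$ there. Then $\phi_{p,\eps}\in\PhiMirrorExact{R_p}(K)$ in the sense of \Cref{def:mirror-exact-beyond-legendre}.

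Next I verify the hypotheses of \Cref{thm:md-exact-beyond-legendre} and \Cref{cor:md-exact-beyond-legendre-rate}:
\begin{itemize}
\item $R_p$ is $m_p$-strongly convex and $C^2$ near $K$, so $\nabla R_p$ is Lipschitz on $K$.
\item $\field_{p,\eps}$ is Lipschitz on $K$.
\item The relative-smoothness condition \eqref{eq:relative-smoothness-beyond-legendre} holds. The left side is $O(\norm{\xvec-\yvec}^2)$ because $\Psi$ is $C^2$ on the compact set $\nabla R_p(K)\subset\Omega$ and $\nabla R_p$ is Lipschitz. The right side is at least $\frac{m_p}2\norm{\xvec-\yvec}^2$.
\item The oscillation of $\Psi$ on $\nabla R_p(K)$ is finite.
\end{itemize}
The corollary then gives $O(\sqrt T)$ regret for mirror descent.

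\textbf{FTRL.} $R_{p,K}^*$ is finite on all of $E$ since $\dom R_{p,K}=K$ is compact, and it is separable. Hence
\[
\nabla R_{p,K}^*(\thetavec)[i]=\operatorname{clip}_{[a,b]}\bigl(\thetavec[i]^{1/(p-1)}\bigr),
\]
with the convention that this equals $a$ for $\thetavec[i]\le a^{p-1}$. The dual displacement is $\Dphidual(\thetavec)=\field_{p,\eps}(\nabla R_{p,K}^*(\thetavec))$.

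Work in the open region where $\thetavec[1]>b^{p-1}$, so $\xvec[1]=b$ is frozen, and $\thetavec[2]\in(a^{p-1},b^{p-1})$, so $\xvec[2]$ is interior.
\begin{itemize}
\item $\Dphidual[2]$ depends on $\thetavec[1]$ only through $\xvec[1]$, which is constant, so $\partial_{\thetavec[1]}\Dphidual[2]=0$.
\item $\partial_{\thetavec[2]}\Dphidual[1]=\frac{\eps\beta}{(p-1)b^{p-2}}\cdot\frac{\dd\xvec[2]}{\dd\thetavec[2]}>0$.
\end{itemize}
So the Jacobian is not symmetric there. By Green's theorem, a small rectangle loop in this region has nonzero circulation. \Cref{thm:ftrl-circulation}, after orienting the loop, then yields regret $cT-O(1/\eta)$. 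Since $\Dphidual$ is only piecewise smooth globally, I will use only the loop in this smooth region.

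Combining the three parts gives $\phi_{p,\eps}\in\PhiMirrorExact{R_p}(K)\setminus\PhiFTRL{R_{p,K}}(K)$. The latter exclusion holds because exactness would force zero circulation. Together with \Cref{prop:ftrl-exact-implies-md-exact}, this gives strict inclusion.
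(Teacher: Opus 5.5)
Your proposal is correct and follows the paper's proof essentially step for step: coordinatewise feasibility, exactness of the mirror one-form through the quadratic $V$ feeding into \Cref{cor:md-exact-beyond-legendre-rate}, and nonzero dual circulation on a small rectangle in the strip $\thetavec[1]>b^{p-1}$, $a^{p-1}<\thetavec[2]<b^{p-1}$ feeding into \Cref{thm:ftrl-circulation}. Your feasibility bookkeeping is actually sharper than the paper's. It only needs the prefactor to be at most $1$, i.e.\ $\eps\le m_p$, whereas the paper's cruder bound $\abs{\field_{p,\eps}(\xvec)[i]}\le\eps(1+\beta)d/m_p$ is what forces $\eps\le m_p(1-\beta)/(1+\beta)$. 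One correction: $\xvec\mapsto 2c\one-\xvec$ is not a literal symmetry of $\phi_{p,\eps}$ when $p\neq2$, because the prefactor depends on $\xvec[i]$. State the lower bound instead as the mirror-image argument: when $N_i>0$ one has $N_i\le\xvec[i]-a$. That argument goes through verbatim.
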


For mirror descent, exactness is visible directly in primal coordinates.  The
one-form satisfies
\[
 \bigl[\nabla^2R_p(\xvec)\field_{p,\eps}(\xvec)\bigr]^{\transpose}
 \dd\xvec
 =\dd\left[
 \frac\eps2
 (\xvec-c\one)^{\transpose}S_\beta(\xvec-c\one)
 \right].
\]
All coefficients are smooth and bounded on the positive box, so the
relative-curvature and Lipschitz conditions in
\Cref{cor:md-exact-beyond-legendre-rate} hold with finite constants.  The
mirror-descent regret bound therefore applies.

For FTRL, the obstruction appears when the cumulative dual state reaches a
region where one coordinate is saturated at the boundary.  For the extended
regularizer, the choice map is coordinatewise
\[
 \nabla R_{p,K}^*(\thetavec)[i]
 =
 \begin{cases}
 a, & \thetavec[i]\le a^{p-1},\\
 \thetavec[i]^{1/(p-1)},
   & a^{p-1}<\thetavec[i]<b^{p-1},\\
 b, & \thetavec[i]\ge b^{p-1}.
 \end{cases}
\]
On the open strip, we have
\[
 \thetavec[1]>b^{p-1},
 \qquad
 a^{p-1}<\thetavec[2]<b^{p-1}.
\]
The dual displacement
$\widetilde{\field}_{p,\eps}$ satisfies
\[
 \frac{\partial\widetilde{\field}_{p,\eps}[1]}
 {\partial\thetavec[2]}
 =\frac{\eps\beta}{(p-1)^2}
 b^{2-p}\thetavec[2]^{(2-p)/(p-1)}>0,
 \qquad
 \frac{\partial\widetilde{\field}_{p,\eps}[2]}
 {\partial\thetavec[1]}=0.
\]
The unequal cross derivatives give nonzero dual circulation on a small
rectangle in this strip.  Hence \Cref{thm:ftrl-circulation} gives linear
regret for FTRL, even though the same deviation is mirror-exact and has
sublinear regret under standard constrained mirror descent.  At $p=2$, the
displacement in \eqref{eq:lp-md-ftrl-separation-map} is affine and the example
reduces to a symmetric cross-coordinate deviation on a box.

\subsection{Examples and Applications of the FTRL Bound}
\label{sec:ftrl-lp-rates}

We conclude the FTRL analysis with two standard power-type regularizers that
illustrate how \Cref{thm:ftrl-regret} specializes in concrete geometries.  In
both examples, $K\subseteq\R^d$ is nonempty, closed, and convex, the constraint
is encoded in $R$, $\phi$ is FTRL-exact, the relative-smoothness condition
holds with constant $L$, and the oscillation of $\Psi$ along the dual
trajectory is at most $B$.

For squared $\ell_p$ geometry with $1<p\le2$, take
\[
  R(\xvec)\colloneq\frac12\norm{\xvec}_p^2+\indicator_K(\xvec).
\]
Since this regularizer is $(p-1)$-strongly convex with respect to
$\norm{\cdot}_p$, \Cref{thm:ftrl-regret} gives the following specialization.

\begin{corollary}
\label{cor:ftrl-lp-squared}
Let $1<p\le2$ and let $q\colloneq p/(p-1)$.  Then
\[
  \sum_{t=1}^T
  \inner{\gvec_t}{\xvec_t-\phi(\xvec_t)}
  \le
  \frac{B}{\eta}
  +\frac{L\eta}{2(p-1)}
   \sum_{t=1}^T\norm{\gvec_t}_q^2.
\]
If $\norm{\gvec_t}_q\le G$, the optimized bound is
$G\sqrt{2LBT/(p-1)}$.
\end{corollary}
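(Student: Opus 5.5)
This is a direct specialization of \Cref{thm:ftrl-regret} with $r=2$, followed by a one-dimensional optimization over $\eta$. The only ingredient to verify is the uniform-convexity hypothesis \eqref{eq:uniform-convexity} for $R=\frac12\norm{\cdot}_p^2+\indicator_K$ with $\sigma=p-1$, $r=2$, in the norm $\norm{\cdot}_p$, whose dual norm is $\norm{\cdot}_q$.

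\emph{Step 1: uniform convexity.} I will use the standard fact that $\frac12\norm{\cdot}_p^2$ is $(p-1)$-strongly convex with respect to $\norm{\cdot}_p$ for $1<p\le2$. Adding the indicator of the closed convex set $K$ keeps this inequality for every $\xvec,\yvec\in K$ and every subgradient $\thetavec\in\partial R(\xvec)$. The reason is that $\partial R(\xvec)=\nabla\frac12\norm{\xvec}_p^2+\normal_K(\xvec)$, and for $\yvec\in K$ the normal-cone component contributes $\inner{\nvec}{\yvec-\xvec}\le0$. So the lower bound \eqref{eq:uniform-convexity} holds with $\sigma=p-1$, $r=2$, $q=2$. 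This subdifferential sum rule is the only step needing care; it holds because $\frac12\norm{\cdot}_p^2$ is finite and differentiable everywhere for $p>1$.

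The other hypotheses follow from the same structure. $R$ is proper and closed since $K$ is nonempty and closed. $R^*$ is finite on $E$ because $R$ is strongly convex, hence coercive.

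\emph{Step 2: apply the theorem.} \Cref{thm:ftrl-regret} then gives
\[
\sum_{t=1}^T\inner{\gvec_t}{\xvec_t-\phi(\xvec_t)}
\le\frac{\Psi(\thetavec_1)-\Psi(\thetavec_{T+1})}{\eta}+\frac{L\eta}{2(p-1)}\sum_{t=1}^T\norm{\gvec_t}_q^2,
\]
using $q=2$ in the exponent $\eta^{q-1}/(q\sigma^{q-1})$. The relative-smoothness assumption along the trajectory is part of the standing hypotheses of this subsection. The potential-difference term is at most $B$ by the assumed oscillation bound along the dual trajectory. (The $q$ in the corollary is the $\ell_p$ dual exponent, distinct from the theorem's $q=2$.)

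\emph{Step 3: optimize $\eta$.} With $\norm{\gvec_t}_q\le G$, the bound becomes $B/\eta+L\eta G^2T/(2(p-1))$. Minimizing at $\eta=\sqrt{2(p-1)B/(LG^2T)}$ gives $2\sqrt{BLG^2T/(2(p-1))}=G\sqrt{2LBT/(p-1)}$. This agrees with \Cref{cor:ftrl-rate} at $r=2$, $\sigma=p-1$.
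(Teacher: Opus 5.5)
Your proposal is correct and follows the paper's proof. Both use the $(p-1)$-strong convexity of $\frac12\norm{\cdot}_p^2$ in $\norm{\cdot}_p$, extend it to every subgradient of $\frac12\norm{\cdot}_p^2+\indicator_K$ through the normal-cone sign $\inner{\nvec}{\yvec-\xvec}\le0$, apply \Cref{thm:ftrl-regret} with $r=2$ and $\sigma=p-1$, and then optimize $\eta$. Your extra checks (the subdifferential sum rule and finiteness of $R^*$) are details the paper leaves implicit; for finiteness of $R^*$, what matters is the quadratic, i.e.\ supercoercive, growth from strong convexity, since mere coercivity would not suffice.
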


For $p\ge2$, the natural power-type regularizer is instead
\[
  R(\xvec)\colloneq\frac1p\norm{\xvec}_p^p+\indicator_K(\xvec).
\]
Its uniform convexity has exponent $p$, which leads to a different dependence
on the horizon.

\begin{corollary}
\label{cor:ftrl-lp-powered}
Let $2\le p<\infty$ and let $q\colloneq p/(p-1)$.  Then
\[
  \sum_{t=1}^T
  \inner{\gvec_t}{\xvec_t-\phi(\xvec_t)}
  \le
  \frac{B}{\eta}
  +\frac{L\eta^{q-1}}
  {q(2^{2-p})^{q-1}}
  \sum_{t=1}^T\norm{\gvec_t}_q^q.
\]
If $\norm{\gvec_t}_q\le G$, the optimized bound is $
  p^{1/p}2^{(p-2)/p}G L^{1/q}B^{1/p}T^{1/q}$,
which is $O(T^{1-1/p})$.
\end{corollary}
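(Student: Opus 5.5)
The plan is to obtain this as a direct specialization of \Cref{thm:ftrl-regret} and \Cref{cor:ftrl-rate}, with exponent $r=p$, conjugate exponent $q=p/(p-1)$, the norm $\norm{\cdot}_p$ on $E=\R^d$, and its dual norm $\norm{\cdot}_q$. Given the standing assumptions of this subsection (FTRL-exactness, relative smoothness with constant $L$ along the trajectory, oscillation at most $B$), two structural facts must be checked before invoking the theorem. The first is that $R^*$ is finite on all of $E$. This holds because $\frac1p\norm{\xvec}_p^p$ grows superlinearly, so $\inner{\thetavec}{\xvec}-R(\xvec)\to-\infty$ as $\norm{\xvec}\to\infty$; adding $\indicator_K$ only shrinks the supremum, and $K\neq\varnothing$ keeps $R^*>-\infty$. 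The second, and main, fact is that $R$ is $(\sigma,p)$-uniformly convex in the sense of \eqref{eq:uniform-convexity} with $\sigma=2^{2-p}$.

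For the uniform convexity I will first reduce away the constraint. Any $\thetavec\in\partial R(\xvec)$ can be written as $\nabla R_0(\xvec)+\nvec$ with $\nvec\in\normal_K(\xvec)$, where $R_0\colloneq\frac1p\norm{\cdot}_p^p$ is differentiable. Since $\inner{\nvec}{\yvec-\xvec}\le0$ for $\yvec\in K$, it suffices to prove the inequality for $R_0$ with its gradient. Because $R_0$ is separable and $\sum_i\abs{h_i}^p=\norm{\hvec}_p^p$, it then suffices to prove the scalar inequality
\[
\abs{b}^p\ge\abs{a}^p+p\abs{a}^{p-2}a(b-a)+p\,\sigma_p\abs{b-a}^p
\]
for some constant $\sigma_p>0$, and the claim is that one can take $\sigma_p=2^{2-p}/p$.

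This scalar inequality is the step I expect to be the main obstacle, because the constant has to come out exactly. I would use the integral form of the Taylor remainder with $h=b-a$:
\[
\abs{b}^p-\abs{a}^p-p\abs{a}^{p-2}a h=p(p-1)h^2\int_0^1(1-s)\abs{a+sh}^{p-2}\,\dd s,
\]
which is valid for $p\ge2$. Writing $\abs{a+sh}=\abs{h}\,\abs{s-s_0}$ with $s_0=-a/h$ reduces the task to lower-bounding $\int_0^1(1-s)\abs{s-s_0}^{p-2}\,\dd s$ uniformly over $s_0\in\R$, by a convexity or symmetrization argument in $s_0$. The case $a=0$ is immediate, since $\sigma\le1$. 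If the sharp constant $2^{2-p}$ does not come out of this one-dimensional minimization, I will instead fall back on the classical Clarkson-type inequality for $\ell_p$ with $p\ge2$, which is the known source of the modulus $2^{2-p}$.

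With $\sigma=2^{2-p}$ and $r=p$ in hand, \Cref{thm:ftrl-regret} gives the displayed bound directly. The telescoping term $(\Psi(\thetavec_1)-\Psi(\thetavec_{T+1}))/\eta$ is bounded by $B/\eta$, and the Bregman term is $\frac{L\eta^{q-1}}{q\sigma^{q-1}}\sum_t\norm{\gvec_t}_q^q$ with $\sigma^{q-1}=(2^{2-p})^{q-1}$. For the optimized rate, I substitute $r=p$ and $\sigma=2^{2-p}$ into \Cref{cor:ftrl-rate}. This gives $r^{1/r}=p^{1/p}$ and $\sigma^{-1/r}=2^{(p-2)/p}$, so the bound becomes $p^{1/p}2^{(p-2)/p}GL^{1/q}B^{1/p}T^{1/q}$. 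Finally, $1/q=1-1/p$, so this is $O(T^{1-1/p})$.
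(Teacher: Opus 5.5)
Your reduction is the paper's: the normal-cone argument removes $\indicator_K$, separability reduces everything to a scalar inequality, and \Cref{thm:ftrl-regret} with \Cref{cor:ftrl-rate} at $r=p$, $\sigma=2^{2-p}$ gives both displayed bounds. Your constant bookkeeping is right, and your check that $R^*$ is finite on $E$ verifies a hypothesis the paper leaves implicit.

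The difference is in how the scalar modulus is proved. The paper takes the gradient-gap inequality $(A(a)-A(b))(a-b)\ge 2^{2-p}\abs{a-b}^p$ for $A(s)=\abs{s}^{p-2}s$. With $R_0(s)=\tfrac1p\abs{s}^p$, it integrates this along the segment: $D_{R_0}(b,a)=\int_0^1 s^{-1}\bigl(A(a+sh)-A(a)\bigr)sh\,\dd s\ge 2^{2-p}\abs{h}^p\int_0^1 s^{p-1}\,\dd s$. You use the second-order Taylor remainder instead, and leave open the bound $\int_0^1(1-s)\abs{s-s_0}^{p-2}\,\dd s\ge 2^{2-p}/(p(p-1))$.

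That bound is true, but it is awkward to get by minimizing in $s_0$. For $t\in[0,1]$, the integral times $p(p-1)$ equals $pt^{p-1}-t^p+(1-t)^p$. This function is not convex near $t=1$ when $2<p<3$, and its critical point has no closed form.

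The step closes cleanly by Fubini: $\int_0^1(1-s)\abs{s-s_0}^{p-2}\,\dd s=\int_0^1\bigl(\int_0^\sigma\abs{s-s_0}^{p-2}\,\dd s\bigr)\dd\sigma$. For $s_0\in[0,\sigma]$, the inner integral is $(s_0^{p-1}+(\sigma-s_0)^{p-1})/(p-1)\ge 2^{2-p}\sigma^{p-1}/(p-1)$ by convexity of $c\mapsto c^{p-1}$. For $s_0\notin[0,\sigma]$, it is at least its value at the nearest endpoint. Integrating in $\sigma$ then gives exactly $2^{2-p}/(p(p-1))$.

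After rescaling, that inner estimate is the paper's gap inequality, so once completed your argument is the paper's in different coordinates. Direct minimization would give the sharp modulus, which is strictly larger than $2^{2-p}$ for $p>2$ (e.g.\ $2-\sqrt2$ versus $\tfrac12$ at $p=3$), but you do not need it.

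Drop the Clarkson fallback. For $g=\norm{\cdot}_p^p$, Clarkson gives the midpoint inequality $\tfrac12\bigl(g(\xvec)+g(\yvec)\bigr)-g\bigl(\tfrac{\xvec+\yvec}2\bigr)\ge 2^{-p}\norm{\xvec-\yvec}_p^p$. Converting it by the usual halving recursion $D_g(\xvec+\hvec,\xvec)\ge 2D_g(\xvec+\hvec/2,\xvec)+2^{1-p}\norm{\hvec}_p^p$ only yields $D_g(\xvec+\hvec,\xvec)\ge\norm{\hvec}_p^p/(2^{p-1}-1)$. For $p>2$ this is strictly below $2^{2-p}\norm{\hvec}_p^p$, so it would not reproduce the stated constants.
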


These examples show how the geometry of the regularizer determines the
quantitative FTRL rate through its uniform-convexity exponent.  The
one-homogeneous norm $\norm{\xvec}_p$ itself is not suitable for
\Cref{thm:ftrl-regret}, since it is affine along positive rays and its
conjugate is an indicator of the dual unit ball.  At $p=1$ or $p=\infty$, one
can instead use an entropic regularizer, a strongly convex perturbation, or a
nearby $\ell_p$ geometry with $1<p<\infty$.

\Cref{sec:md-beyond-legendre,sec:ftrl} give parallel characterizations for different algorithms.
For standard constrained mirror descent, exactness of
$\Dphi(\xvec)^{\transpose}\dd\nabla R(\xvec)$ is the sharp geometric
condition, and the boundary residual determines the additional analytic term in
the upper bound.  For FTRL, exactness and circulation live directly in the
cumulative dual state, which also permits nonsmooth and extended-valued
regularizers.  The theories coincide on Legendre interior trajectories.  Once
normal directions appear, FTRL exactness is a stricter extension requirement,
and \Cref{thm:md-ftrl-lp-separation} shows that the corresponding no-regret
deviation classes can differ even for the same powered $\ell_p$ geometry.

\section{Conservative Correlated Equilibria in Convex Games}
\label{sec:games}

The regret guarantees developed above translate directly into equilibrium
constraints in convex games.  It is useful to distinguish three questions.
The first concerns inclusion between deviation classes, the second concerns
quantitative approximate equilibria for normalized families, and the third
concerns equilibrium constraints when the approximation error is zero.
In particular, a strict inclusion between deviation classes, such as the one
in \Cref{prop:strict-separation}, does not by itself imply a strict inclusion
between the corresponding equilibrium sets.

Consider an $N$-player game.  Player $i$ has compact convex action set $K_i$
and a loss $\ell_i(\xvec_i,\xvec_{-i})$ convex and differentiable in its own
action.  Let $K\colloneq\prod_{i=1}^N K_i$.
Assume also that each loss $\ell_i:K\to\R$ is jointly continuous.
Throughout this section, every distribution is a Borel probability measure
on $K$, and every feasible deviation is Borel measurable.  Joint continuity
and compactness make each loss bounded, so all the expected gains below are
well defined and finite, including those of identity interpolations. For such a distribution $\mu$ and a feasible deviation $\phi_i$, define its
gain
\[
 \Gamma_i(\mu,\phi_i)
 \colloneq\E_{\xvec\sim\mu}\!
 \left[\ell_i(\xvec_i,\xvec_{-i})
 -\ell_i(\phi_i(\xvec_i),\xvec_{-i})\right].
\]

The interpolation results in \Cref{subsec:separation,sec:mirror-interpolation}
have the same consequence when the approximation error is zero.  Passing from
a deviation class to its identity-radial closure does not change the resulting
equilibrium constraints.

\begin{proposition}
\label{prop:radial-equilibrium}
Let $\mathcal A_i$ be any family of feasible Borel-measurable
deviations for player $i$.  A
distribution satisfies
$\Gamma_i(\mu,\phi_i)\le0$ for every $\phi_i\in\mathcal A_i$ and every player
if and only if it satisfies the same inequalities for every
$\phi_i\in\operatorname{Rad}_I(\mathcal A_i)$.
\end{proposition}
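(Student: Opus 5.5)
We prove the two implications separately. One direction is immediate: every $\phi_i\in\mathcal A_i$ belongs to $\operatorname{Rad}_I(\mathcal A_i)$, because the choice $\alpha=1$ gives $(1-\alpha)I+\alpha\phi_i=\phi_i\in\mathcal A_i$. So if the inequalities hold on the radial closure, they hold on $\mathcal A_i$.

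For the converse, fix $\phi_i\in\operatorname{Rad}_I(\mathcal A_i)$. By definition there is some $\alpha\in(0,1]$ with $\phi_{i,\alpha}\colloneq(1-\alpha)I+\alpha\phi_i\in\mathcal A_i$. This map is feasible because $K_i$ is convex, and it is Borel measurable as an affine combination of measurable maps. The hypothesis therefore gives $\Gamma_i(\mu,\phi_{i,\alpha})\le0$.

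The key step is a pointwise inequality. Fix $\xvec_{-i}$. Convexity of $\ell_i(\cdot,\xvec_{-i})$ gives
\[
 \ell_i(\phi_{i,\alpha}(\xvec_i),\xvec_{-i})\le(1-\alpha)\ell_i(\xvec_i,\xvec_{-i})+\alpha\,\ell_i(\phi_i(\xvec_i),\xvec_{-i}),
\]
which rearranges to
\[
 \ell_i(\xvec_i,\xvec_{-i})-\ell_i(\phi_{i,\alpha}(\xvec_i),\xvec_{-i})\ge\alpha\bigl[\ell_i(\xvec_i,\xvec_{-i})-\ell_i(\phi_i(\xvec_i),\xvec_{-i})\bigr].
\]
This is the inequality already noted in \Cref{rem:scaled-prox-interpretation}.

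Next we integrate against $\mu$. Both integrands are bounded and measurable: each $\ell_i$ is jointly continuous on the compact set $K$, and the deviations are Borel. Integrating the pointwise inequality yields
\[
 \Gamma_i(\mu,\phi_{i,\alpha})\ge\alpha\,\Gamma_i(\mu,\phi_i).
\]
Since $\alpha>0$, this gives $\Gamma_i(\mu,\phi_i)\le\alpha^{-1}\Gamma_i(\mu,\phi_{i,\alpha})\le0$.

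Two points need care, and neither requires new ideas. The first is measurability and integrability of the compositions $\ell_i(\phi_i(\xvec_i),\xvec_{-i})$. The map $\xvec\mapsto(\phi_i(\xvec_i),\xvec_{-i})$ is Borel and $\ell_i$ is continuous, so the composition is Borel and bounded. The second is that $\alpha$ may depend on $\phi_i$. This is harmless, because the argument treats one deviation at a time and the conclusion holds for every player separately. Note that the convexity of $\ell_i$ in its own action is what makes the scaling valid.
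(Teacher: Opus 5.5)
Your proof is correct and follows the paper's argument essentially verbatim: the $\alpha=1$ inclusion gives one direction, and for the converse you integrate the pointwise convexity inequality (using boundedness from joint continuity on compact $K$) to get $\Gamma_i(\mu,\phi_{i,\alpha})\ge\alpha\Gamma_i(\mu,\phi_i)$. One small remark: the measurability you actually need is that of $\phi_i$ itself, and it follows from that of $\phi_{i,\alpha}\in\mathcal A_i$ via $\phi_i=\alpha^{-1}\bigl(\phi_{i,\alpha}-(1-\alpha)I\bigr)$, rather than the other way around.
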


This equivalence concerns the equilibrium constraints themselves.
Quantitative bounds over a radial closure additionally require a common lower
bound on the admissible interpolation scales.

For each player $i$, let $\Phi_{i,\mathrm{prox}}$ be the unrestricted union of
all constrained proximal maps $\prox_F|_{K_i}$, where
$F\colloneq f+\indicator_{K_i}$ is proper, lower-semicontinuous, and
$\rho$-weakly convex for some map-dependent $0\le\rho<1$.  We use the
following equilibrium notions.
\begin{itemize}[leftmargin=*,itemsep=1mm]
  \item A distribution $\mu$ is a \emph{coarse correlated equilibrium} (CCE)
  if $\Gamma_i(\mu,\tau_{i,\uvec_i})\le0$ for every player $i$ and every
  constant deviation $\tau_{i,\uvec_i}(\xvec_i)\equiv\uvec_i\in K_i$.
  \item It is a \emph{proximal correlated equilibrium} (PCE) if
  $\Gamma_i(\mu,\tau_i)\le0$ for every player $i$ and every
  $\tau_i\in\Phi_{i,\mathrm{prox}}$.
  \item It is a \emph{correlated equilibrium} (CE) if
  $\Gamma_i(\mu,\tau_i)\le0$ for every player $i$ and every Borel-measurable
  deviation $\tau_i:K_i\to K_i$.
\end{itemize}
Thus, under the unrestricted proximal convention used here,
\[
  \CE\subseteq\PCE\subseteq\CCE.
\]
The CE notion originates with \citet{aumann1974subjectivity}; here we use its
standard measurable-deviation extension to compact convex action spaces.  The
original PCE work introduced PCE as a tractable refinement of CCE
\citep{cai2025proximal}.  We determine below how PCE compares with
measurable-deviation CE.  The two notions coincide on finite support, while
$\CE\subsetneq\PCE$ can occur on infinite support.

We next define the equilibrium notion associated with exact-form deviations.
The quantitative formulation and the case of zero approximation error require
slightly different conventions because uniform regret bounds require common
bounds on the potentials.

\begin{definition}[ConCE for a specified deviation family]
\label{def:conce}
Fix for every player a specified family $\Phi_i$ of feasible Euclidean
exact-form deviations.  A distribution $\mu$ is an $\eps$-conservative
correlated equilibrium relative to $(\Phi_i)_i$ if
\[
  \sup_{\phi_i\in\Phi_i}\Gamma_i(\mu,\phi_i)\le\eps
  \qquad\text{for every player }i.
\]
For approximate, quantitative conclusions, we use a family such as
$\Phi_i\colloneq\Phi_{i,\mathrm{exact}}(B_i,L_i)$, where
$\Phi_{i,\mathrm{exact}}(B_i,L_i)$ denotes
$\PhiExact(B_i,L_i)$ on the action set $K_i$.  In this family all potentials
satisfy the same oscillation bound $B_i$ and smoothness bound $L_i$. By contrast, when the approximation error is zero, ConCE tests every smooth
feasible exact-form deviation in the sense of \Cref{def:exact}
(equivalently, every feasible exact-form deviation admitting a
$C^{1,1}_{\mathrm{loc}}$ potential on a neighborhood of $K_i$).  Thus each
such deviation must have nonpositive gain.  Since this larger class has no
common bounds on the size or smoothness of its potentials, we do not claim a
single uniform quantitative estimate over the entire class.  Unless stated
otherwise, ConCE in this section refers to this Euclidean exact-form notion.
\end{definition}

The standard regret-to-equilibrium argument gives the following quantitative connection.

\begin{theorem}
\label{thm:equilibrium}
Let $\bar\mu_T\colloneq T^{-1}\sum_{t=1}^T\delta_{\xvec_t}$ and define
$\gvec_{i,t}\colloneq\nabla_{\xvec_i}\ell_i(\xvec_{i,t},\xvec_{-i,t})$.  If every player $i$
guarantees
\[
 \sup_{\phi_i\in\Phi_i}
 \sum_{t=1}^T\inner{\gvec_{i,t}}
 {\xvec_{i,t}-\phi_i(\xvec_{i,t})}\le R_{i,T},
\]
then $\bar\mu_T$ is an $\eps_T$-ConCE relative to $(\Phi_i)_i$, where
$\eps_T=\max_i R_{i,T}/T$.  In particular, OGD on
$\Phi_{i,\mathrm{exact}}(B_i,L_i)$ with $\norm{\gvec_{i,t}}\le G_i$ gives $
 \eps_T\le \max_i G_i\sqrt{6L_iB_i/T}$.
\end{theorem}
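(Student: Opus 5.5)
The plan is to reduce the equilibrium gap under $\bar\mu_T$ to each player's linearized regret, using convexity of the loss in the player's own action, and then to insert the OGD bound from \Cref{cor:uniform-regret}.

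Fix a player $i$ and a deviation $\phi_i\in\Phi_i$. By definition of the empirical distribution,
\[
 \Gamma_i(\bar\mu_T,\phi_i)
 =\frac1T\sum_{t=1}^T\bigl[\ell_i(\xvec_{i,t},\xvec_{-i,t})-\ell_i(\phi_i(\xvec_{i,t}),\xvec_{-i,t})\bigr].
\]
The map $\yvec\mapsto\ell_i(\yvec,\xvec_{-i,t})$ is convex and differentiable on $K_i$, and $\phi_i(\xvec_{i,t})\in K_i$ by feasibility. The first-order inequality recorded in the setting section then bounds each summand by $\inner{\gvec_{i,t}}{\xvec_{i,t}-\phi_i(\xvec_{i,t})}$, with $\gvec_{i,t}=\nabla_{\xvec_i}\ell_i(\xvec_{i,t},\xvec_{-i,t})$.

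Summing these bounds gives $\Gamma_i(\bar\mu_T,\phi_i)\le T^{-1}\sum_t\inner{\gvec_{i,t}}{\xvec_{i,t}-\phi_i(\xvec_{i,t})}$. Taking the supremum over $\phi_i\in\Phi_i$ and applying the hypothesis yields $\sup_{\phi_i}\Gamma_i(\bar\mu_T,\phi_i)\le R_{i,T}/T\le\eps_T$. Since this holds for every player, \Cref{def:conce} shows that $\bar\mu_T$ is an $\eps_T$-ConCE relative to $(\Phi_i)_i$.

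For the OGD specialization, each player runs OGD on $K_i$ with the gradients $\gvec_{i,t}$, so player $i$'s iterates are exactly the OGD iterates against that sequence. \Cref{cor:uniform-regret}, applied on $K_i$ with $\eta_i=\sqrt{2B_i/(3L_iG_i^2T)}$ and $\norm{\gvec_{i,t}}\le G_i$, gives $R_{i,T}=G_i\sqrt{6L_iB_iT}$ uniformly over $\Phi_{i,\mathrm{exact}}(B_i,L_i)$. Dividing by $T$ gives $\eps_T\le\max_iG_i\sqrt{6L_iB_i/T}$.

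The only point needing care is that the OGD guarantee is adversarial: it holds for every gradient sequence. It therefore remains valid even though each player's gradients depend on the others' simultaneous play. Beyond that, the argument is routine.
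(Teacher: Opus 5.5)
Your proposal is correct and follows essentially the same argument as the paper. Convexity of $\ell_i$ in player $i$'s own action bounds each round's gain by the linearized term, and summing, taking the supremum over $\Phi_i$, and dividing by $T$ gives the $\eps_T$-ConCE property. Applying \Cref{cor:uniform-regret} player by player then yields the OGD rate.
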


The proof uses only convexity of the losses and the linearized regret bound.
The same argument therefore applies to the regularizer-dependent mirror and
FTRL deviation families.  We introduce their corresponding equilibrium notions
in \Cref{sec:mirror-equilibria}.  The distinction between them matters at the
boundary, as shown by \Cref{thm:md-ftrl-lp-separation}.

\subsection{Exact and Proximal Equilibrium Constraints}

The class $\Phi_{i,\mathrm{prox}}$ is the unrestricted proximal class used in
\citet{cai2025proximal}.  We impose no fixed $\rho$, normalization, locality,
or computational restriction.  Let $\Phi_{i,\mathrm{exact}}^{\mathrm{sm}}$
denote the smooth exact-form class tested by ConCE.  By
\Cref{cor:euclidean-radial-closure},
\[
  \operatorname{Rad}_I(\Phi_{i,\mathrm{prox}})
  =\Phi_{i,\mathrm{exact}}^{\mathrm{sm}}.
\]
Together with the radial invariance in
\Cref{prop:radial-equilibrium}, this immediately gives the equality of the
corresponding equilibrium constraints.

\begin{theorem}
\label{thm:conce-pce-equality}
Under the unrestricted conventions above, every convex game with compact
convex action sets and jointly continuous losses satisfies
\[
  \ConCE=\PCE.
\]
\end{theorem}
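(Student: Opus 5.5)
The plan is to prove the two inclusions separately, working player by player. Both reduce to comparing the families of deviations that each notion tests. By definition, a distribution $\mu$ is a ConCE when $\Gamma_i(\mu,\phi_i)\le0$ for every player $i$ and every $\phi_i\in\Phi_{i,\mathrm{exact}}^{\mathrm{sm}}$. It is a PCE when the same inequality holds for every $\phi_i\in\Phi_{i,\mathrm{prox}}$. It therefore suffices to show that, for each player $i$, the constraint systems indexed by $\Phi_{i,\mathrm{prox}}$ and by $\Phi_{i,\mathrm{exact}}^{\mathrm{sm}}$ cut out the same set of distributions.

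For $\ConCE\subseteq\PCE$, I would check that $\Phi_{i,\mathrm{prox}}\subseteq\Phi_{i,\mathrm{exact}}^{\mathrm{sm}}$. This follows from \Cref{thm:prox-contained}: every admissible $\prox_F$ on $K_i$ is feasible, with displacement $\nabla\MF$. That gradient is globally $(2-\rho)/(1-\rho)$-Lipschitz, so the potential is $C^{1,1}_{\mathrm{loc}}$ on a neighborhood of $K_i$. Indeed $\MF$ is defined on all of $\R^d$, and the envelope identity holds there. Measurability is automatic, since these maps are continuous. Hence every PCE constraint is among the ConCE constraints.

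For $\PCE\subseteq\ConCE$, I would invoke \Cref{prop:radial-equilibrium} with $\mathcal A_i\colloneq\Phi_{i,\mathrm{prox}}$. It shows that $\mu\in\PCE$ satisfies $\Gamma_i(\mu,\phi_i)\le0$ for every $\phi_i\in\operatorname{Rad}_I(\Phi_{i,\mathrm{prox}})$. By \Cref{cor:euclidean-radial-closure}, applied on each compact convex $K_i$, this radial closure equals $\Phi_{i,\mathrm{exact}}^{\mathrm{sm}}$. The direction actually needed here is the inclusion $\supseteq$, which is exactly \Cref{thm:scaled-prox}.

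For completeness, I would spell out the mechanism inside \Cref{prop:radial-equilibrium}. Fix a smooth exact $\phi_i$ and take $\alpha\in(0,\alpha_0]$ such that $\phi_{i,\alpha}=(1-\alpha)I+\alpha\phi_i$ is proximal. Convexity of $\ell_i$ in its own argument gives, pointwise,
\[
\ell_i(\xvec_i,\xvec_{-i})-\ell_i(\phi_{i,\alpha}(\xvec_i),\xvec_{-i})\ge\alpha\bigl[\ell_i(\xvec_i,\xvec_{-i})-\ell_i(\phi_i(\xvec_i),\xvec_{-i})\bigr].
\]
Integrating against $\mu$ gives $\Gamma_i(\mu,\phi_i)\le\alpha^{-1}\Gamma_i(\mu,\phi_{i,\alpha})\le0$. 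Compactness and joint continuity make all these integrands bounded and measurable, so the expectations are finite.

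The only delicate point is matching conventions. The PCE class must be unrestricted: the weak-convexity parameter $\rho_\alpha<1$ produced by \Cref{thm:scaled-prox} may vary with the deviation, and this is allowed precisely because $\Phi_{i,\mathrm{prox}}$ takes the union over all $\rho$. Similarly, the ConCE class must be the smooth ($C^{1,1}_{\mathrm{loc}}$) exact class required by \Cref{thm:scaled-prox}, rather than the bare $C^1$ one. With these conventions fixed as in \Cref{def:conce}, the two inclusions combine to give $\ConCE=\PCE$.
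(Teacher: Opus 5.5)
Your proposal is correct and is essentially the paper's proof. The paper combines \Cref{cor:euclidean-radial-closure}, which identifies $\operatorname{Rad}_I(\Phi_{i,\mathrm{prox}})$ with $\Phi_{i,\mathrm{exact}}^{\mathrm{sm}}$, with the radial invariance of \Cref{prop:radial-equilibrium}; you simply split that equality into its two directions, using \Cref{thm:prox-contained} for $\ConCE\subseteq\PCE$ and \Cref{thm:scaled-prox} plus radial invariance for $\PCE\subseteq\ConCE$.
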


The proof is given in \Cref{app:proofs-games}.  This equality relies on the
unrestricted deviation classes and does not directly extend to classes with a
fixed weak-convexity parameter, normalization, locality restriction, or
computational restriction.  For normalized $\eps$-equilibria, the admissible
interpolation scale may depend on the deviation, so the strict difference
between the map classes can still affect quantitative guarantees.

\subsection{Mirror-exact, Mirror-radial, and FTRL Equilibrium Constraints}
\label{sec:mirror-equilibria}

We now record the equilibrium consequences of the regularizer-dependent
deviation classes studied in \Cref{sec:mirror,sec:ftrl}.  We begin with
standard constrained mirror descent.  Fix for every player $i$ a
differentiable $m_i$-strongly convex regularizer $R_i$ on $K_i$, as in
\Cref{sec:md-beyond-legendre}.  The full geometric class controlled by the
exactness characterization is $\PhiMirrorExact{R_i}(K_i)$.  For quantitative
equilibrium guarantees, we restrict to a subfamily with common regularity
parameters.  Let
\[
  \Phi_i^{\mathrm{MD}}
  \subseteq
  \PhiMirrorExact{R_i}(K_i)
\]
be such a family.  Assume that every $\phi_i\in\Phi_i^{\mathrm{MD}}$ has
a potential with oscillation at most $B_i^{\mathrm{MD}}$, satisfies
\eqref{eq:relative-smoothness-beyond-legendre} with the same constant
$L_i^{\mathrm{MD}}$, and has an $S_i$-Lipschitz displacement.  Suppose also
that $\nabla R_i$ is $M_i$-Lipschitz on $K_i$ and
$\dualnorm{\gvec_{i,t}}\le G_i$.  Set
\[
  C_i^{\mathrm{MD}}
  \colloneq
  \frac{L_i^{\mathrm{MD}}+S_i}{m_i}
  +\frac{S_iM_i}{m_i^2}.
\]
If player $i$ uses standard mirror descent with
\[
  \eta_i^{\mathrm{MD}}
  \colloneq
  \sqrt{\frac{B_i^{\mathrm{MD}}}
  {C_i^{\mathrm{MD}}G_i^2T}},
\]
whenever $B_i^{\mathrm{MD}}C_i^{\mathrm{MD}}>0$, then
\Cref{cor:md-exact-beyond-legendre-rate,thm:equilibrium} give
\[
  \sup_{\phi_i\in\Phi_i^{\mathrm{MD}}}
  \Gamma_i(\bar\mu_T,\phi_i)
  \le
  2G_i\sqrt{\frac{B_i^{\mathrm{MD}}C_i^{\mathrm{MD}}}{T}}.
\]
The degenerate cases follow from the corresponding unoptimized bound.  If
$\nabla R_i$ is only uniformly continuous, the modulus estimate in
\Cref{cor:md-exact-beyond-legendre-rate} still gives an $o(1)$ equilibrium
error for every family with common values of
$B_i^{\mathrm{MD}},L_i^{\mathrm{MD}}$, and $S_i$.

The identity-interpolation results of \Cref{sec:mirror-interpolation} give a
second equilibrium notion associated with the same regularizer.  Let
$\PhiMirrorProx{R_i}(K_i)$ and $\PhiMirrorRad{R_i}(K_i)$ be the Bregman
proximal and radial classes from \Cref{def:bregman-radial-class}.  A
distribution is a \emph{Bregman $R$-PCE} when every deviation in
$\PhiMirrorProx{R_i}(K_i)$ has nonpositive gain for every player.  It is a
\emph{radial $R$-equilibrium} when the same condition holds for every
deviation in $\PhiMirrorRad{R_i}(K_i)$.  Denote the two sets by
$\PCE^{\mathrm{Breg}}_{(R_i)_i}$ and
$\mathrm{Eq}^{\mathrm{rad}}_{(R_i)_i}$, respectively.

By radial invariance, these two equilibrium notions coincide. We record this result in the following corollary.

\begin{corollary}
\label{cor:mirror-equilibrium-equality}
For any set of regularizers $R_i$ for $i \in [N]$, we have
\[
  \PCE^{\mathrm{Breg}}_{(R_i)_i}
  =
  \mathrm{Eq}^{\mathrm{rad}}_{(R_i)_i}.
\]
\end{corollary}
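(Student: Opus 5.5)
The plan is to apply \Cref{prop:radial-equilibrium} player by player, with $\mathcal A_i\colloneq\PhiMirrorProx{R_i}(K_i)$. By \Cref{def:bregman-radial-class}, $\PhiMirrorRad{R_i}(K_i)=\operatorname{Rad}_I(\mathcal A_i)$. The proposition then says that $\Gamma_i(\mu,\phi_i)\le0$ for all $\phi_i\in\mathcal A_i$ and all $i$ if and only if the same holds over $\operatorname{Rad}_I(\mathcal A_i)$. The left condition defines $\PCE^{\mathrm{Breg}}_{(R_i)_i}$ and the right one defines $\mathrm{Eq}^{\mathrm{rad}}_{(R_i)_i}$, so the two sets coincide.

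Only one hypothesis of \Cref{prop:radial-equilibrium} needs checking. Each member of $\mathcal A_i$ must be a feasible Borel-measurable map $K_i\to K_i$.

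Feasibility is automatic. The generator $F=f+\indicator_{K_i}$ forces every proximal output to lie in $K_i$, and the same holds after interpolation by convexity of $K_i$.

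For measurability, I would use that with $0\le\rho<m_i$ the Bregman subproblem is $(m_i-\rho)$-strongly convex, so its minimizer is unique. I would then argue that the solution map is continuous on $K_i$, as in \Cref{prop:bregman-relative-smooth}, but now with $\nabla R_i$ continuous on $K_i$. Concretely, the objective $(\yvec,\xvec)\mapsto F(\yvec)+D_{R_i}(\yvec,\xvec)$ is jointly lower semicontinuous, depends continuously on $\xvec$ through $\nabla R_i(\xvec)$, and is uniformly strongly convex in $\yvec$. A standard argmin-continuity (Berge-type) argument then gives continuity of the minimizer, and hence Borel measurability. Interpolations $(1-\alpha)I+\alpha\phi$ of measurable maps are again measurable, so the radial class also consists of admissible deviations.

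The gains themselves are well defined, since the losses are jointly continuous on compact $K$, which makes all the expectations finite. The direction $\mathrm{Eq}^{\mathrm{rad}}\subseteq\PCE^{\mathrm{Breg}}$ is also immediate from $\mathcal A_i\subseteq\operatorname{Rad}_I(\mathcal A_i)$ (take $\alpha=1$). The only real content is the reverse direction, which is the convexity step inside \Cref{prop:radial-equilibrium}:
\[
\ell_i(\xvec_i,\xvec_{-i})-\ell_i(\phi_{\alpha}(\xvec_i),\xvec_{-i})\ge\alpha\bigl[\ell_i(\xvec_i,\xvec_{-i})-\ell_i(\phi(\xvec_i),\xvec_{-i})\bigr].
\]
Integrating gives $\Gamma_i(\mu,\phi)\le\Gamma_i(\mu,\phi_\alpha)/\alpha\le0$. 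The main obstacle, to the extent there is one, is the measurability/continuity check of the Bregman prox map without Legendre assumptions, and I expect the strong-convexity argmin argument to settle it.
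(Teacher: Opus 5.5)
Your proposal is correct and matches the paper's proof, which likewise observes that $\PhiMirrorRad{R_i}(K_i)=\operatorname{Rad}_I\bigl(\PhiMirrorProx{R_i}(K_i)\bigr)$ by definition and applies \Cref{prop:radial-equilibrium} player by player. Your measurability check is extra, since the paper adopts a standing convention in this section that all feasible deviations are Borel measurable. If you want to verify it anyway, continuity follows more directly from $\prox_F^R=\nabla(F+R)^*\circ\nabla R$, as in the proof of \Cref{prop:bregman-radial-exact}. Also, measurability of a radial member $\phi$ comes from $\phi=\alpha^{-1}\bigl(\phi_\alpha-(1-\alpha)I\bigr)$, not from interpolating $\phi$.
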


This equality concerns the proximal class and its identity-radial closure; it
does not identify the radial class with the full mirror-exact class.  To make
this distinction explicit, define
\[
  \mathrm{Eq}^{\mathrm{exact}}_{(R_i)_i}
  \colloneq
  \left\{
  \mu:
  \Gamma_i(\mu,\phi_i)\le0
  \text{ for every player $i$ and every }
  \phi_i\in\PhiMirrorExact{R_i}(K_i)
  \right\}.
\]
\Cref{prop:bregman-radial-exact} gives
\[
  \mathrm{Eq}^{\mathrm{exact}}_{(R_i)_i}
  \subseteq
  \mathrm{Eq}^{\mathrm{rad}}_{(R_i)_i}
  =
  \PCE^{\mathrm{Breg}}_{(R_i)_i}.
\]
The first inclusion can be strict at the level of deviation maps.  In the
Legendre regime, \Cref{prop:legendre-prox-necessary} shows that a mirror-exact
deviation can belong to the radial class only if it passes the relative
curvature test in \Cref{cor:mirror-interpolation-radius}.
\Cref{prop:legendre-prox-reconstruction} gives the converse when the
reconstructed generator is an admissible weakly convex Bregman proximal
function, while \Cref{prop:mirror-no-scaling} shows that no positive
interpolation may exist even for a smooth dual potential with globally
Lipschitz gradient.  As in the Euclidean setting, strict inclusion of
deviation classes does not by itself imply strict inclusion of the
corresponding equilibrium sets.

FTRL produces a third, distinct family of equilibrium constraints.  For each
player, let
$\mathcal R_i:\R^{d_i}\to(-\infty,+\infty]$ be a proper closed
$(\sigma_i,r_i)$-uniformly convex FTRL regularizer with
$K_i=\overline{\dom\mathcal R_i}$, and let
\[
  \Phi_i^{\mathrm{FTRL}}
  \subseteq
  \PhiFTRL{\mathcal R_i}(K_i)
\]
be a specified family.  Player $i$ uses \eqref{eq:ftrl-update} with $R$
replaced by $\mathcal R_i$.  When FTRL and mirror descent use the same base
geometry and constraint, one takes
$\mathcal R_i\colloneq R_i+\indicator_{K_i}$.  Let
$q_i\colloneq r_i/(r_i-1)$.  Assume that every potential in
$\Phi_i^{\mathrm{FTRL}}$ has trajectory oscillation at most
$B_i^{\mathrm{FTRL}}$, satisfies the consecutive relative bound with the
same constant $L_i^{\mathrm{FTRL}}$, and that
$\dualnorm{\gvec_{i,t}}\le G_i$.  With the learning rate from
\Cref{cor:ftrl-rate}, the empirical distribution is an
$\eps_T^{\mathrm{FTRL}}$-equilibrium relative to
$(\Phi_i^{\mathrm{FTRL}})_i$, where
\[
  \eps_T^{\mathrm{FTRL}}
  \le
  \max_i\left\{
  r_i^{1/r_i}
  \frac{G_i
  (L_i^{\mathrm{FTRL}})^{1/q_i}
  (B_i^{\mathrm{FTRL}})^{1/r_i}}
  {\sigma_i^{1/r_i}}
  T^{-1/r_i}
  \right\}.
\]
For $r_i=2$, the rate is $O(T^{-1/2})$.  This guarantee applies to the
FTRL-exact deviation family rather than the mirror-exact family.  As shown in
\Cref{thm:md-ftrl-lp-separation}, the two classes can differ once boundary
effects are present, even when the algorithms use the same underlying
geometry and action set.

The remaining subsections return to the Euclidean ConCE and PCE notions.
Their comparison with unrestricted measurable CE does not depend on a
selected mirror regularizer.

\subsection{Finite Support and Boundary-compatible Interpolation}

On a finite support, only finitely many recommended actions and deviation
values matter.  The following result shows that any such finite collection can
be realized, after a sufficiently small common interpolation with the
identity, by a single proximal map.  Its proof appears in
\Cref{app:finite-interpolation}.

\begin{theorem}
\label{thm:finite-prox}
Let $K\subseteq\R^d$ be nonempty, closed, and convex.  Let
$\xvec^1,\ldots,\xvec^m\in K$ be distinct and choose arbitrary targets
$\yvec^1,\ldots,\yvec^m\in K$.  Fix $\rho\in(0,1)$.  There is $\alpha_0>0$ such
that for every $\alpha\in(0,\alpha_0]$ one can find a proper
lower-semicontinuous $F_\alpha=f_\alpha+\indicator_K$ that is
$\rho$-weakly convex and satisfies
\[
 \prox_{F_\alpha}(\xvec^j)
 =(1-\alpha)\xvec^j+\alpha\yvec^j,
 \qquad j=1,\ldots,m.
\]
The proximal minimizer is unique.
\end{theorem}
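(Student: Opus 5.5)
The plan is to write down an explicit generator: the pointwise maximum of $m$ concave quadratics, one attached to each data point. Set $\vvec_j\colloneq\xvec^j-\yvec^j$ and $\pvec_j\colloneq(1-\alpha)\xvec^j+\alpha\yvec^j$, so that $\pvec_j\in K$ by convexity and $\xvec^j-\pvec_j=\alpha\vvec_j$. Define
\[
 f_\alpha(\yvec)\colloneq\max_{1\le k\le m}\Bigl\{\alpha\inner{\vvec_k}{\yvec-\pvec_k}-\frac\rho2\norm{\yvec-\pvec_k}^2\Bigr\},
 \qquad F_\alpha\colloneq f_\alpha+\indicator_K .
\]
Each piece is an affine function minus $\frac\rho2\norm{\cdot}^2$, so it is $\rho$-weakly convex. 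A maximum of $\rho$-weakly convex functions is again $\rho$-weakly convex, since adding $\frac\rho2\norm{\cdot}^2$ gives a maximum of convex functions. Adding $\indicator_K$ preserves this. The function $f_\alpha$ is finite and continuous, so $F_\alpha$ is proper and lower semicontinuous.

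The key step is to make the $j$-th piece active at $\pvec_j$, i.e.\ $f_\alpha(\pvec_j)=0$, which is the value of the $j$-th piece there. This requires, for all $k\ne j$,
\[
 \alpha\inner{\vvec_k}{\pvec_j-\pvec_k}\le\frac\rho2\norm{\pvec_j-\pvec_k}^2 .
\]
As $\alpha\downarrow0$ we have $\pvec_j-\pvec_k\to\xvec^j-\xvec^k\neq0$. The right side therefore stays bounded below by a positive constant, say $\frac\rho8\min_{j\ne k}\norm{\xvec^j-\xvec^k}^2$ once $\alpha$ is small enough. The left side is $O(\alpha)$, with constants depending only on $\max_k\norm{\vvec_k}$ and the diameter of the finite configuration. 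Choosing $\alpha_0$ from these explicit bounds gives the inequality for every $\alpha\in(0,\alpha_0]$ at once. This quantitative choice of $\alpha_0$ is the only place where care is needed, and it is where distinctness of the $\xvec^j$ is used.

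Next I verify the proximal identity directly, without invoking subdifferential calculus. Fix $j$ and set $G_j(\yvec)\colloneq F_\alpha(\yvec)+\frac12\norm{\yvec-\xvec^j}^2$. For $\yvec\in K$, bound $F_\alpha$ from below by its $j$-th piece. This gives $G_j(\yvec)\ge q_j(\yvec)$, where
\[
 q_j(\yvec)\colloneq\alpha\inner{\vvec_j}{\yvec-\pvec_j}-\frac\rho2\norm{\yvec-\pvec_j}^2+\frac12\norm{\yvec-\xvec^j}^2 .
\]
The function $q_j$ is a $(1-\rho)$-strongly convex quadratic. Its gradient at $\pvec_j$ is $\alpha\vvec_j+\pvec_j-\xvec^j=0$. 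Hence
\[
 q_j(\yvec)=q_j(\pvec_j)+\frac{1-\rho}2\norm{\yvec-\pvec_j}^2 .
\]
By the activeness established above, $G_j(\pvec_j)=q_j(\pvec_j)$. Combining, for all $\yvec\in K$,
\[
 G_j(\yvec)\ge G_j(\pvec_j)+\frac{1-\rho}{2}\norm{\yvec-\pvec_j}^2 ,
\]
and $G_j=+\infty$ off $K$. Therefore $\pvec_j$ is the unique minimizer, i.e.\ $\prox_{F_\alpha}(\xvec^j)=(1-\alpha)\xvec^j+\alpha\yvec^j$, and the minimizer is unique as claimed.
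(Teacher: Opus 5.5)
Your proof is correct. It builds essentially the same generator as the paper, but verifies it by a more direct route.

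The paper takes anchors $\pvec_j$ and slopes $\alpha\vvec_j+\rho\pvec_j$ (its $\zvec_\alpha^j$ and $\svec_\alpha^j$). It notes that the cycle sums of these data equal $-\frac\rho2\sum\norm{\xvec^k-\xvec^j}^2<0$ at $\alpha=0$, and perturbs to small $\alpha$ by continuity over the finitely many cycles. It then invokes the cyclic-monotonicity interpolation lemma (\Cref{lem:finite-convex-interpolation}), whose intercepts come from longest paths in a weighted graph, to get a polyhedral convex $H_\alpha$. It sets $f_\alpha=H_\alpha-\frac\rho2\norm{\cdot}^2$ and checks optimality through a subdifferential inclusion.

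Your $f_\alpha+\frac\rho2\norm{\cdot}^2$ is a polyhedral function of exactly this kind, with the same slopes at the same anchors. The difference is that you fix the intercepts explicitly as $a_k=\frac\rho2\norm{\pvec_k}^2$, i.e.\ $f_\alpha(\pvec_k)=0$. Your pairwise inequality $\alpha\inner{\vvec_k}{\pvec_j-\pvec_k}\le\frac\rho2\norm{\pvec_j-\pvec_k}^2$ is precisely the condition $a_j\ge a_k+\inner{\alpha\vvec_k+\rho\pvec_k}{\pvec_j-\pvec_k}$ under which this particular choice works. Summing it around a cycle recovers cyclic monotonicity, so it is a stronger but, here, directly checkable replacement.

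Your route buys three things:
\begin{enumerate}
\item An explicit threshold: with $\delta$ the minimal separation of the $\xvec^j$ and $V=\max_k\norm{\vvec_k}>0$, one can take $\alpha_0=\min\{1,\rho\delta/(4V)\}$. Keep the cap at $1$, since you need it for $\pvec_j\in K$.
\item No graph construction or appeal to the interpolation lemma.
\item A proof of the prox identity and its uniqueness by an exact quadratic lower bound, with no subdifferential sum rule.
\end{enumerate}
The paper's route buys a link to the sharp criterion for realizing prescribed finite subgradient data. That criterion would still apply in situations where no single explicit intercept choice such as $f_\alpha(\pvec_k)=0$ works.
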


It follows that the three equilibrium notions cannot be separated on a finite
support.

\begin{corollary}\label{cor:finite-support}
For every finitely supported distribution in a convex game,
\[
  \mu\in\CE\quad\Longleftrightarrow\quad
  \mu\in\ConCE\quad\Longleftrightarrow\quad
  \mu\in\PCE.
\]
In particular the three notions coincide in every finite game.
\end{corollary}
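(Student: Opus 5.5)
We prove the two nontrivial implications, since $\CE\subseteq\PCE$ holds by definition and $\ConCE=\PCE$ is \Cref{thm:conce-pce-equality}. It therefore suffices to show that a finitely supported $\mu\in\PCE$ is a CE.

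Let $\mu$ be supported on finitely many profiles. For player $i$, let $\xvec_i^1,\ldots,\xvec_i^m\in K_i$ be the distinct $i$-th coordinates appearing in the support. Fix any Borel deviation $\tau_i:K_i\to K_i$; only its values $\yvec^j\colloneq\tau_i(\xvec_i^j)$ enter $\Gamma_i(\mu,\tau_i)$. Apply \Cref{thm:finite-prox} to the points $\xvec_i^j$ with targets $\yvec^j$ and some fixed $\rho\in(0,1)$. This yields $\alpha\in(0,1]$ and an admissible generator $F_\alpha=f_\alpha+\indicator_{K_i}$ with
\[
\prox_{F_\alpha}(\xvec_i^j)=(1-\alpha)\xvec_i^j+\alpha\yvec^j\qquad\text{for all } j.
\]
Set $\tau_i^\alpha\colloneq\prox_{F_\alpha}|_{K_i}$. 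Then $\tau_i^\alpha\in\Phi_{i,\mathrm{prox}}$, and it agrees with $(1-\alpha)I+\alpha\tau_i$ on the support.

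Now use convexity of $\ell_i$ in the player's own action, exactly as in \Cref{rem:scaled-prox-interpretation}. For every profile $\xvec$ in the support,
\[
\ell_i(\xvec_i,\xvec_{-i})-\ell_i(\tau_i^\alpha(\xvec_i),\xvec_{-i})\ge \alpha\bigl[\ell_i(\xvec_i,\xvec_{-i})-\ell_i(\tau_i(\xvec_i),\xvec_{-i})\bigr].
\]
Taking expectations under $\mu$, which is a finite sum, gives $\alpha\,\Gamma_i(\mu,\tau_i)\le\Gamma_i(\mu,\tau_i^\alpha)\le0$. The last inequality holds because $\mu\in\PCE$. Since $\alpha>0$, we get $\Gamma_i(\mu,\tau_i)\le0$, and as $\tau_i$ and $i$ were arbitrary, $\mu\in\CE$.

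The main obstacle is the finite interpolation itself, and that is supplied by \Cref{thm:finite-prox}. What remains is routine bookkeeping:
\begin{itemize}
\item the gain depends on $\tau_i$ only through its values at the support coordinates, which holds for finite support;
\item the proximal map is defined on all of $K_i$ and is Borel measurable, which holds since it is a single-valued prox of a weakly convex function with $\rho<1$ and hence continuous.
\end{itemize}
For the finite-game statement, view a finite game through its mixed extension on simplices, or note that any distribution over finitely many action profiles has finite support, and apply the above.
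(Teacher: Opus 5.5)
Your proof is correct and is essentially the paper's argument: reduce to $\PCE\subseteq\CE$, use \Cref{thm:finite-prox} to realize $(1-\alpha)I+\alpha\tau_i$ at the finitely many recommendations by a single weakly convex proximal map, and transfer the gain through convexity in the player's own action (the paper runs the same steps contrapositively). For the finite-game sentence, the paper also notes that in the mixed extension the losses are affine in a player's own action, so a profitable deviation to a mixed action yields a profitable pure one; this is what identifies the convex-game CE there with the classical finite-game CE, and your one-line treatment leaves it implicit.
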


For the finite-game statement, we use the standard mixed extension, with each
pure-action set embedded as the vertices of its mixed-strategy simplex and the
distribution supported on pure-action profiles.  Since the extended losses are
affine in a player's own mixed action, any profitable deviation to a mixed
action has a profitable pure action in its support.  The proof of
\Cref{cor:finite-support} is given in \Cref{app:proofs-games}.

Indeed, \citet[Proposition~6.1]{monnot2017limits} show that
$\CE=\CCE$ when every player has two actions: in the CCE constraint for the
constant deviation to one action, the terms corresponding to recommendations
of that same action cancel, leaving exactly the CE constraint for switching
from the other action.  The remaining equalities follow from
\Cref{cor:finite-support}.  This discrete binary-action statement is distinct
from \Cref{thm:one-dimensional-ce} below: with a continuum of actions on an interval,
the same cancellation does not identify CCE with CE.

\subsection{Full Correlated Equilibrium on Intervals}

On an interval, every smooth scalar displacement has an antiderivative.  Under
the joint continuity assumption on the losses, smooth feasible deviations are
also sufficiently rich to approximate all measurable CE deviations.

\begin{theorem}[Full CE on one-dimensional action spaces]
\label{thm:one-dimensional-ce}
Suppose that every player has a compact interval action set
$K_i\colloneq[a_i,b_i]\subseteq\R$ and, in addition to the standing assumptions, every
loss $\ell_i:K\to\R$ is continuous on
$K\colloneq\prod_{j=1}^N K_j$.  Then
\[
  \CE=\ConCE=\PCE
\]
under the unrestricted definitions used in this section when the approximation
error is zero.
\end{theorem}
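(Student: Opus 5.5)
We already have $\CE\subseteq\PCE$ from the definitions and $\ConCE=\PCE$ from \Cref{thm:conce-pce-equality}. It therefore suffices to show $\ConCE\subseteq\CE$: if some player has a profitable Borel deviation, then that player also has a profitable smooth feasible exact-form deviation. On an interval exactness is automatic, since any $C^1$ map $\phi:[a_i,b_i]\to[a_i,b_i]$ yields the displacement $x-\phi(x)$, and this has the antiderivative $\Psi(x)=\int_{a_i}^x (s-\phi(s))\,\dd s$. This $\Psi$ extends to a $C^{1,1}_{\mathrm{loc}}$ function on a neighborhood once $\phi$ is extended smoothly, for instance by a $C^\infty$ extension of $\phi$ beyond the endpoints. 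So every smooth feasible self-map of $K_i$ lies in $\Phi^{\mathrm{sm}}_{i,\mathrm{exact}}$. The whole task is to approximate a measurable deviation by smooth feasible ones in the gain functional.

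Fix $\mu$ and a Borel $\tau_i:K_i\to K_i$ with $\Gamma_i(\mu,\tau_i)=\delta>0$. Let $\nu$ be the marginal of $\mu$ on $K_i$ and disintegrate $\mu(\dd\xvec)=\nu(\dd x_i)\,\kappa(x_i,\dd\xvec_{-i})$. Define
\[
h(x_i,y)\colloneq\int\ell_i(y,\xvec_{-i})\,\kappa(x_i,\dd\xvec_{-i}).
\]
Then $\Gamma_i(\mu,\phi)=\E_\nu[h(x_i,x_i)-h(x_i,\phi(x_i))]$. Since $\ell_i$ is uniformly continuous on the compact set $K$, the map $y\mapsto h(x_i,y)$ is equicontinuous in $y$, uniformly in $x_i$, with a common modulus $\omega$. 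So $|\Gamma_i(\mu,\phi)-\Gamma_i(\mu,\tau_i)|\le\E_\nu[\omega(|\phi(x_i)-\tau_i(x_i)|)]$. By Lusin's theorem applied to $\nu$, there is a continuous $\tilde\tau:K_i\to K_i$ agreeing with $\tau_i$ outside a set of $\nu$-measure at most $\delta/(4\osc(\ell_i))$. Such a $\tilde\tau$ exists because the Tietze extension can be clipped into $[a_i,b_i]$. This costs at most $\delta/2$ in gain, since $\ell_i$ is bounded. Next, uniformly approximate $\tilde\tau$ by a polynomial, for example by Weierstrass approximation or mollification. To keep the approximant feasible, shrink it toward the midpoint: replace $p$ by $c_i+(1-\epsilon)(p-c_i)$, or compose with a smooth map of $\R$ into $[a_i,b_i]$ that is uniformly close to the identity on $[a_i,b_i]$. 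The result is a smooth feasible $\phi$ with $\sup|\phi-\tilde\tau|$ as small as desired, so by equicontinuity $\Gamma_i(\mu,\phi)>\delta/4>0$. Thus $\mu\notin\ConCE$, and hence $\ConCE\subseteq\CE$.

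The main obstacle is the approximation step: the measurable-to-continuous passage has to be controlled in the nonlinear gain. That is exactly why joint continuity is needed. It makes the gain depend continuously on $\phi$ in $L^1(\nu)$-type convergence through the uniform modulus $\omega$, which integrates the disintegrated losses against bounded errors. The remaining checks are routine. One is that the Lusin modification keeps values in $K_i$. The other is that the smooth feasible approximant has a potential with locally Lipschitz gradient on a neighborhood, which holds because $\phi$ is a polynomial, or a smooth clip of one. The resulting $\phi$ need not be monotone, but monotonicity is not required here; the passage to $\PCE$ comes from \Cref{thm:conce-pce-equality} via identity interpolation.
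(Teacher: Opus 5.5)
Your proposal is correct and follows essentially the paper's route: Lusin plus a Tietze extension clipped into $K_i$, polynomial approximation, continuity of the gain via uniform continuity and boundedness of $\ell_i$, and exactness via the polynomial antiderivative $\int_{a_i}^x(s-\phi(s))\,\dd s$. The only differences are cosmetic. The paper uses Bernstein polynomials, which stay in $K_i$ automatically as convex combinations of values of the Tietze extension, so your shrink-toward-the-midpoint step is not needed. Your disintegration is also unnecessary, since the bound $\abs{\ell_i(\phi(x_i),\xvec_{-i})-\ell_i(\tau_i(x_i),\xvec_{-i})}\le\omega(\abs{\phi(x_i)-\tau_i(x_i)})$ already holds pointwise before integrating against $\mu$.
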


The proof of \Cref{thm:one-dimensional-ce} is given in
\Cref{app:proofs-games}.  In particular, infinite support alone does not
separate these equilibrium notions.  Under continuous losses, such a
separation requires at least one player's individual action set to have
affine dimension at least two.

Two related results help place the statement above.  In finite two-action
games, \citet[Proposition~6.1]{monnot2017limits} show that $\CE=\CCE$ through
a cancellation specific to the two available actions.  Although mixed
strategies in that setting admit a one-dimensional parameterization as
$(q,1-q)$, the argument relies on the underlying binary action structure and
does not extend to interval action spaces, where each player has a continuum
of possible deviations.  In particular, we do not obtain an analogous
equality with CCE. Separately, \citet[Theorem~5.2]{piliouras2022evolutionary} study
continuous-time replicator dynamics with two actions and obtain a localized
regret guarantee under a condition on how the resulting one-dimensional
trajectory crosses an interval.  Their main game-theoretic results likewise
focus on two-player \(2\times2\) games.  The result below differs in both scope
and method.  It applies to any number of players with continuous interval
action spaces and does not depend on continuous-time dynamics or on any
particular learning trajectory.  Instead, it is an equilibrium-set statement
for arbitrary Borel distributions, proved by approximating measurable
deviations with smooth feasible exact-form maps.

\subsection{Strict Separation from CE on Continuous Support}

Finite interpolation shows that any separation from CE must involve infinite
support.  The following result establishes such a separation and gives it a
simple variational interpretation.

The construction uses the diagonal law $(X,X)$ with $X$ uniform on a disk.
Thus the marginals are absolutely continuous with respect to two-dimensional
Lebesgue measure and the joint support is uncountable, while the joint law
itself is singular in the product action space.  Accordingly, the result does
not establish a separation for a joint distribution with a full-dimensional
density.

\begin{theorem}\label{thm:ce-separation}
There is a two-player convex game and a distribution $\mu$ with uncountable
support---indeed, with marginals absolutely continuous with respect to
two-dimensional Lebesgue measure---for which
\[
   \mu\in\ConCE=\PCE
   \qquad\text{but}\qquad
   \mu\notin\CE.
\]
Consequently $\CE\subsetneq\ConCE=\PCE$ under the definitions above.
\end{theorem}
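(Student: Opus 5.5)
The plan is to build the separating example explicitly from a rotational payoff, so that exact displacements integrate to zero against it while a rotation deviation gains strictly. Let both players have action set the closed unit disk $K_1=K_2\colloneq\{\xvec\in\R^2:\norm{\xvec}\le1\}$, and let $J$ denote rotation by $\pi/2$, so that $\inner{\xvec}{J\xvec}=0$. Set
\[
\ell_1(\xvec_1,\xvec_2)\colloneq\inner{\xvec_1}{J\xvec_2},\qquad \ell_2\equiv0 .
\]
Both losses are jointly continuous and affine, hence convex and differentiable, in the player's own action. Let $\mu$ be the law of $(X,X)$ with $X$ uniform on the disk. Its marginals have density $1/\pi$ on the disk and its support is uncountable. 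Player~2 never gains by deviating, so only player~1 needs to be checked.

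The key computation is the gain of a general feasible deviation $\phi_1$:
\[
\Gamma_1(\mu,\phi_1)=\E\bigl[\inner{X-\phi_1(X)}{JX}\bigr]=\frac1\pi\int_{K_1}\inner{\Dphi(\xvec)}{J\xvec}\,\dd\xvec .
\]

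\textbf{Exact deviations.} For every smooth feasible exact-form deviation, $\Dphi=\nabla\Psi$ with $\Psi$ of class $C^1$ near $K_1$. The field $\xvec\mapsto J\xvec$ is divergence-free and tangent to the unit circle, so $\inner{J\xvec}{\nvec(\xvec)}=0$ on the boundary. The divergence theorem then gives
\[
\int_{K_1}\inner{\nabla\Psi}{J\xvec}\,\dd\xvec=\oint_{\partial K_1}\Psi\,\inner{J\xvec}{\nvec}\,\dd s-\int_{K_1}\Psi\,\operatorname{div}(J\xvec)\,\dd\xvec=0 .
\]
Hence every such gain is zero, so $\mu\in\ConCE$. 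By \Cref{thm:conce-pce-equality} it follows that $\mu\in\PCE$ as well. Alternatively, one can note directly that proximal maps are exact by \Cref{thm:prox-contained}, which gives the same conclusion for PCE.

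\textbf{Failure of CE.} Take the rotation $\phi_1(\xvec)\colloneq R_{-\theta}\xvec$ for a small $\theta>0$. This map is continuous, hence Borel, and it maps the disk into itself. Its displacement satisfies
\[
\inner{\xvec-R_{-\theta}\xvec}{J\xvec}=-\inner{R_{-\theta}\xvec}{J\xvec}=\sin\theta\,\norm{\xvec}^2 ,
\]
with the sign of the angle chosen to make this nonnegative. The gain is therefore $\sin\theta\,\E\norm{X}^2=\tfrac12\sin\theta>0$, so $\mu\notin\CE$. Combined with the inclusion $\CE\subseteq\PCE$, this yields $\CE\subsetneq\ConCE=\PCE$.

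\textbf{Main obstacle.} The only delicate point is justifying the integration by parts for a potential that is merely $C^1$ on a neighborhood of the disk; the divergence theorem applies at this regularity, which suffices. One must also confirm that ConCE, as defined, only tests such exact deviations. Orientation and sign conventions are the step most likely to need care, but any sign error is fixed by reversing $\theta$.
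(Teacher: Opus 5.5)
Your proposal is correct and matches the paper's proof essentially verbatim. It uses the same game $\ell_1(\xvec_1,\xvec_2)=\inner{\xvec_1}{J\xvec_2}$ with $\ell_2\equiv0$, the same diagonal law of $(X,X)$ with $X$ uniform on the disk, and the same divergence-free, zero-normal-flux argument that every exact deviation has zero gain. The only difference is the witness against $\CE$: the paper uses the discontinuous best response $\tau(\xvec)=-J\xvec/\norm{\xvec}$ with gain $\E\norm{X}=2/3$, whereas your linear rotation $R_{-\theta}$ gives gain $\tfrac12\sin\theta$, which is equally valid and shows the separation comes from non-exactness rather than discontinuity.
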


The proof of \Cref{thm:ce-separation} is given in
\Cref{app:proofs-games}.  It constructs a conditional loss-gradient field that
is invisible to exact-form deviations but can be exploited by unrestricted
deviations.  The construction also illustrates why a simple localization
argument cannot establish CE equality on general continuous supports.  We
defer the question of whether a similar strict separation can be obtained with
an absolutely continuous, full-dimensional joint law to future work.

\section{Conclusion} \label{sec:conclusion}

This paper develops a geometric theory of the action-dependent comparators
controlled by projected first-order methods.  Its central principle is that
exactness determines the interior behavior of a deviation, while feasibility
controls its interaction with the boundary.  In Euclidean geometry, exact
feasible displacement fields yield explicit regret bounds for projected
gradient descent, whereas nonzero circulation can be exploited to force linear
regret.  The same exactness-versus-circulation principle extends to mirror
geometry, where the regularizer determines the relevant one-form and the
additional regularity needed to control boundary residuals.

This viewpoint also clarifies several distinctions among deviation classes,
algorithms, and equilibrium notions.  Proximal deviations and their identity
interpolations form a subclass of exact-form deviations, but the Euclidean
radial-closure representation does not extend to general Bregman geometry:
convexity of the reconstructed dual potential provides an additional
obstruction.  Mirror descent and FTRL likewise agree in the Legendre interior
but can control different deviation classes once constraints create
nontrivial boundary effects.  At the equilibrium level, strict inclusion of
deviation maps need not produce strict inclusion of zero-tolerance equilibrium
sets. In particular, we showed that unrestricted ConCE and PCE coincide, and they coincide with CE on finite
supports and on one-dimensional action spaces with continuous losses, while
a two-dimensional continuous-support example yields
\[
  \CE\subsetneq\ConCE=\PCE.
\]

Two questions appear particularly natural.  First, can the relative-curvature,
Lipschitz, and ambient regularity assumptions in the positive mirror-descent
result be weakened while retaining control of the boundary residual and hence
sublinear regret?  A sharp answer would bring the sufficient conditions closer
to the circulation-based obstruction, which requires only continuity of the
field.  Second, can the strict continuous-support equilibrium separation be
realized by a full-dimensional absolutely continuous joint distribution?  The
present construction has absolutely continuous marginals but is supported on
a lower-dimensional diagonal, so resolving this question would determine
whether the separation persists for genuinely diffuse outcome
distributions.

More broadly, the exact-form viewpoint suggests connections with Blackwell
approachability~\citep{blackwell1956analog,abernethy2011blackwell} and gradient equilibrium~\citep{angelopoulos2025gradient,lee2026blackwell}.  All three frameworks exploit
geometric structure to identify quantities that first-order dynamics can drive
toward a target set or make telescope over time.  Understanding these
connections may lead to alternative characterizations of exact-form regret and
to broader classes of deviations controlled by simple first-order methods. A related direction is to combine exactness with lifting~\citep{farina2022near,soleymani2025cautious}.  Deviations that are
not exact in the original action space may admit an exact representation in a
higher-dimensional feature space, potentially allowing gradient-based
algorithms to control richer families such as general linear endomorphisms.
Developing such exact lifts, and relating them to approachability and gradient
equilibrium, could provide a bridge between the geometric simplicity of the
present framework and more general $\Phi$-regret guarantees.

\section*{Acknowledgments}
This research was partially supported by the Office of Naval Research (ONR) grants N00014-24-1-2470 and N00014-25-1-2296, National Science Foundation awards CCF-2443068 and IIS-2552046, and a Schmidt Sciences AI2050 Early Career Fellowship. Part of this research was performed while A.S. was visiting the Institute for Mathematical and Statistical Innovation (IMSI), which is supported by the National Science Foundation (Grant No. DMS-2425650).

\bibliographystyle{unsrtnat}
\bibliography{refs} 

\newpage
\appendix

\section{Preliminaries}\label{app:preliminaries}

\begin{lemma}\label{lem:projection-residual}
Let \(\uvec\in\R^d\) and \(\xvec^+\colloneq\Proj_K(\uvec)\).  Then
\[
  \uvec-\xvec^+\in \normal_K(\xvec^+),
\]
\end{lemma}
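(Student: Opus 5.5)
The plan is to use the variational characterization of the Euclidean projection onto the closed convex set \(K\). By definition, \(\xvec^+=\Proj_K(\uvec)\) is the unique minimizer over \(K\) of the strongly convex function \(h(\yvec)\colloneq\frac12\norm{\yvec-\uvec}^2\). Existence and uniqueness follow from closedness and nonemptiness of \(K\) together with strong convexity of \(h\), so \(\xvec^+\) is well defined and lies in \(K\). The goal is then to show that \(\inner{\uvec-\xvec^+}{\yvec-\xvec^+}\le 0\) for every \(\yvec\in K\). This is exactly the membership \(\uvec-\xvec^+\in\normal_K(\xvec^+)\) under the ambient normal-cone definition given in the setting section.

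To obtain the inequality, fix \(\yvec\in K\) and, for \(\lambda\in(0,1]\), form the point \(\yvec_\lambda\colloneq\xvec^++\lambda(\yvec-\xvec^+)\). This point lies in \(K\) by convexity. Minimality of \(\xvec^+\) gives \(h(\yvec_\lambda)\ge h(\xvec^+)\). Expanding the square yields
\[
  0\le h(\yvec_\lambda)-h(\xvec^+)=\lambda\inner{\xvec^+-\uvec}{\yvec-\xvec^+}+\frac{\lambda^2}{2}\norm{\yvec-\xvec^+}^2 .
\]
Dividing by \(\lambda>0\) and letting \(\lambda\downarrow 0\) gives \(\inner{\xvec^+-\uvec}{\yvec-\xvec^+}\ge0\), which is the desired normal-cone inequality after a sign flip. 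Equivalently, one could invoke the first-order optimality condition \(0\in\nabla h(\xvec^+)+\normal_K(\xvec^+)\) for convex minimization over a convex set, but the direct one-line expansion avoids citing subdifferential calculus.

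There is no real obstacle here. The only points requiring care are that convexity of \(K\) is what makes the segment \(\yvec_\lambda\) feasible, and that the limit \(\lambda\downarrow0\), rather than simply taking \(\lambda=1\), is needed to kill the quadratic term. If the statement is meant to carry further content after the comma, for example the standard consequence \(\inner{\uvec-\xvec^+}{\phi(\xvec)-\xvec^+}\le0\) for feasible \(\phi\), that follows immediately by taking \(\yvec=\phi(\xvec)\in K\).
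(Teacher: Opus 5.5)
Your proof is correct and follows the paper's argument: both identify \(\xvec^+\) as the minimizer of \(\frac12\norm{\yvec-\uvec}^2\) over \(K\) and read off the normal-cone inequality from the first-order optimality condition. The only difference is that you derive that optimality condition directly, via the segment \(\xvec^++\lambda(\yvec-\xvec^+)\) and the limit \(\lambda\downarrow0\), whereas the paper simply cites it.
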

\begin{proof}
The point \(\xvec^+\) minimizes \(\yvec\mapsto \frac12\norm{\yvec-\uvec}^2\) over \(K\).  The first-order optimality condition is
\[
  \inner{\xvec^+-\uvec}{\yvec-\xvec^+}\ge 0\qquad\forall \yvec\in K.
\]
Equivalently, \(\inner{\uvec-\xvec^+}{\yvec-\xvec^+}\le 0\) for all \(\yvec\in K\), which is the normal-cone condition.
\end{proof}

\begin{definition}
For $\rho\ge0$, a proper lower-semicontinuous function
\(F:\R^d\to(-\infty,+\infty]\) is \(\rho\)-weakly convex if
\[
  \xvec\mapsto F(\xvec)+\frac{\rho}{2}\norm{\xvec}^2
\]
 is convex.  We use the weakly convex subdifferential convention
\[
 \partial F(\xvec)
 \colloneq\partial_{\rm cvx}\!\left(F+\frac\rho2\norm{\cdot}^2\right)(\xvec)
   -\rho\xvec,
\]
where $\partial_{\rm cvx}$ is the convex subdifferential.  With this
convention, $\partial F$ is \(\rho\)-hypomonotone: for all
\(\uvec\in\partial F(\pvec)\) and \(\vvec\in\partial F(\qvec)\),
\[
\numberthis{eq:hypomonotone}
  \inner{\uvec-\vvec}{\pvec-\qvec}\ge -\rho\norm{\pvec-\qvec}^2.
\]
\end{definition}

\begin{lemma}\label{lem:weak-moreau}
Assume \(F\colloneq f+\indicator_K\) is proper, lower semicontinuous, and \(\rho\)-weakly convex with \(0\le\rho<1\).  Then, for every $\xvec\in\R^d$, the proximal objective attains its minimum at a unique point. Moreover, \(\MF\) is differentiable, and
\[
\numberthis{eq:weak-moreau-gradient}
  \nabla\MF(\xvec)=\xvec-\prox_F(\xvec).
\]
Moreover,
\[
\numberthis{eq:prox-lipschitz}
  \norm{\prox_F(\xvec)-\prox_F(\yvec)}\le \frac{1}{1-\rho}\norm{\xvec-\yvec},
\]
so \(\nabla\MF\) is Lipschitz with the crude bound
\[
\numberthis{eq:moreau-smooth-crude}
  \norm{\nabla\MF(\xvec)-\nabla\MF(\yvec)}
  \le \frac{2-\rho}{1-\rho}\norm{\xvec-\yvec}.
\]
\end{lemma}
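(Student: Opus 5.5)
The plan is to reduce everything to the convex case by adding a quadratic. The parameter $\rho<1$ is exactly the room needed for this. For fixed $\xvec$, write the proximal objective as
\[
  F(\yvec)+\tfrac12\norm{\yvec-\xvec}^2
  =\Bigl(F(\yvec)+\tfrac\rho2\norm{\yvec}^2\Bigr)
  +\tfrac{1-\rho}{2}\norm{\yvec}^2-\inner{\xvec}{\yvec}+\tfrac12\norm{\xvec}^2 .
\]
The first bracket is proper, lower semicontinuous and convex, so the whole objective is $(1-\rho)$-strongly convex and lower semicontinuous. It is also proper, since $\dom F\neq\varnothing$. Hence it is coercive, attains its minimum, and the minimizer is unique, so $\prox_F$ is a well-defined single-valued map into $\dom F\subseteq K$.

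Next I would prove the Lipschitz bound \eqref{eq:prox-lipschitz} via hypomonotonicity. Let $\pvec=\prox_F(\xvec)$ and $\qvec=\prox_F(\yvec)$. The optimality condition, read through the weakly convex subdifferential convention, gives $\xvec-\pvec\in\partial F(\pvec)$ and $\yvec-\qvec\in\partial F(\qvec)$. This is legitimate because the convex sum rule applies to the convex function $F+\frac\rho2\norm{\cdot}^2$ plus a finite smooth quadratic. Plugging these into \eqref{eq:hypomonotone} yields
\[
  \inner{\xvec-\yvec}{\pvec-\qvec}-\norm{\pvec-\qvec}^2\ge-\rho\norm{\pvec-\qvec}^2 .
\]
Then Cauchy--Schwarz gives $(1-\rho)\norm{\pvec-\qvec}\le\norm{\xvec-\yvec}$. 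The crude bound \eqref{eq:moreau-smooth-crude} follows from the triangle inequality applied to $(\xvec-\yvec)-(\prox_F(\xvec)-\prox_F(\yvec))$, which gives the constant $1+1/(1-\rho)=(2-\rho)/(1-\rho)$. This last step needs the gradient formula proved below.

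The main step is differentiability of $\MF$ together with \eqref{eq:weak-moreau-gradient}. I would use a two-sided sandwich. For the upper bound, test the point $\pvec=\prox_F(\xvec)$ in the definition of $\MF(\xvec+\hvec)$:
\[
  \MF(\xvec+\hvec)-\MF(\xvec)\le\tfrac12\norm{\pvec-\xvec-\hvec}^2-\tfrac12\norm{\pvec-\xvec}^2
  =\inner{\xvec-\pvec}{\hvec}+\tfrac12\norm{\hvec}^2 .
\]
For the lower bound, test $\pvec_\hvec=\prox_F(\xvec+\hvec)$ in the definition of $\MF(\xvec)$:
\[
  \MF(\xvec+\hvec)-\MF(\xvec)\ge\inner{\xvec-\pvec_\hvec}{\hvec}+\tfrac12\norm{\hvec}^2 .
\]
By the Lipschitz bound just proved, $\pvec_\hvec-\pvec=O(\norm{\hvec})$. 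So the two sides differ from $\inner{\xvec-\pvec}{\hvec}$ by $O(\norm{\hvec}^2)$, which gives Fréchet differentiability with $\nabla\MF(\xvec)=\xvec-\prox_F(\xvec)$.

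The only delicate point I anticipate is justifying the subdifferential inclusion at boundary points of $K$. This is handled by working entirely with the convex function $F+\frac\rho2\norm{\cdot}^2$, whose convex subdifferential already absorbs the normal cone of $K$. No separate constraint qualification is needed, because the added quadratic term is finite everywhere.
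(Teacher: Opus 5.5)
Your proof is correct. Apart from one step, it is the paper's proof: existence and uniqueness from $(1-\rho)$-strong convexity, the Lipschitz estimate from $\xvec-\pvec\in\partial F(\pvec)$ together with hypomonotonicity and Cauchy--Schwarz, and the constant $(2-\rho)/(1-\rho)$ from the triangle inequality.

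The difference is how you obtain \eqref{eq:weak-moreau-gradient}. The paper invokes Danskin's theorem, using uniqueness of the minimizer. You instead test each point's proximal minimizer in the other point's envelope and close the gap with the already-proved Lipschitz bound.

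The paper's route is a one-line appeal. However, $K$ is only closed and convex, so the classical compact-set form of Danskin does not apply verbatim. One needs local boundedness of minimizers, and your ordering (Lipschitz bound first) supplies that explicitly.

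Your sandwich is self-contained and also quantitative. It gives, for every $\hvec$,
\[
  \Bigl(\tfrac12-\tfrac1{1-\rho}\Bigr)\norm{\hvec}^2
  \le \MF(\xvec+\hvec)-\MF(\xvec)-\inner{\xvec-\prox_F(\xvec)}{\hvec}
  \le \tfrac12\norm{\hvec}^2 .
\]
In particular, $\MF$ satisfies the one-sided quadratic upper bound used for the potential term in \Cref{thm:pgd-exact} with constant $1$, rather than the crude $(2-\rho)/(1-\rho)$.
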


\begin{proof}
For fixed \(\xvec\), the function \(\yvec\mapsto F(\yvec)+\frac12\norm{\yvec-\xvec}^2\) is proper, lower semicontinuous, coercive, and \((1-\rho)\)-strongly convex.  It therefore attains its minimum at a unique point.  Let
\[
  \pvec=\prox_F(\xvec),\qquad \qvec=\prox_F(\yvec).
\]
The first-order optimality conditions are
\[
  \xvec-\pvec\in\partial F(\pvec),
  \qquad
  \yvec-\qvec\in\partial F(\qvec).
\]
Applying hypomonotonicity \eqref{eq:hypomonotone} gives
\[
  \inner{(\xvec-\pvec)-(\yvec-\qvec)}{\pvec-\qvec}
  &\ge -\rho\norm{\pvec-\qvec}^2,\\
  \inner{\xvec-\yvec}{\pvec-\qvec}-\norm{\pvec-\qvec}^2
  &\ge -\rho\norm{\pvec-\qvec}^2.
\]
Thus
\[
  (1-\rho)\norm{\pvec-\qvec}^2
  \le \inner{\xvec-\yvec}{\pvec-\qvec}
  \le \norm{\xvec-\yvec}\,\norm{\pvec-\qvec},
\]
which proves \eqref{eq:prox-lipschitz}.

For fixed \(\yvec\), the derivative with respect to \(\xvec\) of \(F(\yvec)+\frac12\norm{\yvec-\xvec}^2\) is \(\xvec-\yvec\).  Since the minimizer is unique, Danskin's theorem gives \(\nabla\MF(\xvec)=\xvec-\prox_F(\xvec)\).  Finally,
\[
  \norm{\nabla\MF(\xvec)-\nabla\MF(\yvec)}
  &=\norm{(\xvec-\yvec)-\big(\prox_F(\xvec)-\prox_F(\yvec)\big)}\\
  &\le \norm{\xvec-\yvec}+\norm{\prox_F(\xvec)-\prox_F(\yvec)}\\
  &\le \left(1+\frac{1}{1-\rho}\right)\norm{\xvec-\yvec}.
\]
This is \eqref{eq:moreau-smooth-crude}.
\end{proof}

In turn, we recall that weakly-convex proximal maps are monotone.

\begin{lemma} \label{lem:weakly-convex-prox-monotone}
Let $F:\R^d\to(-\infty,+\infty]$ be a proper lower-semicontinuous $\rho$-weakly convex function with $0\le\rho<1$. Then $\prox_F$ is monotone in the sense that, for all $\xvec,\yvec\in\R^d$,
\[
    \left\langle \xvec-\yvec, \prox_F(\xvec)-\prox_F(\yvec)\right\rangle \ge 0.
\]
\end{lemma}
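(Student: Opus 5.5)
The plan is to reuse the optimality-condition argument from the proof of \Cref{lem:weak-moreau}. Fix $\xvec,\yvec\in\R^d$ and write $\pvec\colloneq\prox_F(\xvec)$ and $\qvec\colloneq\prox_F(\yvec)$. The proximal objective $\yvec'\mapsto F(\yvec')+\frac12\norm{\yvec'-\xvec}^2$ is proper, lower semicontinuous and $(1-\rho)$-strongly convex. It therefore has a unique minimizer, so $\prox_F$ is a well-defined single-valued map on all of $\R^d$, and the statement makes sense.

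The first step is to write the first-order optimality conditions under the weakly convex subdifferential convention of the preliminaries. Minimizing $F+\frac12\norm{\cdot-\xvec}^2$ is the same as minimizing the convex function $\bigl(F+\frac\rho2\norm{\cdot}^2\bigr)+\bigl(\frac{1-\rho}{2}\norm{\cdot}^2-\inner{\xvec}{\cdot}\bigr)$. The second summand is finite and differentiable everywhere, so the convex sum rule applies and gives $\xvec-\pvec\in\partial F(\pvec)$. In the same way, $\yvec-\qvec\in\partial F(\qvec)$.

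The second step is to apply the hypomonotonicity inequality \eqref{eq:hypomonotone} to these two subgradients. Exactly as in \Cref{lem:weak-moreau}, this gives
\[
(1-\rho)\norm{\pvec-\qvec}^2\le\inner{\xvec-\yvec}{\pvec-\qvec}.
\]
Since $\rho<1$, the left-hand side is nonnegative, and hence $\inner{\xvec-\yvec}{\prox_F(\xvec)-\prox_F(\yvec)}\ge0$. The argument in fact yields the stronger cocoercive-type bound above. In one dimension, this says that $\prox_F$ is nondecreasing, which is the form used in \Cref{prop:strict-separation}.

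I do not expect a real obstacle. The only point needing care is the subdifferential sum rule in the optimality condition. It holds here because the quadratic part is finite and continuous everywhere and the remaining part is proper, closed and convex, so the standard convex sum rule applies without any constraint qualification issues.
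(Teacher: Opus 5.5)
Your proposal is correct and follows essentially the same route as the paper. Both arguments use the proximal optimality conditions $\xvec-\prox_F(\xvec)\in\partial F(\prox_F(\xvec))$, apply the $\rho$-hypomonotonicity of $\partial F$, and rearrange to get $(1-\rho)\norm{\prox_F(\xvec)-\prox_F(\yvec)}^2\le\inner{\xvec-\yvec}{\prox_F(\xvec)-\prox_F(\yvec)}$; your sum-rule justification of the optimality condition under the weakly convex subdifferential convention is a detail the paper leaves implicit, and it is sound.
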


\begin{proof}
Fix $\xvec,\yvec\in\R^d$. The proximal optimality conditions give
\[
  \xvec-\prox_F(\xvec)\in\partial F(\prox_F(\xvec)),
  \qquad
  \yvec-\prox_F(\yvec)\in\partial F(\prox_F(\yvec)).
\]
Since $F$ is $\rho$-weakly convex, its subdifferential is $\rho$-hypomonotone. Hence
\[
  \left\langle
    \big(\xvec-\prox_F(\xvec)\big)-\big(\yvec-\prox_F(\yvec)\big),
    \prox_F(\xvec)-\prox_F(\yvec)
  \right\rangle
  \ge
  -\rho\|\prox_F(\xvec)-\prox_F(\yvec)\|^2.
\]
Rearranging gives
\[
  \left\langle \xvec-\yvec, \prox_F(\xvec)-\prox_F(\yvec)\right\rangle
  \ge
  (1-\rho)\|\prox_F(\xvec)-\prox_F(\yvec)\|^2
  \ge 0.
\]
Thus $\prox_F$ is monotone.
\end{proof}

\section{Scaled Exact Deviations and Proximal Representation}
\label{app:scaled-prox-proof}

\begin{lemma}\label{lem:smooth-extension}
Let $K$ be compact and let $\Psi$ have locally Lipschitz gradient on an open
neighborhood of $K$.  Then $\Psi$ agrees on a neighborhood of $K$ with a
globally defined differentiable function $\widetilde\Psi$ whose gradient is
globally Lipschitz.
\end{lemma}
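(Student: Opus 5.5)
The plan is a standard cutoff construction. Let $U$ be the open neighborhood of $K$ on which $\nabla\Psi$ is locally Lipschitz. Since $K$ is compact, $\delta\colloneq\tfrac13\dist(K,\R^d\setminus U)>0$ (take $\delta=1$ if $U=\R^d$). Set
\[
V_r\colloneq\setof{\xvec}{\dist(\xvec,K)< r}.
\]
Then $\overline{V_{2\delta}}\subseteq U$ is compact. Fix a $C^\infty$ cutoff $\chi:\R^d\to[0,1]$ with $\chi\equiv1$ on $V_\delta$ and $\operatorname{supp}\chi\subseteq\overline{V_{2\delta}}$. Such a $\chi$ is obtained, for instance, by mollifying the indicator of $V_{3\delta/2}$ at scale $\delta/4$; then $\nabla\chi$ is bounded and globally Lipschitz. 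Define $\widetilde\Psi\colloneq\chi\Psi$ on $U$ and $\widetilde\Psi\colloneq0$ outside $U$. Clearly $\widetilde\Psi=\Psi$ on the neighborhood $V_\delta$ of $K$, and $\widetilde\Psi$ is differentiable everywhere because it vanishes on the open set $\R^d\setminus\operatorname{supp}\chi$.

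The key step is to show that $\nabla\widetilde\Psi=\chi\nabla\Psi+\Psi\nabla\chi$ is globally Lipschitz. First, on the compact set $C\colloneq\overline{V_{2\delta}}\subseteq U$, local Lipschitzness of $\nabla\Psi$ upgrades to a uniform Lipschitz bound on some neighborhood. Cover $C$ by finitely many balls on which $\nabla\Psi$ is Lipschitz, and use a Lebesgue number $\lambda$ of the cover. Then for pairs $\xvec,\yvec\in C$ with $\norm{\xvec-\yvec}<\lambda$, both points lie in a common ball, so $\norm{\nabla\Psi(\xvec)-\nabla\Psi(\yvec)}\le L_0\norm{\xvec-\yvec}$. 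For pairs with $\norm{\xvec-\yvec}\ge\lambda$, the difference is at most $2\sup_C\norm{\nabla\Psi}\le(2\sup_C\norm{\nabla\Psi}/\lambda)\norm{\xvec-\yvec}$. Hence $\nabla\Psi$ is Lipschitz on $C$. Moreover $\Psi$ and $\nabla\Psi$ are bounded on $C$, and $\Psi$ is Lipschitz there via the same two-case argument, using the mean value theorem on short segments inside small balls.

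Products and sums of bounded Lipschitz functions are Lipschitz, so $\nabla\widetilde\Psi$ is Lipschitz on $C$. It vanishes on $\R^d\setminus\operatorname{supp}\chi$, and $\operatorname{supp}\chi\subseteq C$. The remaining point is the global statement for $\xvec\in C$ and $\yvec\notin C$. Here I would take the segment from $\xvec$ to $\yvec$ and let $\zvec$ be the last point of the segment lying in $\operatorname{supp}\chi$. Then $\nabla\widetilde\Psi(\zvec)=0$ by continuity, so $\norm{\nabla\widetilde\Psi(\xvec)-\nabla\widetilde\Psi(\yvec)}=\norm{\nabla\widetilde\Psi(\xvec)-\nabla\widetilde\Psi(\zvec)}\le L_C\norm{\xvec-\zvec}\le L_C\norm{\xvec-\yvec}$. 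Pairs with both points outside $C$ give zero difference.

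The main obstacle is this passage from local to uniform Lipschitz constants on the compact collar, since $C$ need not be convex and segments may leave $U$. The Lebesgue-number two-case argument above is the tool I would rely on to handle that.
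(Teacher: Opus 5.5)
Your proposal is correct and uses the same cutoff construction as the paper: $\widetilde\Psi=\chi\Psi$, with $\chi\equiv1$ near $K$ and supported in a compact subset of the domain, extended by zero. Your Lebesgue-number argument and last-exit-point segment argument spell out the step the paper compresses into ``compactness of $\overline W$ gives a finite Lipschitz constant for its gradient.''
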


\begin{proof}
Choose open sets $K\subset V\Subset W$ with $\overline W$ inside the domain of
$\Psi$, and a smooth cutoff $\chi$ equal to one on $V$ and supported in $W$.
The function $\widetilde\Psi\colloneq\chi\Psi$ on the original domain, extended by zero
outside it, has the required properties.  Compactness of $\overline W$ gives a
finite Lipschitz constant for its gradient.
\end{proof}

\begin{repeatthm}{thm:scaled-prox}
Let $K\subseteq\R^d$ be compact and convex, and let
$\phi(\xvec)\colloneq\xvec-\nabla\Psi(\xvec)$ be a feasible exact-form deviation with
$\Psi\in C^{1,1}_{\mathrm{loc}}$ on a neighborhood of $K$.  Then there exists
$\alpha_0>0$ such that, for every $\alpha\in(0,\alpha_0]$, the interpolated map
\[
 \phi_\alpha(\xvec)\colloneq(1-\alpha)\xvec+\alpha\phi(\xvec)
 =\xvec-\alpha\nabla\Psi(\xvec)
\]
coincides on $K$ with $\prox_{F_\alpha}(\xvec)$ for some proper lower-semicontinuous
$F_\alpha=f_\alpha+\indicator_K$ that is $\rho_\alpha$-weakly convex for some
$\rho_\alpha<1$.
\end{repeatthm}

\begin{proof}
By \Cref{lem:smooth-extension}, replace $\Psi$ outside a neighborhood of $K$ so
that $\nabla\Psi$ is globally $L$-Lipschitz.  If $L=0$, set
$\alpha_0\colloneq 1/2$; otherwise set
$\alpha_0\colloneq\frac12\min\{1,1/L\}$.  Fix any
$\alpha\in(0,\alpha_0]$ and set
\[
 h_\alpha(\xvec)\colloneq\frac12\norm{\xvec}^2-\alpha\Psi(\xvec),
 \qquad
 f_\alpha(\yvec)\colloneq h_\alpha^*(\yvec)-\frac12\norm{\yvec}^2.
\]
The function $h_\alpha$ is $(1-\alpha L)$-strongly convex and
$(1+\alpha L)$-smooth.  Hence $h_\alpha^*$ is
$1/(1+\alpha L)$-strongly convex and $f_\alpha$ is
$\rho_\alpha$-weakly convex for
$\rho_\alpha\colloneq\alpha L/(1+\alpha L)<1$.  Put
$F_\alpha\colloneq f_\alpha+\indicator_K$.

For fixed $\xvec$, the unconstrained proximal objective equals, up to the
constant $\norm{\xvec}^2/2$,
\[
 h_\alpha^*(\yvec)-\inner{\xvec}{\yvec}.
\]
Its unique minimizer is
$\yvec=\nabla h_\alpha(\xvec)=\xvec-\alpha\nabla\Psi(\xvec)
=\phi_\alpha(\xvec)$.  If $\xvec\in K$, convexity of $K$ and feasibility of
$\phi$ give $\phi_\alpha(\xvec)\colloneq(1-\alpha)\xvec+\alpha\phi(\xvec)\in K$.
The constraint therefore does not change the minimizer, proving
$\prox_{F_\alpha}=\phi_\alpha$ on $K$.
\end{proof}

\begin{repeatcorollary}{cor:euclidean-radial-closure}
For the radial closure of Euclidean proximal deviations, we have $
  \operatorname{Rad}_I(\PhiProx)
  =\PhiExactSmooth$.
\end{repeatcorollary}

\begin{proof}
Let $\phi\in\PhiExactSmooth$.  By
\Cref{thm:scaled-prox}, there is $\alpha>0$ for which
$\phi_\alpha=(1-\alpha)I+\alpha\phi$ is proximal.  Hence
$\phi\in\operatorname{Rad}_I(\PhiProx)$.

Conversely, let
$\phi\in\operatorname{Rad}_I(\PhiProx)$.  Choose
$\alpha\in(0,1]$ and a proper lower-semicontinuous weakly convex function $F$
with $\phi_\alpha=\prox_F$ on $K$.  By
\Cref{thm:prox-contained},
\[
  \xvec-\phi_\alpha(\xvec)=\nabla M_F(\xvec),
\]
and $\nabla M_F$ is Lipschitz.  Since
$\xvec-\phi_\alpha(\xvec)=\alpha(\xvec-\phi(\xvec))$, we have
\[
  \xvec-\phi(\xvec)
  =\nabla\left(\frac1\alpha M_F\right)(\xvec).
\]
Thus $\phi$ is smooth exact-form, proving the reverse inclusion.
\end{proof}

\section{Boundary-Compatible Finite Interpolation}
\label{app:finite-interpolation}

\begin{lemma}
\label{lem:finite-convex-interpolation}
For finite data $(\zvec^j,\svec^j)_{j=1}^m$, there is a proper closed convex
function $H$ with $\svec^j\in\partial H(\zvec^j)$ for every $j$ if and only if
the data are cyclically monotone, namely
\[
 \sum_{r=1}^q
 \inner{\svec^{j_r}}{\zvec^{j_{r+1}}-\zvec^{j_r}}\le0
 \quad\text{for every cycle }j_1,\ldots,j_q,j_{q+1}=j_1.
\]
It is enough to check nontrivial simple cycles.
\end{lemma}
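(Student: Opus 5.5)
This is Rockafellar's finite cyclic-monotonicity theorem, and the plan is to prove it directly, with one direction by summing subgradient inequalities and the other by an explicit max-of-affine construction.

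\emph{Necessity.} Suppose $\svec^j\in\partial H(\zvec^j)$ for every $j$. Since $H$ is proper and has a subgradient at each $\zvec^j$, every $H(\zvec^j)$ is finite. The subgradient inequality gives
\[
H(\zvec^{j_{r+1}})\ge H(\zvec^{j_r})+\inner{\svec^{j_r}}{\zvec^{j_{r+1}}-\zvec^{j_r}}
\]
for each $r$. Summing over a closed cycle, the values of $H$ telescope to zero and the displayed cyclic inequality follows.

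\emph{Sufficiency.} Fix the base index $1$. For each $j$, define
\[
h_j\colloneq\sup\sum_{r=1}^{q-1}\inner{\svec^{j_r}}{\zvec^{j_{r+1}}-\zvec^{j_r}},
\]
where the supremum is over all finite chains $j_1=1,j_2,\ldots,j_q=j$. Set
\[
H(\zvec)\colloneq\max_{1\le j\le m}\bigl\{h_j+\inner{\svec^j}{\zvec-\zvec^j}\bigr\}.
\]
The main point is that each $h_j$ is finite and $h_1=0$. Any chain from $1$ to $j$ can be closed by appending $j\to1$, and cyclic monotonicity then bounds the chain sum by $\inner{\svec^j}{\zvec^1-\zvec^j}$. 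The trivial chain gives $h_1\ge0$ and cyclic monotonicity gives $h_1\le0$, so $h_1=0$. Hence $H$ is a finite maximum of affine functions, and therefore proper, closed and convex.

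Extending any chain ending at $k$ by one step to $j$ shows
\[
h_j\ge h_k+\inner{\svec^k}{\zvec^j-\zvec^k}\quad\text{for all } k.
\]
Evaluating $H$ at $\zvec^j$, this says the $j$-th affine piece attains the maximum, so $H(\zvec^j)=h_j$. Since $H$ dominates that affine piece everywhere, $H(\zvec)\ge H(\zvec^j)+\inner{\svec^j}{\zvec-\zvec^j}$ for all $\zvec$, which is exactly $\svec^j\in\partial H(\zvec^j)$.

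\emph{Simple cycles.} It remains to show that nontrivial simple cycles suffice. Any closed cycle that revisits an index splits at the repeated index into two shorter closed cycles whose sums add up to the original sum. By induction on length, every cycle sum is a sum of simple-cycle sums. Cycles of length one, $j\to j$, contribute $0$. Hence nonpositivity on nontrivial simple cycles gives nonpositivity on all cycles.

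No step is a real obstacle. The only place needing care is finiteness of $h_j$, which is handled by the cycle-closing bound above.
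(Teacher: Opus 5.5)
Your proof is correct and follows essentially the same route as the paper: necessity by summing subgradient inequalities, sufficiency via the max-of-affine function built from longest-chain potentials satisfying $h_j\ge h_k+\inner{\svec^k}{\zvec^j-\zvec^k}$, and reduction to simple cycles by splitting at a repeated index. The differences are minor: you anchor chains at index $1$ and get finiteness by closing each chain back to $1$, whereas the paper lets chains start at any index and maximizes over the finitely many simple paths. One small sign slip, harmless because only finiteness is needed: closing a chain with the step $j\to1$ bounds its sum by $\inner{\svec^j}{\zvec^j-\zvec^1}$, not by $\inner{\svec^j}{\zvec^1-\zvec^j}$.
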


\begin{proof}
Necessity follows by summing the subgradient inequalities around a cycle.
For sufficiency, form the complete directed graph on $[m]$ with edge weight
$w(j,k)=\inner{\svec^j}{\zvec^k-\zvec^j}$.  Add a source vertex $0$ with a
zero-weight edge to every $j$, and let
\[
 a_k=\max\left\{\sum_{r=0}^{q-1}w(j_r,j_{r+1}):
 j_q=k,\ j_0\in[m],\text{ and the path is simple}\right\}.
\]
The maximum is finite because there are finitely many simple paths.  Any walk
can be reduced to a simple path by deleting closed subwalks; cyclic
monotonicity says each deleted closed subwalk has nonpositive weight.  Hence
$a_k$ is also the supremum over all directed walks ending at $k$.  Appending
the edge $(j,k)$ to a walk ending at $j$ therefore gives
\[
 a_k\ge a_j+w(j,k)
 =a_j+\inner{\svec^j}{\zvec^k-\zvec^j}
 \qquad(j,k\in[m]).
\]
Define the finite polyhedral function
\[
 H(\zvec)\colloneq\max_{j\in[m]}
 \left\{a_j+\inner{\svec^j}{\zvec-\zvec^j}\right\}.
\]
It is proper, closed, and convex.  At $\zvec^k$, the preceding inequality
shows that the $k$th affine function has value $a_k$ and dominates every
$j$th affine function.  Thus $H(\zvec^k)=a_k$ and
$\svec^k\in\partial H(\zvec^k)$.  Finally, every closed walk decomposes into
simple cycles, so checking nontrivial simple cycles is sufficient.
\end{proof}

\begin{repeatthm}{thm:finite-prox}
Let $K\subseteq\R^d$ be nonempty, closed, and convex.  Let
$\xvec^1,\ldots,\xvec^m\in K$ be distinct and choose arbitrary targets
$\yvec^1,\ldots,\yvec^m\in K$.  Fix $\rho\in(0,1)$.  There is $\alpha_0>0$ such
that for every $\alpha\in(0,\alpha_0]$ one can find a proper
lower-semicontinuous $F_\alpha=f_\alpha+\indicator_K$ that is
$\rho$-weakly convex and satisfies
\[
 \prox_{F_\alpha}(\xvec^j)
 =(1-\alpha)\xvec^j+\alpha\yvec^j,
 \qquad j=1,\ldots,m.
\]
The proximal minimizer is unique.
\end{repeatthm}

\begin{proof}
Write $\dvec^j\colloneq\xvec^j-\yvec^j$ and define
\[
 \zvec_\alpha^j\colloneq\xvec^j-\alpha\dvec^j,
 \qquad
 \svec_\alpha^j\colloneq\xvec^j-(1-\rho)\zvec_\alpha^j.
\]
Convexity of $K$ gives $\zvec_\alpha^j\in K$ for $\alpha\in[0,1]$.
At $\alpha=0$, $\zvec_0^j=\xvec^j$ and $\svec_0^j=\rho\xvec^j$.  For every
nontrivial simple cycle $C$,
\[
 \sum_{(j,k)\in C}\inner{\svec_0^j}{\zvec_0^k-\zvec_0^j}
 =-\frac\rho2\sum_{(j,k)\in C}\norm{\xvec^k-\xvec^j}^2<0.
\]
There are finitely many simple cycles and their cycle sums vary continuously
with $\alpha$.  Thus, for all sufficiently small common $\alpha>0$, the data
$(\zvec_\alpha^j,\svec_\alpha^j)$ are cyclically monotone.  By
\Cref{lem:finite-convex-interpolation}, choose a proper closed convex $H_\alpha$
with $\svec_\alpha^j\in\partial H_\alpha(\zvec_\alpha^j)$.

Set
\[
 f_\alpha(\zvec)\colloneq H_\alpha(\zvec)-\frac\rho2\norm{\zvec}^2,
 \qquad F_\alpha\colloneq f_\alpha+\indicator_K.
\]
Then $F_\alpha$ is $\rho$-weakly convex.  Moreover, using the zero vector from
$\normal_K(\zvec_\alpha^j)$,
\[
 0=\svec_\alpha^j-\rho\zvec_\alpha^j
   +\zvec_\alpha^j-\xvec^j
 \in\partial\left(F_\alpha(\zvec)
       +\frac12\norm{\zvec-\xvec^j}^2\right)_{\zvec=\zvec_\alpha^j}.
\]
The proximal objective is $(1-\rho)$-strongly convex, so this minimizer is
unique and equals
$\zvec_\alpha^j=(1-\alpha)\xvec^j+\alpha\yvec^j$.
\end{proof}

\section{\texorpdfstring{Other Proofs and Details}{Other Proofs and Details}}\label{app:proofs}

\subsection{Proofs for \texorpdfstring{\Cref{sec:pgd}}{Section 2}}
\label{app:proofs-pgd}

First, we show that the feasibility of exact-form deviations controls the normal vectors that are introduced due to the projection step in projected online gradient descent~\eqref{eq:pgd}.

\begin{lemma}\label{lem:normal-feasible}
Let \(\phi:K\to K\), let \(\Dphi(\xvec)\colloneq\xvec-\phi(\xvec)\), and let \(\nvec\in \normal_K(\xvec)\).  Then
\[
  \inner{\nvec}{\Dphi(\xvec)}\ge 0.
\]
\end{lemma}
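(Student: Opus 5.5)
The plan is to apply the definition of the normal cone directly to a single well-chosen test point. The test point will be the image $\phi(\xvec)$, which feasibility places in $K$.

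First, since $\phi:K\to K$, the point $\yvec\colloneq\phi(\xvec)$ belongs to $K$. It is therefore an admissible test point in the definition of $\normal_K(\xvec)$.

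Second, because $\nvec\in\normal_K(\xvec)$, we have $\inner{\nvec}{\yvec-\xvec}\le 0$ for every $\yvec\in K$. Taking $\yvec=\phi(\xvec)$ gives
\[
  \inner{\nvec}{\phi(\xvec)-\xvec}\le 0 .
\]

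Third, we have $\phi(\xvec)-\xvec=-\Dphi(\xvec)$. Multiplying the inequality by $-1$ yields $\inner{\nvec}{\Dphi(\xvec)}\ge0$, which is the claim.

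There is no genuine obstacle here. The only point worth stressing is that feasibility of $\phi$ is used exactly once, to certify that $\phi(\xvec)$ is an admissible test point. No exactness, smoothness, or convexity of $\phi$ is needed beyond $\phi(\xvec)\in K$ and the standing convexity of $K$ implicit in the normal-cone definition.

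This sign property is what will later control the projection residual in the proof of \Cref{thm:pgd-exact}. There, \Cref{lem:projection-residual} identifies the residual as an element of $\normal_K(\xvec_{t+1})$, and the lemma will be applied at $\xvec_{t+1}$.
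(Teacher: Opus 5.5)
Your proof is correct and is exactly the paper's argument: you test the normal-cone inequality at the admissible point $\phi(\xvec)\in K$ and multiply by $-1$. As a minor aside, even convexity of $K$ plays no role in this step, since the normal-cone inequality and your argument make sense for any set $K$.
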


\begin{proof}
Since \(\phi(\xvec)\in K\) and \(\nvec\in\normal_K(\xvec)\),
\[
  \inner{\nvec}{\phi(\xvec)-\xvec}\le 0.
\]
Multiplying by \(-1\) gives the statement.
\end{proof}

\begin{repeatthm}{thm:pgd-exact}
Let $\mleft\{ \xvec_t \mright\}_{t=1}^{T+1}$ be the iterates generated by online gradient descent~\eqref{eq:pgd}. Let \(\phi:K\to K\) be an exact-form deviation $
  \Dphi(\xvec)=\xvec-\phi(\xvec)=\nabla\Psi(\xvec)$ where $\Psi$ is $L$-smooth with respect to the Euclidean norm. Then, for every sequence \(\gvec_1,\ldots,\gvec_T\),
\[
\tag{\ref{eq:pgd-exact-bound}}
  \sum_{t=1}^T \inner{\gvec_t}{\xvec_t-\phi(\xvec_t)}
  \le
  \frac{\Psi(\xvec_1)-\Psi(\xvec_{T+1})}{\eta}
  +\frac{3L\eta}{2}\sum_{t=1}^T\norm{\gvec_t}^2.
\]
\end{repeatthm}

\begin{proof}
Fix a round \(t\) and define
\[
  \svec_t \colloneq \xvec_t-\xvec_{t+1},
  \qquad
  \nvec_t \colloneq \xvec_t-\eta\gvec_t-\xvec_{t+1}.
\]
By \Cref{lem:projection-residual}, \(\nvec_t\in \normal_K(\xvec_{t+1})\).  Also
\[
  \eta\gvec_t=\svec_t-\nvec_t.
\]
Thus,
\[
\numberthis{eq:split-pgd}
  \eta\inner{\gvec_t}{\Dphi(\xvec_t)}
  =\inner{\svec_t}{\Dphi(\xvec_t)}-\inner{\nvec_t}{\Dphi(\xvec_t)}.
\]
We bound the two terms separately.
First, $L$-smoothness of \(\Psi\) gives
\[
  \Psi(\xvec_{t+1})
  &\le \Psi(\xvec_t)+\inner{\nabla\Psi(\xvec_t)}{\xvec_{t+1}-\xvec_t}
     +\frac{L}{2}\norm{\xvec_{t+1}-\xvec_t}^2 \\
  &= \Psi(\xvec_t)-\inner{\Dphi(\xvec_t)}{\svec_t}
     +\frac{L}{2}\norm{\svec_t}^2.
\]
Therefore
\[
\numberthis{eq:s-term}
  \inner{\svec_t}{\Dphi(\xvec_t)}
  \le \Psi(\xvec_t)-\Psi(\xvec_{t+1})+\frac{L}{2}\norm{\svec_t}^2.
\]

Second, by \Cref{lem:normal-feasible} applied at \(\xvec_{t+1}\),
\[
  \inner{\nvec_t}{\Dphi(\xvec_{t+1})}\ge 0.
\]
Thus
\[
  -\inner{\nvec_t}{\Dphi(\xvec_t)}
  &= -\inner{\nvec_t}{\Dphi(\xvec_{t+1})}
     -\inner{\nvec_t}{\Dphi(\xvec_t)-\Dphi(\xvec_{t+1})} \\
  &\le \norm{\nvec_t}\,\norm{\Dphi(\xvec_t)-\Dphi(\xvec_{t+1})} \\
  &\le L\norm{\nvec_t}\,\norm{\svec_t}.
\]
Projection is nonexpansive and \(\xvec_t=\Proj_K(\xvec_t)\), so
\begin{equation}
  \norm{\svec_t}
  =\norm{\Proj_K(\xvec_t)-\Proj_K(\xvec_t-\eta\gvec_t)}
  \le \eta\norm{\gvec_t}.
\end{equation}
Moreover,
\[
  \norm{\nvec_t}=\dist(\xvec_t-\eta\gvec_t,K)
  \le \norm{\xvec_t-\eta\gvec_t-\xvec_t}
  =\eta\norm{\gvec_t}.
\]
Therefore
\[
\numberthis{eq:n-term}
  -\inner{\nvec_t}{\Dphi(\xvec_t)}\le L\eta^2\norm{\gvec_t}^2.
\]
Combining \eqref{eq:split-pgd}, \eqref{eq:s-term}, and \eqref{eq:n-term}, and using \(\norm{\svec_t}\le\eta\norm{\gvec_t}\), yields
\[
  \eta\inner{\gvec_t}{\Dphi(\xvec_t)}
  \le \Psi(\xvec_t)-\Psi(\xvec_{t+1})+\frac{3L\eta^2}{2}\norm{\gvec_t}^2.
\]
Dividing by \(\eta\) and summing over \(t\in[T]\) telescopes the potential
terms and proves \eqref{eq:pgd-exact-bound}.
\end{proof}

\begin{repeatcorollary}{cor:uniform-regret}
Suppose $\norm{\gvec_t}\le G$ and fix $B,L>0$.  Then OGD with
$\eta=\sqrt{2B/(3LG^2T)}$ satisfies
\[
 \sup_{\phi\in\PhiExact(B,L)}\Reg_T(\phi)
 \le G\sqrt{6LBT}.
\]
Thus the algorithm controls the whole normalized class $\PhiExact(B,L)$ with one learning rate.
\end{repeatcorollary}

\begin{proof}
For every $\phi\in\PhiExact(B,L)$, the proof of
\Cref{thm:pgd-exact} applies the Lipschitz-gradient condition in
\Cref{def:normalized-exact}.  Its telescoping term is bounded by $B$, uniformly
in $\phi$, while $\sum_{t=1}^T\norm{\gvec_t}^2\le G^2T$.  Hence
\[
 \Reg_T(\phi)\le \frac B\eta+\frac{3L\eta G^2T}{2}
 \quad\text{for every }\phi\in\PhiExact(B,L).
\]
Taking the supremum and substituting the displayed value of $\eta$ proves the
claim.  This common choice of $\eta$ is precisely why the normalization is
needed.
\end{proof}

\subsection{Proofs for \texorpdfstring{\Cref{sec:containment}}{Section 3}}

\begin{repeatthm}{thm:prox-contained}
Let \(F\colloneq f+\indicator_K\) be proper, lower semicontinuous, and \(\rho\)-weakly convex with \(0\le\rho<1\).  Define $\phi_F(\xvec)\colloneq\prox_F(\xvec)$. Then \(\phi_F:K\to K\) is feasible and Euclidean exact-form, with potential \(\MF\), i.e., $
  \xvec-\phi_F(\xvec)=\nabla\MF(\xvec)$, whose gradient is
  \(L_F=(2-\rho)/(1-\rho)\)-Lipschitz.
Consequently projected gradient descent controls every such proximal deviation by \Cref{thm:pgd-exact}.  In particular, if \(K\) is compact, \(\norm{\gvec_t}\le G\), \(\osc_K(\MF)\le B_F\), and $B_F,G>0$, then choosing \(\eta=\sqrt{2B_F/(3L_F G^2T)}\) gives
\[
  \sum_{t=1}^T \inner{\gvec_t}{\xvec_t-\prox_F(\xvec_t)}
  \le \frac{B_F}{\eta}+\frac{3L_F\eta G^2T}{2} = \mathcal{O}(\sqrt{T}).
\]
If $B_F=0$ or $G=0$, the corresponding regret bound is trivial and the
displayed optimizing step size need not be used.
\end{repeatthm}

\begin{proof}
Feasibility holds because \(F=f+\indicator_K\) forces \(\prox_F(\xvec)\in K\).  Exactness is exactly \eqref{eq:weak-moreau-gradient}.  The regret bound follows by applying \Cref{thm:pgd-exact} with \(\Psi=\MF\) and the smoothness bound from \Cref{lem:weak-moreau}.
\end{proof}

\begin{repeatprop}{prop:special-cases}
The exact-form class contains the following deviations.
\begin{enumerate}[leftmargin=*,itemsep=1mm]
  \item \textbf{External regret.}  For \(\uvec\in K\), take \(F\colloneq\indicator_{\{\uvec\}}\).  Then \(\prox_F(\xvec)=\uvec\) and \(\xvec-\uvec=\nabla\frac12\norm{\xvec-\uvec}^2\).
  \item \textbf{Projection to a convex subset.}  For a nonempty closed convex set \(S\subseteq K\), take \(F\colloneq\indicator_S\).  Then \(\prox_F(\xvec)=\Proj_S(\xvec)\).
  \item \textbf{Projection-based/no-move regret.}  For a vector \(\vvec\), take \(F(\yvec)\colloneq\inner{\vvec}{\yvec}+\indicator_K(\yvec)\).  Then \(\prox_F(\xvec)=\Proj_K(\xvec-\vvec)\).
  \item \textbf{Interpolation deviations.}  For \(\uvec\in K\) and \(\alpha\in(0,1)\), the map \(\phi(\xvec)\colloneq(1-\alpha)\xvec+\alpha\uvec\) is the prox map of
  \[
    F(\yvec)\colloneq\frac{\alpha}{2(1-\alpha)}\norm{\yvec-\uvec}^2+\indicator_K(\yvec).
  \]
  The case \(\alpha=1\) is external regret.
  \item \textbf{Local proximal deviations.}  If \(\norm{\xvec-\prox_F(\xvec)}\le\delta\) on \(K\), then the deviation belongs to the local exact-form subclass \(\norm{\nabla\MF(\xvec)}\le\delta\).
  \item \textbf{Symmetric affine swaps.}  Let \(\phi(\xvec)\colloneq A\xvec+\bvec\) map \(K\) into \(K\).  Let \(T\colloneq\spanop(K-K)\) be the direction space of \(\aff(K)\), fix any \(\avec\in K\), and assume \(A T\subseteq T\) and the restriction of \(I-A\) to \(T\) is self-adjoint.  Then \(\phi\) is exact-form in the ambient sense of our definition. On \(\aff(K)\), a potential is
  \[
  \tag{\ref{eq:symmetric-affine-potential}}
    \Psi(\avec+\hvec)\colloneq\inner{\Dphi(\avec)}{\hvec}+\frac12\inner{\hvec}{(I-A)\hvec},
    \qquad \hvec\in T.
  \]
An ambient extension of this potential is given in the proof.
\end{enumerate}
\end{repeatprop}

\begin{proof}
Items 1 and 2 are immediate from \Cref{thm:prox-contained}.  For item 3, complete the square:
\[
  \argmin_{\yvec\in K}\left\{\inner{\vvec}{\yvec}+\frac12\norm{\yvec-\xvec}^2\right\}
  =\argmin_{\yvec\in K}\frac12\norm{\yvec-(\xvec-\vvec)}^2
  =\Proj_K(\xvec-\vvec).
\]
For item 4, the unconstrained first-order condition for
\[
  \frac{\alpha}{2(1-\alpha)}\norm{\yvec-\uvec}^2+\frac12\norm{\yvec-\xvec}^2
\]
 gives \(\yvec=(1-\alpha)\xvec+\alpha\uvec\), which lies in \(K\) by convexity.  Hence it is also the constrained minimizer.  Item 5 is just the Moreau identity.  For item 6, write every point in the affine hull as \(\xvec=\avec+\hvec\) with \(\hvec\in T\).  Since \(\avec\in K\) and \(\phi(\avec)\in K\), the vector \(\Dphi(\avec)=\avec-\phi(\avec)\) lies in \(T\).  For every tangent direction \(\rvec\in T\),
\[
  \Dphi(\avec+\hvec+\rvec)-\Dphi(\avec+\hvec)=(I-A)\rvec.
\]
Differentiating \eqref{eq:symmetric-affine-potential} in direction \(\rvec\),
\[
  \dd\Psi(\avec+\hvec)[\rvec]
  &=\inner{\Dphi(\avec)}{\rvec}+\inner{\hvec}{(I-A)\rvec} \\
  &=\inner{\Dphi(\avec)+(I-A)\hvec}{\rvec} \\
  &=\inner{\Dphi(\avec+\hvec)}{\rvec}.
\]
The second equality uses self-adjointness on \(T\).  Hence the affine-hull gradient of \(\Psi\) equals \(\Dphi\).  To obtain the ambient potential
required by \Cref{def:exact}, let $P_T$ be the orthogonal projection onto $T$
and define, for $\zvec\in\R^d$,
\[
 \widetilde\Psi(\zvec)
 \colloneq\inner{\Dphi(\avec)}{P_T(\zvec-\avec)}
 +\frac12\inner{P_T(\zvec-\avec)}
 {(I-A)P_T(\zvec-\avec)}.
\]
Self-adjointness on $T$ and $\Dphi(\avec)\in T$ imply
$\nabla\widetilde\Psi(\avec+\hvec)
=\Dphi(\avec)+(I-A)\hvec=\Dphi(\avec+\hvec)$ for every $\hvec\in T$.
Thus $\widetilde\Psi$ is a smooth ambient extension with the required
gradient on $K$.  Feasibility is the assumption \(\phi(K)\subseteq K\).
\end{proof}

\subsection{Proofs for \texorpdfstring{\Cref{sec:curl}}{Section 4}}

\begin{repeatlemma}{lem:zero-circulation}
If \(\field=\nabla\Psi\) on a neighborhood of \(K\), then we have \(\oint_\gamma \field(\xvec)^{\transpose}\dd\xvec=0\) for every closed loop \(\gamma\subseteq K\).
\end{repeatlemma}

\begin{proof}
By the chain rule,
\[
  \int_0^1 \inner{\nabla\Psi(\gamma(s))}{\gamma'(s)}\,\dd s
  =\int_0^1 \frac{\dd}{\dd s}\Psi(\gamma(s))\,\dd s
  =\Psi(\gamma(1))-\Psi(\gamma(0))=0.
\]
\end{proof}

\begin{lemma}\label{lem:cyclic-pgd}
Let \(\zvec_0,\zvec_1,\ldots,\zvec_m=\zvec_0\) be a closed polygon in \(K\).  Suppose
\begin{equation}
  C\colloneq\sum_{i=0}^{m-1}\inner{\field(\zvec_i)}{\zvec_i-\zvec_{i+1}}>0.
\end{equation}
Run projected GD with step size \(\eta\) from \(\xvec_1=\zvec_0\).  For one traversal, choose
\begin{equation}
  \gvec_{i+1}\colloneq\frac{\zvec_i-\zvec_{i+1}}{\eta},\qquad i=0,\ldots,m-1.
\end{equation}
Then the iterates follow the polygon exactly and the regret accumulated in one traversal is \(C/\eta\).
\end{lemma}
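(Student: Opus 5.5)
The plan is to check, by induction over the traversal, that each prescribed loss makes the projected step land exactly on the next polygon vertex. Then the regret of one traversal is read off directly from the definition of $\gvec_{i+1}$.

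\textbf{Iterates follow the polygon.} Suppose $\xvec_{i+1}=\zvec_i$; this holds for $i=0$ by initialization. The unprojected point is
\[
  \xvec_{i+1}-\eta\gvec_{i+1}=\zvec_i-(\zvec_i-\zvec_{i+1})=\zvec_{i+1}.
\]
Since $\zvec_{i+1}\in K$ and $\Proj_K$ fixes points of $K$, we get $\xvec_{i+2}=\Proj_K(\zvec_{i+1})=\zvec_{i+1}$. The projection residual vanishes, so no normal-cone term appears. Induction gives $\xvec_{i+1}=\zvec_i$ for $i=0,\ldots,m$. In particular the iterate returns to $\zvec_m=\zvec_0$, so the traversal can be repeated.

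\textbf{Regret of one traversal.} Substituting into $\Reg$,
\[
  \sum_{i=0}^{m-1}\inner{\gvec_{i+1}}{\field(\xvec_{i+1})}
  =\sum_{i=0}^{m-1}\inner{\tfrac{\zvec_i-\zvec_{i+1}}{\eta}}{\field(\zvec_i)}
  =\frac{C}{\eta}.
\]

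There is no real obstacle here. The only subtlety is that the step may be an exact identity only because every target vertex lies in $K$, which holds by hypothesis since the polygon lies in $K$. For the later use in \Cref{thm:curl-lower}, I would also note that $\norm{\gvec_{i+1}}=\norm{\zvec_i-\zvec_{i+1}}/\eta$. Bounded gradients therefore require polygon edges of length at most $\eta G$, which is where the fine discretization enters.
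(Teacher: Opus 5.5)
Your proposal is correct and follows the paper's proof essentially verbatim. Both use induction to show that the unprojected point is exactly $\zvec_{i+1}\in K$, so the projection leaves it fixed. Both then substitute $\gvec_{i+1}=(\zvec_i-\zvec_{i+1})/\eta$ and $\xvec_{i+1}=\zvec_i$ into the regret sum to obtain $C/\eta$.
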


\begin{proof}
If \(\xvec_{i+1}=\zvec_i\), then the unprojected update is
\[
  \xvec_{i+1}-\eta\gvec_{i+1}
  =\zvec_i-(\zvec_i-\zvec_{i+1})
  =\zvec_{i+1}\in K.
\]
Projection leaves it unchanged, so \(\xvec_{i+2}=\zvec_{i+1}\).  This proves the trajectory claim by induction.  The regret over the traversal is
\[
  \sum_{i=0}^{m-1}\inner{\gvec_{i+1}}{\field(\zvec_i)}
  =\frac{1}{\eta}\sum_{i=0}^{m-1}\inner{\field(\zvec_i)}{\zvec_i-\zvec_{i+1}}
  =\frac{C}{\eta}.
\]
\end{proof}

\begin{repeatthm}{thm:curl-lower}
Let \(\field:K\to\R^d\) be continuous.  Suppose there exists a closed piecewise \(C^1\) loop \(\gamma:[0,1]\to K\) such that, after possibly reversing orientation,
\[
  -\oint_\gamma \field(\xvec)^{\transpose}\dd\xvec\colloneq c_0>0.
\]
Then for every gradient bound \(G>0\) and every sufficiently small \(\eta>0\), there exists a sequence of losses with \(\norm{\gvec_t}\le G\) such that online gradient descent, initialized at \(\gamma(0)\), suffers
\[
  \sum_{t=1}^T \inner{\gvec_t}{\field(\xvec_t)}\ge cT-O(1/\eta)
\]
for a constant \(c>0\) depending only on \(\field\), \(\gamma\), and \(G\).  In particular the regret is \(\Omega(T)\).  If the algorithm starts elsewhere in the same path-connected component of \(K\), the adversary can first steer it to the loop along any fixed polygonal path; this changes only the \(O(1/\eta)\) transient term.
\end{repeatthm}

\begin{proof}
Let \(L_\gamma\) be the length of \(\gamma\), which is positive because the circulation is nonzero.  The quantity
\[
  -\oint_\gamma \field(\xvec)^{\transpose}\dd\xvec
\]
 is the limit of left-endpoint Riemann sums
\[
  \sum_{i=0}^{m-1}\inner{\field(\zvec_i)}{\zvec_i-\zvec_{i+1}},
  \qquad \zvec_i=\gamma(s_i),
\]
as the mesh of the partition \(0=s_0<s_1<\cdots<s_m=1\) goes to zero.  Hence, for all sufficiently small \(\eta\), we may choose a partition with every chord length at most \(\eta G\), with
\[
  \sum_{i=0}^{m-1}\inner{\field(\zvec_i)}{\zvec_i-\zvec_{i+1}}\ge \frac{c_0}{2},
\]
and with \(m\le 2L_\gamma/(\eta G)\).  For one traversal, define
\[
  \gvec_{i+1}\colloneq\frac{\zvec_i-\zvec_{i+1}}{\eta},
  \qquad i=0,\ldots,m-1.
\]
Then \(\norm{\gvec_{i+1}}\le G\), and \Cref{lem:cyclic-pgd} gives regret at least \(c_0/(2\eta)\) over one traversal.  Therefore the average regret per step during complete cycles is at least
\[
  \frac{c_0/(2\eta)}{2L_\gamma/(\eta G)}=\frac{c_0G}{4L_\gamma}.
\]
Repeat the cycle for \(\lfloor T/m\rfloor\) complete traversals and set the remaining gradients, if any, to zero.  This gives
\[
  \sum_{t=1}^T\inner{\gvec_t}{\field(\xvec_t)}
  \ge \frac{c_0G}{4L_\gamma}T-O(1/\eta),
\]
which proves the claim.
\end{proof}

\begin{repeatcorollary}{cor:euclidean-classification}
Let $E_K\colloneq\spanop(K-K)$, let $U\colloneq\relint(K)$ be open and simply connected in
$\aff(K)$, and let $\field:U\to E_K$ be $C^1$.  In any orthonormal affine
coordinate system on $E_K$, $\partial_j\field_i=\partial_i\field_j$ throughout
$U$ if and only if $\field=\nabla_{E_K}\Psi$ on $U$ for a $C^2$ potential
$\Psi$. If a skew derivative is nonzero somewhere, online gradient descent can be
forced to incur linear regret against $\field$ along a sufficiently small
feasible loop.  When the field is curl-free, assume that $\Psi$ admits an
ambient extension $\widetilde\Psi$, defined on a neighborhood of $K$, that
satisfies the smoothness and bounded-oscillation assumptions of
\Cref{thm:pgd-exact,corr:regret}.  If
$\widetilde\phi(\xvec)\colloneq\xvec-\nabla\widetilde\Psi(\xvec)$ lies in $K$ for
every $\xvec\in K$, then projected GD has $O(\sqrt T)$ regret against
$\widetilde\phi$.
\end{repeatcorollary}

\begin{proof}
The equivalence between vanishing skew derivatives and existence of a
potential is the Poincar\'e lemma for $C^1$ one-forms on the simply connected
open set $U$, expressed in orthonormal affine coordinates on $E_K$.  Nonzero
curl gives, by Stokes' theorem, a sufficiently small loop contained in $U$
with nonzero circulation.  Applying the cyclic construction from the proof of
\Cref{thm:curl-lower} to this loop gives the claimed linear regret.  Under the additional
extension, feasibility, smoothness, and oscillation assumptions, the positive
statement follows from \Cref{thm:pgd-exact,corr:regret}.
\end{proof}

\subsection{Proofs for \texorpdfstring{\Cref{sec:mirror}}{Section 5}}

\begin{repeatprop}{prop:one-form-equivalence}
A field \(\field:\mathcal D\to E\) is \(R\)-mirror-exact if and only if
there exists a differentiable scalar potential \(V:\mathcal D\to\R\) such that
\[
\tag{\ref{eq:V-gradient}}
  \nabla V(\xvec)=\nabla^2R(\xvec)\field(\xvec).
\]
Equivalently, $\field(\xvec)^{\transpose}\dd\nabla R(\xvec)=\dd V(\xvec)$.
\end{repeatprop}

\begin{proof}
First suppose \(\field(\xvec)=\gradtheta\Psi(\nabla R(\xvec))\).  Define
\[
  V(\xvec)\colloneq\Psi(\nabla R(\xvec)).
\]
By the chain rule,
\[
  \nabla V(\xvec)
  =\nabla^2R(\xvec)\gradtheta\Psi(\nabla R(\xvec))
  =\nabla^2R(\xvec)\field(\xvec).
\]
Conversely, suppose \eqref{eq:V-gradient} holds.  For \(\thetavec\in\ThetaSet\), write \(\xvec\colloneq(\nabla R)^{-1}(\thetavec)\) and define
\[
  \Psi(\thetavec)\colloneq V\mleft((\nabla R)^{-1}(\thetavec) \mright).
\]
Another application of the chain rule gives
\[
  \gradtheta\Psi(\thetavec)
  =\big(\nabla^2R(\xvec)\big)^{-1}\nabla V(\xvec)
  =\field(\xvec),
\]
which is the desired dual-gradient representation.
\end{proof}

\subsection{Examples of Mirror-exact Deviations for
\texorpdfstring{\Cref{subsec:mirror-exact-examples}}{Section 5.1}}
\label{app:mirror-exact-examples}

Throughout this subsection, $R^*$ is the Fenchel conjugate defined in the
standing Legendre setup, so that
\[
  \nabla R^*(\thetavec)=(\nabla R)^{-1}(\thetavec)
\]
whenever the inverse is defined.  Each displayed map is a valid deviation once
it is pointwise feasible, i.e. once it maps \(\mathcal D\) into \(K\).

\paragraph{1. Fixed comparators: external regret is always mirror-exact.}
Fix \(\uvec\in K\) and set
\[
  \phi_{\uvec}(\xvec)\colloneq\uvec,
  \qquad
  \field_{\uvec}(\xvec)\colloneq\xvec-\uvec.
\]
Take the dual potential
\[
  \Psi_{\uvec}(\thetavec)
  \colloneq R^*(\thetavec)-\inner{\uvec}{\thetavec}.
\]
Then, with \(\thetavec=\nabla R(\xvec)\),
\[
  \gradtheta\Psi_{\uvec}(\thetavec)
  =\nabla R^*(\thetavec)-\uvec
  =\xvec-\uvec
  =\field_{\uvec}(\xvec).
\]
Thus every fixed-comparator deviation is \(R\)-mirror-exact for every mirror map.  The identity/no-move deviation \(\phi(\xvec)=\xvec\) is the special case \(\Dphi\equiv0\), with potential \(\Psi\equiv0\).

\paragraph{2. Dual translations: one mirror step on a linear loss.}
Let \(\avec\in E\).  Assume that \(\thetavec-\avec\in\ThetaSet\) for every \(\thetavec\in\ThetaSet\), and define
\[
\numberthis{eq:dual-translation-map}
  \phi_{\avec}(\xvec)
  \colloneq\nabla R^*(\nabla R(\xvec)-\avec).
\]
This is the unconstrained Bregman proximal map for the linear function \(\yvec\mapsto\inner{\avec}{\yvec}\).  It is mirror-exact with potential
\[
  \Psi_{\avec}(\thetavec)
  \colloneq R^*(\thetavec)-R^*(\thetavec-\avec),
\]
 since
\[
  \gradtheta\Psi_{\avec}(\thetavec)
  =\nabla R^*(\thetavec)-\nabla R^*(\thetavec-\avec)
  =\xvec-\phi_{\avec}(\xvec).
\]
So a fixed translation in dual space is an exact-form deviation for mirror descent.  This is often the most concrete way to remember the definition.

\paragraph{3. Mirror interpolation toward a comparator.}
Fix \(\uvec\in\mathcal D\), write \(\thetavec_{\uvec}\colloneq\nabla R(\uvec)\), and choose \(\lambda\in[0,1)\).  Assume that
\[
  (1-\lambda)\thetavec+\lambda\thetavec_{\uvec}\in\ThetaSet
  \qquad\forall\thetavec\in\ThetaSet.
\]
Define

\[
\numberthis{eq:dual-interpolation-map}
  \phi_{\lambda,\uvec}^R(\xvec)
  \colloneq\nabla R^*\big((1-\lambda)\nabla R(\xvec)+\lambda\nabla R(\uvec)\big).
\]
This moves \(\xvec\) a fraction \(\lambda\) of the way toward \(\uvec\), but the straight line is taken in dual coordinates.  The potential
\[
  \Psi_{\lambda,\uvec}(\thetavec)
  \colloneq R^*(\thetavec)
  -\frac{1}{1-\lambda}
   R^*\big((1-\lambda)\thetavec+\lambda\thetavec_{\uvec}\big)
\]
works because
\[
  \gradtheta\Psi_{\lambda,\uvec}(\thetavec)
  &=\nabla R^*(\thetavec)
    -\nabla R^*\big((1-\lambda)\thetavec+\lambda\thetavec_{\uvec}\big) \\
  &=\xvec-\phi_{\lambda,\uvec}^R(\xvec).
\]
When \(\lambda=0\), this is the identity deviation.  As \(\lambda\uparrow1\), it approaches the external deviation \(\phi_{\uvec}\), whose potential was given above.

\paragraph{4. Entropy on the simplex: multiplicative deviations.}
Let \(K\colloneq\simplex_d\) and work on \(\relint(\simplex_d)\).  Take the entropic mirror map
\[
  R(\xvec)\colloneq\sum_{i=1}^d \xvec[i]\log \xvec[i].
\]
All derivatives are understood on the affine hull of the simplex.  Equivalently,
one may identify dual vectors that differ by a multiple of $\mathbf 1$, or fix
the gauge $\sum_i\thetavec[i]=0$.  In these logit coordinates,
\(\nabla R^*(\thetavec)=\softmax(\thetavec)\).  Hence the dual translation \eqref{eq:dual-translation-map} becomes
\[
  \phi_{\avec}(\xvec)[i]
  \colloneq\frac{\xvec[i]\exp(-\avec[i])}
  {\sum_{j=1}^d \xvec[j]\exp(-\avec[j])}.
\]
This is multiplicative-weights reweighting by \(\exp(-\avec)\), and it is pointwise feasible on the simplex.  The potential is the log-sum-exp difference
\[
  \Psi_{\avec}(\thetavec)
  \colloneq\log\sum_{i=1}^d\exp(\thetavec[i])
  -\log\sum_{i=1}^d\exp(\thetavec[i]-\avec[i]),
\]
whose gradient is
\[
  \gradtheta\Psi_{\avec}(\thetavec)
  =\softmax(\thetavec)-\softmax(\thetavec-\avec).
\]
Since \(\softmax(\nabla R(\xvec))=\xvec\), this equals \(\xvec-\phi_{\avec}(\xvec)\).

The dual interpolation \eqref{eq:dual-interpolation-map} gives another familiar entropy example.  For \(\uvec\in\relint(\simplex_d)\),
\[
\phi_{\lambda,\uvec}^R(\xvec)[i]
  \colloneq\frac{\xvec[i]^{1-\lambda}\uvec[i]^\lambda}
  {\sum_{j=1}^d \xvec[j]^{1-\lambda}\uvec[j]^\lambda}.
\]
Thus mirror-exact deviations include geometric interpolation on the simplex, not only Euclidean arithmetic interpolation.

\paragraph{5. Metric-symmetric affine swaps.}
Let $K\colloneq E\colloneq\R^d$ and
\(R_H(\xvec)\colloneq\frac12\xvec^{\transpose}H\xvec\), where \(H\succ0\).  This is
a Legendre function with full domain.  Then \(\nabla R_H(\xvec)=H\xvec\), and
\Cref{prop:one-form-equivalence} says that \(\field\) is
\(R_H\)-mirror-exact exactly when \(H\field(\xvec)\) is an ordinary gradient field.

Consider an affine deviation
\[
  \phi(\xvec)\colloneq A\xvec+\bvec,
  \qquad
  \field(\xvec)\colloneq\xvec-\phi(\xvec)=(I-A)\xvec-\bvec.
\]
If
\[
\numberthis{eq:H-symmetric-swap}
  H(I-A)=(I-A)^{\transpose}H,
\]
then \(\field\) is \(R_H\)-mirror-exact; feasibility is automatic because
$K=\R^d$.  Indeed,
\[
    V(\xvec)\colloneq\frac12\xvec^{\transpose}H(I-A)\xvec
  -\inner{H\bvec}{\xvec}
\]
satisfies \(\nabla V(\xvec)=H\field(\xvec)\).  For \(H=I\), condition \eqref{eq:H-symmetric-swap} says that \(I-A\) is symmetric, recovering the symmetric affine swaps in Euclidean geometry.  For general \(H\), the correct symmetry is symmetry in the mirror metric.

\paragraph{6. A mirror-exact field that is not Euclidean-exact.}
The previous example gives a feasible separation on an unconstrained action
space.  Let
\[
H\colloneq\begin{pmatrix}1&0\\0&2\end{pmatrix},
  \qquad K\colloneq\R^2,
\]
use \(R_H(\xvec)\colloneq\frac12\xvec^{\transpose}H\xvec\), and for \(\eps\ge0\) define
\[
  \field(\xvec)[1]\colloneq\eps(\xvec[1]+\xvec[2]),
  \qquad
  \field(\xvec)[2]\colloneq\frac{\eps}{2}(\xvec[1]+\xvec[2]),
  \qquad
  \phi(\xvec)\colloneq\xvec-\field(\xvec).
\]
The map is feasible because it maps $\R^2$ to itself.  Writing
$S\colloneq\begin{psmallmatrix}1&1\\1&1\end{psmallmatrix}$, it is
\(R_H\)-mirror-exact because
\[
  H\field(\xvec)=\eps S\xvec
  =\nabla\left(\frac{\eps}{2}(\xvec[1]+\xvec[2])^2\right).
\]
But \(\field\) is not Euclidean-exact when \(\eps>0\), since
\[
  \partial_2 \field(\xvec)[1]=\eps,
  \qquad
  \partial_1 \field(\xvec)[2]=\frac{\eps}{2}.
\]
So mirror-exact regret is not just Euclidean exact-form regret written in different notation.  The mirror map changes which deviations telescope.

\subsection{Proofs for \texorpdfstring{\Cref{sec:omd}}{Section 5.2}}
\label{app:proof_appendix}

\begin{repeatthm}{thm:constrained-md}
Assume the standing Legendre-domain conditions and that $R$ is $m$-strongly
convex on $\mathcal D$.  Let \(\phi:\mathcal D\to K\) be feasible and
\(R\)-mirror-exact with dual potential \(\Psi\). Let $\{ \xvec_t\}_{t=1}^{T+1}$ be the
iterates generated by online mirror descent~\eqref{eq:md}, initialized in
$\mathcal D$, with iterates remaining in $\mathcal D$. Write
\[
  \thetavec_t\colloneq\nabla R(\xvec_t),
  \qquad
  \Dphi(\xvec_t)=\gradtheta\Psi(\thetavec_t).
\]
Assume that \eqref{eq:relative-smoothness} holds for every consecutive pair
$(\thetavec_{t+1},\thetavec_t)$, $t\in[T]$. A sufficient
trajectory-independent condition is that it hold for every ordered pair in
$\ThetaSet\times\ThetaSet$.
Then
\[
\tag{\ref{eq:constrained-md-bound}}
\sum_{t=1}^T\inner{\gvec_t}{\Dphi(\xvec_t)}
  \le
  \frac{\Psi(\thetavec_1)-\Psi(\thetavec_{T+1})}{\eta}
  +
  \frac{L\eta}{m}
  \sum_{t=1}^T\norm{\gvec_t}^2.
\]
Consequently, suppose \(\norm{\gvec_t}\le G\) and that, for the
horizon-dependent choices of constant step size considered below, there is a
constant $B$, independent of the horizon, such that
\[
 \max_{1\le t\le T+1}\Psi(\thetavec_t)
 -\min_{1\le t\le T+1}\Psi(\thetavec_t)\le B,
\]
Then choosing \(\eta\asymp 1/\sqrt T\) gives \(O(\sqrt T)\) regret against
\(\phi\).  More precisely, when $B,L,G>0$, the choice
$\eta=\sqrt{mB/(LG^2T)}$ gives the bound
$2G\sqrt{LBT/m}$.  The global condition
$\osc_{\ThetaSet}(\Psi)\le B$ is a sufficient trajectory-independent
assumption that avoids any dependence of the oscillation bound on the chosen
step size.
\end{repeatthm}

\begin{proof}
Because $\xvec_{t+1}\in\mathcal D=\relint(K)$ and normal cones are taken in
the affine hull of $K$, the relative normal cone at $\xvec_{t+1}$ is
$\{0\}$.  The first-order optimality condition for \eqref{eq:md} therefore
reduces to the unconstrained dual update
\[
  \thetavec_t-\thetavec_{t+1}=\eta\gvec_t.
\]
Since \(\Dphi(\xvec_t)=\gradtheta\Psi(\thetavec_t)\), relative smoothness and
Legendre conjugacy give
\[
  \eta\inner{\gvec_t}{\Dphi(\xvec_t)}
  &=
  \Psi(\thetavec_t)-\Psi(\thetavec_{t+1})
  +
  D_\Psi(\thetavec_{t+1},\thetavec_t) \\
  &\le
  \Psi(\thetavec_t)-\Psi(\thetavec_{t+1})
  +
  L\,D_{R^*}(\thetavec_{t+1},\thetavec_t).
\]
By the Legendre conjugacy identity stated with the standing assumptions,
\[
  D_{R^*}(\thetavec_{t+1},\thetavec_t)
  =
  \DR(\xvec_t,\xvec_{t+1}).
\]
Therefore
\[
  \eta\inner{\gvec_t}{\Dphi(\xvec_t)}
  \le
  \Psi(\thetavec_t)-\Psi(\thetavec_{t+1})
  +
  L\DR(\xvec_t,\xvec_{t+1}). \numberthis{eq:mirror-interior-potential}
\]
Let $\svec_t\colloneq\xvec_t-\xvec_{t+1}$.  The symmetrized Bregman identity and the
dual update imply
\[
  \DR(\xvec_t,\xvec_{t+1})+
  \DR(\xvec_{t+1},\xvec_t)
  =
  \inner{\thetavec_t-\thetavec_{t+1}}{\svec_t}
  =\eta\inner{\gvec_t}{\svec_t}.
\]
Strong convexity of $R$ therefore yields
\[
  m\norm{\svec_t}^2
  \le
  \eta\inner{\gvec_t}{\svec_t}
  \le
  \eta\norm{\gvec_t}\norm{\svec_t}.
\]
Hence \(\norm{\svec_t}\le(\eta/m)\norm{\gvec_t}\), with the conclusion trivial when
\(\svec_t=0\).  Consequently,
\[
  \DR(\xvec_t,\xvec_{t+1})
  \le \eta\inner{\gvec_t}{\svec_t}
  \le
  \frac{\eta^2}{m}\norm{\gvec_t}^2.
\]
Combining this estimate with \eqref{eq:mirror-interior-potential} gives the
slightly stronger one-step bound
\[
  \eta\inner{\gvec_t}{\Dphi(\xvec_t)}
  \le
  \Psi(\thetavec_t)-\Psi(\thetavec_{t+1})
  +
  \frac{L\eta^2}{m}\norm{\gvec_t}^2.
\]
Dividing by \(\eta\) and summing over \(t\) proves
\eqref{eq:constrained-md-bound}.  The \(O(\sqrt T)\) statement follows
by bounding the telescoping term by \(B/\eta\) and optimizing \(\eta\).
\end{proof}

\subsection{Proofs for \texorpdfstring{\Cref{sec:bregman}}{Section 5.3}}
\label{app:bregman_proximal}

\begin{lemma}\label{lem:bregman-derivative}
For fixed \(\yvec\in\dom R\),
\[
\nabla_{\xvec}\DR(\yvec,\xvec)
  =\nabla^2R(\xvec)(\xvec-\yvec).
\]
\end{lemma}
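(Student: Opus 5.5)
This is a direct chain-rule computation from the definition of the Bregman divergence. Fix \(\yvec\in\dom R\) and regard
\[
  \DR(\yvec,\xvec)=R(\yvec)-R(\xvec)-\inner{\nabla R(\xvec)}{\yvec-\xvec}
\]
as a function of \(\xvec\in\mathcal D\), where \(R\) is \(C^2\) under the standing Legendre assumptions. The first term is constant in \(\xvec\), and the second contributes \(-\nabla R(\xvec)\).

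For the third term, write \(\inner{\nabla R(\xvec)}{\yvec-\xvec}\) as a bilinear pairing of \(\xvec\mapsto\nabla R(\xvec)\) with \(\xvec\mapsto\yvec-\xvec\) and apply the product rule. Differentiating the first factor gives \(\nabla^2R(\xvec)^{\transpose}(\yvec-\xvec)\), which equals \(\nabla^2R(\xvec)(\yvec-\xvec)\) because the Hessian of a \(C^2\) function is symmetric. Differentiating the second factor gives \(-\nabla R(\xvec)\).

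Hence the gradient of \(\inner{\nabla R(\xvec)}{\yvec-\xvec}\) is \(\nabla^2R(\xvec)(\yvec-\xvec)-\nabla R(\xvec)\). Subtracting this from \(-\nabla R(\xvec)\), the two \(\nabla R(\xvec)\) terms cancel and we are left with
\[
  \nabla_{\xvec}\DR(\yvec,\xvec)=-\nabla^2R(\xvec)(\yvec-\xvec)=\nabla^2R(\xvec)(\xvec-\yvec).
\]

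The only points needing care are the symmetry of the Hessian, which follows from \(C^2\) regularity on \(\mathcal D\), and reading all gradients in the affine hull \(E\), as fixed in \Cref{sec:mirror}. I expect no genuine obstacle.
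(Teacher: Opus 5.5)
Your proposal is correct and follows essentially the same route as the paper, which isolates \(h(\xvec)=\inner{\nabla R(\xvec)}{\yvec-\xvec}\), computes \(\nabla h(\xvec)=\nabla^2R(\xvec)(\yvec-\xvec)-\nabla R(\xvec)\) by the product rule, and subtracts it from \(-\nabla R(\xvec)\). Your explicit appeal to Hessian symmetry under \(C^2\) regularity is a detail the paper's computation uses only implicitly.
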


\begin{proof}
By definition,
\[
  \DR(\yvec,\xvec)
  \colloneq R(\yvec)-R(\xvec)-\inner{\nabla R(\xvec)}{\yvec-\xvec}.
\]
Only the last two terms depend on \(\xvec\).  Let
\[
  h(\xvec)=\inner{\nabla R(\xvec)}{\yvec-\xvec}.
\]
Then
\[
  \nabla h(\xvec)
  =\nabla^2R(\xvec)(\yvec-\xvec)-\nabla R(\xvec).
  \]
Thus
\[
\nabla_{\xvec}\DR(\yvec,\xvec)
  &=-\nabla R(\xvec)-\nabla h(\xvec) \\
  &=\nabla^2R(\xvec)(\xvec-\yvec).
\]
\end{proof}

\begin{lemma}[Bregman-Moreau identity]\label{lem:bregman-moreau}
Assume the standing Legendre-domain conditions. Let
$F\colloneq f+\indicator_K$ be proper and lower semicontinuous, assume that every
Bregman proximal minimum is attained, and suppose that \(\prox_F^R\) is
single-valued and continuous on \(\mathcal D\). Then \(\MRF\) is
differentiable on \(\mathcal D\) and
\[
\numberthis{eq:bregman-moreau-gradient}
  \nabla \MRF(\xvec)
  =\nabla^2R(\xvec)\big(\xvec-\prox_F^R(\xvec)\big).
\]
Consequently, if \(\thetavec=\nabla R(\xvec)\) and
\[
  \Psi_F(\thetavec)
  =\MRF\big((\nabla R)^{-1}(\thetavec)\big),
\]
then
\[
\numberthis{eq:bregman-mirror-gradient}
  \gradtheta\Psi_F(\thetavec)
  =\xvec-\prox_F^R(\xvec),
  \qquad
  \xvec=(\nabla R)^{-1}(\thetavec).
\]
\end{lemma}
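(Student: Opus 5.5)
The plan is to transfer the Euclidean Danskin argument of \Cref{lem:weak-moreau} to the Bregman objective. Write
\[
  \MRF(\xvec)=\min_{\yvec\in K}\Phi(\yvec,\xvec),
  \qquad
  \Phi(\yvec,\xvec)\colloneq F(\yvec)+\DR(\yvec,\xvec).
\]
For fixed $\yvec\in\dom F\subseteq K$, the map $\xvec\mapsto\Phi(\yvec,\xvec)$ is $C^1$ on $\mathcal D$, since $R\in C^2(\mathcal D)$. By \Cref{lem:bregman-derivative}, its gradient is $\nabla^2R(\xvec)(\xvec-\yvec)$, and this gradient is jointly continuous in $(\yvec,\xvec)$ on $K\times\mathcal D$. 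I would then invoke a Danskin-type theorem for a minimum of a family of $C^1$ functions with a unique, continuously varying minimizer. This gives differentiability of $\MRF$ with
\[
  \nabla\MRF(\xvec)=\nabla_{\xvec}\Phi\bigl(\prox_F^R(\xvec),\xvec\bigr)
  =\nabla^2R(\xvec)\bigl(\xvec-\prox_F^R(\xvec)\bigr),
\]
which is \eqref{eq:bregman-moreau-gradient}.

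Because the index set $K$ may be noncompact and $F$ is only lower semicontinuous, I would not cite Danskin blindly. Instead I would prove the derivative formula directly with two one-sided estimates.

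For the upper bound, fix $\xvec\in\mathcal D$, set $\pvec\colloneq\prox_F^R(\xvec)$, and take a small perturbation $\hvec$. Then
\[
  \MRF(\xvec+\hvec)-\MRF(\xvec)\le\Phi(\pvec,\xvec+\hvec)-\Phi(\pvec,\xvec)
  =\inner{\nabla^2R(\xvec)(\xvec-\pvec)}{\hvec}+o(\norm{\hvec}).
\]

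For the lower bound, set $\pvec_{\hvec}\colloneq\prox_F^R(\xvec+\hvec)$. Then
\[
  \MRF(\xvec+\hvec)-\MRF(\xvec)\ge\Phi(\pvec_{\hvec},\xvec+\hvec)-\Phi(\pvec_{\hvec},\xvec)
  =\inner{\nabla_{\xvec}\Phi(\pvec_{\hvec},\xvec+\tau\hvec)}{\hvec}
\]
for some $\tau\in(0,1)$, by the mean value theorem on the segment, which stays in $\mathcal D$ for small $\hvec$. The $F(\pvec_{\hvec})$ terms cancel exactly.

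The main obstacle is showing that this last gradient converges to $\nabla^2R(\xvec)(\xvec-\pvec)$. This is where continuity of $\prox_F^R$ is used: $\pvec_{\hvec}\to\pvec$ as $\hvec\to0$, so all the relevant points lie in a compact neighborhood. Joint continuity of $\nabla^2R(\xvec')(\xvec'-\yvec)$ then gives the limit. Together the two bounds yield Fréchet differentiability with the claimed gradient.

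For \eqref{eq:bregman-mirror-gradient}, I would argue exactly as in \Cref{prop:one-form-equivalence}. Under the standing Legendre assumptions, $\nabla R:\mathcal D\to\ThetaSet$ is a $C^1$ diffeomorphism with Jacobian $\nabla^2R(\xvec)\succ0$, and its inverse is $\nabla R^*$. Setting $\Psi_F\colloneq\MRF\circ\nabla R^*$, the chain rule gives
\[
  \gradtheta\Psi_F(\thetavec)=\bigl(\nabla^2R(\xvec)\bigr)^{-1}\nabla\MRF(\xvec),
  \qquad\xvec=\nabla R^*(\thetavec),
\]
using symmetry of the Hessian. Substituting the first identity cancels the Hessian factor and leaves $\xvec-\prox_F^R(\xvec)$, as claimed.
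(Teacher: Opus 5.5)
Your proposal is correct and follows the paper's route: a Danskin-type envelope argument that uses uniqueness and continuity of $\prox_F^R$ to get $\nabla\MRF(\xvec)=\nabla^2R(\xvec)\bigl(\xvec-\prox_F^R(\xvec)\bigr)$, followed by the chain rule through $\nabla R^*=(\nabla R)^{-1}$. The only difference is that you prove the Danskin step directly with the two one-sided estimates, in which $F$ cancels so its mere lower semicontinuity causes no trouble, whereas the paper just cites a local form of Danskin's theorem.
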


\begin{proof}
The key point is that we do not differentiate the minimizer.  Continuity of
$\prox_F^R$ makes the minimizing points locally bounded as $\xvec$ varies, so
the standard local form of Danskin's theorem applies even though the ambient
minimization domain need not be bounded.  Uniqueness then implies
\[
  \nabla \MRF(\xvec)
  =
  \nabla_{\xvec}\DR(\prox_F^R(\xvec),\xvec).
\]
Using \Cref{lem:bregman-derivative}, we obtain
\[
  \nabla \MRF(\xvec)
  =
  \nabla^2R(\xvec)\big(\xvec-\prox_F^R(\xvec)\big),
\]
which proves \eqref{eq:bregman-moreau-gradient}.

For the dual identity, write \(\xvec\colloneq(\nabla R)^{-1}(\thetavec)\).  Since
\[
  \frac{\partial \xvec}{\partial\thetavec}
  =
  \big(\nabla^2R(\xvec)\big)^{-1},
\]
the chain rule gives
\[
  \gradtheta\Psi_F(\thetavec)
  =
  \big(\nabla^2R(\xvec)\big)^{-1}\nabla \MRF(\xvec)
  =
  \xvec-\prox_F^R(\xvec).
\]
\end{proof}

\begin{repeatthm}{thm:bregman-contained}
Assume the standing Legendre-domain conditions.  Let
$F\colloneq f+\indicator_K$ be proper and lower semicontinuous, assume that every
Bregman proximal minimum above is attained, and suppose that \(\prox_F^R\) is
single-valued and continuous on $\mathcal D$.  Then the Bregman proximal
deviation $\phi_F^R:\mathcal D\to K$ defined by
$\phi_F^R(\xvec)\colloneq\prox_F^R(\xvec)$
is feasible and \(R\)-mirror-exact.  Equivalently,
\[
\tag{\ref{eq:bregman-one-form-exact}}
  \big(\xvec-
  \prox_F^R(\xvec)\big)^{\transpose}\dd\nabla R(\xvec)
  =\dd\MRF(\xvec).
\]
Thus Bregman proximal regret is a special case of mirror-exact regret.
\end{repeatthm}

\begin{proof}
Feasibility follows from the indicator \(\indicator_K\), because every minimizer of the Bregman proximal problem lies in \(K\).  By \eqref{eq:bregman-mirror-gradient} in \Cref{lem:bregman-moreau}, the displacement
\[
  \xvec-\phi_F^R(\xvec)=\xvec-\prox_F^R(\xvec)
\]
 equals the dual gradient \(\gradtheta\Psi_F(\nabla R(\xvec))\).  This is precisely \(R\)-mirror-exactness.  The differential form \eqref{eq:bregman-one-form-exact} is the same identity written in primal coordinates using \eqref{eq:bregman-moreau-gradient} in \Cref{lem:bregman-moreau}.
\end{proof}

\begin{repeatprop}{prop:bregman-relative-smooth}
Assume the standing Legendre-domain conditions.  Assume, in the
extended-valued sense on $E$, that $R$ is $m$-strongly convex and that
$F\colloneq f+\indicator_K$ is proper, closed, and $\rho$-weakly convex, where
$0\le\rho<m$ and $H\colloneq F+R$ is proper.  Then the Bregman proximal subproblem has a
unique solution on $\mathcal D$, and its solution map is continuous there.
Let $\{\xvec_t\}_{t=1}^{T+1}\subset\mathcal D$ be generated by online mirror
descent~\eqref{eq:md}.  Then these iterates control Bregman proximal regret with
the bound
\[
\tag{\ref{eq:bregman-prox-md-bound}}
  \sum_{t=1}^T
  \inner{\gvec_t}{\xvec_t-\prox_F^R(\xvec_t)}
  \le
  \frac{\Psi_F(\thetavec_1)-\Psi_F(\thetavec_{T+1})}{\eta}
  +\frac{\eta}{m}\sum_{t=1}^T\norm{\gvec_t}^2,
\]
where $\Psi_F(\thetavec_t) = \MRF(\xvec_t)$ for all $t \in [T+1]$.
\end{repeatprop}

\begin{proof}
Since $F$ is $\rho$-weakly convex and $R$ is $m$-strongly convex in the
extended-valued sense, the proper closed function $H=F+R$ is
$(m-\rho)$-strongly convex.  For $\thetavec\in\ThetaSet$, the Bregman proximal
problem differs by an $\xvec$-dependent constant from
\[
 \min_{\yvec\in E}\{H(\yvec)-\inner{\thetavec}{\yvec}\}.
\]
It therefore has the unique solution $\nabla H^*(\thetavec)$.  Moreover,
$H^*$ is differentiable with $1/(m-\rho)$-Lipschitz gradient, so the proximal
solution map is continuous on $\mathcal D$.

Let \(\xvec\colloneq\nabla R^*(\thetavec)\).  By the definition of the Bregman envelope,
\[
  \Psi_F(\thetavec)
  &=\MRF(\xvec) \\
  &=\min_{\yvec}
    \left\{F(\yvec)+R(\yvec)-R(\xvec)
    -\inner{\nabla R(\xvec)}{\yvec-\xvec}\right\} \\
  &=\min_{\yvec}
    \left\{F(\yvec)+R(\yvec)-\inner{\thetavec}{\yvec}\right\}
    +\inner{\thetavec}{\xvec}-R(\xvec) \\
  &=-(F+R)^*(\thetavec)+R^*(\thetavec).
\]
This proves $\Psi_F(\thetavec)=R^*(\thetavec)-H^*(\thetavec)$ for the function $H \colloneq F + R$.

Because $H^*$ is convex and differentiable on the dual region,
\[
  D_{H^*}(\thetavec',\thetavec)\ge0.
\]
Taking the Bregman divergence of the identity \(\Psi_F=R^*-H^*\) gives
\[
  D_{\Psi_F}(\thetavec',\thetavec)
  =D_{R^*}(\thetavec',\thetavec)-D_{H^*}(\thetavec',\thetavec)
  \le D_{R^*}(\thetavec',\thetavec),
\]
which proves the relative-smoothness with $L=1$ for the potential function $\Psi_F$.  Finally, \Cref{thm:bregman-contained} identifies the Bregman proximal displacement with \(\gradtheta\Psi_F(\thetavec_t)\).  Applying \Cref{thm:constrained-md} with \(L=1\) gives \eqref{eq:bregman-prox-md-bound}.
\end{proof}

\subsection{Proofs for \texorpdfstring{\Cref{sec:md-beyond-legendre}}{Section 8.4}}
\label{app:md-beyond-legendre}

\begin{repeatthm}{thm:md-exact-beyond-legendre}
Assume $R$ is differentiable on a neighborhood of $K$, so that
$\mathcal X=K$, and let $\{\xvec_t\}_{t=1}^{T+1}$ be generated by mirror
descent \eqref{eq:md} with a constant step size $\eta>0$.  Let $\phi:K\to K$ be
$R$-mirror-exact with potential $\Psi$.  Suppose
\eqref{eq:relative-smoothness-beyond-legendre} holds and $\Dphi$ is
$L_\phi$-Lipschitz on $K$.  Then
\[
\begin{aligned}
  \sum_{t=1}^T\inner{\gvec_t}{\Dphi(\xvec_t)}
  &\le
  \frac{\Psi(\nabla R(\xvec_1))-\Psi(\nabla R(\xvec_{T+1}))}{\eta}
  +\frac{L\eta}{m}\sum_{t=1}^T\dualnorm{\gvec_t}^2\\
  &\quad+
  \frac{L_\phi}{m}\sum_{t=1}^T
  \dualnorm{\gvec_t}
  \left[
  \eta\dualnorm{\gvec_t}
  +\omega_R\!\left(\frac{\eta}{m}\dualnorm{\gvec_t}\right)
  \right].
\end{aligned}
\]
\end{repeatthm}

\begin{proof}
Set
\[
  \thetavec_t\colloneq\nabla R(\xvec_t),
  \qquad
  \svec_t=\xvec_t-\xvec_{t+1}.
\]
The first-order optimality condition for the mirror-descent subproblem
provides $\nvec_{t+1}\in\normal_K(\xvec_{t+1})$ such that
\[
\numberthis{eq:md-normal-decomposition}
  \eta\gvec_t
  =\thetavec_t-\thetavec_{t+1}-\nvec_{t+1}.
\]
The symmetrized Bregman identity gives
\[
\begin{aligned}
  \DR(\xvec_t,\xvec_{t+1})
  +\DR(\xvec_{t+1},\xvec_t)
  &=\inner{\thetavec_t-\thetavec_{t+1}}{\svec_t}\\
  &=\eta\inner{\gvec_t}{\svec_t}
    +\inner{\nvec_{t+1}}{\svec_t}.
\end{aligned}
\]
Since $\xvec_t\in K$ and
$\nvec_{t+1}\in\normal_K(\xvec_{t+1})$,
\[
  \inner{\nvec_{t+1}}{\svec_t}\le0.
\]
Strong convexity of $R$ therefore implies
\[
  m\norm{\svec_t}^2
  \le\eta\dualnorm{\gvec_t}\norm{\svec_t}.
\]
Consequently,
\[
\numberthis{eq:md-step-bound-beyond-legendre}
  \norm{\svec_t}
  \le\frac{\eta}{m}\dualnorm{\gvec_t},
  \qquad
  \DR(\xvec_t,\xvec_{t+1})
  \le\frac{\eta^2}{m}\dualnorm{\gvec_t}^2.
\]
The normal decomposition and the definition of $\omega_R$ also yield
\[
\numberthis{eq:md-normal-bound-beyond-legendre}
  \dualnorm{\nvec_{t+1}}
  \le
  \eta\dualnorm{\gvec_t}
  +\omega_R\!\left(\frac{\eta}{m}\dualnorm{\gvec_t}\right).
\]

Write $\field_t\colloneq\Dphi(\xvec_t)$.  By
\eqref{eq:md-normal-decomposition},
\[
  \eta\inner{\gvec_t}{\field_t}
  =
  \inner{\thetavec_t-\thetavec_{t+1}}{\field_t}
  -\inner{\nvec_{t+1}}{\field_t}.
\]
Mirror exactness and
\eqref{eq:relative-smoothness-beyond-legendre} give
\[
\begin{aligned}
  \inner{\thetavec_t-\thetavec_{t+1}}{\field_t}
  &=\Psi(\thetavec_t)-\Psi(\thetavec_{t+1})
    +D_\Psi(\thetavec_{t+1},\thetavec_t)\\
  &\le
  \Psi(\thetavec_t)-\Psi(\thetavec_{t+1})
  +L \DR(\xvec_t,\xvec_{t+1}).
\end{aligned}
\]

It remains to control the normal term.  Feasibility of $\phi$ gives
$\phi(\xvec_{t+1})\in K$, hence
\[
  \inner{\nvec_{t+1}}{\Dphi(\xvec_{t+1})}\ge0.
\]
Therefore
\[
\begin{aligned}
  -\inner{\nvec_{t+1}}{\field_t}
  &=
  -\inner{\nvec_{t+1}}{\Dphi(\xvec_{t+1})}
  -\inner{\nvec_{t+1}}
  {\Dphi(\xvec_t)-\Dphi(\xvec_{t+1})}\\
  &\le
  L_\phi\dualnorm{\nvec_{t+1}}\norm{\svec_t}.
\end{aligned}
\]
Substituting
\eqref{eq:md-step-bound-beyond-legendre} and
\eqref{eq:md-normal-bound-beyond-legendre} gives
\[
\begin{aligned}
  \eta\inner{\gvec_t}{\field_t}
  &\le
  \Psi(\thetavec_t)-\Psi(\thetavec_{t+1})
  +\frac{L\eta^2}{m}\dualnorm{\gvec_t}^2\\
  &\quad+
  \frac{L_\phi\eta}{m}\dualnorm{\gvec_t}
  \left[
  \eta\dualnorm{\gvec_t}
  +\omega_R\!\left(\frac{\eta}{m}\dualnorm{\gvec_t}\right)
  \right].
\end{aligned}
\]
Divide by $\eta$ and sum over $t$.  The potential differences telescope.
\end{proof}

\begin{repeatcorollary}{cor:md-exact-beyond-legendre-rate}
Assume the hypotheses of \Cref{thm:md-exact-beyond-legendre},
$\dualnorm{\gvec_t}\le G$, and
$\osc_{\nabla R(K)}(\Psi)\le B$.  For $\eta=T^{-1/2}$,
\[
  \frac1T\sum_{t=1}^T\inner{\gvec_t}{\Dphi(\xvec_t)}
  \le
  \frac{B}{\sqrt T}
  +\frac{(L+L_\phi)G^2}{m\sqrt T}
  +\frac{L_\phi G}{m}
  \omega_R\!\left(\frac{G}{m\sqrt T}\right).
\]
Consequently, mirror descent has $o(T)$ regret against $\phi$.  If
$\nabla R$ is $M$-Lipschitz on $K$, then
\[
  \sum_{t=1}^T\inner{\gvec_t}{\Dphi(\xvec_t)}
  \le \frac{B}{\eta}+C_{R,\phi}\eta G^2T,
  \qquad
  C_{R,\phi}\colloneq\frac{L+L_\phi}{m}+\frac{L_\phi M}{m^2}.
\]
The optimized bound is
$2G\sqrt{B C_{R,\phi}T}$ whenever $B C_{R,\phi}>0$.
\end{repeatcorollary}

\begin{proof}
Apply \Cref{thm:md-exact-beyond-legendre} and bound the telescoping term
by $B$.  Uniform continuity of $\nabla R$ on $K$ gives
\[
  \omega_R\!\left(\frac{G}{m\sqrt T}\right)\longrightarrow0,
\]
so the average regret converges to zero.  If
$\omega_R(\delta)\le M\delta$, substitution gives the stated value of
$C_{R,\phi}$.  The minimum of
$B/\eta+C_{R,\phi}\eta G^2T$ is
$2G\sqrt{BC_{R,\phi}T}$ when $BC_{R,\phi}>0$.
\end{proof}

\begin{repeatthm}{thm:md-mirror-circulation}
Suppose that, after reversing the orientation of $\gamma$ if necessary,
\[
  -\oint_\gamma
  \field(\xvec)^{\transpose}\dd\nabla R(\xvec)\colloneq c_0>0.
\]
For every $G>0$ and every sufficiently small constant step size $\eta>0$,
there is a sequence with $\dualnorm{\gvec_t}\le G$ such that standard
constrained mirror descent, initialized at $\gamma(0)$, satisfies
\[
  \sum_{t=1}^T\inner{\gvec_t}{\field(\xvec_t)}
  \ge cT-O(1/\eta)
\]
for a constant $c>0$ depending only on $\field$, $\gamma$, $R$, and $G$.
\end{repeatthm}

\begin{proof}
Let
\[
  \thetavec(s)=\nabla R(\gamma(s)).
\]
The curve $\thetavec$ is closed and rectifiable.  Its length is positive
because the circulation is nonzero.  The negative circulation is the
limit of the left Riemann sums
\[
  \sum_{j=0}^{q-1}
  \inner{\field(\zvec_j)}{\thetavec_j-\thetavec_{j+1}},
  \qquad
  \zvec_j=\gamma(s_j),\quad
  \thetavec_j=\thetavec(s_j).
\]
For every sufficiently small $\eta$, choose a partition
$0=s_0<\cdots<s_q=1$ for which
\[
  \dualnorm{\thetavec_j-\thetavec_{j+1}}\le\eta G,
  \qquad
  q\le\frac{C_\gamma}{\eta},
\]
and
\[
\numberthis{eq:md-circulation-riemann-sum}
  \sum_{j=0}^{q-1}
  \inner{\field(\zvec_j)}{\thetavec_j-\thetavec_{j+1}}
  \ge\frac{c_0}{2}.
\]
Such partitions exist because $\thetavec$ is piecewise $C^1$ and
$\field\circ\gamma$ is continuous.

For one traversal, initialize at $\zvec_0$ and set
\[
  \gvec_{j+1}
  \colloneq\frac{\thetavec_j-\thetavec_{j+1}}{\eta},
  \qquad j=0,\ldots,q-1.
\]
The dual norm of every gradient is at most $G$.  The mirror-descent
objective at this step is
\[
  \yvec\longmapsto
  \eta\inner{\gvec_{j+1}}{\yvec}+\DR(\yvec,\zvec_j).
\]
Its gradient at $\zvec_{j+1}$ is
\[
  \eta\gvec_{j+1}
  +\nabla R(\zvec_{j+1})-\nabla R(\zvec_j)=0.
\]
The objective is $m$-strongly convex on $K$, so
$\zvec_{j+1}$ is its unique minimizer.  The constrained
mirror-descent iterates therefore follow the loop exactly, including at
boundary points.

By \eqref{eq:md-circulation-riemann-sum}, the regret during one
traversal is at least
\[
  \sum_{j=0}^{q-1}
  \inner{\gvec_{j+1}}{\field(\zvec_j)}
  \ge\frac{c_0}{2\eta}.
\]
Since $q\le C_\gamma/\eta$, complete traversals have average regret at
least $c_0/(2C_\gamma)$.  Repeating the traversal and assigning zero
gradients to the remaining rounds gives
\[
  \sum_{t=1}^T\inner{\gvec_t}{\field(\xvec_t)}
  \ge\frac{c_0}{2C_\gamma}T-O(1/\eta).
\]
\end{proof}

\begin{repeatcorollary}{cor:md-full-mirror-characterization}
Let $U\colloneq\relint(K)$.  The set $U$ is open and convex in $\aff(K)$,
and hence simply connected there.  Suppose that $R\in C^2(U)$ and
$\nabla^2R(\xvec)$ is positive definite for every $\xvec\in U$, and the coefficient field $\xvec\longmapsto\nabla^2R(\xvec)\field(\xvec)$
is $C^1$.  All derivatives and coordinate components below are taken in
affine coordinates on $\aff(K)$.  Then the following conditions are
equivalent.
\begin{enumerate}[label=\textup{(\roman*)},leftmargin=*]
  \item The one-form
  $\field(\xvec)^{\transpose}\dd\nabla R(\xvec)$ is exact on $U$.
  \item For every coordinate pair $i,j$,
  \[
    \partial_j\!\bigl[\nabla^2R(\xvec)\field(\xvec)\bigr]_i
    =
    \partial_i\!\bigl[\nabla^2R(\xvec)\field(\xvec)\bigr]_j
    \qquad(\xvec\in U).
  \]
  \item The mirror circulation vanishes on every closed piecewise $C^1$
  loop in $U$.
\end{enumerate}
If these conditions fail, standard mirror descent can be forced to incur
linear regret along a sufficiently small loop contained in $U$.  Suppose
instead that they hold, that $\field$ extends to the displacement $\Dphi$ of a
feasible map on $K$, and that the resulting potential and displacement satisfy
the regularity hypotheses of \Cref{thm:md-exact-beyond-legendre}.  Then mirror
descent has sublinear regret.  Under the Lipschitz-gradient assumptions in
Corollary~\ref{cor:md-exact-beyond-legendre-rate}, the regret is $O(\sqrt T)$.
\end{repeatcorollary}

\begin{proof}
In affine coordinates on $U=\relint(K)$, the coefficient field of the
one-form is
\[
  \betavec(\xvec)=\nabla^2R(\xvec)\field(\xvec).
\]
The Poincar\'e lemma gives the equivalence between exactness of
$\betavec(\xvec)^{\transpose}\dd\xvec$ and symmetry of the cross
derivatives of $\betavec$.  Exact forms have zero integral around every
closed loop.  Conversely, if every closed-loop integral vanishes, the
line integral from a fixed base point is path independent and defines a
potential.

If a cross derivative is nonzero at some point, continuity and Stokes'
theorem give a sufficiently small rectangular loop in $U$ with nonzero
mirror circulation.  The linear lower bound follows from
\Cref{thm:md-mirror-circulation}.

Suppose instead that the form is exact.  Strong convexity makes
$\nabla R$ injective on $U$, and positive definiteness of
$\nabla^2R$ makes it a local diffeomorphism.  Hence
$\nabla R:U\to\nabla R(U)$ is a diffeomorphism onto an open set.  If
$V$ is a potential for the one-form, define
\[
  \Psi(\thetavec)\colloneq V\bigl((\nabla R)^{-1}(\thetavec)\bigr).
\]
The chain rule gives
\[
  \gradtheta\Psi(\nabla R(\xvec))=\field(\xvec).
\]
The extension, feasibility, and regularity assumptions in the
corollary then allow
\Cref{thm:md-exact-beyond-legendre} to be applied on $K$.
\end{proof}

\subsection{Proofs for \texorpdfstring{\Cref{sec:mirror-interpolation}}{Section 8.5}}
\label{app:mirror-interpolation}

\begin{repeatprop}{prop:bregman-radial-exact}
Assume $\mathcal X=K$.  For every admissible generator $F$ in
Definition~\ref{def:bregman-radial-class}, the function $
  \Psi_F(\thetavec)
  \colloneq
  R_K^*(\thetavec)-(F+R)^*(\thetavec)$
is differentiable and satisfies
\[
  \xvec-\prox_F^R(\xvec)
  =\gradtheta\Psi_F(\nabla R(\xvec))
  \qquad(\xvec\in K).
\]
Consequently, $
  \PhiMirrorProx{R}(K)
  \subseteq
  \PhiMirrorRad{R}(K)
  \subseteq
  \PhiMirrorExact{R}(K)$,
where $\PhiMirrorExact{R}(K)$ is the class from
Definition~\ref{def:mirror-exact-beyond-legendre}.  More precisely, if
$\phi_\alpha=\prox_F^R$, then $\phi$ is mirror-exact with potential
$\Psi_F/\alpha$.
\end{repeatprop}

\begin{proof}
Since $R_K$ is proper, closed, and $m$-strongly convex, and $F+R$ is proper,
closed, and $(m-\rho)$-strongly convex, their conjugates are differentiable.
For every $\xvec\in K$,
\[
  \nabla R(\xvec)\in\partial R_K(\xvec),
\]
because $0\in\normal_K(\xvec)$.  Fenchel duality therefore gives
\[
  \nabla R_K^*(\nabla R(\xvec))=\xvec.
\]
If $\pvec=\prox_F^R(\xvec)$, the proximal optimality condition gives
\[
  \nabla R(\xvec)\in\partial(F+R)(\pvec),
\]
and hence
\[
  \nabla(F+R)^*(\nabla R(\xvec))=\pvec.
\]
Subtracting the two identities yields
\[
  \gradtheta\Psi_F(\nabla R(\xvec))
  =\xvec-\prox_F^R(\xvec),
\]
so every map in $\PhiMirrorProx{R}(K)$ is mirror-exact.

The inclusion
$\PhiMirrorProx{R}(K)\subseteq\PhiMirrorRad{R}(K)$ follows by taking
$\alpha=1$.  If $\phi_\alpha=\prox_F^R$, then
\[
  \xvec-\phi(\xvec)
  =\frac1\alpha\bigl(\xvec-\phi_\alpha(\xvec)\bigr)
  =\gradtheta\left(\frac{\Psi_F}{\alpha}\right)(\nabla R(\xvec)).
\]
Thus $\phi$ is mirror-exact with potential $\Psi_F/\alpha$, which proves
$\PhiMirrorRad{R}(K)\subseteq\PhiMirrorExact{R}(K)$ and completes the proof.
\end{proof}

\begin{repeatprop}{prop:legendre-prox-necessary}
Suppose $\phi_\alpha=\prox_{F_\alpha}^R$ on $\mathcal D$ for an admissible
Bregman proximal generator $F_\alpha$.  Then the dual endpoint potential
$u_\alpha$ agrees, up to an additive constant, with the convex conjugate of
$F_\alpha+R$.  More precisely, there is a constant $c_\alpha\in\R$ such that $u_\alpha =(F_\alpha+R)^*+c_\alpha$, on $\ThetaSet$.
Consequently, $u_\alpha$ is convex, and hence we have
\[
  D_\Psi(\thetavec',\thetavec)
  \le
  \frac1\alpha D_{R^*}(\thetavec',\thetavec)
  \qquad(\thetavec,\thetavec'\in\ThetaSet).
\]
\end{repeatprop}

\begin{proof}
Let $H_\alpha\colloneq F_\alpha+R$.  Since $F_\alpha$ is
$\rho_\alpha$-weakly convex with $\rho_\alpha<m$, the function $H_\alpha$ is
strongly convex, and $H_\alpha^*$ is differentiable.  For
$\xvec=\nabla R^*(\thetavec)$, the Bregman proximal problem differs by a
constant from
\[
  \min_{\yvec}\{H_\alpha(\yvec)-\inner{\thetavec}{\yvec}\}.
\]
Hence
\[
  \phi_\alpha(\nabla R^*(\thetavec))
  =\nabla H_\alpha^*(\thetavec).
\]
Mirror exactness gives the independent identity
\[
  \phi_\alpha(\nabla R^*(\thetavec))
  =\nabla(R^*-\alpha\Psi)(\thetavec)
  =\nabla u_\alpha(\thetavec).
\]
The open convex set $\ThetaSet$ is connected, so two differentiable functions
with the same gradient differ by a constant.  Thus
\[
  u_\alpha=H_\alpha^*+c_\alpha
\]
on $\ThetaSet$.  Convexity follows.  Finally,
\[
  D_{u_\alpha}(\thetavec',\thetavec)
  =D_{R^*}(\thetavec',\thetavec)
  -\alpha D_\Psi(\thetavec',\thetavec)
  \ge0,
\]
which is the stated relative-curvature inequality.
\end{proof}

\begin{repeatprop}{prop:legendre-prox-reconstruction}
Suppose $u_\alpha$ is convex on $\ThetaSet$, and let
\[
  \overline u_\alpha
  \colloneq
  \cl\bigl(u_\alpha+\indicator_{\ThetaSet}\bigr).
\]
Assume $\overline u_\alpha$ is proper and that
$\phi_\alpha(\mathcal D)\subseteq\dom R$.  Define the candidate proximal
generator
\[
  F_\alpha(\yvec)
  \colloneq
  \begin{cases}
    \overline u_\alpha^*(\yvec)-R(\yvec),
      & \yvec\in\dom R,\\
    +\infty, & \yvec\notin\dom R.
  \end{cases}
\]
Then, for every $\xvec\in\mathcal D$,
\[
  \phi_\alpha(\xvec)
  \in
  \argmin_{\yvec}
  \bigl\{F_\alpha(\yvec)+\DR(\yvec,\xvec)\bigr\}.
\]
Under these domain assumptions, convexity of $u_\alpha$ yields a
variational Bregman representation of $\phi_\alpha$.  If, in addition, $F_\alpha$ is proper, lower semicontinuous,
and $\rho_\alpha$-weakly convex for some $\rho_\alpha<m$, the minimizer is
unique and $\phi_\alpha=\prox_{F_\alpha}^R$.  In that case
$\phi\in\PhiMirrorRad{R}(\mathcal D)$.
\end{repeatprop}

\begin{proof}
A finite differentiable convex function on the open set $\ThetaSet$ is
continuous there, so
\[
  \overline u_\alpha
  =\cl(u_\alpha+\indicator_{\ThetaSet})
\]
agrees with $u_\alpha$ on $\ThetaSet$.  Fix
$\thetavec\in\ThetaSet$ and set
\[
  \zvec\colloneq\nabla u_\alpha(\thetavec)
  =\phi_\alpha(\nabla R^*(\thetavec)).
\]
Convexity gives
$\zvec\in\partial\overline u_\alpha(\thetavec)$.  Fenchel duality then gives
\[
  \thetavec\in\partial\overline u_\alpha^*(\zvec).
\]
The endpoint-domain hypothesis gives $\zvec\in\dom R$, and Fenchel
equality gives $\overline u_\alpha^*(\zvec)<+\infty$.  In particular,
$F_\alpha+R$ is proper.
Consequently, $\zvec$ minimizes
\[
  \yvec\longmapsto
  \overline u_\alpha^*(\yvec)-\inner{\thetavec}{\yvec}.
\]
For $\xvec=\nabla R^*(\thetavec)$ and $\yvec\in\dom R$, the objective in the
proposition satisfies
\[
\begin{aligned}
  F_\alpha(\yvec)+\DR(\yvec,\xvec)
  &=\overline u_\alpha^*(\yvec)-R(\yvec)
    +R(\yvec)-R(\xvec)-\inner{\thetavec}{\yvec-\xvec}\\
  &=\overline u_\alpha^*(\yvec)-\inner{\thetavec}{\yvec}
    +\inner{\thetavec}{\xvec}-R(\xvec).
\end{aligned}
\]
The last two terms do not depend on $\yvec$.  Since
$\zvec\in\dom R$ and the objective is infinite outside $\dom R$,
$\zvec=\phi_\alpha(\xvec)$ is a minimizer.  If $F_\alpha$ is $\rho_\alpha$-weakly convex with
$\rho_\alpha<m$, then $F_\alpha+R$ is strongly convex.  The minimizer is
unique and equals $\prox_{F_\alpha}^R(\xvec)$.
\end{proof}

\begin{repeatcorollary}{cor:mirror-interpolation-radius}
For $0<\alpha\le1$, the function $u_\alpha=R^*-\alpha\Psi$ is convex
exactly when $\alpha L_R(\Psi)\le1$.  Define the convexity threshold
\[
  \alpha_{\mathrm{cvx}}
  \colloneq
  \min\left\{1,\frac1{L_R(\Psi)}\right\},
\]
with the conventions $1/0\colloneq+\infty$ and
$1/(+\infty)\colloneq0$; a zero threshold means that no positive scale passes
the convexity test.  Every Bregman proximal interpolation must satisfy
$0<\alpha\le\alpha_{\mathrm{cvx}}$.  Conversely, every such $\alpha$ for which
$\overline u_\alpha$ is proper and
$\phi_\alpha(\mathcal D)\subseteq\dom R$ yields the variational representation
in \Cref{prop:legendre-prox-reconstruction}.  It is an admissible Bregman
proximal interpolation if the resulting $F_\alpha$ also satisfies the
weak-convexity and regularity conditions in that proposition.
\end{repeatcorollary}

\begin{proof}
For every ordered pair in $\ThetaSet$,
\[
  D_{R^*-\alpha\Psi}(\thetavec',\thetavec)
  =D_{R^*}(\thetavec',\thetavec)
  -\alpha D_\Psi(\thetavec',\thetavec).
\]
A differentiable function on an open convex set is convex if and only if its
Bregman divergence is nonnegative on every ordered pair.  Hence
$u_\alpha$ is convex if and only if
\[
  D_\Psi(\thetavec',\thetavec)
  \le\frac1\alpha D_{R^*}(\thetavec',\thetavec)
  \qquad\text{for all }\thetavec,\thetavec'\in\ThetaSet.
\]
If $L_R(\Psi)<+\infty$, take a decreasing sequence of valid constants tending
to the infimum.  Passing to the pointwise limit shows that the infimum is
valid.  If $L_R(\Psi)=+\infty$, no positive interpolation scale
satisfies the inequality.  Otherwise, the largest $\alpha\in(0,1]$
satisfying it is $\min\{1,1/L_R(\Psi)\}$.  Necessity for a Bregman proximal
interpolation follows from \Cref{prop:legendre-prox-necessary}.  For the
converse, the additional assumptions that $\overline u_\alpha$ is proper and
$\phi_\alpha(\mathcal D)\subseteq\dom R$ permit application of
\Cref{prop:legendre-prox-reconstruction}; its remaining hypotheses ensure
admissibility of the reconstructed generator.
\end{proof}

\begin{repeatprop}{prop:mirror-no-scaling}
Let $1<p<2$, let $q\colloneq p/(p-1)>2$, and take the full-domain Legendre
regularizer
\[
  R(\xvec)\colloneq\frac12\norm{\xvec}_p^2
  \qquad(\xvec\in\R^2).
\]
There is a feasible $R$-mirror-exact deviation whose dual potential is
$C^\infty$ with globally Lipschitz gradient, but which does not belong to
$\PhiMirrorRad{R}(\R^2)$.
\end{repeatprop}

\begin{proof}
Let $q\colloneq p/(p-1)>2$.  The conjugate regularizer is
\[
  R^*(\thetavec)=\frac12\norm{\thetavec}_q^2.
\]
Set
\[
  \Psi(\thetavec)\colloneq\frac12\thetavec[2]^2,
  \qquad
  \phi(\xvec)
  \colloneq
  \xvec-\gradtheta\Psi(\nabla R(\xvec)).
\]
The map is feasible because the action space is $\R^2$.  It is mirror-exact by
construction, and $\nabla\Psi$ is globally $1$-Lipschitz.

Fix $\alpha>0$ and restrict
$u_\alpha=R^*-\alpha\Psi$ to the line
$\thetavec=\evec_1+t\evec_2$.  Then
\[
  u_\alpha(\evec_1+t\evec_2)
  =\frac12(1+\abs{t}^q)^{2/q}-\frac\alpha2t^2.
\]
The expansion $(1+s)^{2/q}=1+(2/q)s+O(s^2)$ gives
\[
  u_\alpha(\evec_1+t\evec_2)-u_\alpha(\evec_1)
  =\frac1q\abs{t}^q-\frac\alpha2t^2+O(\abs{t}^{2q}).
\]
Since $q>2$, this difference is negative for every sufficiently small
nonzero $t$.  The restricted function is even, so convexity would imply
\[
  u_\alpha(\evec_1)
  \le
  \frac12u_\alpha(\evec_1+t\evec_2)
  +\frac12u_\alpha(\evec_1-t\evec_2)
  =u_\alpha(\evec_1+t\evec_2),
\]
a contradiction.  Thus $u_\alpha$ is not convex for any $\alpha>0$.
By \Cref{prop:legendre-prox-necessary}, no positive interpolation can be an
admissible Bregman proximal map.
\end{proof}

\subsection{\texorpdfstring{Proofs for \Cref{sec:md-ftrl-legendre}}{Proofs for Section 9.1}}
\label{app:proofs-md-ftrl-legendre}

\begin{repeatprop}{prop:md-ftrl-equivalence}
Under the standing Legendre assumptions of \Cref{sec:mirror}, let
$\thetavec_1\colloneq\nabla R(\xvec_1)$ and use a constant step size $\eta$.  The
mirror-descent iterates satisfy
\[
  \xvec_t
  =\argmin_{\xvec\in E}
  \left\{
    \eta\sum_{s=1}^{t-1}\inner{\gvec_s}{\xvec}
    +R(\xvec)-\inner{\thetavec_1}{\xvec}
  \right\}
  =\argmin_{\xvec\in K}
  \left\{
    \eta\sum_{s=1}^{t-1}\inner{\gvec_s}{\xvec}
    +\DR(\xvec,\xvec_1)
  \right\}.
\]
Thus they are the FTRL iterates for the linearized losses and the tilted regularizer
$R-\inner{\thetavec_1}{\cdot}$.
\end{repeatprop}

\begin{proof}
The interior optimality condition for mirror descent is
\[
  \nabla R(\xvec_{t+1})
  =\nabla R(\xvec_t)-\eta\gvec_t.
\]
Writing $\thetavec_t\colloneq\nabla R(\xvec_t)$ and unrolling gives
$\thetavec_t=\thetavec_1-\eta\sum_{s<t}\gvec_s$.  Legendre duality therefore
gives
\[
  \xvec_t=\nabla R^*(\thetavec_t)
  =\argmin_{\xvec\in E}
  \{R(\xvec)-\inner{\thetavec_t}{\xvec}\}.
\]
This is the first FTRL formula.  The second follows because
$R(\xvec)-\inner{\thetavec_1}{\xvec}$ differs from
$\DR(\xvec,\xvec_1)$ by a constant independent of $\xvec$.
\end{proof}

\subsection{\texorpdfstring{Proofs for \Cref{sec:ftrl-exactness}}{Proofs for Section 9.2}}
\label{app:proofs-ftrl-exactness}

\begin{repeatthm}{thm:ftrl-regret}
Assume that $R$ is proper, closed, $(\sigma,r)$-uniformly convex, and that
$R^*$ is finite on $E$.  Let the iterates follow \eqref{eq:ftrl-update}.  If
$\phi$ is $R$-FTRL-exact with potential $\Psi$ and
\[
  D_\Psi(\thetavec_{t+1},\thetavec_t)
  \le L D_{R^*}(\thetavec_{t+1},\thetavec_t)
  \qquad(t\in[T]),
\]
then
\[
  \sum_{t=1}^T
  \inner{\gvec_t}{\xvec_t-\phi(\xvec_t)}
  \le
  \frac{\Psi(\thetavec_1)-\Psi(\thetavec_{T+1})}{\eta}
  +\frac{L\eta^{q-1}}{q\sigma^{q-1}}
   \sum_{t=1}^T\dualnorm{\gvec_t}^{q}.
\]
\end{repeatthm}

\begin{proof}
Uniform convexity makes $\partial R^*(\thetavec)$ a singleton whenever it is
nonempty.  Since $R^*$ is finite on $E$, it has a nonempty subdifferential
everywhere and is therefore differentiable on $E$.

FTRL-exactness and the additive dual update give
\[
\begin{aligned}
  \eta\inner{\gvec_t}{\xvec_t-\phi(\xvec_t)}
  &=\inner{\thetavec_t-\thetavec_{t+1}}
    {\gradtheta\Psi(\thetavec_t)}\\
  &=\Psi(\thetavec_t)-\Psi(\thetavec_{t+1})
    +D_\Psi(\thetavec_{t+1},\thetavec_t)\\
  &\le\Psi(\thetavec_t)-\Psi(\thetavec_{t+1})
    +L D_{R^*}(\thetavec_{t+1},\thetavec_t).
\end{aligned}
\numberthis{eq:ftrl-one-step}
\]

Fix $\thetavec\in E$, let $\xvec\colloneq\nabla R^*(\thetavec)$, and let
$\dvec\in E$.  Uniform convexity gives, for every $\yvec\in\dom R$,
\[
  R(\yvec)
  \ge R(\xvec)+\inner{\thetavec}{\yvec-\xvec}
  +\frac\sigma r\norm{\yvec-\xvec}^r.
\]
Hence
\[
\begin{aligned}
  R^*(\thetavec+\dvec)
  &\le R^*(\thetavec)+\inner{\dvec}{\xvec}
  +\sup_{\zvec\in E}
   \left\{\inner{\dvec}{\zvec}-\frac\sigma r\norm{\zvec}^r\right\}\\
  &=R^*(\thetavec)+\inner{\dvec}{\nabla R^*(\thetavec)}
  +\frac1{q\sigma^{q-1}}\dualnorm{\dvec}^q.
\end{aligned}
\]
Thus
\[
  D_{R^*}(\thetavec+\dvec,\thetavec)
  \le\frac1{q\sigma^{q-1}}\dualnorm{\dvec}^q.
\]
Apply this with
$\dvec\colloneq\thetavec_{t+1}-\thetavec_t=-\eta\gvec_t$ in
\eqref{eq:ftrl-one-step}, divide by $\eta$, and sum.
\end{proof}

\begin{repeatcorollary}{cor:ftrl-rate}
Suppose $\dualnorm{\gvec_t}\le G$ and the oscillation of $\Psi$ along the dual
trajectory is at most $B$.  When $B,L,G>0$, choosing
$\eta=\left(\frac{rB\sigma^{q-1}}{LG^qT}\right)^{1/q}$ gives
\[
  \sum_{t=1}^T
  \inner{\gvec_t}{\xvec_t-\phi(\xvec_t)}
  \le
  r^{1/r}\frac{G L^{1/q}B^{1/r}}{\sigma^{1/r}}T^{1/q}.
\]
For $r=2$, this is $G\sqrt{2LBT/\sigma}$.
\end{repeatcorollary}

\begin{proof}
The theorem gives
\[
  \frac B\eta
  +\frac{LG^qT}{q\sigma^{q-1}}\eta^{q-1}.
\]
Differentiation yields the displayed optimizer and bound.
\end{proof}

\begin{repeatthm}{thm:ftrl-circulation}
Suppose $\Dphidual$ is continuous and there is a closed piecewise
$C^1$ loop $\gamma\subset\ThetaSet$ with
\[
  \oint_\gamma
  \inner{\Dphidual(\thetavec)}{\dd\thetavec}\ne0.
\]
For every $G>0$ and every sufficiently small constant $\eta>0$, bounded
gradients $\dualnorm{\gvec_t}\le G$ can force FTRL
\eqref{eq:ftrl-update} to incur $cT-O(1/\eta)$ regret against $\phi$, for some
$c>0$.  The statement holds from every initial dual state in $\ThetaSet$ since
steering to the loop changes only the transient term.
\end{repeatthm}

\begin{proof}
Reverse the orientation if necessary so that
\[
  -\oint_\gamma
  \inner{\Dphidual(\thetavec)}{\dd\thetavec}
  =c_0>0.
\]
Because $\ThetaSet$ is open and convex, join the initial state to a point of
the loop by a compact polygonal path in $\ThetaSet$.  Discretizing that path
into dual increments of norm at most $\eta G$ steers FTRL to the loop in
$O(1/\eta)$ rounds.  Continuity bounds the displacement on the path, so the
magnitude of the steering regret is $O(1/\eta)$.

Let $L_\gamma$ be the length of the loop in the dual norm.  For every
sufficiently small $\eta$, choose a partition
$\thetavec_0,\ldots,\thetavec_s=\thetavec_0$ with
\[
  \dualnorm{\thetavec_j-\thetavec_{j+1}}\le\eta G,
  \qquad
  s\le\frac{2L_\gamma}{\eta G},
\]
and whose left Riemann sum is at least $c_0/2$.  Set
\[
  \gvec_{j+1}
  \colloneq\frac{\thetavec_j-\thetavec_{j+1}}{\eta}.
\]
The FTRL dual state follows the polygon exactly, and one traversal earns at
least $c_0/(2\eta)$ regret.  Its average regret per round is at least
$c_0G/(4L_\gamma)$.  Repeating the loop gives
\[
  \sum_{t=1}^T
  \inner{\gvec_t}{\xvec_t-\phi(\xvec_t)}
  \ge\frac{c_0G}{4L_\gamma}T-O(1/\eta).
\]
\end{proof}

\begin{repeatcorollary}{cor:ftrl-curl}
Assume that $\Dphidual$ is $C^1$ on the open convex set $\ThetaSet$.
It is $R$-FTRL-exact if and only if its Jacobian is symmetric.  If a skew
derivative is nonzero, \Cref{thm:ftrl-circulation} gives linear regret along a
small dual loop.  Under the regularity and normalization assumptions of
\Cref{thm:ftrl-regret}, an exact field has sublinear regret.
\end{repeatcorollary}

\begin{proof}
The equivalence between a symmetric Jacobian and a potential follows from the
Poincar\'e lemma on the open convex set $\ThetaSet$.  A nonzero skew derivative
has nonzero circulation on a sufficiently small rectangle, so
\Cref{thm:ftrl-circulation} applies.  The positive statement follows
from \Cref{cor:ftrl-rate}; its horizon-independent bounds give
$O(T^{1/q})$ regret, which is sublinear because $q>1$.
\end{proof}

\subsection{\texorpdfstring{Proofs for \Cref{sec:md-ftrl-deviation-separation}}{Proofs for Section 9.3}}
\label{app:proofs-md-ftrl-separation}

\begin{repeatprop}{prop:ftrl-exact-implies-md-exact}
Let $K$ be compact and convex, and let $R$ be differentiable and strongly
convex on a neighborhood of $K$.  If a feasible map $\phi:K\to K$ is
$R_K$-FTRL-exact, then it is $R$-mirror-exact in the sense of
\Cref{def:mirror-exact-beyond-legendre}.  Consequently,
\[
  \PhiFTRL{R_K}(K)
  \subseteq
  \PhiMirrorExact{R}(K).
\]
\end{repeatprop}

\begin{proof}
Strong convexity makes $R_K^*$ differentiable.  For every $\xvec\in K$,
\[
  \nabla R(\xvec)
  \in \nabla R(\xvec)+\normal_K(\xvec)
  =\partial R_K(\xvec),
\]
because $0\in \normal_K(\xvec)$.  Fenchel duality and uniqueness give
\[
  \nabla R_K^*(\nabla R(\xvec))=\xvec.
\]
If $\phi$ is $R_K$-FTRL-exact with potential $\Psi$, substitute
$\thetavec=\nabla R(\xvec)$ in Definition~\ref{def:ftrl-exact} to obtain
\[
  \xvec-\phi(\xvec)
  =\gradtheta\Psi(\nabla R(\xvec)).
\]
This is Definition~\ref{def:mirror-exact-beyond-legendre}.
\end{proof}

\begin{repeatthm}{thm:md-ftrl-lp-separation}
The map $\phi_{p,\eps}$ is a smooth feasible deviation.  Standard
constrained mirror descent with distance generator $R_p$ has
$O(\sqrt T)$ regret against it for bounded gradients and the usual
$T^{-1/2}$ step size.  In contrast, FTRL with the extended regularizer
$R_{p,K}$ can be forced, for every sufficiently small constant step
size, to incur $cT-O(1/\eta)$ regret against the same deviation.  Hence the
inclusion in \Cref{prop:ftrl-exact-implies-md-exact} is strict and $ \PhiFTRL{R_{p,K}}(K) \subsetneq \PhiMirrorExact{R_p}(K)$.
\end{repeatthm}

\begin{proof}
Let $c\colloneq(a+b)/2$, $d\colloneq(b-a)/2$, choose $\beta\in(0,1)$, and use the map in
\eqref{eq:lp-md-ftrl-separation-map}.  The Hessian is
\[
  \nabla^2R_p(\xvec)
  =(p-1)\operatorname{diag}
  \bigl(\xvec[1]^{p-2},\xvec[2]^{p-2}\bigr),
\]
and every diagonal entry is at least
$m_p\colloneq(p-1)\min_{s\in[a,b]}s^{p-2}>0$.
Write $\zvec\colloneq\xvec-c\one$.  For each coordinate $i$ and the other coordinate
$j$,
\[
\begin{aligned}
  \field_{p,\eps}(\xvec)[i]
  &=\frac{\eps}{(p-1)\xvec[i]^{p-2}}
    \bigl(\zvec[i]+\beta\zvec[j]\bigr),\\
  \abs{\field_{p,\eps}(\xvec)[i]}
  &\le \frac{\eps(1+\beta)d}{m_p}.
\end{aligned}
\]
If $\field_{p,\eps}(\xvec)[i]>0$, then
$\zvec[i]>-\beta\zvec[j]\ge-\beta d$, and hence
$\xvec[i]-a>(1-\beta)d$.  The assumed upper bound on $\eps$ gives
$\field_{p,\eps}(\xvec)[i]\le\xvec[i]-a$.  Therefore
$\phi_{p,\eps}(\xvec)[i]\ge a$.  Its upper bound is immediate because
subtracting a positive displacement moves the coordinate downward.  If
$\field_{p,\eps}(\xvec)[i]<0$, the same argument gives
$b-\xvec[i]>(1-\beta)d$ and
$\abs{\field_{p,\eps}(\xvec)[i]}\le b-\xvec[i]$.  Thus
$\phi_{p,\eps}(K)\subseteq K$.

Define
\[
  V(\xvec)
  \colloneq\frac\eps2
   (\xvec-c\one)^{\transpose}S_\beta(\xvec-c\one).
\]
Then
\[
  \nabla V(\xvec)
  =\eps S_\beta(\xvec-c\one)
  =\nabla^2R_p(\xvec)\field_{p,\eps}(\xvec).
\]
Hence
$\field_{p,\eps}(\xvec)^{\transpose}\dd\nabla R_p(\xvec)=\dd V(\xvec)$.
Equivalently, on the positive dual orthant,
\[
  \Psi(\thetavec)
  \colloneq\frac\eps2
  \left(
  \begin{pmatrix}
  \thetavec[1]^{1/(p-1)}\\
  \thetavec[2]^{1/(p-1)}
  \end{pmatrix}
  -c\one
  \right)^{\transpose}
  S_\beta
  \left(
  \begin{pmatrix}
  \thetavec[1]^{1/(p-1)}\\
  \thetavec[2]^{1/(p-1)}
  \end{pmatrix}
  -c\one
  \right)
\]
satisfies
$\field_{p,\eps}(\xvec)=\gradtheta\Psi(\nabla R_p(\xvec))$.
Because the box is compact and bounded away from the coordinate hyperplanes,
$R_p$, $\field_{p,\eps}$, and $\Psi$ are smooth on neighborhoods of the
relevant primal and dual sets.  The Hessian of $R_p^*$ is positive definite
there.  Compactness therefore gives finite constants for the relative
curvature, displacement Lipschitz modulus, mirror-gradient Lipschitz modulus,
and potential oscillation required by
\Cref{cor:md-exact-beyond-legendre-rate}.  Standard mirror descent consequently
has $O(\sqrt T)$ regret against the map under bounded gradients.

For FTRL, the minimization defining $\nabla R_{p,K}^*$ is separable, and its $i$th coordinate is
\[
  \nabla R_{p,K}^*(\thetavec)[i]
  =
  \begin{cases}
  a, & \thetavec[i]\le a^{p-1},\\
  \thetavec[i]^{1/(p-1)},
    & a^{p-1}<\thetavec[i]<b^{p-1},\\
  b, & \thetavec[i]\ge b^{p-1}.
  \end{cases}
\]
Choose numbers
$u<v$ with $b^{p-1}<u<v$ and
$s<t$ with $a^{p-1}<s<t<b^{p-1}$.  On the rectangle
$[u,v]\times[s,t]$, the first primal coordinate is $b$ and the second is
$\thetavec[2]^{1/(p-1)}$.  Consequently the first component of the dual
displacement is strictly increasing in $\thetavec[2]$:
\[
  \frac{\partial\widetilde{\field}_{p,\eps}[1]}
  {\partial\thetavec[2]}
  =\frac{\eps\beta}{(p-1)^2}
   b^{2-p}\thetavec[2]^{(2-p)/(p-1)}>0,
\]
whereas
$\partial\widetilde{\field}_{p,\eps}[2]/\partial\thetavec[1]=0$.
Equivalently, the circulation around the positively oriented rectangle is
\[
  (v-u)
  \left[
  \widetilde{\field}_{p,\eps}[1](u,s)
  -\widetilde{\field}_{p,\eps}[1](u,t)
  \right]
  <0.
\]
The dual displacement is continuous, so
\Cref{thm:ftrl-circulation} produces bounded gradients with
$cT-O(1/\eta)$ FTRL regret.  This proves both the claimed algorithmic
separation and the strictness of
\Cref{prop:ftrl-exact-implies-md-exact}.
\end{proof}

\subsection{\texorpdfstring{Proofs for \Cref{sec:ftrl-lp-rates}}{Proofs for Section 9.4}}
\label{app:proofs-ftrl-lp-rates}

\begin{repeatcorollary}{cor:ftrl-lp-squared}
Let $1<p\le2$ and let $q\colloneq p/(p-1)$.  Then
\[
  \sum_{t=1}^T
  \inner{\gvec_t}{\xvec_t-\phi(\xvec_t)}
  \le
  \frac{B}{\eta}
  +\frac{L\eta}{2(p-1)}
   \sum_{t=1}^T\norm{\gvec_t}_q^2.
\]
If $\norm{\gvec_t}_q\le G$, the optimized bound is
$G\sqrt{2LBT/(p-1)}$.
\end{repeatcorollary}

\begin{proof}
The function $\frac12\norm{\cdot}_p^2$ is $(p-1)$-strongly convex with respect
to $\norm{\cdot}_p$ for $1<p\le2$.  If
$\thetavec=\nabla(\frac12\norm{\xvec}_p^2)+\nvec$ with
$\nvec\in\normal_K(\xvec)$, then
$\inner{\nvec}{\yvec-\xvec}\le0$ for $\yvec\in K$, so adding
$\indicator_K$ preserves the same strong-convexity inequality for every
selected subgradient.  Apply \Cref{thm:ftrl-regret} with uniform-convexity
exponent $r=2$, conjugate exponent $2$, and $\sigma=p-1$.
\end{proof}

\begin{repeatcorollary}{cor:ftrl-lp-powered}
Let $2\le p<\infty$ and let $q\colloneq p/(p-1)$.  Then
\[
  \sum_{t=1}^T
  \inner{\gvec_t}{\xvec_t-\phi(\xvec_t)}
  \le
  \frac{B}{\eta}
  +\frac{L\eta^{q-1}}
  {q(2^{2-p})^{q-1}}
  \sum_{t=1}^T\norm{\gvec_t}_q^q.
\]
If $\norm{\gvec_t}_q\le G$, the optimized bound is $
  p^{1/p}2^{(p-2)/p}G L^{1/q}B^{1/p}T^{1/q}$,
which is $O(T^{1-1/p})$.
\end{repeatcorollary}

\begin{proof}
For $A(s)=\abs{s}^{p-2}s$, the scalar inequality
\[
  (A(a)-A(b))(a-b)\ge2^{2-p}\abs{a-b}^p
\]
holds for all real $a,b$.  Integrating coordinatewise along the segment from
$\xvec$ to $\yvec$ gives
\[
  D_{p^{-1}\norm{\cdot}_p^p}(\yvec,\xvec)
  \ge\frac{2^{2-p}}p\norm{\yvec-\xvec}_p^p.
\]
The same normal-cone argument as above shows that adding $\indicator_K$
preserves the estimate for every selected subgradient.  Thus the regularizer is
$(2^{2-p},p)$-uniformly convex.  Apply \Cref{thm:ftrl-regret} and
Corollary~\ref{cor:ftrl-rate}.
\end{proof}
\par

\subsection{Proofs for \texorpdfstring{\Cref{sec:games}}{Section 10}}
\label{app:proofs-games}

\begin{repeatthm}{thm:equilibrium}
Let $\bar\mu_T\colloneq T^{-1}\sum_{t=1}^T\delta_{\xvec_t}$ and define
$\gvec_{i,t}\colloneq\nabla_{\xvec_i}\ell_i(\xvec_{i,t},\xvec_{-i,t})$.  If every player $i$
guarantees
\[
 \sup_{\phi_i\in\Phi_i}
 \sum_{t=1}^T\inner{\gvec_{i,t}}
 {\xvec_{i,t}-\phi_i(\xvec_{i,t})}\le R_{i,T},
\]
then $\bar\mu_T$ is an $\eps_T$-ConCE relative to $(\Phi_i)_i$, where
$\eps_T=\max_i R_{i,T}/T$.  In particular, OGD on
$\Phi_{i,\mathrm{exact}}(B_i,L_i)$ with $\norm{\gvec_{i,t}}\le G_i$ gives $
 \eps_T\le \max_i G_i\sqrt{6L_iB_i/T}$.
\end{repeatthm}

\begin{proof}
Fix player $i$ and $\phi_i\in\Phi_i$.  By convexity of $\ell_i$ in its own
action,
\[
  &\ell_i(\xvec_{i,t},\xvec_{-i,t})-
  \ell_i(\phi_i(\xvec_{i,t}),\xvec_{-i,t})\\
  &\qquad\le
  \inner{\nabla_{\xvec_i}\ell_i(\xvec_{i,t},\xvec_{-i,t})}
  {\xvec_{i,t}-\phi_i(\xvec_{i,t})}.
\]
Summing, taking the supremum over $\Phi_i$, and dividing by $T$ gives the
definition of $\eps_T$-ConCE under $\bar\mu_T$.  For the final statement apply
\Cref{cor:uniform-regret} player by player.
\end{proof}

\begin{repeatprop}{prop:radial-equilibrium}
Let $\mathcal A_i$ be any family of feasible Borel-measurable
deviations for player $i$.  A
distribution satisfies
$\Gamma_i(\mu,\phi_i)\le0$ for every $\phi_i\in\mathcal A_i$ and every player
if and only if it satisfies the same inequalities for every
$\phi_i\in\operatorname{Rad}_I(\mathcal A_i)$.
\end{repeatprop}

\begin{proof}
Since $\mathcal A_i\subseteq\operatorname{Rad}_I(\mathcal A_i)$ by taking
$\alpha=1$, one implication is immediate.  For the converse, suppose that
$\mu$ satisfies every constraint in $\mathcal A_i$, and let
$\phi_i\in\operatorname{Rad}_I(\mathcal A_i)$.  Choose
$\alpha\in(0,1]$ such that
$\phi_{i,\alpha}=(1-\alpha)I+\alpha\phi_i$ belongs to $\mathcal A_i$.
Convexity of $\ell_i$ in its own action gives pointwise
\[
\begin{aligned}
  &\ell_i(\xvec_i,\xvec_{-i})
  -\ell_i(\phi_{i,\alpha}(\xvec_i),\xvec_{-i})\\
  &\qquad\ge
  \alpha\left(
  \ell_i(\xvec_i,\xvec_{-i})
  -\ell_i(\phi_i(\xvec_i),\xvec_{-i})\right).
\end{aligned}
\]
The joint continuity and compactness assumptions ensure that
both loss differences are bounded and measurable.  Thus both gains are
integrable.
Taking expectations yields
$\Gamma_i(\mu,\phi_{i,\alpha})
\ge\alpha\Gamma_i(\mu,\phi_i)$.  The left-hand side is nonpositive, so
$\Gamma_i(\mu,\phi_i)\le0$.
\end{proof}

\begin{repeatthm}{thm:conce-pce-equality}
Under the unrestricted conventions above, every convex game with compact
convex action sets and jointly continuous losses satisfies
\[
  \ConCE=\PCE.
\]
\end{repeatthm}

\begin{proof}
By Corollary~\ref{cor:euclidean-radial-closure}, the smooth exact-form class is the
identity-radial closure of the unrestricted proximal class.  Apply
Proposition~\ref{prop:radial-equilibrium} player by player.
\end{proof}

\begin{repeatcorollary}{cor:mirror-equilibrium-equality}
For any set of regularizers $R_i$ for $i \in [N]$, we have
\[
  \PCE^{\mathrm{Breg}}_{(R_i)_i}
  =
  \mathrm{Eq}^{\mathrm{rad}}_{(R_i)_i}.
\]
\end{repeatcorollary}

\begin{proof}
For every player,
$\PhiMirrorRad{R_i}(K_i)
=\operatorname{Rad}_I\bigl(\PhiMirrorProx{R_i}(K_i)\bigr)$ by
Definition~\ref{def:bregman-radial-class}.  Apply
Proposition~\ref{prop:radial-equilibrium} player by player.
\end{proof}

\begin{repeatcorollary}{cor:finite-support}
For every finitely supported distribution in a convex game,
\[
  \mu\in\CE\quad\Longleftrightarrow\quad
  \mu\in\ConCE\quad\Longleftrightarrow\quad
  \mu\in\PCE.
\]
In particular the three notions coincide in every finite game.
\end{repeatcorollary}

\begin{proof}
Only $\PCE\subseteq\CE$ needs proof.  If a general deviation $\tau_i$ has
positive gain, list the finitely many recommendations $\xvec^1,\ldots,\xvec^m$
to player $i$ and set $\yvec^j\colloneq\tau_i(\xvec^j)$.  By
\Cref{thm:finite-prox}, for sufficiently small common $\alpha>0$ there is a
proximal map satisfying
\[
 \phi_{i,\alpha}(\xvec^j)
 =(1-\alpha)\xvec^j+\alpha\tau_i(\xvec^j)
\]
at every supported recommendation.  Convexity yields
$\Gamma_i(\mu,\phi_{i,\alpha})\ge
\alpha\Gamma_i(\mu,\tau_i)>0$, so $\mu\notin\PCE$.
\end{proof}

\begin{repeatthm}{thm:one-dimensional-ce}
Suppose that every player has a compact interval action set
$K_i\colloneq[a_i,b_i]\subseteq\R$ and, in addition to the standing assumptions, every
loss $\ell_i:K\to\R$ is continuous on
$K\colloneq\prod_{j=1}^N K_j$.  Then
\[
  \CE=\ConCE=\PCE
\]
under the unrestricted definitions used in this section when the approximation
error is zero.
\end{repeatthm}

\begin{proof}
The inclusion $\CE\subseteq\ConCE$ follows directly from the deviation classes, and $\ConCE=\PCE$ follows from
\Cref{thm:conce-pce-equality}.  It remains to prove
$\ConCE\subseteq\CE$.

Suppose $\mu\notin\CE$. Then for some player $i$, there is a Borel-measurable deviation $\tau_i:K_i\to K_i$ such that $\Gamma_i(\mu,\tau_i)>0$. Let $\nu_i$ be the $i$-th marginal of $\mu$. Assume that $K_i\colloneq[a_i,b_i]$ with $a_i<b_i$ (the case in which $K_i$ is a singleton is trivial). 

For every $n$, by Lusin's theorem, we have a compact set $A_n\subseteq K_i$ with $\nu_i(K_i\setminus A_n)<1/n$ such that $\tau_i|_{A_n}$ is continuous. From classical extension theorem (bounded form of the Tietze extension theorem), we can get a continuous map $h_n:K_i\to K_i$ agreeing with $\tau_i$ on $A_n$. After rescaling $u\colloneq(x-a_i)/(b_i-a_i)$, let $B_mh_n$ denote the Bernstein polynomial 
\[
 (B_mh_n)(x)
 =\sum_{k=0}^m
 h_n\!\left(a_i+\frac{k}{m}(b_i-a_i)\right)
 \binom{m}{k}u^k(1-u)^{m-k}.
\]
Bernstein approximation is uniform for continuous functions, so choose $m_n$ such that $\|B_{m_n}h_n-h_n\|_\infty<1/n$ and set
$\phi_{i,n}\colloneq B_{m_n}h_n$. Bernstein coefficients are nonnegative
and sum to one, so we have $\phi_{i,n}(K_i)\subseteq K_i$ and $\nu_i\!\left(
   \bigl|\phi_{i,n}-\tau_i\bigr|>\frac1n
 \right)
 \le \nu_i(K_i\setminus A_n)<\frac1n.$ 
 So $\phi_{i,n}(X_i)\to\tau_i(X_i)$ in probability under $\mu$.

 Continuity on compact $K$ makes $\ell_i$ bounded and uniformly continuous. So we get $ \ell_i(\phi_{i,n}(X_i),X_{-i})
 \longrightarrow
 \ell_i(\tau_i(X_i),X_{-i})$ in probability. The variables are uniformly bounded so the convergence is in $L^1(\mu)$. This shows that $\Gamma_i(\mu,\phi_{i,n})
 \longrightarrow \Gamma_i(\mu,\tau_i)>0$. In particular, $\phi_{i,n}$ has positive gain for all sufficiently large $n$.

But every $\phi_{i,n}$ is a smooth exact-form deviation. Indeed, the
polynomial potential
$$
 \Psi_{i,n}(x)=\int_{a_i}^x\bigl(s-\phi_{i,n}(s)\bigr)\,\dd s
$$
is defined on a neighborhood of $K_i$ and satisfies
$\Psi_{i,n}'(x)=x-\phi_{i,n}(x)$. 

So a measurable profitable deviation
produces a smooth feasible exact-form profitable deviation, so
$\mu\notin\ConCE$. This proves $\ConCE\subseteq\CE$ and completes the equality.
\end{proof}

\begin{repeatthm}{thm:ce-separation}
There is a two-player convex game and a distribution $\mu$ with uncountable
support---indeed, with marginals absolutely continuous with respect to
two-dimensional Lebesgue measure---for which
\[
   \mu\in\ConCE=\PCE
   \qquad\text{but}\qquad
   \mu\notin\CE.
\]
Consequently $\CE\subsetneq\ConCE=\PCE$ under the definitions above.
\end{repeatthm}

\begin{proof}
Let $K\subseteq\R^2$ be the unit disk, and let
\[
  J\colloneq\begin{pmatrix}0&-1\\1&0\end{pmatrix}
\]
be the counterclockwise \(90^\circ\) rotation matrix. Player 1 has loss
$\ell_1(\avec,\svec)=\inner{J\svec}{\avec}$, player 2 has zero loss, and
$\mu$ is the law of $(X,X)$ for $X$ uniform on the disk. Thus both marginals
are continuous and the joint law has uncountable support, although it is
singular and supported on the diagonal of $K\times K$. For every exact map
$\phi=I-\nabla\Psi$, the gain is
\[
 \Gamma_1(\mu,\phi)
 =\frac1\pi\int_K\inner{J\xvec}{\nabla\Psi(\xvec)}\,\dd\xvec=0,
\]
because $\operatorname{div}(J\xvec)=0$ and $J\xvec$ has zero normal flux.
Thus $\mu\in\ConCE=\PCE$.  The feasible measurable map
$\tau(\xvec)\colloneq-J\xvec/\norm{\xvec}$ for $\xvec\ne0$, with $\tau(0)\colloneq0$, has
gain $\E\norm X=2/3$, so $\mu\notin\CE$.
\end{proof}

\end{document}